\newtheorem{thm}{Theorem}
\newtheorem{pro}{Proposition}
\newtheorem{rem}{Remark}
\newtheorem{Exa}{Example}
\newtheorem{as}{Assumption}
\newtheorem{alg}{Algorithm}
\newcommand{\be}{\begin{equation}}
\newcommand{\ee}{\end{equation}}
\newcommand{\bea}{\begin{eqnarray*}}
\newcommand{\eea}{\end{eqnarray*}}
\newcommand{\mR}{\mathbb{R}}
\newcommand{\mN}{\mathbb{N}}
\newcommand{\mE}{\mathbb{E}}
\newcommand{\mcE}{\mathcal{E}}
\newcommand{\TK}{\mathcal{T}_{\rho}}
\newcommand{\TKL}{\mathcal{T}_{\rho,\lambda}}
\newcommand{\TXL}{\mathcal{T}_{{\bf x},\lambda}}
\newcommand{\LK}{\mathcal{L}_{\rho}}
\newcommand{\IK}{\mathcal{S}_{\rho}}
\newcommand{\TX}{\mathcal{T}_{\bf x}}
\newcommand{\SX}{\mathcal{S}_{\bf x}}
\newcommand{\HK}{H}
\newcommand{\HR}{H_{\rho}}
\newcommand{\LR}{L^2(H,\rho_X)}
\newcommand{\LRH}{L_{\hat{\rho}}^2}
\newcommand{\J}{{\bf J}}
\newcommand{\FH}{f_{\mathcal{H}}}
\newcommand{\tr}{\operatorname{tr}}
\newcommand{\la}{\langle}
\newcommand{\ra}{\rangle}
\newcommand{\eref}[1] {(\ref{#1})}
\newcommand{\revision}{\textcolor{black}}
\begin{document}

\title{Optimal Rates for Multi-pass Stochastic Gradient Methods}

\author{\name Junhong Lin
	 \email jhlin5@hotmail.com \\
       \addr
       Laboratory for Computational and Statistical Learning\\
       Istituto Italiano di Tecnologia and Massachusetts Institute of Technology\\
       Bldg. 46-5155, 77 Massachusetts Avenue, Cambridge, MA 02139, USA
              \AND
Lorenzo Rosasco \email lrosasco@mit.edu \\
       \addr
       DIBRIS, Universit\`a di Genova\\
       Via Dodecaneso, 35 --- 16146 Genova, Italy\\
       Laboratory for Computational and Statistical Learning\\
        Istituto Italiano di Tecnologia and Massachusetts Institute of Technology\\
       Bldg. 46-5155, 77 Massachusetts Avenue, Cambridge, MA 02139, USA
}

\maketitle



\begin{abstract}
We analyze the  {learning  properties} of the stochastic gradient method when multiple passes over the data and mini-batches are allowed. We study how  regularization properties are controlled by the step-size,  the  number of passes and the mini-batch size.  In particular, we consider the square loss and show that    for  a universal step-size choice, the number of passes acts as a regularization parameter, and optimal finite sample bounds  can be achieved by early-stopping. Moreover, we show that larger step-sizes are allowed when considering mini-batches.
Our analysis is based on  a unifying approach, encompassing both batch and stochastic gradient methods as special cases.  {As a byproduct, we derive optimal convergence results for batch gradient methods (even in the non-attainable cases).}
\end{abstract}

\section{Introduction}

Modern machine learning applications require computational approaches that are at the same time statistically accurate
and numerically efficient \citep{bousquet2008tradeoffs}.  This has motivated a recent interest in stochastic gradient methods (SGM),  since on the  one hand they  enjoy good practical performances, especially in large scale scenarios, and on the other hand  they are amenable to theoretical studies. In particular, unlike other learning approaches, such as empirical risk minimization or Tikhonov regularization, theoretical results on  SGM naturally integrate statistical and computational aspects.

Most generalization studies on  SGM consider the case where only one pass over the data is allowed and the step-size is appropriately chosen, see \citep{cesa-bianchi2004,nemirovski2009robust,ying2008online,tarres2014online,dieuleveut2014non,orab14} and references therein, possibly considering averaging \citep{poljak1987introduction}. In particular, recent works show how the step-size can be seen to play the role of a regularization parameter whose choice controls the bias and variance properties of the obtained solution \citep{ying2008online,tarres2014online,dieuleveut2014non,lin2016generalization}. These latter works show that  balancing these contributions, it is possible to derive a step-size choice  leading to optimal   learning bounds.  Such a choice typically depends on some unknown properties of the data generating distributions and it can be chosen by cross-validation in practice.

While processing  each data point only once is natural in streaming/online scenarios,  in practice SGM is often used to process large data-sets and  multiple passes over the data are typically considered. In this case, the number of passes over the data, as well as the step-size, need then to be determined. While the role of multiple passes is well understood if the goal is empirical risk minimization \citep[see e.g.,][]{boyd2007stochastic}, its effect with respect to generalization is less clear. A few recent works have recently started to tackle this question. In particular, results in this direction have been derived in \citep{hardt2015train} and \citep{lin2016generalization}. The former work considers a general stochastic optimization setting and studies  stability properties of SGM allowing to derive  convergence results as well as finite sample bounds. The latter work, restricted to supervised learning,  further develops these results to compare the respective roles of step-size and number of passes, and  show how different parameter settings can lead to optimal error bounds. In particular, it shows that there are  two extreme cases: while one between the  step-size or the number of passes is fixed a priori, while the other one acts as a regularization parameter and needs to be chosen adaptively.  The main shortcoming of these latter results is that they are for the  worst case, in the sense that they do not consider the possible effect of benign assumptions on the problem \citep{zhang2005learning,caponnetto2007optimal} that can lead to faster rates for other learning approaches such as Tikhonov regularization.
Further, these results do not consider the possible effect on generalization  of mini-batches, rather than a single point in each gradient step \citep{shalev2011pegasos,dekel2012optimal,sra2012optimization,ng2016machine}. This latter strategy is often considered especially for  parallel implementation of SGM.

The study in this paper fills in these gaps in the case where the loss function is  the least squares loss.
We  consider a variant of SGM for least squares,  where gradients are sampled uniformly at random and mini-batches are allowed. The number of passes, the step-size and the mini-batch size are then parameters to be determined.  Our main results highlight the respective roles of these parameters and show how can they be chosen so that the corresponding solutions achieve optimal learning errors in a variety of settings. In particular, we show for the first time that multi-pass SGM with early stopping and a universal step-size choice can achieve optimal convergence rates, matching those of ridge regression \citep{smale2007learning,caponnetto2007optimal}. Further, our analysis shows how the mini-batch size and the step-size choice are tightly related. Indeed, larger mini-batch sizes allow considering larger step-sizes while keeping the optimal learning bounds. This result gives insights on how to exploit mini-batches for  parallel computations while preserving optimal statistical accuracy. Finally, we note that a recent work \citep{rosasco2015learning} is related to the analysis in the paper.  The generalization properties of a  multi-pass incremental gradient are analyzed in  \citep{rosasco2015learning},  for a  cyclic,  rather than a stochastic,  choice  of the gradients  and with no mini-batches. The analysis in this latter case appears to be harder and results in  \citep{rosasco2015learning} give good learning bounds only in restricted setting and considering iterates rather than the excess risk. Compared to  \citep{rosasco2015learning}  our results show how stochasticity can be exploited to get fast rates and analyze the role of mini-batches. The basic idea of our proof is to approximate the SGM learning sequence in terms of the batch gradient descent sequence, see Subsection \ref{subsec:proSke} for further details. This allows to study batch and stochastic gradient methods simultaneously, and may be also useful for analyzing other learning algorithms.

This paper is an extended version of a prior conference paper \citep{lin2016optimal}. In \citep{lin2016optimal}, we give convergence results with optimal rates for the attainable case (i.e., assuming the existence of at least one minimizer of the expected risk over the hypothesis space) in a fixed step-size setting. In this new version, we give convergence results with optimal rates, for both the attainable and non-attainable cases, and consider
more general step-size choices. The extension from the attainable case to the non-attainable case is non-trivial. As will be seen from the proof, in contrast to the attainable case, a different and refined estimation is needed for the non-attainable case.
Interestingly, as a byproduct of this paper, we also derived optimal rates for the batch gradient descent methods in the non-attainable case. To the best of our knowledge, such a result may be the first kind for batch gradient methods, without requiring any extra unlabeled data as that in \citep{caponnetto2010cross}. Finally, we also add novel convergence results for the iterates showing that they converge to the minimal norm solution of the expected risk with optimal rates.

The rest of this paper is organized as follows. Section \ref{sec:learning} introduces the learning setting and the SGM algorithm. Main results with discussions and proof sketches are presented in Section \ref{sec:main}.
Preliminary lemmas necessary for the proofs will be given in Section \ref{sec:estimates} while detailed proofs will be conducted in Sections \ref{sec:biasSam} to \ref{sec:hnorm}.
Finally, simple numerical simulations are given in Section \ref{sec:numerical} to complement our theoretical results.

\paragraph{Notation} For any $a,b \in \mR$, $a \vee b$ denotes the maximum of $a$ and $b$.
$\mN$ is the set of all positive integers. For any $T \in \mN,$ $[T]$ denotes the
set $\{1,\cdots,T\}.$
For any two positive sequences $\{a_t\}_{t\in [T]}$ and $\{b_t\}_{t\in [T]},$
the notation $a_{t} \lesssim b_{t}$ for all $t\in [T]$ means that there exists a positive constant
$C \geq 0$ such that $C$ is independent of $t$ and that $a_{t}\leq C b_{t}$ for all $t \in [T].$

 \section{Learning with SGM}\label{sec:learning}
We begin by introducing the learning setting we consider, and then
describe the SGM learning algorithm.
Following \citep{rosasco2015learning}, the formulation
we consider is close to the setting of functional regression, and covers the reproducing kernel Hilbert space (RKHS) setting as a special case, see Appendix \ref{app:learning}. In particular, it reduces to standard linear
regression for finite dimensions.
\subsection{Learning Problems}
Let $\HK$ be a separable Hilbert space, with inner product and induced norm denoted by
$\la \cdot, \cdot \ra_{\HK}$ and $\| \cdot \|_{\HK}$, respectively.
Let the input space $X \subseteq \HK$ and the output space $Y \subseteq \mR$.
Let $\rho$ be an unknown probability measure on $Z=X\times Y,$ $\rho_X(\cdot)$ the induced marginal measure on $X$, and  $\rho(\cdot | x)$ the conditional probability measure on $Y$ with respect to $x \in X$ and $\rho$.

Considering the square loss function, the problem under study is the minimization of the {\it risk},
\be\label{expectedRisk}
\inf_{\omega \in \HK} \mcE(\omega), \quad \mcE(\omega) = \int_{X\times Y} ( \la \omega, x \ra_{\HK} - y)^2 d\rho(x,y),
\ee
when the measure  $\rho$ is known only through
 a sample $\mathbf z=\{z_i=(x_i, y_i)\}_{i=1}^m$ of size $m\in\mN$, independently and identically distributed (i.i.d.) according to $\rho$.
 In the following, we measure the quality of an approximate solution  $\hat{\omega} \in \HK$ (an estimator) considering {\it the excess risk}, i.e.,
 \be\label{excessrisk}
 \mcE(\hat{\omega})  - \inf_{\omega \in \HK} \mcE(\omega).
 \ee
 Throughout this paper, we assume that there exists a constant $\kappa \in [1,\infty[$, such that
 \be\label{boundedKernel} \la x,x' \ra_{\HK} \leq \kappa^2, \quad \forall x,x'\in X.
 \ee
\subsection{Stochastic Gradient Method}
We study the following variant of SGM, possibly with mini-batches.
Unlike some of the variants studied in the literature, the algorithm we consider in this paper does not involve any explicit penalty term or any projection step, in which case one does not need to tune the penalty/projection parameter.
\begin{alg}\label{alg:1}
Let $b \in [m].$ Given any sample $\bf z$, the $b$-minibatch stochastic gradient method is defined by $\omega_1 =0$ and
\be\label{Alg}
\omega_{t+1}=\omega_t - \eta_t {1 \over b} \sum_{i= b(t-1)+1}^{bt} (\la \omega_t, x_{j_i}\ra_{\HK} - y_{j_i}) x_{j_i} , \qquad t=1, \ldots, T, \ee
where $\{\eta_{t}>0\}$ is a step-size sequence.  Here, $j_1,j_2,\cdots,j_{bT}$ are i.i.d. random variables from the uniform distribution on $[m]$ \footnote{Note that, the random variables $j_1,\cdots, j_{bT}$ are conditionally independent given the sample $\bf z$.}.
\end{alg}

We add some comments on the above algorithm. First, different choices for the mini-batch size $b$ can lead to different algorithms. In particular, for $b=1$, the above algorithm corresponds to a simple SGM, while for $b=m,$ it is a stochastic version of the batch gradient descent. In this paper, we are particularly interested in the cases of $b=1$ and $b = \sqrt{m}.$
Second, other choices on the initial value, rather than $\omega_1 =0$, is possible. In fact, following from our proofs in this paper, the interested readers can see that the convergence results stated in the next subsections still hold for other choices of initial values.
Finally, the number of total iterations $T$ can be bigger than the number of sample points $m$. This indicates that we can use the sample more than once, or in another words, we can run the algorithm with multiple passes over the data. Here and in what follows, the number of `passes' over the data is referred to $\lceil {bt \over m} \rceil$ at $t$ iterations of the algorithm.

 The aim of this paper is to derive excess risk bounds for Algorithm \ref{alg:1}.
Throughout this paper, we assume that $\{\eta_t\}_t$ is non-increasing, and $T \in \mN$ with $T \geq 3$. We denote by $\J_t$ the set $\{j_l: l=b(t-1)+1,\cdots,bt\}$ and by $\J$ the set $\{j_l: l=1,\cdots,bT\}$.

\section{Main Results with Discussions }\label{sec:main}
In this section, we first state some basic assumptions. Then, we
 present and discuss our main results.

\subsection{Assumptions}
The following assumption is related to a moment assumption on $|y|^2$. It is
weaker than the often considered bounded output assumption, such as the binary classification problems where $Y=\{-1,1\}.$

\begin{as}\label{as:noiseExp}
  There exists constants $M \in ]0,\infty[$ and $v \in ]1,\infty[$ such that
  \be\label{noiseExp}
  \int_{Y} y^{2l} d\rho(y|x) \leq l! M^l v, \quad \forall l \in \mN,
  \ee
  $\rho_{ X}$-almost surely.
\end{as}

To present our next assumption, we introduce the operator $\LK : \LR \to \LR$, defined by $\LK(f) = \int_{X}  \la x, \cdot\ra_{\HK} f(x) \rho_{X}(x).$ {
Here, $\LR$ is the Hilbert space of square integral functions from $\HK$ to $\mR$ with respect to $\rho_X$, with  norm,
$$\|f\|_{\rho} = \left (\int_{X} |f(x)|^2 d \rho_X(x) \right)^{1/2}.$$}
Under Assumption \eref{boundedKernel}, $\LK$ can be proved to be positive trace class operators \citep{cucker2007learning}, and hence $\LK^{\zeta}$ with $\zeta\in \mR$ can be defined by using the spectral theory.

It is well known \citep[see e.g.,][]{cucker2007learning} that the function minimizing $\int_{Z} (f(x) - y)^2 d\rho(z)$
 over all measurable functions $f: \HK \to \mR$ is the regression function, given by
\be\label{regressionfunc}
f_{\rho}(x) = \int_Y y d \rho(y | x),\qquad x \in X.
\ee
Define another Hilbert space $\HR = \{f: X \to  \mR| \exists \omega \in \HK \mbox{ with } f(x) = \la \omega, x \ra_{\HK}, \rho_X \mbox{-almost surely}\}.$
Under Assumption \eref{boundedKernel},  it is easy to see that $\HR$ is a subspace of $\LR.$
Let $\FH$ be the projection of the regression function $f_{\rho}$ onto the closure of $\HR$ in $\LR.$
It is easy to see that the search for a solution of Problem \eref{expectedRisk} is equivalent to the search of a linear function in $\HR$ to approximate $\FH$.
From this point of view, bounds on the excess risk of a learning algorithm on $\HR$ or $\HK$, naturally depend on the following assumption, which quantifies
 how well, the target function $\FH$ can be approximated by $\HR$.

\begin{as}\label{as:regularity}
  There exist $\zeta> 0$ and $R>0$, such that $\|\LK^{-\zeta} \FH \|_{\rho} \leq R.$
\end{as}
The above assumption is fairly standard in non-parametric regression \citep{cucker2007learning,rosasco2015learning}.
The bigger $\zeta$ is, the more stringent the assumption is, since $$\LK^{\zeta_1}(\LR) \subseteq \LK^{\zeta_2}(\LR)\quad \mbox{when }\zeta_1 \geq \zeta_2.$$
In particular, for $\zeta =0,$ we are making no assumption, while for $\zeta = 1/2,$ we are requiring $\FH \in \HR$, since \citep{rosasco2015learning}
\be\label{eq:isometry}
\HR = \LK^{1/2}(\LR)  .
\ee
In the case of $\zeta \geq 1/2$,
$\FH \in \HR$, which implies Problem \eref{expectedRisk} has at least one solution in the space $\HK$.
In this case, we denote  $\omega^{\dag}$ as the solution with the minimal $\HK$-norm.

Finally, the last assumption relates to the capacity of the hypothesis space.
\begin{as}\label{as:eigenvalues}
  For some $\gamma \in ]0,1]$ and $c_{\gamma}>0$, $\LK$ satisfies
\be\label{eigenvalue_decay}
 \tr(\LK(\LK+\lambda I)^{-1})\leq c_{\gamma} \lambda^{-\gamma}, \quad \mbox{for all } \lambda>0.
\ee
\end{as}
The left hand-side of of \eref{eigenvalue_decay} is called as the effective
dimension \citep{caponnetto2007optimal}, or the degrees of freedom \citep{zhang2005learning}.
It can be related to covering/entropy number conditions, see \citep{steinwart2008support} for further details.
Assumption \ref{as:eigenvalues} is always true for $\gamma=1$ and $c_{\gamma} =\kappa^2$, since
 $\LK$ is a trace class operator which implies the eigenvalues of $\LK$, denoted as $\sigma_i$, satisfy
 $\tr(\LK) = \sum_{i} \sigma_i \leq \kappa^2.$
  This is referred to as the capacity independent setting.
  Assumption \ref{as:eigenvalues} with $\gamma \in]0,1]$ allows to derive better error rates. It is satisfied, e.g.,
   if the eigenvalues of $\LK$ satisfy a polynomial decaying condition $\sigma_i \sim i^{-1/\gamma}$, or with $\gamma=0$ if $\LK$ is finite rank.

\subsection{Optimal Rates for SGM and Batch GM: Simplified Versions}

We start with the following corollaries, which are the simplified versions of our main results stated in the next subsections. 
 \begin{corollary}[Optimal Rate for SGM] \label{cor:simplfied}
   Under Assumptions \ref{as:regularity} and \ref{as:eigenvalues}, let $|y| \leq M$ almost surely for some $M>0.$ Let $p_* = \lceil m^{1 \over 2\zeta+\gamma} \rceil$ if $2\zeta+\gamma>1$, or $p_* = \lceil m^{1 -\epsilon} \rceil$ with $\epsilon \in ]0,1[$ otherwise.
   Consider the SGM with \\
1) $b=1$, $\eta_t \simeq {1 \over m}$ for all $t \in [(p_*m)],$ and $\tilde{\omega}_{p_*} = \omega_{p_*m+1}.$  \\
{
If $\delta \in]0,1]$ and $m \geq m_{\delta}$, then
 with probability\footnote{Here, `high probability' refers to the sample ${\bf z}$.}at least $1-\delta$, it holds}
\be\label{eq:optBounds}
\mE_{\J}[\mcE(\tilde{\omega}_{p_*})] - \inf_{ \HK} \mcE \leq C
\begin{cases}
  m^{-{2\zeta \over 2\zeta+\gamma}}& \mbox{ when } 2\zeta+\gamma>1;\\
  m^{-2\zeta(1-\epsilon)}&            \mbox{ otherwise}.
\end{cases}
\ee
Furthermore, the above also holds for the SGM with\footnote{Here, we assume that $\sqrt{m}$ is an integer.}\\
2) $b = \sqrt{m},$ $\eta_t \simeq {1 \over \sqrt{m} }$ for all $t \in [(p_*\sqrt{m})],$ and  $\tilde{\omega}_{p_*} = \omega_{p_* \sqrt{m} + 1}.$\\
{In the above, $m_{\delta}$ and $C$ are positive constants depending on $\kappa^2, \|\TK\|, M, \zeta, R,c_{\gamma},\gamma$, a polynomial of $\log m$ and $\log (1/\delta)$, and $m_{\delta}$ also on $\delta$  (and also on $\|\FH\|_{\infty}$ in the case that $\zeta <1/2$).}
\end{corollary}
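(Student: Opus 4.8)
The plan is to reduce the analysis of the stochastic iterate $\tilde{\omega}_{p_*}$ to that of a deterministic \emph{batch} gradient sequence, and then to run a bias--variance analysis on the latter. Writing $\IK$ for the inclusion $\omega \mapsto \la \omega, \cdot \ra_{\HK}$ from $\HK$ into $\LR$, one has the exact identity $\mcE(\omega)-\inf_{\HK}\mcE = \|\IK\omega - \FH\|_{\rho}^2$ (Pythagoras, since $\FH$ is the $\LR$-projection of $f_{\rho}$ onto $\overline{\HR}$). Let $\{\nu_t\}$ be the path obtained by replacing in \eref{Alg} the $b$-minibatch stochastic gradient with the full empirical gradient $\TX\nu_t - \frac1m\sum_{i}y_i x_i$; this is a deterministic function of $\bf z$. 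With $T=p_*m$ (resp.\ $T=p_*\sqrt m$), I would first split
\be
\mE_{\J}[\mcE(\tilde{\omega}_{p_*})] - \inf_{\HK} \mcE \le 2\,\mE_{\J}\|\IK(\tilde{\omega}_{p_*} - \nu_{T+1})\|_{\rho}^2 + 2\,\|\IK \nu_{T+1} - \FH\|_{\rho}^2 ,
\ee
the first being a \emph{computational error} and the second a \emph{batch GM error}. The point is that $\eta_t T \simeq p_*$ in both parameter regimes, so the running time $\eta_t T$ plays the role of an inverse regularization parameter, $\lambda^{-1}\simeq p_*$, and the prescribed $p_*\simeq m^{1/(2\zeta+\gamma)}$ corresponds exactly to the optimal ridge level $\lambda\simeq m^{-1/(2\zeta+\gamma)}$.

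For the batch GM error I would follow the standard route for iterative/spectral regularization: compare $\nu_{T+1}$ to the population regularized path and decompose $\IK\nu_{T+1}-\FH$ into a \emph{bias} piece, governed by how well $T$ gradient steps approximate the limiting solution and controlled through Assumption~\ref{as:regularity} by a factor of order $(\eta_t T)^{-\zeta}\simeq p_*^{-\zeta}$ in $\rho$-norm, and a \emph{sample} piece, governed by the deviations of the empirical operators $\TX,\SX$ from $\TK,\IK$ and by the effective dimension in Assumption~\ref{as:eigenvalues}, of squared order $\tr(\TK(\TK+\lambda I)^{-1})/m\simeq \lambda^{-\gamma}/m = p_*^{\gamma}/m$ at $\lambda\simeq 1/p_*$. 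Balancing bias$^2\simeq p_*^{-2\zeta}$ against variance$^2\simeq p_*^{\gamma}/m$ yields $p_*^{2\zeta+\gamma}\simeq m$ and the stated rate $m^{-2\zeta/(2\zeta+\gamma)}$ when $2\zeta+\gamma>1$; the degenerate regime $2\zeta+\gamma\le 1$ forces the slightly suboptimal exponent $1-\epsilon$ because the relevant spectral sum can no longer be bounded uniformly in $\lambda$. The required deviations would come from concentration inequalities for self-adjoint operators and Hilbert-space-valued averages, where the hypothesis $|y|\le M$ feeds the moment control of Assumption~\ref{as:noiseExp}; these are exactly the estimates I would isolate as the preliminary lemmas of Section~\ref{sec:estimates}.

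The real work, and the main obstacle, is the computational term $\mE_{\J}\|\IK(\tilde{\omega}_{p_*}-\nu_{T+1})\|_{\rho}^2$. One studies the residual $\tilde{\omega}_{p_*}-\nu_{T+1}$ as an accumulation of the per-step fluctuations of the minibatch gradient about the empirical gradient: conditioning on $\bf z$, these form a martingale-difference sequence, and a recursive (telescoping) expansion writes the residual as a noise-driven linear recursion with the same contraction $I-\eta_t\TX$ as the batch path. Taking $\mE_{\J}$ and using that each fluctuation has conditional second moment of order $\eta_t^2/b$ times a sampled covariance term, one gets a bound that sums over all $T$ iterations; the delicate point is that $T\gg m$ (many passes), so a naive sum diverges and one must weight the accumulation against $(\TX+\lambda I)^{-1/2}$ and exploit the decay of $\prod_{s}(I-\eta_s\TX)$, which converts the total into something of order $(\eta_t/b)\sum_t \tr(\TX(\TX+\tfrac{1}{\eta_t t}I)^{-1})\lesssim p_*^{\gamma}/m$ precisely when $\eta_t/b\simeq 1/m$. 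This is why both prescriptions succeed: $(b=1,\eta_t\simeq 1/m)$ and $(b=\sqrt m,\eta_t\simeq 1/\sqrt m)$ share $\eta_t/b\simeq 1/m$ and $\eta_t T\simeq p_*$, so larger minibatches simply buy a proportionally larger step-size. Finally, the non-attainable case $\zeta<1/2$ is the subtle one flagged in the introduction: since $\FH\notin\HR$ there is no $\omega^{\dag}\in\HK$, so the residual must be measured in $\rho$-norm through the extra smoothing $\LK^{1/2}$ rather than in $\HK$, and the fluctuation estimate has to be refined to carry an $\LK^{\zeta}$-type weight through the whole recursion without losing the factor $p_*^{-2\zeta}$; controlling this weighted accumulation uniformly over the long horizon is the crux of the argument.
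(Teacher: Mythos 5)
Your proposal follows essentially the same route as the paper: the same three-way decomposition into bias, sample variance and computational variance via the batch sequence $\nu_t$ and the population iteration, the same martingale/contraction treatment of the minibatch fluctuations with the $\eta_t/b$ scaling, and the same balance $p_*^{2\zeta+\gamma}\simeq m$ (the paper sharpens your factor-of-two split by using the exact unbiasedness $\mE_{\J}[\omega_t]=\nu_t$ to kill the cross term, which is cosmetic). The one ingredient your sketch leaves implicit is the uniform control of the empirical risks $\mE_{\J}[\mcE_{\bf z}(\omega_k)]$ of the iterates over the long horizon $T\gg m$, which the paper handles with a separate convexity-plus-induction argument (Lemmas~\ref{lemma:mfejer}--\ref{lemma:empriskB}) and which is needed before your ``sampled covariance term'' can be summed.
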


We add some comments on the above result. First, the above result asserts that, at $p_*$ passes over the data, the SGM with two different fixed step-size and fixed mini-batch size choices, achieves optimal learning error bounds, matching (or improving) those of ridge regression \citep{smale2007learning,caponnetto2007optimal}.
 Second, according to the above result, using mini-batch allows to use a larger step-size while achieving the same optimal error bounds.
Finally, the above result can be further simplified in some special cases. For example, if we consider the capacity independent case, i.e., $\gamma=1$, and assuming that $\FH \in \HR$, which is equivalent to making Assumption \ref{as:regularity} with $\zeta = 1/2$ as mentioned before, the error bound is
 $O(m^{-1/2})$, while the number of passes $p_* = \lceil\sqrt{m} \rceil.$
\begin{rem}
  [Finite Dimensional Case]  With a simple modification of our proofs, we can derive similar results for the finite dimensional case, i.e., $\HK = \mR^d$, where in this case, $\gamma=0$.
  In particular, letting $\zeta=1/2,$ under the same assumptions of Corollary \ref{cor:simplfied}, if one considers the SGM with $b=1$ and $\eta_t \simeq {1 \over m}$ for all $t \in [m^2],$ then with high probability,
$
\mE_{\J}[\mcE(\omega_{{m}^2+1})] - \inf_{ \HK} \mcE \lesssim {d / m},
$
provided that $m \gtrsim d \log d.$
\end{rem}

\begin{rem}
	{
		From the proofs, one can easily see that if $\FH$ and $\mcE(\tilde{\omega}_{p_*}) - \inf_{\HK}\mcE$   are replaced respectively by $f_*\in \LR$ and $\|\la \cdot,\tilde{\omega}_{p_*} \ra_{\HK} - f_*\|_{\rho}^2$, in both  the assumptions and the error bounds, then all theorems and their corollaries of this paper are still true, as long as  $f_*$ satisfies $\int_{X} (f_* - f_{\rho})(x)K_x d\rho_{X} = 0$.  As a result, if we assume that
		$f_{\rho}$ satisfies Assumption \ref{as:regularity} (with $\FH$ replaced by $f_{\rho}$), as typically done in  \citep{smale2007learning,caponnetto2007optimal,steinwart2009optimal,caponnetto2010cross} for the RKHS setting, we have that with high probability,
		\bea
		\mE_{\J}\|\la \cdot,\tilde{\omega}_{p_*} \ra_{\HK} - f_{\rho}\|_{\rho}^2 \leq  C\begin{cases}
			m^{-{2\zeta \over 2\zeta+\gamma}}& \mbox{ when } 2\zeta+\gamma>1;\\
			m^{-2\zeta(1-\epsilon)}&            \mbox{ otherwise}.
		\end{cases}
		\eea
		In this case, the factor $\|\FH\|_{\infty}$ from the upper bounds for the case $\zeta < 1/2$  is exactly  $\|f_{\rho}\|_{\infty}$ and  can be controlled by the condition $|y| \leq M$ (and more generally, by Assumption \ref{as:noiseExp}). Since many common RKHSs are universally consistent \citep{steinwart2008support}, 
		making Assumption \ref{as:regularity} on $f_{\rho}$ is natural and moreover, deriving error bounds with respect to $f_{\rho}$ seems to be more interesting in this case. 
	}
\end{rem}


As a byproduct of our proofs in this paper, we derive the following optimal results for batch gradient methods (GM), defined by $\nu_1 =0$ and
\be\label{Alg2B}
\nu_{t+1}=\nu_t - \eta_t {1 \over m} \sum_{i=1}^m (\la \nu_t, x_i\ra_{\HK} - y_i) x_i , \qquad t=1, \ldots, T. \ee

\begin{corollary}[Optimal Rate for Batch GM]\label{cor:simplfiedGM}
  Under the assumptions and notations of Corollary \ref{cor:simplfied}, consider
 batch GM \eref{Alg2B} with  $\eta_t \simeq 1 $.
 If $m$ is large enough, then with high probability, \eref{eq:optBounds} holds for
 $\tilde{\omega}_{p_*} = \nu_{p_*+1}.$
\end{corollary}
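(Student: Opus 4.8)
My plan is to obtain the bound for $\nu_{p_*+1}$ from the same bias--variance decomposition that the analysis of Algorithm \ref{alg:1} uses, simply switching off the extra randomness: batch GM is the deterministic sequence the SGM iterates fluctuate around, which is exactly why the statement is a byproduct. Concretely, write $\TX={1\over m}\sum_{i=1}^m\la\cdot,x_i\ra_{\HK}x_i$ and $\hat g={1\over m}\sum_{i=1}^m y_i x_i$, so that \eref{Alg2B} becomes $\nu_{t+1}=\nu_t-\eta_t(\TX\nu_t-\hat g)$, and introduce the population sequence $\mu_t$ obtained by replacing $\TX,\hat g$ with $\TK$ and $g:=\mE[\hat g]$. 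Denoting by $S_\rho\colon\HK\to\LR$ the inclusion $S_\rho\omega=\la\omega,\cdot\ra_{\HK}$, I would split the excess risk $\mcE(\nu_{p_*+1})-\inf_{\HK}\mcE=\|S_\rho\nu_{p_*+1}-\FH\|_\rho^2$ into a deterministic bias $\|S_\rho\mu_{p_*+1}-\FH\|_\rho^2$ and a sample error $\|S_\rho(\nu_{p_*+1}-\mu_{p_*+1})\|_\rho^2$. There is no third, computational term here --- the one that the SGM analysis must additionally control --- because $\nu_{p_*+1}$ is a fixed function of ${\bf z}$. With the constant step-size $\eta_t\simeq1$ the iteration count plays the role of a Tikhonov parameter, and the effective regularization is $\lambda\simeq1/(\eta p_*)\simeq1/p_*$.

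Next I would estimate the two terms at this $\lambda$. For the bias, Assumption \ref{as:regularity} combined with the qualification bound $\|\LK^{\zeta}(I-\eta\LK)^{p_*}\|\lesssim(\eta p_*)^{-\zeta}$ for the gradient residual polynomial (valid for every $\zeta>0$, since gradient filters have unbounded qualification) gives squared bias $\lesssim R^2p_*^{-2\zeta}$. For the sample error, a telescoping/resolvent manipulation reduces it to the concentration of $\|(\TK+\lambda)^{-1/2}(\TK-\TX)(\TK+\lambda)^{-1/2}\|$ and $\|(\TK+\lambda)^{-1/2}(\hat g-g)\|_{\HK}$, each controlled by Bernstein-type inequalities in $\HK$ that use the moment bound of Assumption \ref{as:noiseExp} (here $|y|\le M$) and the effective-dimension estimate $\tr(\TK(\TK+\lambda)^{-1})\lesssim\lambda^{-\gamma}$ of Assumption \ref{as:eigenvalues}. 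This yields a variance $\lesssim\lambda^{-\gamma}/m\simeq p_*^{\gamma}/m$, holding with probability $1-\delta$ once $m$ is large enough that $\lambda\gtrsim1/m$ up to logarithmic factors --- exactly the ``$m$ large enough'' hypothesis. Choosing $p_*\simeq\lceil m^{1/(2\zeta+\gamma)}\rceil$ equates $p_*^{-2\zeta}$ with $p_*^{\gamma}/m$ and produces the optimal rate $m^{-2\zeta/(2\zeta+\gamma)}$ in \eref{eq:optBounds}; when $2\zeta+\gamma\le1$ the constraint $\lambda\gtrsim1/m$ caps $p_*$ at $m^{1-\epsilon}$ and leaves the bias-dominated rate $m^{-2\zeta(1-\epsilon)}$.

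The step I expect to be genuinely delicate is the sample-error estimate in the non-attainable regime $\zeta<1/2$, which is why the paper emphasizes a refined analysis. There $\FH$ need not belong to $\HR$ and no minimal-norm solution $\omega^{\dag}$ exists, so one cannot freely commute the powers $\LK^{\zeta}$ through the resolvents $(\TK+\lambda)^{-1}$; a boundary contribution survives that I would absorb using the additional factor $\|\FH\|_\infty$, which accounts for its appearance in the constant precisely when $\zeta<1/2$. The remaining work --- bias, concentration, and balancing --- is a transcription of the attainable case with $\lambda\simeq1/p_*$. As a sanity check and an alternative route to the exact statement, note the clean identity $\mE_{\J}[\omega_t]=\nu_t$: since $\TX$ and $\hat g$ are fixed given ${\bf z}$, the conditional mean of a gradient step of \eref{Alg} is $\TX\omega_t-\hat g$, so induction from $\omega_1=\nu_1=0$ gives $\mE_{\J}[\omega_t]=\nu_t$ whenever the step-sizes match. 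By convexity of the quadratic $\mcE$ and Jensen's inequality, $\mcE(\nu_{p_*+1})-\inf_{\HK}\mcE\le\mE_{\J}[\mcE(\omega_{p_*+1})]-\inf_{\HK}\mcE$ for the $b=m$, $\eta_t\simeq1$ run; invoking the general-$b$ theorem (of which Corollary \ref{cor:simplfied} is the $b\in\{1,\sqrt m\}$ specialization) at $b=m$ then transfers the optimal SGM bound verbatim to batch GM.
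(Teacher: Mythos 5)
Your main argument is essentially the paper's own proof: batch GM is exactly the ``sample iteration'' $\nu_t$ already embedded in the SGM analysis, so its excess risk is bounded by the bias term $\|\IK\mu_t-\FH\|_\rho^2$ plus the sample variance $\|\IK\nu_t-\IK\mu_t\|_\rho^2$ with the computational variance absent, and these are estimated by the same qualification bound and Bernstein-type concentration at $\lambda\simeq 1/(\eta p_*)$ before balancing (this is precisely Theorems \ref{thm:bgmoptGen} and \ref{thm:bgmopt}). One caveat on your ``alternative route'': Jensen's inequality with $\mE_{\J}[\omega_t]=\nu_t$ does transfer the SGM bound to $\nu_t$, but the SGM theorems require $\eta\lesssim 1/\log T$ (to control the computational variance), so that route does not by itself justify the step-size $\eta_t\simeq 1$ claimed in the corollary --- the direct bias-plus-variance argument is needed for that.
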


In the above corollary, the convergence rates are optimal for $2\zeta+\gamma >1$. To the best of our knowledge, these results are the first ones with minimax rates \citep{caponnetto2007optimal,blanchard2016optimal}
for the batch GM in the non-attainable case. Particularly, they improve the results in the previous literature, see Subsection \ref{subsec:discussion} for more discussions.

 Corollaries \ref{cor:simplfied} and \ref{cor:simplfiedGM}  cover the main contributions of this paper. In the following subsections, we will present the main theorems of this paper,  following with several corollaries and simple discussions, from which one can derive the simplified versions stated in this subsection.
In the next subsection, we present results for SGM in the attainable case while results in the non-attainable case will be given in Subsection \ref{subsec:main_non}, as the bounds for these two cases are different and particularly their proofs require different estimations. At last, results with more specific convergence rates for batch GM will be presented in Subsection \ref{subsec:main_bgm}.

\subsection{Main Results for SGM: Attainable Case}
In this subsection, we present convergence results in the attainable case, i.e., $\zeta \geq 1/2$, following with simple discussions.
One of our main theorems in the attainable case is stated next, and provides error bounds for the studied algorithm. For the sake of readability, we only present results in a fixed step-size setting in this section. Results in a general setting ($\eta_t = \eta_1 t^{-\theta}$ with $0\leq \theta<1$ can be found in Section \ref{sec:deriveing}.
\begin{thm}\label{thm:main}
  Under Assumptions \ref{as:noiseExp}, \ref{as:regularity} and \ref{as:eigenvalues}, let $\zeta \geq 1/2$, $\delta \in]0,1[$, $\eta_t = \eta \kappa^{-2} $ for all $t \in [T],$ with $\eta \leq {1 \over 8(\log T + 1)}.$
  If $m \geq m_{\delta}$,
     then the following holds with probability at least $1-\delta$: for all $t \in [T],$
        \be\label{mainTotalErr}
        \begin{split}
        \mE_{\J} [\mcE(\omega_{t+1})] - \inf_{ \HK}\mcE \leq q_1 (\eta t)^{-2\zeta} + q_2 m^{-{2\zeta \over 2\zeta+\gamma}} (1 +  m^{-{1 \over 2\zeta + \gamma}} \eta t )^2 \log^2 T \log^2 {1 \over \delta} \\ + q_3 \eta b^{-1} ( 1 \vee m^{-{1 \over 2\zeta+\gamma }}\eta t ) \log T.
        \end{split}
        \ee
        Here, $m_{\delta},q_1,q_2$ and $q_3$ are positive constants depending on $\kappa^2, \|\TK\|, M, v, \zeta, R,c_{\gamma},\gamma$, and $m_{\delta}$ also on $\delta$ (which will be given explicitly in the proof).
\end{thm}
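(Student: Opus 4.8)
The plan is to decompose the excess risk $\mE_{\J}[\mcE(\omega_{t+1})] - \inf_{\HK}\mcE$ by comparing the stochastic iterates $\omega_t$ against the deterministic batch gradient descent sequence $\nu_t$ defined in \eref{Alg2B}, and then comparing $\nu_t$ against the population-level quantities. The excerpt announces (in the introduction) that ``the basic idea of our proof is to approximate the SGM learning sequence in terms of the batch gradient descent sequence,'' so I would write $\omega_{t+1} = \nu_{t+1} + (\omega_{t+1} - \nu_{t+1})$ and control the two pieces separately. The three terms in \eref{mainTotalErr} strongly suggest the canonical bias–variance split: $q_1(\eta t)^{-2\zeta}$ is the \emph{bias} (approximation) term coming from running batch GM on the true operator $\LK$ for $t$ steps with step-size $\eta$; the $q_2$ term is the \emph{sample variance} arising because we observe $\LK$ only through the empirical operator built from $\mathbf z$; and the $q_3 \eta b^{-1}(\cdots)$ term is the \emph{computational/sampling variance} induced by replacing full gradients with mini-batch stochastic gradients, explaining the characteristic $b^{-1}$ scaling.

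\textbf{Key steps in order.} First I would introduce the relevant empirical and population operators (the sample covariance $\SX$ or $\TX$ and its population counterpart $\TK$) and rewrite Algorithm~\ref{alg:1} as a linear recursion in these operators, isolating the martingale-difference noise $\zeta_t = (\text{full empirical gradient}) - (\text{minibatch gradient})$ at each step. Second, I would bound the deterministic bias term by spectral calculus: under Assumption~\ref{as:regularity} with $\|\LK^{-\zeta}\FH\|_\rho \le R$, a standard estimate on $\sup_\sigma \sigma^{2\zeta}\prod_{k\le t}(1-\eta\sigma)^2$ (or the integral analogue) yields the $(\eta t)^{-2\zeta}$ rate. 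Third, I would pass from the population operator to the empirical operator using concentration inequalities for self-adjoint operators (Assumption~\ref{as:noiseExp} supplies the moment control needed for a Bernstein-type bound, and Assumption~\ref{as:eigenvalues} converts the effective-dimension $\tr(\LK(\LK+\lambda I)^{-1}) \le c_\gamma \lambda^{-\gamma}$ into the $m^{-2\zeta/(2\zeta+\gamma)}$ factor by optimizing over an auxiliary regularization level $\lambda \simeq (\eta t)^{-1}$). Fourth, conditionally on $\mathbf z$, I would take the expectation $\mE_{\J}$ over the random indices: because the $j_i$ are i.i.d.\ uniform on $[m]$, the minibatch noise is a conditionally-independent martingale difference, and a variance computation gives the per-step variance scaling like $b^{-1}$; summing the propagated noise over $t$ steps with the contraction factors produces the $\eta b^{-1}$ term. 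I would use the assumption $\zeta \ge 1/2$ (attainable case, so $\FH \in \HR$ and the minimal-norm solution $\omega^\dagger$ exists) to keep all fractional-power manipulations on nonnegative exponents, which is exactly why the non-attainable case is deferred to a separate, more delicate treatment.

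\textbf{Main obstacle.} The hardest part will be controlling the accumulated stochastic noise $\omega_{t+1} - \nu_{t+1}$ measured in the \emph{weighted} $\LK^{1/2}$-type norm that governs the excess risk, rather than in the plain $\HK$-norm. This requires bounding, in operator norm, products of the form $\prod_{k=s}^{t}(I - \eta_k \SX)$ acting on the noise injected at step $s$, and then summing the resulting quantities against the effective-dimension bound while keeping the three error contributions balanced. The constraint $\eta \le 1/(8(\log T + 1))$ is the telltale sign that these operator-product norms must be controlled uniformly over all $t\in[T]$, most likely via a self-bounding/induction argument that tolerates the logarithmic factors appearing in \eref{mainTotalErr}; establishing that the empirical operator $\SX$ stays spectrally close to $\TK$ on the event of probability $1-\delta$ (so that the contraction estimates valid for $\TK$ transfer to $\SX$) is the technical crux. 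Once that uniform operator-norm control is in hand, the three terms assemble by the triangle inequality and an optimization of the free parameter $\lambda$, with the constants $q_1,q_2,q_3,m_\delta$ read off from the concentration bounds.
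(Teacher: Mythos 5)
Your overall architecture matches the paper's: the proof does run through the decomposition $\mE_{\J}[\mcE(\omega_t)]-\inf_{\HK}\mcE\leq 2\|\IK\mu_t-\FH\|_\rho^2+2\|\IK\nu_t-\IK\mu_t\|_\rho^2+\mE_{\J}[\|\IK\omega_t-\IK\nu_t\|_\rho^2]$ (the cross term vanishes because $\mE_{\J}[\omega_t]=\nu_t$), the bias is handled by exactly the spectral estimate you describe, and the sample variance by Bernstein inequalities for $(\TK+\lambda I)^{-1/2}N_k$ combined with the operator comparison $\|(\TX+\lambda I)^{-1/2}(\TK+\lambda I)^{1/2}\|\leq 2$. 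One detail is off, though: the auxiliary level is not $\lambda\simeq(\eta t)^{-1}$. The paper fixes $\lambda=\|\TK\|m^{-1/(2\zeta+\gamma)}$ once and for all, so that a single concentration event serves every $t\in[T]$, and it is precisely this fixed choice that produces the $(1+m^{-1/(2\zeta+\gamma)}\eta t)^2$ factor in \eref{mainTotalErr}; with $\lambda\simeq(\eta t)^{-1}$ the requirement $\lambda\geq\frac{9\kappa^2}{m}\log\frac{m}{\delta}$ fails for large $t$ (many passes) and the stated form of the bound is not recovered.

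The more substantive gap is in the computational variance. Martingale orthogonality over the minibatch indices indeed gives $\mE_{\J}\|\IK\omega_{t+1}-\IK\nu_{t+1}\|_\rho^2\leq\frac{\kappa^2}{b}\sum_{k=1}^t\eta_k^2\|\TK^{1/2}\Pi_{k+1}^t(\TX)\|^2\,\mE_{\J}[\mcE_{\bf z}(\omega_k)]$, but the per-step noise variance is proportional to the \emph{empirical risk of the random iterate}, $\mE_{\J}[\mcE_{\bf z}(\omega_k)]$, which is not a priori bounded. A genuine sub-proof is needed to establish $\sup_{k\in[t]}\mE_{\J}[\mcE_{\bf z}(\omega_k)]\lesssim\mcE_{\bf z}(0)$; the paper does this through a convexity inequality special to the square loss, a decomposition relating the last iterate to weighted averages of the trajectory, and an induction (self-bounding) argument. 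That is exactly where the hypothesis $\eta\leq\frac{1}{8(\log T+1)}$ is consumed --- not, as you suggest, in the uniform control of the operator products $\prod_k(I-\eta_k\TX)$, which only needs $\eta_1\kappa^2\leq 1$ together with the $\lambda$-regularized comparison of $\TX$ with $\TK$. Your sketch compresses this step into ``a variance computation'' and locates the main obstacle elsewhere; without the uniform empirical-risk bound the $q_3\eta b^{-1}$ term cannot be closed.
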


  There are three terms in the upper bounds of \eref{mainTotalErr}. The first term depends on the regularity of the target function and it arises from bounding the bias, while the last two terms result from estimating the sample variance and the computational variance (due to the random choices of the points), respectively.
  To derive optimal rates, it is necessary to balance these three terms. Solving this trade-off problem leads to different choices on $\eta$, $T$, and $b$, corresponding to different regularization strategies, as shown in subsequent corollaries.

The first corollary gives generalization error bounds for simple SGM, with a universal step-size depending on the number of sample points.
\begin{corollary}\label{cor:MPSGMB}
 Under Assumptions \ref{as:noiseExp}, \ref{as:regularity} and \ref{as:eigenvalues}, let $\zeta \geq 1/2$ , $\delta\in ]0,1[$,
  $b=1$ and $\eta_t \simeq {1\over m}$ for all $t \in [m^2]$.
 If $m \geq m_{\delta},$ then with probability at least $1-\delta$, there holds
 \be\label{corTotalErr}
       \mE_{\J} [\mcE(\omega_{t+1})] - \inf_{ \HK}\mcE  \lesssim \left\{ \Big({m \over t}\Big)^{2\zeta} + m^{-{2\zeta +2 \over 2\zeta+\gamma}}   \Big({t\over m}\Big)^2\right\} \cdot \log^2 m \log^2{1\over \delta}, \quad \forall t\in [m^2],
 \ee
 and in particular,
  \be\label{minimaxBound}
  \mE_{\J} [\mcE(\omega_{T^*+1})] - \inf_{\HK}\mcE  \lesssim  m^{-{2\zeta \over 2\zeta+\gamma}} \log^2 m \log^2{1\over \delta},
  \ee
  where $T^* = \lceil m^{2\zeta+ \gamma+ 1 \over 2\zeta+\gamma} \rceil.$
  Here, $m_\delta$ is exactly the same as in Theorem \ref{thm:main}.
\end{corollary}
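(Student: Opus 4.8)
The plan is to derive Corollary~\ref{cor:MPSGMB} directly from Theorem~\ref{thm:main} by specializing the free parameters and then balancing the three summands of \eref{mainTotalErr}. First I would set $b=1$ and $T=m^2$, and choose $\eta$ so that the actual step-size $\eta_t=\eta\kappa^{-2}\simeq 1/m$, i.e. $\eta\simeq\kappa^2/m$; absorbing the constant $\kappa^2$ into $\lesssim$ gives $\eta t\simeq t/m$. The step-size restriction $\eta\le 1/(8(\log T+1))$ of Theorem~\ref{thm:main} then reads $\kappa^2/m\lesssim 1/\log m$, which holds once $m$ exceeds a constant, contributing to the threshold $m_\delta$ (otherwise inherited verbatim from the theorem). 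Since $\log T=2\log m\simeq\log m$, the factors $\log^2 T$ collapse to $\log^2 m$.

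With these substitutions the three terms of \eref{mainTotalErr} become, respectively, the bias term $q_1(\eta t)^{-2\zeta}\simeq(m/t)^{2\zeta}$; the sample-variance term $q_2 m^{-2\zeta/(2\zeta+\gamma)}(1+m^{-1/(2\zeta+\gamma)}\eta t)^2\log^2 m\log^2(1/\delta)$; and the computational-variance term $q_3\,\eta\,(1\vee m^{-1/(2\zeta+\gamma)}\eta t)\log m$ (with $b^{-1}=1$). Expanding the square in the second term produces a constant-in-$t$ piece of order $m^{-2\zeta/(2\zeta+\gamma)}$ and a leading piece $m^{-2\zeta/(2\zeta+\gamma)}(m^{-1/(2\zeta+\gamma)}\eta t)^2\simeq m^{-(2\zeta+2)/(2\zeta+\gamma)}(t/m)^2$, the latter being exactly the second summand appearing in \eref{corTotalErr}. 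Thus the two terms displayed in \eref{corTotalErr} arise as the bias term and the leading part of the sample-variance term.

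The only step requiring an actual argument rather than pure substitution is to verify that the leftover contributions, namely the constant piece of the variance term and the whole computational-variance term, are absorbed by the two displayed terms uniformly over $t\in[m^2]$. Here I would split the range at the balance point $T^*=\lceil m^{(2\zeta+\gamma+1)/(2\zeta+\gamma)}\rceil$, which is precisely where $m^{-1/(2\zeta+\gamma)}\eta t\simeq 1$, i.e. where the $1\vee(\cdot)$ switches branches. For $t\le T^*$ one compares the leftover pieces against $(m/t)^{2\zeta}$, and for $t\ge T^*$ against $m^{-(2\zeta+2)/(2\zeta+\gamma)}(t/m)^2$; in each regime a short comparison of exponents (using $\gamma\le 1$ and $2\zeta+\gamma>1$) gives domination, the key inequalities being $m^{-1}\le m^{-2\zeta/(2\zeta+\gamma)}$ in the first regime and a factor $m^{-\gamma/(2\zeta+\gamma)}/\log m\le 1$ in the second. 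This establishes \eref{corTotalErr}, and I expect this bookkeeping to be the main (though mild) obstacle.

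Finally, to deduce \eref{minimaxBound} I would evaluate \eref{corTotalErr} at $t=T^*$, after checking $T^*\le m^2$, which follows because $2\zeta+\gamma>1$ forces the exponent $1+1/(2\zeta+\gamma)$ to be strictly below $2$. At $t=T^*$ one has $m/T^*\simeq m^{-1/(2\zeta+\gamma)}$ and $T^*/m\simeq m^{1/(2\zeta+\gamma)}$, so both bracketed terms equal $m^{-2\zeta/(2\zeta+\gamma)}$; this is the trade-off balancing bias against variance, and it yields the rate $m^{-2\zeta/(2\zeta+\gamma)}\log^2 m\log^2(1/\delta)$. In short, the entire corollary is a parameter-tuning consequence of Theorem~\ref{thm:main}, with the only care needed being the uniform-in-$t$ domination described above.
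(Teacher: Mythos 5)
Your proposal is correct and follows exactly the route the paper takes: the paper derives Corollary~\ref{cor:MPSGMB} from Theorem~\ref{thm:main} by substituting $b=1$, $\eta t\simeq t/m$, $\log T\simeq\log m$, and observing that the bias and the leading part of the sample-variance term dominate, which is precisely your computation. Your explicit verification that the constant piece of the variance term and the computational-variance term are dominated on either side of $T^*$ (via $m^{-1}\le m^{-2\zeta/(2\zeta+\gamma)}$ and the factor $m^{-\gamma/(2\zeta+\gamma)}$) fills in the bookkeeping the paper leaves implicit.
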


\begin{rem}
  Ignoring the logarithmic term and letting $t = pm$, Eq. \eref{corTotalErr} becomes
  \bea
       \mE_{\J} [\mcE(\omega_{pm+1})] - \inf_{ \HK}\mcE \lesssim  p^{-2\zeta} +  m^{-{2\zeta + 2 \over 2\zeta+\gamma}} p^2.
  \eea
  A smaller $p$ may lead to a larger bias, while a larger $p$ may lead to a larger sample error.
  From this point of view, $p$
 has a regularization effect.
\end{rem}
The second corollary provides error bounds for SGM with a fixed mini-batch size and a fixed step-size (which depend on the number of sample points).
\begin{corollary}\label{cor:MbSGMB}
Under Assumptions  \ref{as:noiseExp}, \ref{as:regularity} and \ref{as:eigenvalues}, let $\zeta \geq 1/2$, $\delta\in ]0,1[$,
 $b=\lceil {\sqrt{m}} \rceil$ and $\eta_t \simeq {1\over \sqrt{m}}$ for all $t \in [m^2]$.
 If $m \geq m_\delta,$ then with probability at least $ 1- \delta$, there holds
 \be\label{corTotalErrB}
        \begin{split}
       \mE_{\J} [\mcE(\omega_{t+1})] - \inf_{\HK}\mcE  \lesssim \left\{ \Big({\sqrt{m} \over t} \Big)^{2\zeta} + m^{-{2\zeta +2 \over 2\zeta+\gamma}} \Big({t \over \sqrt{m}} \Big)^2 \right\} \log^2 m \log^2{1\over \delta}, \quad \forall t\in [m^2],
        \end{split}
 \ee
 and particularly,
  \be\label{minimaxBoundB}
  \mE_{\J} [\mcE(\omega_{T^*+1})] - \inf_{\HK}\mcE \lesssim  m^{-{2\zeta \over 2\zeta+\gamma}} \log^2 m \log^2{1\over \delta} ,
  \ee
  where $T^* = \lceil m^{{1 \over 2\zeta+\gamma} + {1\over 2} }\rceil.$
 \end{corollary}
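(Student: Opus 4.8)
The plan is to obtain Corollary~\ref{cor:MbSGMB} as a direct specialization of Theorem~\ref{thm:main} to the choices $b=\lceil\sqrt{m}\rceil$ and $\eta_t\simeq 1/\sqrt{m}$. First I would translate these into the parameters of the theorem: writing $\eta_t=\eta\kappa^{-2}$ gives $\eta\simeq\kappa^2/\sqrt{m}$, and with $T=m^2$ one has $\log T\simeq\log m$, so the admissibility requirement $\eta\leq\frac{1}{8(\log T+1)}$ reads $\sqrt{m}\gtrsim\kappa^2\log m$ and is met once $m\geq m_\delta$. Substituting $\eta t\simeq t/\sqrt{m}$ (constants absorbed into $\simeq$) into \eref{mainTotalErr}, the first term becomes $(\eta t)^{-2\zeta}\simeq(\sqrt{m}/t)^{2\zeta}$, matching the first summand of \eref{corTotalErrB}.

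For the second term I would use $(1+x)^2\leq 2(1+x^2)$ to split $m^{-\frac{2\zeta}{2\zeta+\gamma}}(1+m^{-\frac{1}{2\zeta+\gamma}}\eta t)^2$ into a constant part $m^{-\frac{2\zeta}{2\zeta+\gamma}}$ and a growing part $m^{-\frac{2\zeta+2}{2\zeta+\gamma}}(t/\sqrt{m})^2$, the latter being exactly the second summand of \eref{corTotalErrB}. The stray constant part is harmless: since the two summands of \eref{corTotalErrB} coincide at $t=T^*=\lceil m^{\frac{1}{2\zeta+\gamma}+\frac{1}{2}}\rceil$ and are respectively decreasing and increasing in $t$, their maximum---hence their sum---is at least $m^{-\frac{2\zeta}{2\zeta+\gamma}}$ for every $t$, so $m^{-\frac{2\zeta}{2\zeta+\gamma}}$ is absorbed.

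The substantive step, and the one I expect to be the main obstacle, is absorbing the third (computational-variance) term $\eta b^{-1}(1\vee m^{-\frac{1}{2\zeta+\gamma}}\eta t)\log T$. The point is that the mini-batch scaling makes $\eta b^{-1}\simeq(\kappa^2/\sqrt{m})/\sqrt{m}\simeq 1/m$, identical in order to the plain SGM choice $b=1,\ \eta\simeq 1/m$; this is precisely where a larger step-size is paid for by a larger batch. I would argue by cases at the threshold $t=T^*$. When $m^{-\frac{1}{2\zeta+\gamma}}\eta t\leq 1$ (i.e. $t\leq T^*$) the term is $\simeq(\log m)/m$, which for $\gamma>0$ and large $m$ is $\lesssim m^{-\frac{2\zeta}{2\zeta+\gamma}}$ and hence below the first summand (which is $\geq m^{-\frac{2\zeta}{2\zeta+\gamma}}$ there). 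When $t>T^*$ the term is $\simeq(\log m)\,m^{-3/2-\frac{1}{2\zeta+\gamma}}t$; comparing with the second summand $\simeq m^{-1-\frac{2\zeta+2}{2\zeta+\gamma}}t^2$ reduces to checking $t\gtrsim m^{\frac{2\zeta+1}{2\zeta+\gamma}-\frac{1}{2}}$, which holds on $t>T^*$ because the exponent inequality $\frac{1}{2\zeta+\gamma}+\frac{1}{2}\geq\frac{2\zeta+1}{2\zeta+\gamma}-\frac{1}{2}$ is equivalent to $\gamma\geq0$.

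Collecting the three bounds, folding the larger $\log^2 T\log^2\frac{1}{\delta}$ factor onto every summand, and recalling $\log T\simeq\log m$, yields \eref{corTotalErrB} for all $t\in[m^2]$. Finally I would optimize in $t$: evaluating \eref{corTotalErrB} at $t=T^*=\lceil m^{\frac{1}{2\zeta+\gamma}+\frac{1}{2}}\rceil$ makes both summands equal to $m^{-\frac{2\zeta}{2\zeta+\gamma}}$, giving \eref{minimaxBoundB}; one checks $T^*\leq m^2$ from $\zeta\geq 1/2$ (hence $2\zeta+\gamma\geq1$), so this iterate count is admissible.
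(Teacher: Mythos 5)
Your proposal is correct and follows exactly the route the paper takes: it derives the corollary by specializing Theorem \ref{thm:main} to $b=\lceil\sqrt{m}\rceil$, $\eta\simeq\kappa^2/\sqrt{m}$, $T=m^2$, and observing that the computational-variance term $\eta b^{-1}(1\vee m^{-1/(2\zeta+\gamma)}\eta t)\log T\simeq \frac{\log m}{m}(1\vee t\,m^{-1/(2\zeta+\gamma)-1/2})$ is dominated by the bias and sample-variance terms (the paper states this in one sentence; your case analysis at $t=T^*$ and the absorption of the stray $m^{-2\zeta/(2\zeta+\gamma)}$ via the crossing-point argument supply the details it omits).
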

The above two corollaries follow from Theorem \ref{thm:main} with the simple observation that the dominating terms in \eref{mainTotalErr} are the terms related to the bias and the sample variance, when a small step-size is chosen.
 The only free parameter in \eref{corTotalErr} and \eref{corTotalErrB} is the number of iterations/passes. The ideal stopping rule is achieved by balancing the two terms related to the bias and the sample variance, showing the regularization effect of the number of passes. Since the ideal stopping rule depends on the unknown parameters $\zeta$ and $\gamma$, a hold-out cross-validation procedure is often used to tune the stopping rule in practice.
  Using an argument similar to that in Chapter 6 from \citep{steinwart2008support}, it is possible to show that this procedure can achieve the same convergence rate.

We give some further remarks. First,
the upper bound in \eref{minimaxBound} is optimal up to a logarithmic factor, in the sense that it matches the minimax lower rate in \citep{caponnetto2007optimal,blanchard2016optimal}.
Second, according to Corollaries \ref{cor:MPSGMB} and \ref{cor:MbSGMB}, $ {b T^*\over m}  \simeq m^{1 \over 2\zeta+ \gamma} $ passes over the data are needed to obtain optimal rates in both cases. Finally, in comparing the simple SGM and the mini-batch SGM, Corollaries \ref{cor:MPSGMB} and \ref{cor:MbSGMB} show that a larger step-size is allowed to use for the latter.

In the next result, both the step-size and the stopping rule are tuned to obtain optimal rates for simple SGM with multiple passes. In this case, the step-size and the number of iterations are the regularization parameters.

\begin{corollary}\label{cor:MPSGMA}
Under Assumptions \ref{as:noiseExp}, \ref{as:regularity} and \ref{as:eigenvalues}, let $\zeta \geq 1/2$,  $\delta\in ]0,1[$,
  $b=1$ and $\eta_t \simeq m^{-{2\zeta \over 2\zeta+ \gamma}}$ for all $t \in [m^2].$
 If $m \geq m_{\delta},$ and $T^* = \lceil m^{2\zeta+1 \over 2\zeta+\gamma} \rceil, $
  then  \eref{minimaxBound} holds with probability at least $1- \delta.$
\end{corollary}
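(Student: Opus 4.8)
The plan is to obtain Corollary~\ref{cor:MPSGMA} as a direct specialization of Theorem~\ref{thm:main}, choosing the free parameters $\eta$, $b$ and $t$ so as to balance the three terms on the right-hand side of \eref{mainTotalErr}. Write $\alpha = \frac{2\zeta}{2\zeta+\gamma}$ and $\beta = \frac{1}{2\zeta+\gamma}$, so that the prescribed step-size is $\eta_t \simeq m^{-\alpha}$, the mini-batch size is $b=1$, and the stopping time is $t = T^* = \lceil m^{\alpha+\beta}\rceil$ (note $\alpha+\beta = \frac{2\zeta+1}{2\zeta+\gamma}$). The first thing I would do is check that this choice is admissible for Theorem~\ref{thm:main}. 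Since $\zeta\ge 1/2$ forces $\alpha\in\,]0,1[$, the corresponding $\eta=\kappa^2\eta_t\simeq\kappa^2 m^{-\alpha}\to 0$ faster than $1/\log T$, so the constraint $\eta\le\frac{1}{8(\log T+1)}$ holds once $m$ is large; and because $2\zeta+2\gamma\ge 2\zeta\ge 1$ one has $\frac{2\zeta+1}{2\zeta+\gamma}\le 2$, whence $T^*\le m^2$ and $T^*$ lies in the admissible range $[m^2]$. Both conditions are absorbed into the requirement $m\ge m_\delta$.

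The key quantity is the product $\eta t$. With the above choices $\eta T^*\simeq m^{-\alpha}\cdot m^{\alpha+\beta}=m^{\beta}$, and hence $m^{-\beta}\eta T^*\simeq 1$ and $\log T^*\simeq\log m$. I would then substitute $t=T^*$ into \eref{mainTotalErr} term by term. The bias term becomes $q_1(\eta T^*)^{-2\zeta}\simeq m^{-2\zeta\beta}=m^{-\alpha}$, matching the target rate. The sample-variance term becomes $q_2 m^{-\alpha}(1+m^{-\beta}\eta T^*)^2\log^2 T^*\log^2\frac1\delta\simeq m^{-\alpha}\log^2 m\log^2\frac1\delta$, since $(1+m^{-\beta}\eta T^*)^2=O(1)$. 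The computational-variance term, using $b=1$, becomes $q_3\eta(1\vee m^{-\beta}\eta T^*)\log T^*\simeq m^{-\alpha}\log m$, which is of equal or lower order. Summing the three contributions yields exactly \eref{minimaxBound}, namely $\mE_{\J}[\mcE(\omega_{T^*+1})]-\inf_{\HK}\mcE\lesssim m^{-\alpha}\log^2 m\log^2\frac1\delta$, on the event of probability at least $1-\delta$ supplied by Theorem~\ref{thm:main}.

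There is essentially no analytic obstacle, since all the heavy lifting — the high-probability control of the bias, the sample variance and the computational variance — is already done inside Theorem~\ref{thm:main}; what remains is bookkeeping of exponents. The one point that needs genuine care is the treatment of the computational-variance term. Unlike the mini-batch setting of Corollary~\ref{cor:MbSGMB}, here $b^{-1}=1$, so that term is tamed not by averaging but purely through the smallness of the step-size. This is precisely why $\eta\simeq m^{-\alpha}$ is the critical scale: choosing $\eta t\simeq m^{\beta}$ is exactly what equates the bias and sample-variance exponents at the optimal order $m^{-\alpha}$, while $\eta\simeq m^{-\alpha}$ is the largest step-size for which $q_3\eta\log T$ still sits at that same order and $\eta T^*$ can nonetheless reach $m^{\beta}$ within $t\le m^2$ iterations. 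I would close by noting that here both regularization knobs are tuned simultaneously: the number of passes is $\lceil T^*/m\rceil\simeq m^{(1-\gamma)/(2\zeta+\gamma)}$, fewer than in Corollary~\ref{cor:MPSGMB}, reflecting exactly the larger admissible step-size, so that the same optimal rate \eref{minimaxBound} is recovered through a different, jointly-tuned, regularization strategy.
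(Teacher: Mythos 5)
Your proposal is correct and follows essentially the same route as the paper: Corollary \ref{cor:MPSGMA} is obtained by specializing Theorem \ref{thm:main} with $\eta\simeq m^{-2\zeta/(2\zeta+\gamma)}$, $b=1$, $t=T^*$, noting $\eta T^*\simeq m^{1/(2\zeta+\gamma)}$ so that all three terms in \eref{mainTotalErr} land at order $m^{-2\zeta/(2\zeta+\gamma)}$ up to logarithms. Your admissibility checks ($T^*\le m^2$ via $2\zeta+2\gamma\ge 1$, and the step-size constraint for large $m$) are exactly the bookkeeping the paper leaves implicit.
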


The next  corollary shows that for some suitable mini-batch sizes, optimal rates can be achieved with a constant step-size (which is nearly independent of the number of sample points) by early stopping.

\begin{corollary}\label{cor:MbSGMA}
Under Assumptions \ref{as:noiseExp}, \ref{as:regularity} and \ref{as:eigenvalues}, let $\zeta \geq1/2,$  $\delta\in ]0,1[$,
  $b=\lceil m^{2\zeta \over 2\zeta+\gamma} \rceil$ and $\eta_t \simeq {1 \over \log m}$ for all $t \in [m]$.
 If $m \geq m_{\delta},$ and $T^* = \lceil m^{ 1 \over 2\zeta+\gamma} \rceil, $
  then  \eref{minimaxBound} holds with probability at least $1- \delta.$
\end{corollary}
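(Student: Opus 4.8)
The plan is to derive the bound directly from Theorem \ref{thm:main}, since we are in the attainable regime $\zeta \ge 1/2$ with a fixed step-size, for which that theorem applies verbatim. The corollary is then purely a matter of inserting the prescribed parameters $b \simeq m^{2\zeta/(2\zeta+\gamma)}$, $\eta_t \simeq 1/\log m$ and stopping time $t = T^* = \lceil m^{1/(2\zeta+\gamma)}\rceil$ into \eref{mainTotalErr} and checking that the three terms there balance at the claimed rate $m^{-2\zeta/(2\zeta+\gamma)}$ up to logarithmic factors.

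First I would verify that the step-size is admissible, i.e. that the dimensionless $\eta$ satisfies $\eta \le \frac{1}{8(\log T^* + 1)}$ as required by Theorem \ref{thm:main}. Since $\log T^* \simeq \frac{1}{2\zeta+\gamma}\log m \le \log m$, the threshold $\frac{1}{8(\log T^*+1)}$ is itself of order $1/\log m$, so a choice $\eta_t \simeq 1/\log m$ with a small enough absolute constant meets the constraint. This is precisely why the nearly-constant step-size $1/\log m$ (rather than a genuine constant) is the largest one the theorem can accommodate at $T^*$ iterations.

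Next I would evaluate the three terms of \eref{mainTotalErr} at $t = T^*$. Writing $\eta T^* \simeq m^{1/(2\zeta+\gamma)}/\log m$, the crucial simplification is $m^{-1/(2\zeta+\gamma)}\eta T^* \simeq 1/\log m \le 1$. Hence the bias term $q_1(\eta T^*)^{-2\zeta} \simeq m^{-2\zeta/(2\zeta+\gamma)}(\log m)^{2\zeta}$ already carries the target decay; the sample-variance factor $(1 + m^{-1/(2\zeta+\gamma)}\eta T^*)^2$ together with $\log^2 T^*$ reduces the second term to $\simeq m^{-2\zeta/(2\zeta+\gamma)}\log^2 m \log^2(1/\delta)$, matching \eref{minimaxBound}; and the factor $(1 \vee m^{-1/(2\zeta+\gamma)}\eta T^*)$ collapses to $1$ in the third, computational-variance term.

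The key point — and the only place where the specific mini-batch size is used — is the computational-variance term $q_3 \eta b^{-1}(1 \vee \cdots)\log T^*$. With $\eta \simeq 1/\log m$ and $b \simeq m^{2\zeta/(2\zeta+\gamma)}$ this term is $\simeq (\log m)^{-1} m^{-2\zeta/(2\zeta+\gamma)}\log m = m^{-2\zeta/(2\zeta+\gamma)}$, hence subdominant precisely because the large mini-batch offsets the large step-size through the $\eta/b$ scaling. The main (and essentially only) obstacle is therefore the bookkeeping: confirming that $b$ is chosen just large enough that term 3 stays below the optimal rate while $\eta$ remains as large as the admissibility constraint permits, and that no term exceeds $m^{-2\zeta/(2\zeta+\gamma)}$ by more than logarithmic factors. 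Collecting the three estimates and absorbing the $\log$ factors yields \eref{minimaxBound}, completing the argument.
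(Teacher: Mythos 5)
Your derivation is correct and is exactly the paper's (implicit) argument: the corollary is obtained by substituting $b\simeq m^{2\zeta/(2\zeta+\gamma)}$, $\eta\simeq 1/\log m$ and $t=T^*$ into the three terms of \eref{mainTotalErr}, checking the step-size constraint $\eta\le \tfrac{1}{8(\log T+1)}$, and noting that the mini-batch size makes the computational-variance term $\eta b^{-1}\log T$ subdominant. The only caveat (shared with the paper's own statement) is that the bias term actually contributes $(\log m)^{2\zeta}$ rather than $\log^2 m$, so for $\zeta>1$ the logarithmic factor in \eref{minimaxBound} should be read as $\log^{2\zeta\vee 2}m$; this does not affect the rate.
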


According to Corollaries \ref{cor:MPSGMA} and \ref{cor:MbSGMA}, around $m^{1 -\gamma  \over 2\zeta+\gamma}$ passes over the data are needed to achieve the best performance in the above two strategies.
In comparisons with Corollaries \ref{cor:MPSGMB} and \ref{cor:MbSGMB} where around $m^{ \zeta + 1 \over 2\zeta+\gamma}$ passes are required, the latter seems to require fewer passes over the data.
However, in this case, one might have to run the algorithms multiple times to tune the step-size, or the mini-batch size.

\begin{rem}
  1) If we make no assumption on the capacity, i.e., $\gamma=1$, Corollary \ref{cor:MPSGMA} recovers the result in \citep{ying2008online} for one pass SGM.\\
  2) If we make no assumption on the capacity and assume that $\FH \in \HR$,  from Corollaries \ref{cor:MPSGMA} and \ref{cor:MbSGMA}, we see that the optimal convergence rate $O(m^{-1/2})$ can be achieved  after one pass over the data in both of these two strategies. In this special case, Corollaries \ref{cor:MPSGMA} and \ref{cor:MbSGMA} recover the results for one pass SGM in, e.g., \citep{shamir2013stochastic,dekel2012optimal}.
\end{rem}

The next result gives generalization error bounds for `batch' SGM with a constant step-size (nearly independent of the number of sample points).
\begin{corollary}\label{cor:BSGM}
Under Assumptions \ref{as:noiseExp}, \ref{as:regularity} and \ref{as:eigenvalues}, let $\zeta \geq 1/2,$  $\delta\in ]0,1[$,
  $b=m$ and $\eta_t \simeq {1 \over \log m}$ for all $t \in [m].$
 If $m \geq m_{\delta},$ and $T^* = \lceil m^{ 1 \over 2\zeta+\gamma} \rceil, $
  then  \eref{minimaxBound} holds with probability at least $1- \delta.$
\end{corollary}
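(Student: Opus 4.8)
The plan is to obtain Corollary \ref{cor:BSGM} as a direct specialization of Theorem \ref{thm:main}, exactly as the earlier corollaries are derived, the only new feature being that the full mini-batch size $b=m$ drives the computational-variance contribution down to a negligible level. First I would record the parameter choices implied by the statement: $b=m$, $\eta_t=\eta\kappa^{-2}\simeq (\log m)^{-1}$ (so that $\eta\simeq\kappa^2/\log m$), and $T=T^*=\lceil m^{1/(2\zeta+\gamma)}\rceil$. Before invoking the theorem one must check that this step-size is admissible, i.e. that $\eta\leq 1/(8(\log T^*+1))$. Since $\log T^*\simeq(2\zeta+\gamma)^{-1}\log m\leq \log m$, we have $1/(8(\log T^*+1))\geq c/\log m$ for a constant $c>0$, so choosing the constant hidden in $\eta_t\simeq 1/\log m$ small enough (and $m\geq m_\delta$ large enough) guarantees the hypothesis; the condition $\zeta\geq 1/2$ of the theorem is assumed.

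With these choices I would then evaluate the three terms on the right-hand side of \eref{mainTotalErr} at $t=T^*$. The basic quantity is $\eta T^*\simeq m^{1/(2\zeta+\gamma)}/\log m$, whence $m^{-1/(2\zeta+\gamma)}\eta T^*\simeq 1/\log m\leq 1$ for large $m$. The sample-variance term (the $q_2$ term) then carries the factor $(1+m^{-1/(2\zeta+\gamma)}\eta T^*)^2\lesssim 1$ together with $\log^2 T^*\lesssim\log^2 m$, so it is bounded by a constant multiple of $m^{-2\zeta/(2\zeta+\gamma)}\log^2 m\,\log^2(1/\delta)$, which is precisely the target rate in \eref{minimaxBound}. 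The bias term (the $q_1$ term) equals $q_1(\eta T^*)^{-2\zeta}\simeq m^{-2\zeta/(2\zeta+\gamma)}(\log m)^{2\zeta}$, which is $\lesssim m^{-2\zeta/(2\zeta+\gamma)}\log^2 m$ in the regime $\zeta\leq 1$ (and in general is absorbed into the logarithmic power, since the comparison is only up to a $\log$-factor). Finally the computational-variance term (the $q_3$ term) is $q_3\,\eta\, b^{-1}(1\vee m^{-1/(2\zeta+\gamma)}\eta T^*)\log T^*\simeq (\log m)^{-1}\,m^{-1}\cdot 1\cdot\log m=m^{-1}$, which is dominated by $m^{-2\zeta/(2\zeta+\gamma)}$ because $2\zeta/(2\zeta+\gamma)<1$. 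Summing the three contributions yields \eref{minimaxBound}.

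An even shorter route, which I would mention as a sanity check, is to observe that Corollary \ref{cor:BSGM} differs from Corollary \ref{cor:MbSGMA} only in the value of the mini-batch size, here $b=m\geq \lceil m^{2\zeta/(2\zeta+\gamma)}\rceil$ since $2\zeta/(2\zeta+\gamma)<1$. Among the three terms of \eref{mainTotalErr} only the computational-variance term depends on $b$, and it is proportional to $b^{-1}$, hence monotonically decreasing in $b$. Thus enlarging $b$ from $\lceil m^{2\zeta/(2\zeta+\gamma)}\rceil$ to $m$ can only shrink that term, and the bound established for Corollary \ref{cor:MbSGMA} holds \emph{a fortiori}.

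The main obstacle is not conceptual but bookkeeping: one must track the logarithmic factors carefully (the bias term produces $(\log m)^{2\zeta}$ rather than $\log^2 m$, matching the stated rate cleanly for $\zeta\leq 1$ and otherwise up to the logarithmic power) and verify the step-size admissibility constant against $\log T^*$. All of this is routine once Theorem \ref{thm:main} is in hand; the genuinely substantive content lives in that theorem, whose proof controls the bias, the sample variance, and the computational variance simultaneously.
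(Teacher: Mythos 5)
Your proposal is correct and follows exactly the route the paper intends: Corollary \ref{cor:BSGM} is obtained by specializing Theorem \ref{thm:main} to $b=m$, $\eta\simeq 1/\log m$, $T^*=\lceil m^{1/(2\zeta+\gamma)}\rceil$, checking the step-size admissibility against $\log T^*$, and noting that the computational-variance term, the only one depending on $b$, is of order $m^{-1}$ and hence dominated (equivalently, the bound of Corollary \ref{cor:MbSGMA} holds a fortiori since that term is proportional to $b^{-1}$). Your bookkeeping of the three terms, including the observation that the bias contributes $(\log m)^{2\zeta}$, matches what the paper's implicit derivation requires.
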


Theorem \ref{thm:main} and its corollaries give convergence results with respect to the target function values. In the next theorem and corollary,
we will present convergence results in $\HK$-norm.

\begin{thm}\label{thm:hnorm}
  Under the assumptions of Theorem \ref{thm:main}, the following holds with probability at least $1-\delta:$ for all $t\in [T]$
\be\label{eq:genBHnorm}
\mE_{\bf J}[\|\omega_{t} - \omega^{\dag}\|_{\HK}^2] \leq q_1 (\eta t)^{1 - 2\zeta} + q_2 m^{-{2\zeta-1 \over 2\zeta+\gamma}}  (1 +  m^{-{1\over 2\zeta+\gamma}}\eta t )^2\log^2 T\log^2 {1 \over \delta} + q_3 \eta^2 t b^{-1}.
\ee
 Here, $q_1,q_2$ and $q_3$ are positive constants depending on $\kappa^2, \|\TK\|, M, v, \zeta, R,c_{\gamma}$, and $\gamma$  (which can be given explicitly in the proof).
\end{thm}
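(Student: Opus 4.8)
The plan is to leverage the same error-decomposition machinery that underlies Theorem \ref{thm:main}, but to measure everything in the $\HK$-norm rather than the $\rho$-norm. First I would introduce the batch gradient descent sequence $\{\nu_t\}$ from \eref{Alg2B} as an intermediate comparison object, following the basic idea announced in the introduction: decompose $\omega_t - \omega^{\dag}$ as $(\omega_t - \nu_t) + (\nu_t - \omega^{\dag})$. The first difference captures the computational variance from the random sampling of gradients, and the second captures the combined bias and sample error of the deterministic batch method. Taking $\mE_{\J}$ and using $\|a+b\|_{\HK}^2 \leq 2\|a\|_{\HK}^2 + 2\|b\|_{\HK}^2$ reduces the problem to bounding each piece separately, producing the three-term structure visible in \eref{eq:genBHnorm}.

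For the batch term $\nu_t - \omega^{\dag}$, I would proceed via the standard bias-variance split for gradient descent. Writing the batch iteration in terms of the empirical covariance operator $\SX$ (or $\TX$) and unfolding the recursion gives $\nu_{t+1}$ as a polynomial filter applied to the data; the difference from $\omega^{\dag}$ then splits into a deterministic approximation error governed by the spectral calculus $\|(I - \eta \TX)^{t}\LK^{\zeta}\|$-type quantities, and a sample fluctuation term controlled by concentration of $\SX$ and the empirical-vs-population moments. Under Assumption \ref{as:regularity} with $\zeta \geq 1/2$, the source condition $\|\LK^{-\zeta}\FH\|_{\rho} \leq R$ translates into a bound on $\|\omega^{\dag}\|$ in a suitable interpolation space, and the key spectral inequality $\sup_{\sigma}\sigma^{1-2\zeta}(1-\eta\sigma)^{2t} \lesssim (\eta t)^{1-2\zeta}$ yields the leading term $q_1(\eta t)^{1-2\zeta}$. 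The effective dimension bound of Assumption \ref{as:eigenvalues} then produces the capacity-dependent factor $m^{-(2\zeta-1)/(2\zeta+\gamma)}$ in the second term, with the $(1 + m^{-1/(2\zeta+\gamma)}\eta t)^2$ polynomial growth arising from how the sample error accumulates over iterations. Here I expect I may reuse preliminary lemmas from Section \ref{sec:estimates} verbatim, only reading off the $\HK$-norm (equivalently, $\LK^{0}$) rather than the $\rho$-norm (equivalently, $\LK^{1/2}$) version — this shift by a half power of $\LK$ is precisely what turns the exponents $-2\zeta$ and $-2\zeta/(2\zeta+\gamma)$ of Theorem \ref{thm:main} into $1-2\zeta$ and $-(2\zeta-1)/(2\zeta+\gamma)$ here.

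For the computational variance term $\mE_{\J}\|\omega_t - \nu_t\|_{\HK}^2$, I would set up the one-step recursion for the error $e_t = \omega_t - \nu_t$. Subtracting \eref{Alg2B} from \eref{Alg}, the difference satisfies $e_{t+1} = (I - \eta_t \SX)e_t + \eta_t \xi_t$, where $\xi_t$ is the mean-zero (conditionally on $\mathbf{z}$ and the past) noise coming from replacing the full empirical gradient by its mini-batch stochastic estimate. Conditional independence of the sampling indices makes $\{\xi_t\}$ a martingale-difference sequence, so the cross terms vanish under $\mE_{\J}$ and the squared error telescopes into a sum $\sum_{k\leq t}\eta_k^2 \|\prod_{j>k}(I-\eta_j \SX)\|^2\,\mE_{\J}\|\xi_k\|_{\HK}^2$. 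The mini-batch averaging contributes the crucial factor $b^{-1}$ to the variance of each $\xi_k$, while the moment Assumption \ref{as:noiseExp} controls $\mE_{\J}\|\xi_k\|_{\HK}^2$ uniformly. Summing with the fixed step-size $\eta_t = \eta\kappa^{-2}$ over $t$ steps gives the clean term $q_3 \eta^2 t b^{-1}$, which is simpler than its counterpart in \eref{mainTotalErr} precisely because the $\HK$-norm does not insert an extra $\LK^{1/2}$ weight that would otherwise regularize the sum.

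The main obstacle I anticipate is the interplay between the high-probability (over $\mathbf{z}$) control of the sample-dependent operators and the in-expectation (over $\J$) control of the computational noise, which must be combined consistently so that a single event of probability at least $1-\delta$ supports all three bounds simultaneously. In particular, the operator concentration $\|\LK(\SX+\lambda)^{-1}\|$ and $\|(\LK+\lambda)^{1/2}(\SX+\lambda)^{-1/2}\|$ estimates need to hold on a common good event, and the effective-dimension argument must be carried out at the right regularization scale $\lambda \simeq (\eta t)^{-1}$ so that the exponents match. Threading these concentration results — presumably established in Section \ref{sec:estimates} — through the $\HK$-norm decomposition while keeping the constants independent of $t$ is where the bulk of the technical care lies; once those lemmas are in place, assembling \eref{eq:genBHnorm} is a matter of collecting the three estimates and optimizing the bookkeeping.
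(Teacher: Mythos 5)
Your proposal follows essentially the same route as the paper: the paper decomposes $\omega_t-\omega^{\dag}$ through the batch sequence into a bias term, a sample-variance term, and a computational-variance term (its intermediate objects are the population iteration $\mu_t$ driven by $\TK$ and the sample iteration $\nu_t$; your two-stage split of $\nu_t-\omega^{\dag}$ into approximation plus fluctuation amounts to the same thing), bounds the bias by $\|\Pi_1^t(\TK)\TK^{\zeta-1/2}\|R\lesssim(\eta t)^{1/2-\zeta}$, reuses the $N_k$-based concentration machinery for the sample variance after pulling out a factor $\|(\TK+\lambda)^{-1/2}\|\le\lambda^{-1/2}$, and uses exactly the martingale-difference telescoping you describe for $\mE_{\J}\|\omega_t-\nu_t\|_{\HK}^2$, yielding $\eta^2 t/b$ since no $\TK^{1/2}$ weight is present.

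Three points deserve correction or caution. First, your displayed spectral inequality is written with the wrong sign: for $\zeta\ge 1/2$ the quantity $\sup_{\sigma}\sigma^{1-2\zeta}(1-\eta\sigma)^{2t}$ is infinite as $\sigma\to 0^+$; the correct statement is $\sup_{\sigma}\sigma^{2\zeta-1}(1-\eta\sigma)^{2t}\lesssim(\eta t)^{1-2\zeta}$, coming from the source condition supplying a factor $\TK^{\zeta-1/2}$ in front of the semigroup. Second, the claim that Assumption \ref{as:noiseExp} controls $\mE_{\J}\|\xi_k\|_{\HK}^2$ ``uniformly'' is too quick: the per-step noise variance is proportional to $\mE_{\J}[\mcE_{\bf z}(\omega_k)]$, which depends on the trajectory and is \emph{not} bounded a priori; the paper needs the self-bounding induction of Lemmas \ref{lemma:mfejer}--\ref{lemma:empriskB} (plus the Bernstein bound on $\mcE_{\bf z}(0)$) to show $\sup_k\mE_{\J}[\mcE_{\bf z}(\omega_k)]\lesssim Mv$ under the step-size restriction. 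Third, your suggested regularization scale $\lambda\simeq(\eta t)^{-1}$ is not what the paper uses and would obstruct the ``for all $t\in[T]$ on one event'' claim, since the operator concentration $\|(\TX+\lambda I)^{-1/2}(\TK+\lambda I)^{1/2}\|\le 2$ is established once for a single fixed $\lambda$; the paper takes $\lambda\simeq\|\TK\|m^{-1/(2\zeta+\gamma)}$ independent of $t$, which is precisely what produces the stated factor $m^{-(2\zeta-1)/(2\zeta+\gamma)}(1+m^{-1/(2\zeta+\gamma)}\eta t)^2$ rather than a bound balanced at each $t$. None of these invalidates the overall architecture, but the second and third are the places where your sketch, as written, would not compile into a proof without the paper's additional lemmas.
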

The proof of the above theorem is similar as that for Theorem \ref{thm:main}, and will be given in Subsection \ref{sec:hnorm}.
Again, the upper bound in \eref{eq:genBHnorm} is composed of three terms related to bias, sample variance, and computational variance.
Balancing these three terms leads to different choices on $\eta$, $T$, and $b$, as shown in the following corollary.
\begin{corollary}\label{cor:hnorm}
  With the same assumptions and notations from any one of Corollaries \ref{cor:MPSGMB} to \ref{cor:BSGM}, the following holds with probability at least $1-\delta:$
  \bea
  \mE_{\bf J}[\|\omega_{T^*+1} - \omega^{\dag}\|_{\HK}^2] \lesssim m^{-{2\zeta-1 \over 2\zeta+\gamma}} \log^2 m\log^2 {1 \over \delta}.
  \eea
\end{corollary}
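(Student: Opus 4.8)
The plan is to deduce Corollary~\ref{cor:hnorm} directly from the three-term bound \eref{eq:genBHnorm} of Theorem~\ref{thm:hnorm}, since the corollary inherits verbatim the hypotheses and the parameter choices $(b,\eta,T^*)$ of Corollaries~\ref{cor:MPSGMB}--\ref{cor:BSGM}; in particular the step-size condition $\eta\leq 1/(8(\log T+1))$ required by Theorem~\ref{thm:hnorm} has already been checked there. The organizing observation is that in every one of the five regimes the effective inverse regularization quantity $\eta T^*$ satisfies $\eta T^* \simeq m^{1 \over 2\zeta+\gamma}$ up to at most a logarithmic factor: it equals $m^{1 \over 2\zeta+\gamma}$ exactly in Corollaries~\ref{cor:MPSGMB}, \ref{cor:MbSGMB} and \ref{cor:MPSGMA}, and $m^{1 \over 2\zeta+\gamma}/\log m$ in Corollaries~\ref{cor:MbSGMA} and \ref{cor:BSGM}. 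Once this single quantity is isolated, the bias and sample-variance terms in \eref{eq:genBHnorm} fall out uniformly, and only the computational-variance term distinguishes the regimes.

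First I would bound the bias term $q_1(\eta T^*)^{1-2\zeta}$: since $1-2\zeta\leq 0$, plugging in $\eta T^* \simeq m^{1\over 2\zeta+\gamma}$ gives $(\eta T^*)^{1-2\zeta}\simeq m^{-{2\zeta-1 \over 2\zeta+\gamma}}$, which is exactly the target rate. For the sample-variance term I would use that $m^{-{1\over 2\zeta+\gamma}}\eta T^* \simeq 1$, so the factor $(1+m^{-{1\over 2\zeta+\gamma}}\eta T^*)^2$ is $O(1)$; it then remains to note that $\log T^* = O(\log m)$, which holds because the exponent of $m$ in each $T^*$ never exceeds $2$ when $\zeta\geq 1/2$ and $\gamma\in\,]0,1]$. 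This leaves the sample-variance contribution at $\lesssim m^{-{2\zeta-1 \over 2\zeta+\gamma}}\log^2 m\log^2{1\over\delta}$, matching the claim.

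The only term requiring a genuine case analysis is the computational-variance term $q_3\eta^2 T^*/b = q_3\eta\,(\eta T^*)/b$, and this is where I expect the real (though still elementary) work to sit. For $b=1,\eta\simeq 1/m$ and for $b=\lceil\sqrt m\rceil,\eta\simeq 1/\sqrt m$ it evaluates to $m^{-{2\zeta+\gamma-1 \over 2\zeta+\gamma}}$, which is $\lesssim m^{-{2\zeta-1 \over 2\zeta+\gamma}}$ because $\gamma\geq 0$; the case $b=m$ gives the same expression with an extra $\log^{-2}m$ saving, and $b=\lceil m^{2\zeta \over 2\zeta+\gamma}\rceil,\eta\simeq 1/\log m$ gives $m^{-{2\zeta-1 \over 2\zeta+\gamma}}/\log^2 m$. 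The borderline case is $b=1,\eta\simeq m^{-{2\zeta \over 2\zeta+\gamma}}$, where $\eta^2 T^*/b\simeq m^{-{2\zeta-1 \over 2\zeta+\gamma}}$ matches the target exactly, so no slack is available and the trade-off is sharp there. Summing the three contributions then yields the stated bound; the main obstacle is purely the bookkeeping of keeping the five $(b,\eta,T^*)$ triples straight and confirming in each that the computational-variance term never dominates the bias and sample-variance terms.
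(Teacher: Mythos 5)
Your proposal is correct and matches the paper's (implicit) argument exactly: the paper offers no separate proof of Corollary \ref{cor:hnorm} beyond the remark that one balances the three terms of \eref{eq:genBHnorm}, and your case-by-case substitution of the five $(b,\eta,T^*)$ triples, organized around $\eta T^*\simeq m^{1/(2\zeta+\gamma)}$, is precisely that computation with all the exponent arithmetic checked correctly. The only caveat, inherited from the paper's own Corollaries \ref{cor:MbSGMA} and \ref{cor:BSGM}, is that when $\eta\simeq 1/\log m$ the bias term actually carries a factor $(\log m)^{2\zeta-1}$ rather than $\log^2 m$, which is only absorbed by the stated bound for $\zeta\leq 3/2$.
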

The convergence rate in the above corollary is optimal up to a logarithmic factor, as it matches the minimax rate shown in \citep{blanchard2016optimal}.


In the next subsection, we will present convergence results in the non-attainable case, i.e., $\zeta < 1/2$.

\subsection{Main Results for SGM: Non-attainable Case}\label{subsec:main_non}
Our main theorem in the non-attainable case is stated next, and provides error bounds for the studied algorithm.
 Here, we present results with a fixed step-size, whereas general results with a decaying step-size will be given in Section \ref{sec:deriveing}.

\begin{thm}\label{thm:generalRateNonFix}
  Under Assumptions  \ref{as:noiseExp}, \ref{as:regularity} and \ref{as:eigenvalues}, let $\zeta \leq 1/2$, $\delta \in]0,1[$, $\eta_t = \eta \kappa^{-2}$ for all $t \in [T],$ with $0<\eta \leq {1 \over 8(\log T+1)} $.
     Then the following holds  for all $t \in [T]$ with probability at least $1-\delta$:
1) if $2\zeta+\gamma>1$ and $m\geq m_{\delta},$ then
        \be\label{totalErrGenNonFixA}
        \begin{split}
        \mE_{\J}[\mcE(\omega_{t+1})] - \inf_{ \HK} \mcE
        \leq \left( q_1 (\eta t)^{-2\zeta}
        +  q_2 m^{-{2\zeta \over 2\zeta+\gamma}}  \right) (1 \vee   m^{-{1\over 2\zeta+\gamma}} \eta t )^3 \log^4 T  \log^2 {1 \over \delta} \\
        + q_3\eta b^{-1} ( 1 \vee m^{-{1\over 2\zeta+\gamma}}\eta t ) \log T ;
        \end{split}
        \ee
 2) if $2\zeta+\gamma\leq 1$ and for some $\epsilon\in]0,1[$, $m\geq m_{\delta,\epsilon}$, then
 \bea
        \begin{split}
        \mE_{\J}[\mcE(\omega_{t+1})] - \inf_{\HK} \mcE
        \leq \left( q_1(\eta t)^{-2\zeta}
        +  q_2 m^{\gamma(1-\epsilon) -1}  \right) (1 \vee  \eta m^{\epsilon-1} t )^3 \log^4 T  \log^2 {1 \over \delta} \\
        + q_3 \eta b^{-1} ( 1 \vee m^{\epsilon-1}\eta t ) \log T .
        \end{split}
        \eea
         Here, $m_{\delta}$ (or $m_{\delta,\epsilon}$), $q_1,q_2$ and $q_3$ are positive constants depending only on $\kappa^2, \|\TK\|, M, v, \zeta, R,c_{\gamma},\gamma$, $\|\FH\|_{\infty}$, and $m_{\delta}$ (or  $m_{\delta,\epsilon}$) also on $\delta$ (and $\epsilon$).
\end{thm}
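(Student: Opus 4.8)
The plan is to convert the excess risk $\mcE(\omega_{t+1})-\inf_{\HK}\mcE=\|\IK\omega_{t+1}-\FH\|_{\rho}^2$ into a bias–variance decomposition by inserting two intermediate sequences: the batch gradient sequence $\nu_{t+1}$ of \eqref{Alg2B} (driven by the empirical operator $\TX$ and the empirical data term $\hat g=\frac1m\sum_i y_ix_i$), and a deterministic population sequence $\mu_{t+1}$, defined by the same recursion but with $\TX$ replaced by $\TK$ and $\hat g$ by $\IK^*\FH$. Writing
\[
\IK\omega_{t+1}-\FH=\IK(\omega_{t+1}-\nu_{t+1})+\IK(\nu_{t+1}-\mu_{t+1})+(\IK\mu_{t+1}-\FH)
\]
and taking $\mE_{\J}$ of the square with $(a+b+c)^2\le 3(a^2+b^2+c^2)$, I would bound the computational error $\mE_{\J}\|\IK(\omega_{t+1}-\nu_{t+1})\|_{\rho}^2$, the sample error $\|\IK(\nu_{t+1}-\mu_{t+1})\|_{\rho}^2$, and the approximation (bias) term $\|\IK\mu_{t+1}-\FH\|_{\rho}^2$ separately, matching them to $q_3\eta b^{-1}(\cdots)$, $q_2 m^{-2\zeta/(2\zeta+\gamma)}(\cdots)$ and $q_1(\eta t)^{-2\zeta}$ respectively. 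Throughout, the effective regularization level is $\lambda\asymp(\eta t)^{-1}$, which I would tune to its statistically optimal value $\lambda\asymp m^{-1/(2\zeta+\gamma)}$ only at the very end, at $t=T^*$.

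For the bias, the quantity $\IK\mu_{t+1}-\FH$ is a spectral filter $r_t(\LK)\FH$ behaving like $(I-\eta\LK)^{t}\FH$; deterministic gradient-descent filter estimates under Assumption \ref{as:regularity} ($\|\LK^{-\zeta}\FH\|_{\rho}\le R$) give $\|\IK\mu_{t+1}-\FH\|_{\rho}\lesssim R(\eta t)^{-\zeta}$, hence the squared rate $(\eta t)^{-2\zeta}$. For the sample error I would expand the recursion for $\nu_{t+1}-\mu_{t+1}$ into a telescoping sum of $\eta_k\prod_{j>k}(I-\eta_j\TX)\big[(\TK-\TX)\mu_k+(\hat g-\IK^*\FH)\big]$ and control it with operator concentration (Bernstein-type bounds for $\|(\LK+\lambda)^{-1/2}(\LK-\TX)(\LK+\lambda)^{-1/2}\|$ and $\|(\LK+\lambda)^{-1/2}(\hat g-\IK^*\FH)\|$ from the preliminary lemmas of Section \ref{sec:estimates}), together with the empirical-to-population norm-transfer inequalities comparing $(\TX+\lambda)^{-1/2}$ and $(\LK+\lambda)^{-1/2}$; Assumption \ref{as:eigenvalues} then converts the effective dimension $\tr(\LK(\LK+\lambda)^{-1})\le c_\gamma\lambda^{-\gamma}$ into the $m^{-2\zeta/(2\zeta+\gamma)}$ rate at the optimal $\lambda$. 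For the computational error I would exploit that, conditioned on the past and on $\mathbf z$, the mini-batch gradient in \eqref{Alg} is an unbiased estimate of the batch gradient in \eqref{Alg2B}, set up a second-moment recursion for $\omega_t-\nu_t$ whose martingale increments have conditional variance $\lesssim b^{-1}$, and propagate this through the contraction $\prod_j(I-\eta_j\TX)$ to obtain the $\eta b^{-1}(1\vee m^{-1/(2\zeta+\gamma)}\eta t)$ factor.

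The hard part will be the non-attainable refinement. Since $\zeta<1/2$ gives $\FH\notin\HR=\LK^{1/2}(\LR)$, there is no minimal-norm solution $\omega^{\dag}$ and nothing can be measured in the $\HK$-norm; every estimate must be carried out entirely in $\LR$ through the regularized operators $(\LK+\lambda)^{-\alpha}$, now with fractional powers $\alpha$ that may exceed $1/2$. This forces a more delicate treatment of the cross terms in the sample error: factors such as $\|(\LK+\lambda)^{1/2-\zeta}\|$, which stay bounded when $\zeta\ge1/2$, here blow up as $\lambda\to0$ and must be compensated by extra powers of the ratio $m^{-1/(2\zeta+\gamma)}\eta t=\lambda_*/\lambda$ — precisely the origin of the factor $(1\vee m^{-1/(2\zeta+\gamma)}\eta t)^3$ and of the additional $\log$ powers relative to Theorem \ref{thm:main}. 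I expect the single most demanding step to be the uniform-in-$t$ control of the sample error in this regularized-norm setting, because the norm-transfer inequalities must be applied repeatedly while the effective $\lambda\asymp(\eta t)^{-1}$ varies along the whole trajectory $t\in[T]$, which is what generates the polynomial-in-$\log T$ factors. Finally, the case split $2\zeta+\gamma>1$ versus $2\zeta+\gamma\le1$ reflects whether the choice $\lambda\asymp m^{-1/(2\zeta+\gamma)}$ keeps the effective dimension and the concentration terms integrable; in the saturated regime $2\zeta+\gamma\le1$ I would replace the exponent $1/(2\zeta+\gamma)$ by $1-\epsilon$ to keep $\lambda$ away from the degenerate range, which is where the constants $m_{\delta,\epsilon}$ and the $\epsilon$-dependence enter, after which I would insert the fixed step-size $\eta_t=\eta\kappa^{-2}$ and collect constants.
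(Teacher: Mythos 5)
Your overall architecture is the paper's: the same three-term decomposition through the population iterates $\mu_t$ of \eref{Alg3B} and the batch iterates $\nu_t$ of \eref{Alg2B}, the bias bounded by spectral filtering under Assumption \ref{as:regularity}, the computational variance by the conditional unbiasedness of the mini-batch gradient with variance of order $b^{-1}$, and the sample variance by Bernstein-type concentration combined with the comparison of $(\TX+\lambda I)^{-1/2}$ and $(\TK+\lambda I)^{-1/2}$. (The paper's decomposition is slightly sharper than your $3(a^2+b^2+c^2)$ bound: since $\mE_{\J}[\omega_t]=\nu_t$, the cross term between the computational and sample errors vanishes exactly; this affects only constants.)

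The gap is in your treatment of the non-attainable sample error, which is precisely the new content of this theorem relative to Theorem \ref{thm:main}. First, your diagnosis of the difficulty is backwards: for $\zeta<1/2$ the operator $(\LK+\lambda I)^{1/2-\zeta}$ carries a \emph{positive} fractional power and its norm stays bounded as $\lambda\to0$; it is for $\zeta>1/2$ that $\lambda^{1/2-\zeta}$ blows up. The actual obstruction when $\zeta<1/2$ is that $\|\mu_k\|_{\HK}$ is no longer uniformly bounded --- by Lemma \ref{lemma:htInfty} it grows like $\left(\sum_{i\leq k}\eta_i\right)^{1/2-\zeta}$ --- so the attainable-case estimate $\|(\TK+\lambda I)^{-1/2}(\TK-\TX)\mu_k\|_{\HK}\leq\|(\TK+\lambda I)^{-1/2}(\TK-\TX)\|\,\|\mu_k\|_{\HK}$, and likewise your proposed route through the two standard concentration quantities $\|(\LK+\lambda I)^{-1/2}(\LK-\TX)(\LK+\lambda I)^{-1/2}\|$ and $\|(\TK+\lambda I)^{-1/2}(\IK^*f_{\rho}-\SX^*{\bf y})\|_{\HK}$, does not close. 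The paper instead applies the vector-valued Bernstein inequality (Lemma \ref{lem:Bernstein}) directly, for each $k$, to the centered variables $(\TK+\lambda I)^{-1/2}(\la\mu_k,x_i\ra_{\HK}-\FH(x_i))x_i$, whose second moment is controlled by the decaying bias $\|\IK\mu_k-\FH\|_{\rho}^2\lesssim\left(\sum_{i\leq k}\eta_i\right)^{-2\zeta}$ and whose sup-norm involves $\kappa\|\mu_k\|_{\HK}+\|\FH\|_{\infty}$; a union bound over $k\in[T]$ then produces the extra $\log T$ factors and the $\|\FH\|_{\infty}$-dependence that appear in the statement and that your proposal never accounts for. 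Relatedly, the paper fixes a single regularization level $\lambda\asymp m^{-1/(2\zeta+\gamma)}$ (or $m^{\epsilon-1}$ in the saturated regime $2\zeta+\gamma\leq1$) for the whole trajectory, rather than letting $\lambda\asymp(\eta t)^{-1}$ vary with $t$ as you propose; with a varying $\lambda$ you would need the concentration events to hold uniformly over a range of $\lambda$'s, which your plan does not address and which is unnecessary for the stated trajectory-wise bound.
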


The upper bounds in \eref{mainTotalErr} (for the attainable case) and \eref{totalErrGenNonFixA} (for the non-attainable case) are similar, whereas the latter has an extra logarithmic factor. Consequently, in the subsequent corollaries, we derive  $O(m^{-{2\zeta \over 2\zeta+\gamma}} \log^4 m)$ for the non-attainable case. In comparison with that for the attainable case, the convergence rate for the non-attainable case has an extra $\log^2 m$ factor.

Similar to Corollaries \ref{cor:MPSGMB} and \ref{cor:MbSGMB}, and as direct consequences of the above theorem, we have the following generalization error bounds for the studied algorithm with different choices of parameters in the non-attainable case.

\begin{corollary}\label{cor:MPSGMBNon}
 Under Assumptions \ref{as:noiseExp}, \ref{as:regularity} and \ref{as:eigenvalues}, let $\zeta \leq 1/2$ , $\delta\in ]0,1[$,
  $b=1$ and $\eta_t \simeq {1\over m}$ for all $t \in [m^2]$. With probability at least $1-\delta$, the following holds:\\
1) if $2\zeta+\gamma>1$, $m \geq m_{\delta}$ and $T^* = \lceil m^{ 1+2\zeta+\gamma \over 2\zeta+\gamma} \rceil$, then
  \be\label{minimaxBoundNonA}
  \mE_{\J} [\mcE(\omega_{T^*+1})] - \inf_{\HK}\mcE  \lesssim  m^{-{2\zeta \over 2\zeta+\gamma}} \log^4 m \log^2{1\over \delta};
  \ee
2) if $2\zeta+\gamma \leq 1$, and for some $\epsilon \in ]0,1[$, $m \geq m_{\delta, \epsilon}$, and
 $T^* = \lceil m^{2 - \epsilon}\rceil,$
 then
  \be\label{minimaxBoundNonB}
  \mE_{\J} [\mcE(\omega_{T^*+1})] - \inf_{\HK}\mcE  \lesssim  m^{-{2\zeta}(1-\epsilon)} \log^4 m \log^2{1\over \delta}.
  \ee
 Here, $m_\delta$ and $m_{\delta,\epsilon}$ are  given by Theorem \ref{thm:generalRateNonFix}.
\end{corollary}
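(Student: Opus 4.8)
The plan is to derive both parts of the corollary by directly specializing Theorem \ref{thm:generalRateNonFix} to $b=1$ and $\eta_t \simeq 1/m$ (so that the theorem's scalar step-size is $\eta = \kappa^2 \eta_t \simeq \kappa^2/m$), and then optimizing over the stopping index. First I would verify the hypotheses: with $T = T^*$ we have $\log T \simeq \log m$, and since $\eta \simeq \kappa^2/m$ decays polynomially while $1/(8(\log T + 1))$ decays only logarithmically, the admissibility constraint $\eta \le 1/(8(\log T+1))$ holds once $m$ is large; this is folded into the threshold $m_\delta$ (resp. $m_{\delta,\epsilon}$). Then $\eta t \simeq t/m$, which is the effective regularization scale governing the bound.

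The heart of the argument is the choice of $T^*$, which differs slightly from the naive additive bias/variance balance because of the coupled factor $(1\vee\cdot)^3$. In Case 1, writing $\tau := m^{-1/(2\zeta+\gamma)}\eta t$ and using $(\eta t)^{-2\zeta} = \tau^{-2\zeta} m^{-2\zeta/(2\zeta+\gamma)}$, the theorem's bound reads (up to logs and constants) $\bigl(\tau^{-2\zeta} + 1\bigr) m^{-2\zeta/(2\zeta+\gamma)} (1\vee\tau)^3$. For $\tau\le 1$ this equals $m^{-2\zeta/(2\zeta+\gamma)}(\tau^{-2\zeta}+1)$, decreasing in $\tau$; for $\tau\ge 1$ it equals $m^{-2\zeta/(2\zeta+\gamma)}(\tau^{3-2\zeta}+\tau^3)$, increasing in $\tau$ since $3-2\zeta>0$. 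Hence the minimum is attained at $\tau\simeq 1$, i.e. $\eta t\simeq m^{1/(2\zeta+\gamma)}$, equivalently $t\simeq m^{(1+2\zeta+\gamma)/(2\zeta+\gamma)}$, which is exactly $T^*$. There $(1\vee\tau)^3\simeq 1$ and both the bias term $(\eta T^*)^{-2\zeta}$ and the sample-variance coefficient $m^{-2\zeta/(2\zeta+\gamma)}$ equal $m^{-2\zeta/(2\zeta+\gamma)}$, yielding \eref{minimaxBoundNonA} after reinstating $\log^4 T\simeq\log^4 m$ and $\log^2(1/\delta)$. In Case 2 the same bookkeeping with $\tau := m^{\epsilon-1}\eta t$ and $T^*=\lceil m^{2-\epsilon}\rceil$ gives $\eta T^*\simeq m^{1-\epsilon}$, so $\tau\simeq 1$, and the bias term becomes $(\eta T^*)^{-2\zeta}\simeq m^{-2\zeta(1-\epsilon)}$; it then remains to check that the sample-variance coefficient $m^{\gamma(1-\epsilon)-1}$ is dominated, which follows from $2\zeta+\gamma\le 1$ since $(1-\epsilon)(2\zeta+\gamma)\le 2\zeta+\gamma\le 1$, i.e. $\gamma(1-\epsilon)-1\le -2\zeta(1-\epsilon)$. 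This produces \eref{minimaxBoundNonB}.

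Finally I would dispose of the computational-variance (third) term: with $b=1$, $\eta\simeq\kappa^2/m$ and $(1\vee\cdot)\simeq 1$ at $T^*$, it is of order $(\log m)/m$ in both cases. Because $\zeta\le 1/2$ and $\gamma\in]0,1]$ force the target exponents $2\zeta/(2\zeta+\gamma)$ and $2\zeta(1-\epsilon)$ to be strictly below $1$, the dominant terms decay strictly slower than $1/m$ and so absorb the third term. I do not expect a real obstacle here, since this corollary is pure bookkeeping on top of Theorem \ref{thm:generalRateNonFix}; the only points needing care are (i) recognizing that $\tau\simeq 1$ optimizes the coupled $(1\vee\tau)^3$ factor, rather than a naive additive balance, and (ii) the exponent inequality $(1-\epsilon)(2\zeta+\gamma)\le 1$ in Case 2 that lets the slower-decaying bias term dominate the sample-variance coefficient. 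The substantive work lives entirely in the proof of the theorem itself.
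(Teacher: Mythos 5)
Your proposal is correct and takes essentially the same route as the paper, which states this corollary as a direct consequence of Theorem \ref{thm:generalRateNonFix}: plug in $b=1$, $\eta\simeq \kappa^2/m$ and the given $T^*$ so that $m^{-1/(2\zeta+\gamma)}\eta T^*\simeq 1$ (resp. $m^{\epsilon-1}\eta T^*\simeq 1$), note $\log T\simeq\log m$, and check that the computational-variance term $\eta\log T\simeq m^{-1}\log m$ and (in Case 2) the coefficient $m^{\gamma(1-\epsilon)-1}$ are dominated. Your extra observation that $\tau\simeq 1$ actually minimizes the coupled $(1\vee\tau)^3$ factor is a nice justification of the choice of $T^*$ but is not needed to verify the stated bound.
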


\begin{corollary}
 Under Assumptions \ref{as:noiseExp}, \ref{as:regularity} and \ref{as:eigenvalues}, let $\zeta \leq 1/2$ , $\delta\in ]0,1[$,
  $b \simeq \sqrt{m}$ and $\eta_t \simeq {1\over \sqrt{m}}$ for all $t \in [m^2]$. With probability at least $1-\delta$, there holds\\
1) if $2\zeta+\gamma>1$, $m \geq m_{\delta}$ and $T^* = \lceil m^{{1\over 2\zeta+\gamma} + {1 \over 2}} \rceil$, then \eref{minimaxBoundNonA} holds;\\
2) if $2\zeta+\gamma \leq 1$, for some $\epsilon \in ]0,1[$, $m \geq m_{\delta, \epsilon}$, and
 $T^* = \lceil m^{{3\over 2} - \epsilon}\rceil,$
 then \eref{minimaxBoundNonB} holds.
\end{corollary}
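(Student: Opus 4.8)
The plan is to obtain this corollary as a direct specialization of Theorem \ref{thm:generalRateNonFix} to the choices $b \simeq \sqrt{m}$ and $\eta_t \simeq 1/\sqrt{m}$ (so that, in the notation of the theorem, $\eta = \kappa^2 \eta_t \simeq 1/\sqrt{m}$ up to the constant $\kappa^2$), evaluated at the prescribed stopping time $t = T^*$. Before substituting, I would first confirm that the hypotheses of the theorem are met. In both regimes the exponent of $T^*$ is strictly below $3/2$ (using $2\zeta+\gamma>1$ in case 1 and $\epsilon>0$ in case 2), so $T^* \lesssim m^{3/2} \leq m^2$ and hence $\log T^* \simeq \log m$; consequently the step-size restriction $\eta \leq \frac{1}{8(\log T^*+1)}$ holds once $m$ is large enough, since $1/\sqrt{m} \lesssim 1/\log m$.

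The core of the argument is that each prescribed $T^*$ is precisely the time at which $\eta t$ reaches the critical scale, forcing the interpolation factor in the bound to collapse to a constant. When $2\zeta+\gamma>1$, with $T^* = \lceil m^{\frac{1}{2\zeta+\gamma}+\frac12}\rceil$, one computes $\eta T^* \simeq m^{-1/2}\, m^{\frac{1}{2\zeta+\gamma}+\frac12} = m^{\frac{1}{2\zeta+\gamma}}$, so $m^{-\frac{1}{2\zeta+\gamma}}\eta T^* \simeq 1$ and the factor $(1 \vee m^{-\frac{1}{2\zeta+\gamma}}\eta t)^3 \simeq 1$. Plugging into \eref{totalErrGenNonFixA}, the bias term becomes $(\eta T^*)^{-2\zeta} \simeq m^{-\frac{2\zeta}{2\zeta+\gamma}}$, which matches the sample-variance term $m^{-\frac{2\zeta}{2\zeta+\gamma}}$, so the two balance. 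When $2\zeta+\gamma\leq 1$, with $T^* = \lceil m^{\frac32-\epsilon}\rceil$, one has $\eta T^* \simeq m^{1-\epsilon}$, so $m^{\epsilon-1}\eta T^* \simeq 1$ and again the interpolation factor is constant, while the bias term is $(\eta T^*)^{-2\zeta} \simeq m^{-2\zeta(1-\epsilon)}$.

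Next I would verify that the remaining two contributions are dominated by the leading rate. For the sample-variance term in the low-capacity case this amounts to $m^{\gamma(1-\epsilon)-1} \lesssim m^{-2\zeta(1-\epsilon)}$, i.e. $(2\zeta+\gamma)(1-\epsilon)\leq 1$, which follows from $2\zeta+\gamma\leq 1$ and $1-\epsilon<1$. For the computational-variance (third) term, the decisive feature of the mini-batch choice is that $\eta b^{-1} \simeq m^{-1/2}\, m^{-1/2} = m^{-1}$; combined with a constant interpolation factor and $\log T^* \simeq \log m$, this term is $\lesssim m^{-1}\log m$. It is dominated by $m^{-\frac{2\zeta}{2\zeta+\gamma}}$ because $\gamma\leq 1$ forces $\frac{2\zeta}{2\zeta+\gamma}\leq 1$ (case 1), and by $m^{-2\zeta(1-\epsilon)}$ because $\zeta\leq 1/2$ and $\epsilon>0$ give $2\zeta(1-\epsilon)<1$ (case 2). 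Collecting the three estimates yields \eref{minimaxBoundNonA} and \eref{minimaxBoundNonB} up to the stated $\log^4 m\,\log^2(1/\delta)$ factors.

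The main obstacle is not any individual estimate — once Theorem \ref{thm:generalRateNonFix} is granted, the computation is essentially routine — but rather checking that the mini-batch/step-size pairing behaves consistently across both capacity regimes. The point to watch is that the same scaling $\eta b^{-1} \simeq 1/m$ is exactly what allows the larger step-size $1/\sqrt{m}$ (versus $1/m$ in the $b=1$ corollary) to be compensated by the larger batch $b \simeq \sqrt{m}$, so that the computational variance stays negligible relative to the bias and sample variance. I would therefore present the bound on the third term with care, since it is there that the interplay between $\eta$ and $b$ is decisive.
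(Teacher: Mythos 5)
Your proposal is correct and follows exactly the route the paper intends: the corollary is stated as a direct consequence of Theorem \ref{thm:generalRateNonFix}, obtained by substituting $\eta \simeq 1/\sqrt{m}$, $b\simeq\sqrt{m}$ and $t=T^*$ so that $\eta T^*$ hits the critical scale ($m^{1/(2\zeta+\gamma)}$ or $m^{1-\epsilon}$), the interpolation factor collapses to a constant, bias and sample variance balance, and the computational term $\eta b^{-1}\simeq 1/m$ is dominated. Your checks of the step-size restriction and of the domination inequalities $(2\zeta+\gamma)(1-\epsilon)\leq 1$ and $2\zeta/(2\zeta+\gamma)\leq 1$ are exactly the verifications needed.
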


The convergence rates in the above corollaries, i.e., $m^{-{2\zeta \over 2\zeta+\gamma}}$ if $2\zeta+\gamma>1$ or $m^{-{2\zeta(1-\epsilon)}}$ otherwise, match those in \citep{dieuleveut2014non} for  one pass SGM with averaging, up to a logarithmic factor. Also, in the capacity independent case, i.e., $\gamma=1$, the convergence rates in the above corollary read as $m^{-{2\zeta \over 2\zeta+1}}$ (since $2\zeta+\gamma$ is always bigger than $1$), which are exactly the same as those  in  \citep{ying2008online} for one pass SGM.

Similar results to Corollaries \ref{cor:MPSGMA}--\ref{cor:BSGM} can be also derived
for the non-attainable case by applying Theorem \ref{thm:generalRateNonFix}.  Refer to Appendix \ref{app:further} for more details.

\subsection{Main Results for Batch GM}\label{subsec:main_bgm}
In this subsection, we present convergence results for batch GM.
As a byproduct of our proofs in this paper, we have the following convergence rates for batch GM.
\begin{thm}\label{thm:bgmopt}
  Under Assumptions  \ref{as:noiseExp}, \ref{as:regularity} and \ref{as:eigenvalues}, set $\eta_t \simeq 1 $, for all $t \in [m].$ Let $T^* = \lceil m^{1 \over 2\zeta+\gamma} \rceil$ if $2\zeta+\gamma>1$, or $T^* = \lceil m^{1 -\epsilon} \rceil$ with $\epsilon\in]0,1[$ otherwise. Then with probability at least $1-\delta$ ($0< \delta < 1$), the following holds for the learning sequence generated by \eref{Alg2B}: \\
     1) if $\zeta>1/2$ and $m \geq m_{\delta}$, then
      \bea
\mcE(\nu_{T_*+1}) - \inf_{\HK} \mcE \lesssim m^{-{2\zeta \over 2\zeta+\gamma}} \log^2 m \log^2 {1\over \delta};
\eea
2) if $\zeta\leq 1/2,$ $2\zeta+\gamma >1$ and $m \geq m_{\delta}$, then
      \bea
\mcE(\nu_{T_*+1}) - \inf_{\HK} \mcE \lesssim m^{-{2\zeta \over 2\zeta+\gamma}} \log^4 m \log^2 {1\over \delta};
\eea
3) if $2\zeta+\gamma\leq 1$ and $m\geq m_{\delta, \epsilon}$, then
\bea
\mcE(\nu_{T_*+1})
 - \inf_{\HK} \mcE \lesssim m^{-{2\zeta(1-\epsilon)}} \log^4 m \log^2 {1\over \delta}.
\eea
{Here, $m_{\delta}$ (or $m_{\delta,\epsilon}$), and all the  constants in the upper bounds are positive and  depend only on $\kappa^2, \|\TK\|, M, v, \zeta, R,c_{\gamma},\gamma$, $\|\FH\|_{\infty}$, and $m_{\delta}$ (or  $m_{\delta,\epsilon}$) also on $\delta$ (and $\epsilon$)}.
\end{thm}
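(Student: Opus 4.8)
The plan is to obtain Theorem \ref{thm:bgmopt} as a corollary of the SGM analysis already carried out for Theorems \ref{thm:main} and \ref{thm:generalRateNonFix}, by noting that the batch GM sequence is exactly the conditional mean of the SGM sequence. Indeed, for the square loss the stochastic gradient in \eref{Alg} is affine in $\omega_t$, and the $t$-th minibatch $\J_t$ is drawn independently of $\omega_t$; taking $\mE_{\J_t}[\,\cdot\mid\omega_t]$ of \eref{Alg} replaces the random operator $\frac1b\sum_{i}\la\cdot,x_{j_i}\ra_{\HK}x_{j_i}$ by $\TX$ and the random vector by $\frac1m\sum_{i=1}^m y_i x_i$. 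Unwinding the recursion from $\nu_1=0=\mE_{\J}[\omega_1]$ gives $\mE_{\J}[\omega_t]=\nu_t$ for every $t$ and every $b\geq1$. Writing $\IK\colon\HK\to\LR$ for the inclusion $(\IK\omega)(x)=\la\omega,x\ra_{\HK}$, so that $\mcE(\omega)-\inf_{\HK}\mcE=\|\IK\omega-\FH\|_{\rho}^2$, the bias--variance identity over $\J$ reads
\be
\mE_{\J}[\mcE(\omega_{t+1})]-\inf_{\HK}\mcE=\|\IK\nu_{t+1}-\FH\|_{\rho}^2+\mE_{\J}\|\IK(\omega_{t+1}-\nu_{t+1})\|_{\rho}^2.
\ee
The last term is the nonnegative computational variance, i.e.\ the $q_3\eta b^{-1}(\cdots)$ contribution in \eref{mainTotalErr} and \eref{totalErrGenNonFixA}; the excess risk of batch GM is therefore exactly the leading term $\|\IK\nu_{t+1}-\FH\|_{\rho}^2$, which is precisely what the bias and sample-variance estimates in the proofs of those two theorems control.

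Concretely, I would introduce the population GM iterate $\mu_t$ driven by $\TK$ and $\IK^{*}\FH$ (with $\mu_1=0$) and split
\[
\|\IK\nu_{t+1}-\FH\|_{\rho}\leq\underbrace{\|\IK(\nu_{t+1}-\mu_{t+1})\|_{\rho}}_{\text{sample error}}+\underbrace{\|\IK\mu_{t+1}-\FH\|_{\rho}}_{\text{bias}}.
\]
Using $\IK\TK=\LK\IK$, the bias residual satisfies $\FH-\IK\mu_{t+1}=\prod_{s\leq t}(I-\eta_s\LK)\FH$, so under Assumption \ref{as:regularity} the spectral bound $\sup_{\sigma}(1-\eta\sigma)^t\sigma^{\zeta}\lesssim(\eta t)^{-\zeta}$ yields a bias of order $R(\eta t)^{-\zeta}$, uniformly in $\zeta>0$. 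The sample error is controlled by concentrating $\TX-\TK$ and $\frac1m\sum_i y_ix_i-\IK^{*}\FH$ in the relevant operator norms, where Assumption \ref{as:eigenvalues} enters through the effective dimension $\tr(\LK(\LK+\lambda I)^{-1})\lesssim\lambda^{-\gamma}$ and Assumption \ref{as:noiseExp} controls the noise moments; this produces the $m^{-2\zeta/(2\zeta+\gamma)}$-type term together with its logarithmic factors. Crucially, both estimates hold for a genuinely constant step-size $\eta_t\simeq1$: the $1/\log T$ shrinkage imposed in Theorems \ref{thm:main} and \ref{thm:generalRateNonFix} was needed only to tame the computational variance, which is absent for batch GM.

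It then remains to balance bias against sample error at $t=T^{*}$. With $\eta_t\simeq1$ and $T^{*}=\lceil m^{1/(2\zeta+\gamma)}\rceil$ (resp.\ $\lceil m^{1-\epsilon}\rceil$) one has $\eta T^{*}\simeq T^{*}$, so the bias $(\eta T^{*})^{-2\zeta}$ and the sample-variance term $m^{-2\zeta/(2\zeta+\gamma)}$ are of the same order, giving the claimed rate $m^{-2\zeta/(2\zeta+\gamma)}$ when $2\zeta+\gamma>1$ and $m^{-2\zeta(1-\epsilon)}$ otherwise. Case 1 ($\zeta>1/2$) reads off the bias/sample-variance bound from the proof of Theorem \ref{thm:main} (yielding $\log^2m$), while cases 2--3 ($\zeta\leq1/2$) read it off from Theorem \ref{thm:generalRateNonFix}, whose non-attainable analysis carries an extra logarithmic factor and delivers $\log^4m$.

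The main obstacle is the non-attainable regime $\zeta<1/2$, and especially $2\zeta+\gamma\leq1$. When $\FH\notin\HR$ there is no minimal-norm solution $\omega^{\dag}\in\HK$, and the attainable-case argument---which works in the $\HK$-norm through quantities such as $\LK^{-1/2}\FH$---breaks down because these are unbounded. The refined estimation the paper alludes to must instead be carried out entirely in the $\rho$-norm, inserting a regularization level $\lambda$ and carefully bounding products like $(\TX+\lambda I)^{-1/2}(\TK+\lambda I)^{1/2}$ through the effective dimension. This is compounded when $2\zeta+\gamma\leq1$, where $T^{*}\gtrsim m$ forces the concentration estimates to be propagated over a super-linear number of iterations; controlling the sample error on this horizon is what necessitates the relaxation $m^{1-\epsilon}$ and hence the slightly degraded rate $m^{-2\zeta(1-\epsilon)}$.
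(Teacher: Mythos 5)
Your proposal is correct and follows essentially the same route as the paper: the batch GM iterate $\nu_t$ is exactly the sample iteration appearing in the error decomposition \eref{errorDecompos}, so its excess risk $\|\IK\nu_{t+1}-\FH\|_{\rho}^2$ is controlled by the bias bound (Proposition \ref{pro:initialErr}) and the sample-variance bound (Theorem \ref{thm:sampleErr}) alone, with the computational-variance term dropped and hence no $1/\log T$ restriction on the step-size. Balancing at $T^*$ with $\lambda=\|\TK\|m^{\epsilon-1}$ as in Theorems \ref{thm:generalRate} and \ref{thm:generalRateNon} gives precisely the three cases (with $\log^2 m$ in the attainable case and $\log^4 m$ otherwise), which is what the paper records as Theorem \ref{thm:bgmoptGen}.
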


\subsection{Discussions}\label{subsec:discussion}
 We must compare our results with previous works.
For non-parametric regression with the square loss, one pass SGM has been studied in, e.g., \citep{ying2008online,shamir2013stochastic,tarres2014online,dieuleveut2014non}. In particular, \citet{ying2008online} proved capacity independent rate of order $O(m^{-{2\zeta \over 2\zeta + 1}} \log m)$ with a fixed step-size $\eta \simeq m^{-{2\zeta \over 2\zeta+1}}$,
 and \cite{dieuleveut2014non} derived capacity dependent error bounds of order $O(m^{-{2\min(\zeta,1) \over 2\min(\zeta,1) + \gamma}})$ (when $2\zeta + \gamma >1$) for the average. Note also that a regularized  version of SGM has been studied in \citep{tarres2014online}, where the derived convergence rate is of order $O(m^{-{2\zeta \over 2\zeta + 1}})$ assuming that $\zeta \in [{1\over 2},1].$ In comparison with these existing convergence rates, our rates from \eref{minimaxBound} are comparable, either involving the capacity condition, or allowing a broader regularity parameter $\zeta$ (which thus improves the rates).
For finite dimensional cases, it has been shown in \citep{bach2013non} that one pass SGM with averaging with a constant step-size achieves the optimal
convergence rate of $O(d/m).$ In comparisons, our results for multi-pass SGM with a smaller step-size seems to be suboptimal in the computational complexity, as we need $m$ passes over the data to achieve the same rate.  \revision{The reason for this may arise from  ``the computational error" that will be introduced later, or the fact that we do not consider an averaging step as done in \citep{bach2013non}.}
 We hope that in the future by considering a larger step-size and averaging, one can reduce the computational complexity of multi-pass SGM while achieving the same rate.

   More recently, \citet{rosasco2015learning} studied multiple passes SGM with a fixed ordering at each pass, also called incremental gradient method.
     Making no assumption on the capacity, rates of order $O(m^{-{\zeta \over \zeta +1}})$ (in $\LR$-norm) with a universal step-size $\eta \simeq {1 / m}$ are derived. In comparisons,  Corollary \ref{cor:MPSGMB} achieves better rates, while considering the capacity assumption.
      Note also that \citet{rosasco2015learning} proved sharp rate in $\HK$-norm for $\zeta \geq 1/2$ in the capacity independent case.
   In comparisons, we derive optimal capacity-dependent rate, considering mini-batches.

The idea of using mini-batches (and parallel implements) to speed up SGM in a general stochastic optimization setting can be found, e.g., in \citep{shalev2011pegasos,dekel2012optimal,sra2012optimization,ng2016machine}.
Our theoretical findings, especially the interplay between the mini-batch size and the step-size, can give further insights on parallelization learning.
Besides, it has been shown in \citep{cotter2011better,dekel2012optimal} that for one pass mini-batch SGM with a fixed step-size $\eta \simeq b/\sqrt{m}$ and a smooth loss function, assuming the existence of at least one solution in the hypothesis space for the expected risk minimization, the convergence rate is of order $O(\sqrt{1/m} + b/m)$ by considering an averaging scheme. When adapting to the learning setting we consider, this reads as that if $\FH \in \HR$, i.e., $\zeta =1/2,$
the convergence rate for the average is  $O(\sqrt{1/m} + b/m)$. Note that, $\FH$ does not necessarily belong to $\HR$ in general. Also, our derived convergence rate from Corollary \ref{cor:MbSGMB} is better, when the regularity parameter $\zeta$ is greater than $1/2,$ or $\gamma$ is smaller than $1$.

For batch GM in the attainable case, convergent results with optimal rates  have been derived in, e.g, \citep{bauer2007regularization,caponnetto2010cross,blanchard2016optimal,dicker2016kernel}. In particular, \cite{bauer2007regularization} proved convergence rates $O(m^{-{2\zeta \over 2\zeta+1}})$ without considering Assumption \ref{as:eigenvalues}, and \cite{caponnetto2010cross} derived convergence rates $O(m^{-{2\zeta \over 2\zeta+\gamma}}).$ For the non-attainable case, convergent results with suboptimal rates $O(m^{-2\zeta \over 2\zeta+2 })$ can be found in \citep{yao2007early}, and to the best of our knowledge, the only result with optimal rate $O(m^{-2\zeta \over 2\zeta+\gamma })$
is the one derived by \cite{caponnetto2010cross}, but the result requires extra unlabeled data. In contrast, Theorem \ref{thm:bgmopt} of this paper does not require any extra unlabeled data, while achieving the same optimal rates (up to a logarithmic factor). To the best of our knowledge, Theorem \ref{thm:bgmopt} may be the first optimal result in the non-attainable case for batch GM.

We end this discussion with some further comments on batch GM and simple SGM. First, according to Corollaries \ref{cor:simplfied} and \ref{cor:simplfiedGM}, it seems that both simple SGM (with step-size $\eta_t \simeq m^{-1}$) and batch GM (with step-size $\eta_t \simeq 1$) have the same computational complexities (which are related to the number of passes) and the same orders of upper bounds. However, there is a subtle difference between these two algorithms. As we see from \eref{unbias_index} in the coming subsection, every $m$ iterations of simple SGM (with step-size $\eta_t \simeq m^{-1}$) corresponds to one iteration of batch GM (with step-size $\eta_t \simeq 1$). In this sense, SGM discretizes and refines the regularization path of batch GM, which thus may lead to smaller generalization errors. This phenomenon can be further understood by comparing our derived bounds, \eref{mainTotalErr} and \eref{eq:bgmBound}, for these two algorithms. Indeed, if one can ignore the computational error, one can easily show that the minimization (over $t$) of right hand-side of \eref{mainTotalErr} with $\eta \simeq m^{-1}$  is always smaller than that of \eref{eq:bgmBound} with $\eta \simeq 1$.
At last, by Corollary \ref{cor:MbSGMA}, using a larger step-size for SGM allows one to stop earlier (while sharing the same optimal rates), which thus reduces the computational complexity. This suggests that SGM may have some computational advantage over batch GM.



\subsection{Proof Sketch (Error Decomposition)}\label{subsec:proSke}
{The key to our proof is a novel error decomposition, which may be also used in analysing other learning algorithms. One may also use the approach in \citep{bousquet2008tradeoffs,lin2015iterative,lin2016generalization} which is based on the following error decomposition,
\begin{eqnarray*}
 {\mathbb E}
\mcE(\omega_t)-\inf_{\HK}\mcE =
 [{\mathbb E} (\mcE(\omega_t)- \mcE_{\mathbf{z}}(\omega_t)) +{\mathbb E}\mcE_{\mathbf{z}}(\tilde \omega) - \mcE(\tilde \omega)]
+{\mathbb E}(\mcE_{\mathbf{z}}(\omega_t)-\mcE_{\mathbf{z}}(\tilde \omega))
+ \mcE(\tilde \omega) -\inf_{\HK} \mcE,
\end{eqnarray*}
where $\tilde{\omega}\in \HK$ is some suitably intermediate element and
$\mcE_{\mathbf{z}}$ denotes the empirical risk over $\bf z$, i.e.,
\be\label{eq:emprisk}
\mcE_{\bf z}(\cdot) = {1\over m} \sum_{i=1}^m \left(\la \cdot, x_i \ra - y_i \right)^2.
\ee
 However, one can only derive a sub-optimal convergence rate, since the proof procedure  involves upper bounding the learning sequence to estimate the sample error (the first term of right-hand side). Also, in this case, the `regularity' of the regression function can not be fully utilized for estimating the bias (the last term). Thanks to the  property of squares loss, we can exploit a different  error decomposition leading to better results.}

To describe the decomposition,
 we need to introduce two sequences. The \emph{population iteration} is defined by $\mu_1 =0$ and
\be\label{Alg3B}
\mu_{t+1}=\mu_{t} - \eta_t \int_{X}(\la \mu_{t}, x \ra_{\HK} - f_{\rho}(x))  x d\rho_{X}(x), \qquad t=1, \ldots, T. \ee
The above iterated procedure is ideal and can not be implemented in practice, since the distribution $\rho_{X}$ is unknown in general.
Replacing $\rho_{X}$ by the empirical measure and  $f_{\rho}(x_i)$ by $y_i$, we derive the \emph{sample iteration} (associated with the sample $\bf z$), i.e., \eref{Alg2B}.
Clearly, $\mu_{t}$ is deterministic and $\nu_{t}$ is a $\HK$-valued random variable depending on $\bf z.$
Given the sample $\bf z$, the sequence $\{\nu_{t}\}_{t}$ has a natural relationship with the learning sequence $\{\omega_t\}_{t}$, since
\be\label{unbias_index}
\mE_{\J}[\omega_t] = \nu_t.
\ee
Indeed, taking the expectation with respect to $\J_t$ on both sides of \eref{Alg}, and noting that
$\omega_{t}$ depends only on $\J_1,\cdots,\J_{t-1}$ (given any $\bf z$), one has
$$
\mE_{\J_t}[\omega_{t+1}] = \omega_t - \eta_t {1 \over m}\sum_{i=1}^m( \la \omega_t, x_i \ra_{\HK} - y_{i}) x_i, $$
and thus,
$$
\mE_{\J}[\omega_{t+1}] = \mE_{\J}[\omega_t] - \eta_t {1 \over m}\sum_{i=1}^m(\la \mE_{\J}[\omega_t], x_i \ra_{\HK} - y_{i}) x_i , \quad t=1, \ldots, T, $$
which satisfies the iterative relationship given in \eref{Alg2B}. By an induction argument, \eref{unbias_index} can then be proved.

Let $\IK: \HK \to \LR$ be the linear map defined by
$(\IK \omega)(x) = \la \omega,x \ra_{\HK}, \forall \omega,x \in \HK.$
We have the following error decomposition.
\begin{pro}
  We have
  \be\label{errorDecompos}
  \mE_{\J}[\mcE(\omega_t)] - \inf_{ \HK}\mcE\leq 2\|\IK \mu_{t} - \FH\|_{\rho}^2 + 2 \|\IK \nu_t - \IK \mu_{t}\|_{\rho}^2 + \mE_{\J}[\|\IK \omega_t - \IK \nu_{t}\|_{\rho}^2].
  \ee
\end{pro}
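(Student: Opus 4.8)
The plan is to decompose the excess risk into three pieces aligned with the three sequences $\mu_t$, $\nu_t$, and $\omega_t$, and to exploit the fact that for the square loss the excess risk is exactly a squared $\rho$-norm distance to the target $\FH$. The starting observation is that, since $\FH$ is the $\LR$-projection of $f_\rho$ onto the closure of $\HR$, for any $\omega \in \HK$ one has the Pythagorean identity
\be\label{eq:pyth}
\mcE(\omega) - \inf_{\HK}\mcE = \|\IK\omega - \FH\|_{\rho}^2,
\ee
because $\IK\omega \in \HR$ and $\FH$ is orthogonal (in $\LR$) to $\HR - \FH$, so the cross term with $f_\rho - \FH$ vanishes. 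Applying \eref{eq:pyth} to $\omega = \omega_t$ and taking $\mE_{\J}$, the left-hand side of \eref{errorDecompos} equals $\mE_{\J}[\|\IK\omega_t - \FH\|_{\rho}^2]$.

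First I would insert the intermediate quantities by writing $\IK\omega_t - \FH = (\IK\omega_t - \IK\nu_t) + (\IK\nu_t - \IK\mu_t) + (\IK\mu_t - \FH)$ and then apply the elementary inequality $\|a+b+c\|^2 \le 2\|a\|^2 + \cdots$; however, the cleaner route that matches the stated bound is to split only once. Concretely, I would use $\mE_{\J}[\|\IK\omega_t - \FH\|_\rho^2] = \mE_{\J}[\|\IK\omega_t - \IK\nu_t\|_\rho^2] + \|\IK\nu_t - \FH\|_\rho^2$, which is again a Pythagorean-type split: the cross term is $2\,\mE_{\J}\la \IK\omega_t - \IK\nu_t, \IK\nu_t - \FH\ra_\rho$, and since $\IK$ is linear and $\mE_{\J}[\omega_t] = \nu_t$ by \eref{unbias_index}, we have $\mE_{\J}[\IK\omega_t - \IK\nu_t] = 0$ while $\IK\nu_t - \FH$ is $\J$-measurable through $\bf z$ only, so this cross term vanishes exactly. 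This gives the computational-variance term $\mE_{\J}[\|\IK\omega_t - \IK\nu_t\|_\rho^2]$ without any factor of $2$, as in \eref{errorDecompos}.

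It remains to bound the deterministic-in-$\J$ term $\|\IK\nu_t - \FH\|_\rho^2$. Here I would split once more, $\IK\nu_t - \FH = (\IK\nu_t - \IK\mu_t) + (\IK\mu_t - \FH)$, and apply $\|a+b\|^2 \le 2\|a\|^2 + 2\|b\|^2$, producing exactly the bias term $2\|\IK\mu_t - \FH\|_\rho^2$ and the sample-variance term $2\|\IK\nu_t - \IK\mu_t\|_\rho^2$. Combining the two splits yields \eref{errorDecompos}.

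The main obstacle to get right is the vanishing of the first cross term, which hinges on correctly using the conditional-independence structure: one must verify that $\nu_t$ (equivalently $\IK\nu_t - \FH$) depends only on $\bf z$ and not on the resampling indices $\J$, so that it factors out of $\mE_{\J}$ and pairs against the mean-zero fluctuation $\IK\omega_t - \IK\nu_t$; this is precisely what \eref{unbias_index} supplies. The only other point requiring care is the identity \eref{eq:pyth} itself—ensuring the projection $\FH$ is taken in $\LR$ (not in $\HK$) so that $\IK\omega - \FH$ is genuinely orthogonal to $f_\rho - \FH$ for every $\omega$; once this is in place, the remaining steps are routine applications of the triangle/parallelogram inequalities.
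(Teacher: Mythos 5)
Your proposal is correct and follows essentially the same route as the paper: the identity $\mcE(\omega)-\inf_{\HK}\mcE=\|\IK\omega-\FH\|_{\rho}^2$, the exact Pythagorean split of $\mE_{\J}[\|\IK\omega_t-\FH\|_{\rho}^2]$ using $\mE_{\J}[\omega_t]=\nu_t$ to kill the cross term, and then the inequality $\|a+b\|^2\leq 2\|a\|^2+2\|b\|^2$ applied to $\IK\nu_t-\IK\mu_t$ and $\IK\mu_t-\FH$. The only cosmetic slip is in your justification of the first identity, where the orthogonality should be stated as $f_\rho-\FH$ being orthogonal to the closure of $\HR$ (rather than $\FH$ being orthogonal to $\HR-\FH$); this does not affect the argument.
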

\begin{proof}
For any $\omega \in \HK$, we have \citep{rosasco2015learning}
$$
\mcE(\omega) - \inf_{\omega \in \HK}\mcE(\omega) = \|\IK \omega - \FH\|_{\rho}^2.
$$
Thus, $
\mcE(\omega_t) - \inf_{ \HK}\mcE  = \|\IK \omega_t - \FH\|_{\rho}^2,
$ and
 \bea
 &&\mE_{\J}[\|\IK \omega_t - \FH\|_{\rho}^2] = \mE_{\J}[\|\IK \omega_t - \IK \nu_t + \IK \nu_t - \FH\|_{\rho}^2] \\
 && = \mE_{\J}[\|\IK \omega_t - \IK \nu_t\|_{\rho}^2 + \|\IK \nu_t - \FH\|_{\rho}^2] + 2 \mE_{\J}\la \IK \omega_t - \IK \nu_t, \IK \nu_t - \FH \ra_{\rho}.
 \eea
 Using \eref{unbias_index} in the above equality,
 we get,
 $$
 \mE_{\J}[\|\IK \omega_{t} - \FH\|_{\rho}^2]
 = \mE_{\J}[\|\IK \omega_t - \IK \nu_t\|_{\rho}^2 + \|\IK \nu_t - \FH\|_{\rho}^2].
 $$
 The proof is finished by considering,
 \bea
 \|\IK \nu_t - \FH\|_{\rho}^2 = \|\IK \nu_t - \IK \mu_{t} + \IK \mu_{t} -  \FH\|_{\rho}^2 \leq 2\|\IK \nu_t - \IK \mu_{t}\|_{\rho}^2 + 2\|\IK \mu_{t} -  \IK \FH\|_{\rho}^2.
 \eea
\end{proof}
There are three terms in the upper bound of the error decomposition \eref{errorDecompos}.
We refer to
the deterministic term  $\|\IK \mu_{t} - \FH\|_{\rho}^2$ as the {\it bias}, the term $\|\IK \nu_t - \IK \mu_{t}\|_{\rho}^2$ depending on $\bf z$ as the \emph{sample variance}, and $\mE_{\J}[\|\IK \omega_t - \IK \nu_t\|_{\rho}^2]$ as the \emph{computational variance}.  {The bias term, which is deterministic, has been well studied in the literature, see e.g., \citep{yao2007early} and also \citep{rosasco2015learning}. The main novelties of this paper are the estimate of the sample and
computational variances and the difficult part is the estimate of the computational variances.
 The  proof of these results  is quite lengthy and makes use of some ideas from \citep{yao2007early,smale2007learning,bauer2007regularization,ying2008online,tarres2014online,rudi2015less}.
These three error terms will be estimated in Sections \ref{sec:biasSam} and \ref{sec:comp}.
The bounds in Theorems \ref{thm:main} and \ref{thm:generalRateNonFix} thus follow plugging these estimations in the error decomposition, see Section \ref{sec:deriveing} for more details.}
The proof for Theorem \ref{thm:hnorm} is similar, see Section \ref{sec:hnorm} for the details.

\section{Preliminary Analysis}\label{sec:estimates}
In this section, we introduce some notation and preliminary lemmas that are necessary to our proofs.
\subsection{Notation}
We first introduce some notations.
For $t \in \mN,$
$\Pi_{t+1}^T(L) = \prod_{k=t+1}^T (I - \eta_k L)$ for $t \in [T-1]$ and $\Pi_{T+1}^T(L) = I,$ for any operator $L: \mathcal{H} \to \mathcal{H},$
where $\mathcal{H}$ is a Hilbert space and $I$ denotes the identity operator on $\mathcal{H}$.
$\mE[\xi]$ denotes the expectation of a random variable $\xi.$
For a given bounded operator $L: \LR \to \HK, $ $\|L\|$ denotes the operator norm of $L$, i.e., $\|L\| = \sup_{f\in \LR, \|f\|_{\rho}=1} \|Lf\|_{\HK}$.
We will use the conventional notations on summation and production: $\prod_{i=t+1}^t = 1$ and $\sum_{i=t+1}^t = 0.$

We next introduce some auxiliary operators. Let $\IK: \HK \to \LR$ be the linear map $\omega \to \la \omega, \cdot \ra_{\HK}$, which is bounded by $\kappa$ under Assumption \eref{boundedKernel}. Furthermore, we consider the adjoint operator $\IK^*: \LR \to \HK$, the covariance operator $\TK: \HK \to \HK$ given by $\TK = \IK^* \IK$, and the operator $\LK : \LR \to \LR$ given by $\IK \IK^*.$ It can be easily proved that $ \IK^*g = \int_X x g(x) d\rho_X(x)$
and $\TK = \int_X \la \cdot , x \ra_{\HK} x d \rho_X(x).$
The operators $\TK$ and $\LK$ can be proved to be positive trace class operators (and hence compact).
 For any $\omega \in \HK$,
it is easy to prove the following isometry property \citep{steinwart2008support}
\be\label{isometry}
\|\IK \omega \|_{\rho} = \|\sqrt{\TK} \omega\|_{\HK}.
\ee
We define the sampling operator $\SX: \HK \to \mR^m$ by $(\SX \omega)_i = \la \omega, x_i \ra_{\HK},$ $i \in [m]$, where the norm $\|\cdot\|_{\mR^m}$ in $\mR^m$ is the Euclidean norm times $1/\sqrt{m}$.
Its adjoint operator $\SX^*: \mR^m \to \HK,$ defined by $\la \SX^*{\bf y}, \omega \ra_{\HK} = \la {\bf y}, \SX \omega\ra_{\mR^m}$ for ${\bf y} \in \mR^m$ is thus given by $\SX^*{\bf y} = {1 \over m} \sum_{i=1}^m y_i x_i.$ Moreover, we can define the empirical covariance operator $\TX: \HK \to \HK$ such that $\TX = \SX^* \SX$. Obviously,
\bea
\TX = {1 \over m} \sum_{i=1}^m \la \cdot, x_i \ra_{\HK} x_i.
\eea
With these notations, \eref{Alg3B} and \eref{Alg2B} can be rewritten as
\be\label{Alg3}
\mu_{t+1}=\mu_{t} - \eta_t (\TK \mu_{t} - \IK^* f_{\rho})  , \qquad t=1, \ldots, T, \ee
and
\be\label{Alg2}
\nu_{t+1}=\nu_t - \eta_t (\TX \nu_t - \SX^* {\bf y}), \qquad t=1, \ldots, T, \ee
respectively.

Using the projection theorem, one can prove that
\be\label{frFH}
\IK^* f_{\rho} = \IK^* \FH.
\ee
Indeed, since $\FH$ is the projection of the regression function $f_{\rho}$ onto the closure of $\HR$ in $\LR,$ according to  the projection theorem, one has
\bea
\la \FH - f_{\rho}, \IK \omega \ra_{\rho} =0, \qquad \forall \omega \in \HK,
\eea
which can be written as
\bea
\la \IK^* \FH - \IK^* f_{\rho}, \omega \ra_{\HK} =0, \qquad \forall \omega \in \HK,
\eea
and thus leading to \eref{frFH}.

\subsection{Concentration Inequality}
We need the following concentration result for Hilbert space valued random variable
used in \citep{caponnetto2007optimal} and based on the results in \citep{pinelis1986remarks}.

\begin{lemma}
  \label{lem:Bernstein}
  Let $w_1,\cdots,w_m$ be i.i.d random variables in a Hilbert space with norm $\|\cdot\|$. Suppose that
   there are two positive constants $B$ and $\sigma^2$ such that
   \be\label{bernsteinCondition}
   \mE [\|w_1 - \mE[w_1]\|^l] \leq {1 \over 2} l! B^{l-2} \sigma^2, \quad \forall l \geq 2.
   \ee
   Then for any $0< \delta <1$, the following holds with probability at least $1-\delta$,
  $$ \left\| {1 \over m} \sum_{k=1}^m w_m - \mE[w_1] \right\| \leq 2\left( {B \over m} + {\sigma \over \sqrt{ m }} \right) \log {2 \over \delta} .$$
In particular, \eref{bernsteinCondition} holds if
\be\label{bernsteinConditionB}
\|w_1\| \leq B/2 \ \mbox{ a.s.}, \quad \mbox{and } \quad \mE [\|w_1\|^2] \leq \sigma^2.
\ee
\end{lemma}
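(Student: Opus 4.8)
The plan is to prove the Bernstein-type concentration inequality (Lemma~\ref{lem:Bernstein}) in two stages: first establish the main inequality under the moment hypothesis \eref{bernsteinCondition}, then verify that the simpler boundedness-plus-second-moment condition \eref{bernsteinConditionB} implies \eref{bernsteinCondition}. The first stage is essentially a restatement of a known vector-valued Bernstein inequality, so I would invoke the results of \citep{pinelis1986remarks} as used in \citep{caponnetto2007optimal}. Concretely, I would center the variables by setting $\tilde w_k = w_k - \mE[w_1]$ so that $\mE[\tilde w_k]=0$, note that the $\tilde w_k$ are i.i.d.\ with the moment control $\mE[\|\tilde w_1\|^l] \leq \frac{1}{2} l! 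B^{l-2}\sigma^2$, and then apply the Hilbert-space Bernstein bound of Pinelis--Sakhanenko, which yields exactly a tail of the form $\left\|\frac{1}{m}\sum_{k=1}^m \tilde w_k\right\| \leq 2\left(\frac{B}{m} + \frac{\sigma}{\sqrt m}\right)\log\frac{2}{\delta}$ with probability at least $1-\delta$. The factor-of-$2$ and the $\log(2/\delta)$ arise from the standard inversion of the exponential tail estimate; I would not rederive this from scratch but cite the inequality directly, since it is the stated source of the lemma.

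For the second stage, I would show \eref{bernsteinConditionB} implies \eref{bernsteinCondition}. This is a routine but essential computation. Starting from the elementary symmetrization/triangle bound, for $l \geq 2$ I would write
\bea
\mE[\|w_1 - \mE[w_1]\|^l] \leq 2^{l-1}\left(\mE[\|w_1\|^l] + \|\mE[w_1]\|^l\right),
\eea
and then control $\mE[\|w_1\|^l]$ using the almost-sure bound $\|w_1\|\leq B/2$ together with the second-moment bound $\mE[\|w_1\|^2]\leq \sigma^2$. The key observation is that $\|w_1\|^l = \|w_1\|^{l-2}\|w_1\|^2 \leq (B/2)^{l-2}\|w_1\|^2$ almost surely, so taking expectations gives $\mE[\|w_1\|^l]\leq (B/2)^{l-2}\sigma^2$. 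I would similarly bound $\|\mE[w_1]\|^l \leq (\mE\|w_1\|)^l \leq (B/2)^l$ via Jensen, and also $\|\mE[w_1]\|^2 \leq \mE[\|w_1\|^2]\leq \sigma^2$, so that $\|\mE[w_1]\|^l \leq (B/2)^{l-2}\sigma^2$. Combining and absorbing the resulting numerical constants into the factor $\frac{1}{2}l!$ (using $l!\geq 2^{l-1}$ type bounds, since $l!$ grows faster than any fixed geometric factor for $l\geq 2$) would complete the verification.

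I do not expect a deep obstacle here, since this is a packaging lemma rather than an original result. The one place requiring mild care is the bookkeeping of constants in the second stage: one must check that the crude bound $2^{l-1}\cdot 2\cdot(B/2)^{l-2}\sigma^2$ can indeed be dominated by $\frac{1}{2}l! B^{l-2}\sigma^2$ for all $l\geq 2$, which reduces to verifying an inequality of the form $2^{l}\cdot(1/2)^{l-2} \leq \frac{1}{2}l!$, i.e.\ $8 \leq \frac{1}{2}l!$, which holds only for $l\geq 4$; the small cases $l=2,3$ must then be checked by hand, and one may need to keep a slightly larger constant $B$ absorbed at the front to make all cases uniform. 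This is purely arithmetic. The genuinely nontrivial analytic content—the exponential concentration in a general Hilbert space—is imported wholesale from \citep{pinelis1986remarks}, so the main intellectual step is recognizing that the hypotheses match that reference rather than proving anything new.
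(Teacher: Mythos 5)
The paper gives no proof of this lemma at all: it is imported verbatim from \citep{caponnetto2007optimal}, which in turn rests on \citep{pinelis1986remarks}. So your first stage --- centering the variables and citing the Pinelis--Sakhanenko Hilbert-space Bernstein inequality --- is exactly what the paper does, and there is nothing to compare there.

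For the second stage (showing that \eref{bernsteinConditionB} implies \eref{bernsteinCondition}), your route through the crude bound $\mE[\|w_1-\mE[w_1]\|^l]\leq 2^{l-1}(\mE[\|w_1\|^l]+\|\mE[w_1]\|^l)$ creates the constant problem you yourself flag, and your suggested remedy of ``absorbing a slightly larger $B$'' is not available: the lemma is stated, and used throughout the paper, with these exact constants. The loose end is easily closed, however, and in a way that makes the case analysis for $l=2,3$ unnecessary. Work with the centered variable directly: from $\|w_1\|\leq B/2$ a.s.\ you get $\|w_1-\mE[w_1]\|\leq \|w_1\|+\mE[\|w_1\|]\leq B$ a.s., and from the Hilbert-space identity $\mE\|w_1-\mE[w_1]\|^2=\mE\|w_1\|^2-\|\mE[w_1]\|^2$ you get $\mE\|w_1-\mE[w_1]\|^2\leq\sigma^2$. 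Then for every $l\geq 2$,
\bea
\mE[\|w_1-\mE[w_1]\|^l] \leq B^{l-2}\,\mE[\|w_1-\mE[w_1]\|^2] \leq B^{l-2}\sigma^2 \leq \tfrac{1}{2}\,l!\,B^{l-2}\sigma^2,
\eea
since $l!\geq 2$. This gives \eref{bernsteinCondition} uniformly in $l$ with no lossy $2^{l-1}$ factor and no hand-checking of small cases. I would replace your second-stage computation with this one; otherwise the proposal is sound.
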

\subsection{Basic Estimates}

Finally, we introduce the following three basic estimates, whose proofs can be found in Appendix \ref{sec:prov_estimates}.
\begin{lemma}\label{lem:estimate1}
  Let $\theta\in [0,1[$, and $t\in\mN$. Then
 $$ {t^{1-\theta} \over 2} \leq \sum_{k=1}^{t} k^{-\theta} \leq
   {t^{1 - \theta} \over 1-\theta}.
 $$
\end{lemma}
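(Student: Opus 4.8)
The plan is to establish the two inequalities separately, in both cases exploiting that the map $k \mapsto k^{-\theta}$ is non-increasing on $[1,\infty[$, which holds precisely because $\theta \geq 0$.

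For the lower bound, I would use a crude term-by-term comparison. Every index in the sum satisfies $1 \leq k \leq t$, and since the function is non-increasing each summand is bounded below by the smallest one, namely $k^{-\theta} \geq t^{-\theta}$. Summing the $t$ terms then gives $\sum_{k=1}^{t} k^{-\theta} \geq t \cdot t^{-\theta} = t^{1-\theta}$, which is in fact stronger than the claimed bound $t^{1-\theta}/2$; the factor $1/2$ in the statement simply leaves ample slack, so no finer estimate is needed here.

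For the upper bound, I would pass to an integral comparison. Using again that $x \mapsto x^{-\theta}$ is non-increasing on $]0,\infty[$, on each interval $[k-1,k]$ its minimum is attained at the right endpoint, so $k^{-\theta} \leq \int_{k-1}^{k} x^{-\theta}\,dx$ for every $k \in \mN$. Summing over $k = 1,\dots,t$ telescopes the domains of integration into the single interval $[0,t]$, yielding $\sum_{k=1}^{t} k^{-\theta} \leq \int_{0}^{t} x^{-\theta}\,dx = t^{1-\theta}/(1-\theta)$.

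The only point requiring genuine care — and the closest thing to an obstacle — is the convergence of the improper integral $\int_0^t x^{-\theta}\,dx$ at the origin, which is guaranteed exactly by the hypothesis $\theta < 1$ (the excluded endpoint $\theta = 1$ is where it would diverge). It is also worth recording the boundary case $\theta = 0$ as a sanity check: there the integrand is constant, both sides of the upper inequality reduce to $t$ (so that bound is tight), while the term-wise lower estimate gives $t \geq t/2$; hence both inequalities hold throughout the range $\theta \in [0,1[$.
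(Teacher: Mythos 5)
Your proof is correct, and for the lower bound it takes a genuinely different and more elementary route than the paper. The paper proves the lower bound by comparing the sum to $\int_1^{t+1}u^{-\theta}\,du=((t+1)^{1-\theta}-1)/(1-\theta)$ and then invoking the mean value theorem to extract $t^{1-\theta}/2$ — which is exactly where the factor $1/2$ comes from. Your term-by-term comparison $k^{-\theta}\geq t^{-\theta}$ gives the stronger bound $\sum_{k=1}^t k^{-\theta}\geq t^{1-\theta}$ in one line, making the $1/2$ pure slack; this is simpler and strictly better. For the upper bound the two arguments are essentially the same integral comparison, differing only cosmetically: the paper isolates the $k=1$ term (which equals $1$) and integrates over $[1,t]$, arriving at $(t^{1-\theta}-\theta)/(1-\theta)\leq t^{1-\theta}/(1-\theta)$, whereas you absorb the first term into the improper but convergent integral $\int_0^1 x^{-\theta}\,dx=1/(1-\theta)$ and integrate over $[0,t]$. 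Both are valid; your version needs the one observation you correctly flag (convergence at the origin, guaranteed by $\theta<1$), while the paper's version avoids improper integrals altogether at the cost of an extra algebraic step.
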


\begin{lemma}
  \label{lem:estimate1a}
  Let $\theta \in \mR$ and $t \in \mN $.
  Then $$ \sum_{k=1}^{t} k^{-\theta} \leq t^{\max(1-\theta,0)} (1+\log t).
 $$
\end{lemma}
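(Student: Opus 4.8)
The plan is to reduce everything to the elementary harmonic-sum bound
$\sum_{k=1}^{t} k^{-1} \leq 1 + \log t$, and to split the argument according to whether $\theta \geq 1$ or $\theta < 1$, which is exactly the split governing the value of $\max(1-\theta,0)$. First I would record the harmonic bound itself: for $k \geq 2$ one has $k^{-1} \leq \int_{k-1}^{k} x^{-1}\,dx$ by monotonicity of $x \mapsto x^{-1}$, so summing over $k = 2,\ldots,t$ gives $\sum_{k=2}^{t} k^{-1} \leq \int_{1}^{t} x^{-1}\,dx = \log t$, and adding the $k=1$ term yields $\sum_{k=1}^{t} k^{-1} \leq 1 + \log t$.

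The core of the proof is the pointwise comparison $k^{-\theta} \leq t^{\max(1-\theta,0)}\, k^{-1}$, valid for every $k \in \{1,\ldots,t\}$, which then gives the claim immediately by factoring $t^{\max(1-\theta,0)}$ out of the sum and invoking the harmonic bound. To verify this comparison in the regime $\theta \geq 1$, note $\max(1-\theta,0) = 0$, and since $k \geq 1$ and $\theta \geq 1$ we have $k^{-\theta} \leq k^{-1} = t^{0} k^{-1}$. In the regime $\theta < 1$ we have $\max(1-\theta,0) = 1-\theta > 0$; writing $k^{-\theta} = k^{1-\theta}\,k^{-1}$ and using $k \leq t$ together with the \emph{positive} exponent $1-\theta$ gives $k^{1-\theta} \leq t^{1-\theta}$, hence $k^{-\theta} \leq t^{1-\theta} k^{-1}$. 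Combining the two cases, $\sum_{k=1}^{t} k^{-\theta} \leq t^{\max(1-\theta,0)} \sum_{k=1}^{t} k^{-1} \leq t^{\max(1-\theta,0)} (1 + \log t)$, as desired.

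There is no genuine obstacle here, but the step worth flagging is \emph{why} one should factor out $t^{1-\theta}$ and reduce to $\sum k^{-1}$ rather than estimate $\sum_{k=1}^{t} k^{-\theta}$ directly by the integral comparison of Lemma~\ref{lem:estimate1}. The direct route produces a bound of order $t^{1-\theta}/(1-\theta)$, whose prefactor $1/(1-\theta)$ blows up as $\theta \uparrow 1$ while the target prefactor $1 + \log t$ stays finite; so that estimate cannot give the stated inequality uniformly in $\theta$ near $1$. The harmonic-sum reduction sidesteps this precisely because the logarithmic factor $1 + \log t$ is exactly the right uniform replacement, and the comparison $k^{1-\theta} \le t^{1-\theta}$ transfers all the $\theta$-dependence into the clean power $t^{\max(1-\theta,0)}$.
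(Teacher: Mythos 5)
Your proof is correct and follows essentially the same route as the paper: writing $k^{-\theta}=k^{1-\theta}k^{-1}$, bounding $k^{1-\theta}\leq t^{\max(1-\theta,0)}$ (by the same case split on the sign of $1-\theta$), and finishing with the harmonic-sum estimate $\sum_{k=1}^t k^{-1}\leq 1+\log t$. Your closing remark on why the integral comparison of Lemma~\ref{lem:estimate1} would fail near $\theta=1$ is accurate but not needed for the argument.
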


\begin{lemma}\label{lem:estimate2}
Let  $q \in \mR$ and $t\in\mN$ with $t\geq 3$. Then
  \bea  \sum_{k=1}^{t-1} {1 \over t-k} k^{-q}
 \leq 2 t^{-\min(q,1)} (1+\log t).
  \eea
\end{lemma}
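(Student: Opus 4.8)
The plan is to prove Lemma~\ref{lem:estimate2}, namely the bound
$\sum_{k=1}^{t-1} \frac{1}{t-k} k^{-q} \leq 2 t^{-\min(q,1)}(1+\log t)$
for $q \in \mR$ and $t \geq 3$, by splitting the sum at the midpoint $k = t/2$ and estimating each half separately, since the factor $\frac{1}{t-k}$ is well-behaved on the lower half and the factor $k^{-q}$ is well-behaved (comparable to $t^{-q}$) on the upper half. Before splitting I would treat the two regimes $q \geq 1$ and $q < 1$, because the quantity $\min(q,1)$ and the decay of $k^{-q}$ behave differently in each case, and a single unified estimate is awkward. The main obstacle I expect is keeping all the constants under the target value $2$ (rather than some larger absolute constant) and correctly matching the exponent $\min(q,1)$ across the two halves of the sum; this bookkeeping, rather than any deep idea, is where care is needed.

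Concretely, write $\sum_{k=1}^{t-1} \frac{1}{t-k} k^{-q} = \sum_{k=1}^{\lfloor t/2 \rfloor} + \sum_{k=\lfloor t/2 \rfloor + 1}^{t-1}$. For the \emph{first half}, where $1 \leq k \leq t/2$, I would bound $\frac{1}{t-k} \leq \frac{2}{t}$, pulling out the factor $t^{-1}$, leaving $\frac{2}{t}\sum_{k=1}^{\lfloor t/2\rfloor} k^{-q}$. Applying Lemma~\ref{lem:estimate1a} gives $\sum_{k=1}^{\lfloor t/2\rfloor} k^{-q} \leq (t/2)^{\max(1-q,0)}(1+\log t)$, and combining this with the $t^{-1}$ factor produces a term of order $t^{-\min(q,1)}(1+\log t)$ (when $q \leq 1$ the $t^{\,1-q}$ cancels against part of $t^{-1}$ to leave $t^{-q} = t^{-\min(q,1)}$; when $q > 1$ the $\max$ is $0$ and one is left with $t^{-1} = t^{-\min(q,1)}$). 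For the \emph{second half}, where $t/2 < k \leq t-1$, I would instead use $k^{-q} \leq (t/2)^{-q}$ when $q \geq 0$, so that $\sum_{k=\lfloor t/2\rfloor+1}^{t-1} \frac{1}{t-k} k^{-q} \leq (t/2)^{-q}\sum_{k=\lfloor t/2\rfloor+1}^{t-1}\frac{1}{t-k}$; reindexing $j = t-k$ turns the remaining sum into $\sum_{j=1}^{\lceil t/2\rceil -1} \frac{1}{j} \leq 1 + \log t$, again yielding a term of order $t^{-q}(1+\log t)$, which is controlled by $t^{-\min(q,1)}(1+\log t)$.

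The only remaining subtlety is the sign of $q$ in the second-half estimate: if $q < 0$ then $k^{-q}$ is increasing in $k$, so on the upper half I would instead bound $k^{-q} \leq t^{-q}$ (using $k \leq t$), and since $t^{-q} \leq t^{-\min(q,1)}$ trivially when $q<0$ and $\min(q,1)=q$, the same conclusion follows. In all cases each of the two halves contributes at most $t^{-\min(q,1)}(1+\log t)$, and adding the two contributions gives the claimed bound $2 t^{-\min(q,1)}(1+\log t)$. The hypothesis $t \geq 3$ ensures $\lfloor t/2\rfloor \geq 1$ and that the index ranges are nonempty, so no degenerate cases arise. I would assemble these pieces into a short case analysis on the sign and magnitude of $q$, relying throughout on Lemmas~\ref{lem:estimate1} and~\ref{lem:estimate1a} to handle the elementary harmonic and power sums.
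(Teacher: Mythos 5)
Your overall strategy---split the sum at $k=t/2$, use $\frac{1}{t-k}\le\frac{2}{t}$ on the lower half and $k^{-q}\le(t/2)^{-q}$ on the upper half---is a legitimate alternative to the paper's argument and does prove the inequality with \emph{some} absolute constant. But your closing step, ``each of the two halves contributes at most $t^{-\min(q,1)}(1+\log t)$,'' is not a consequence of your own estimates, and the argument as written does not deliver the constant $2$ in the statement. Take $q=1$: your first-half bound is $\frac{2}{t}\sum_{k\le t/2}k^{-1}\le\frac{2}{t}(1+\log\lfloor t/2\rfloor)$, which already exhausts essentially the entire budget $\frac{2}{t}(1+\log t)$; your second half contributes another $\frac{2}{t}(1+\log\lceil t/2\rceil)$, so the total is roughly $\frac{4}{t}\log t$ and exceeds the claimed bound for every $t\ge 3$ (e.g.\ for $t=100$ your two bounds sum to about $0.196$ while the claimed bound is about $0.112$). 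The loss is intrinsic to the two uniform bounds you chose: each of $\frac{1}{t-k}\le\frac{2}{t}$ on $k\le t/2$ and $k^{-q}\le(t/2)^{-q}$ on $k>t/2$ costs a factor of $2$ when $q$ is near $1$, so the midpoint split yields the constant $4$, not $2$. (For $q\le 0$ your accounting does close, because the factor $2^{q}\le 1$ absorbs the loss; the gap is for positive $q$, worst at $q=1$.)

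The paper avoids the split entirely: it writes $k^{-q}=k^{1-q}\cdot k^{-1}$, bounds $k^{1-q}\le t^{\max(1-q,0)}$ uniformly over $k\le t$, and then uses the exact partial-fraction identity
\bea
\sum_{k=1}^{t-1}\frac{1}{(t-k)k}=\frac{1}{t}\sum_{k=1}^{t-1}\left(\frac{1}{t-k}+\frac{1}{k}\right)=\frac{2}{t}\sum_{k=1}^{t-1}\frac{1}{k}\le\frac{2}{t}(1+\log t),
\eea
which is lossless and produces the stated constant in one line. If all you need is the lemma up to an absolute constant (which is in fact all the downstream applications use), your proof is fine after replacing $2$ by $4$; to obtain the stated constant you should either adopt the partial-fraction identity or sharpen one of your two halves so that it contributes only $O(1/t)$ rather than $O((\log t)/t)$ when $q=1$, which your uniform pointwise bounds cannot do.
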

{
In the next sections, we begin  proving  the main results. The proofs are quite lengthy and they are divided into several steps.  For the ease of readability, we list some of the notations and definitions in Appendix \ref{sec:notations}. We also remark that
we are particularly interested in developing error bounds in terms of the stepsize $\eta_t$ ($=\eta_1 t^{-\theta}$), the number of iterations $t$ or $T$, the `regularization' parameter $\lambda>0$, the sample size $m$, the minibatch size $b$,  and the failing profitability $\delta$. 
Other parameters such as $\kappa^2, \|\TK\|, M, v, R,c_{\gamma}$ and $\|\FH\|_{\infty}$ can be always viewed as some constants, which are less important in our error bounds.
}
\section{Estimating Bias and Sample Variance}\label{sec:biasSam}
In this section, we estimate the bias and the sample variance.

\subsection{Bias}\label{sec:initial}
In this subsection, we develop upper bounds for the bias, i.e., $\|\IK \mu_{t} - \FH\|_{\rho}^2$. Towards this end, we introduce the following lemma, whose proof borrows idea from \citep{ying2008online,tarres2014online}.
\begin{lemma}
  \label{lemma:initialerror}
  Let $L$ be a compact, positive operator on a separable Hilbert space $H$. Assume that $\eta_1 \|L\| \leq 1$. Then for $t\in \mN$ and any non-negative integer $k  \leq t - 1,$
  \be\label{initialerror_interm}
  \| \Pi_{k+1}^t(L) L^{\zeta}\| \leq \left( \zeta \over \mathrm{e} \sum_{j=k+1}^t \eta_j \right)^{\zeta}.
  \ee
\end{lemma}
\begin{proof}
   Let $\{\sigma_i\}$ be the sequence of eigenvalues of $L.$
   We have
  \bea
  \| \Pi_{k+1}^t(L) L^{\zeta}\| = \sup_{i} \prod_{l=k+1}^t (1 - \eta_l \sigma_i)\sigma_i^{\zeta}.
  \eea
  Using the basic inequality
  \be\label{expx}
  1 + x \leq \mathrm{e}^{x} \qquad \mbox{for all } x \geq -1,
  \ee
  with $\eta_l\|L\| \leq 1$, we get
  \bea
  \| \Pi_{k+1}^t(L) L^{\zeta}\| &\leq& \sup_i \exp\left\{ - \sigma_i \sum_{l=k+1}^t \eta_l\right\} \sigma_i^{\zeta} \\
  &\leq &\sup_{x \geq 0} \exp\left\{ - x  \sum_{l=k+1}^t \eta_l\right\} x^{\zeta}.
  \eea
  The maximum of the function $g(x) = \mathrm{e}^{-cx}x^{\zeta}$( with $c>0$) over $ \mR_+ $ is achieved at $x_{\max}= \zeta/c,$ and thus
  \be\label{exppoly}
  \sup_{x \geq 0} \mathrm{e}^{-cx} x^{\zeta} =  \left({\zeta \over \mathrm{e}c} \right)^{\zeta}.
  \ee
  Using this inequality, one can get the desired result \eref{initialerror_interm}.
\end{proof}

With the above lemma and Lemma \ref{lem:estimate1}, we can derive the following result for the bias.
\begin{pro}\label{pro:initialErr}
  Under Assumption \ref{as:regularity}, let $\eta_1 \kappa^2 \leq 1$. Then, for any $t\in \mN,$
  \be\label{initialErrA}
  \|\IK \mu_{t+1} - \FH\|_{\rho} \leq R \left( \zeta \over 2 \sum_{j=1}^t \eta_j \right)^{\zeta}.
  \ee
  In particular, if $\eta_t = \eta t^{-\theta}$ for all $t\in \mN$, with $\eta \in ]0,\kappa^{-2}]$ and $\theta \in [0,1[,$ then
\be\label{initialErrB}
  \|\IK \mu_{t+1} - \FH\|_{\rho} \leq R \zeta^{\zeta} \eta^{-\zeta} t^{(\theta-1)\zeta} .
  \ee
\end{pro}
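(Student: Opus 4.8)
The plan is to bound the bias $\|\IK \mu_{t+1} - \FH\|_{\rho}$ by first deriving a closed-form expression for $\IK\mu_{t+1} - \FH$ in terms of the product operator $\Pi_1^t(\TK)$, then applying Lemma \ref{lemma:initialerror} together with Assumption \ref{as:regularity}. First I would unfold the population recursion \eref{Alg3}, namely $\mu_{t+1} = \mu_t - \eta_t(\TK\mu_t - \IK^* f_\rho)$. Using $\IK^* f_\rho = \IK^*\FH$ from \eref{frFH}, a fixed point of this iteration formally satisfies $\TK\mu = \IK^*\FH$; subtracting and iterating, one obtains that the ``residual'' $\TK\mu_{t+1} - \IK^*\FH$ equals $\Pi_1^t(\TK)(\TK\mu_1 - \IK^*\FH) = -\Pi_1^t(\TK)\IK^*\FH$ since $\mu_1 = 0$. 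The goal, however, is to control $\IK\mu_{t+1} - \FH$ in the $\rho$-norm, not the residual in $\HK$, so I would work directly at the level of $\LR$.

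The cleaner route I would take is to pass everything through the operator $\LK = \IK\IK^*$ acting on $\LR$. Applying $\IK$ to the recursion and writing $h_t := \IK\mu_t - \FH \in \LR$, one checks that $h_{t+1} = (I - \eta_t \LK) h_t$: indeed $\IK\mu_{t+1} - \FH = \IK\mu_t - \FH - \eta_t \IK(\TK\mu_t - \IK^*\FH) = h_t - \eta_t \LK(\IK\mu_t - \FH)$, where the last equality uses $\IK\TK = \LK\IK$ and $\IK\IK^*\FH = \LK\FH$ to rewrite $\IK(\TK\mu_t - \IK^*\FH) = \LK(\IK\mu_t) - \LK\FH = \LK h_t$. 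Since $\mu_1 = 0$ gives $h_1 = -\FH$, I unfold to get $h_{t+1} = -\Pi_1^t(\LK)\FH$, hence
\be
\|\IK\mu_{t+1} - \FH\|_\rho = \|\Pi_1^t(\LK)\FH\|_\rho.
\ee

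Now I would factor in the regularity assumption. By Assumption \ref{as:regularity}, $\FH = \LK^\zeta g$ with $\|g\|_\rho = \|\LK^{-\zeta}\FH\|_\rho \leq R$, so
\be
\|\Pi_1^t(\LK)\FH\|_\rho = \|\Pi_1^t(\LK)\LK^\zeta g\|_\rho \leq \|\Pi_1^t(\LK)\LK^\zeta\|\,\|g\|_\rho \leq R\,\|\Pi_1^t(\LK)\LK^\zeta\|.
\ee
Since $\LK$ is a compact positive operator on $\LR$ and $\eta_1\|\LK\| \leq \eta_1\kappa^2 \leq 1$ (as $\|\LK\| = \|\TK\| \leq \kappa^2$ under \eref{boundedKernel}), Lemma \ref{lemma:initialerror} with $k=0$ applies and yields $\|\Pi_1^t(\LK)\LK^\zeta\| \leq \left(\zeta/(\mathrm{e}\sum_{j=1}^t \eta_j)\right)^\zeta$. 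Using $\mathrm{e} \geq 2$ gives the bound $R(\zeta/(2\sum_{j=1}^t\eta_j))^\zeta$ claimed in \eref{initialErrA}. For the specialization \eref{initialErrB}, I would invoke Lemma \ref{lem:estimate1}: with $\eta_j = \eta j^{-\theta}$ and $\theta \in [0,1[$, the lower bound $\sum_{j=1}^t \eta_j = \eta\sum_{j=1}^t j^{-\theta} \geq \eta t^{1-\theta}/2$ substitutes directly, and simplifying the resulting powers gives $R\zeta^\zeta \eta^{-\zeta} t^{(\theta-1)\zeta}$.

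The main obstacle I anticipate is purely the operator-algebra bookkeeping in establishing the intertwining identity $h_{t+1} = (I - \eta_t\LK)h_t$ cleanly — in particular verifying $\IK\TK = \LK\IK$ and tracking that $\FH$ (rather than $f_\rho$) is the correct fixed-point target, which is exactly what \eref{frFH} secures. Once the recursion for $h_t$ is in hand, the estimate reduces to a direct application of Lemma \ref{lemma:initialerror} and Lemma \ref{lem:estimate1}, with no further technical difficulty; the spectral calculus already done inside Lemma \ref{lemma:initialerror} absorbs all the analytic work.
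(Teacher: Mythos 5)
Your proposal is correct and follows essentially the same route as the paper's own proof: derive the recursion $\IK\mu_{t+1}-\FH=(I-\eta_t\LK)(\IK\mu_t-\FH)$ via \eref{frFH} and the intertwining $\IK\TK=\LK\IK$, unfold to $-\Pi_1^t(\LK)\FH$, invoke Assumption \ref{as:regularity} and Lemma \ref{lemma:initialerror} (with $\mathrm{e}\geq 2$), and finish with Lemma \ref{lem:estimate1}. No substantive differences.
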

The above result is essentially proved in \citep{yao2007early}, see also \citep{rosasco2015learning} when step-size is fixed. For the sake of completeness, we provide a proof in Appendix \ref{sec:prov_estimates}.
The following lemma gives upper bounds for the sequence $\{ \mu_{t}\}_{t\in \mN}$ in $\HK$-norm. It will be used for the estimation on the sample variance in the next section.
\begin{lemma}\label{lemma:htInfty}
Under Assumption \ref{as:regularity}, let $\eta_1 \kappa^2 \leq 1$. The following holds for all  $t\in \mN$:\\
1) If $\zeta \geq 1/2,$
  \be\label{htInfty}
  \| \mu_{t+1} \|_{\HK} \leq R \kappa^{2\zeta -1}.
  \ee
2) If $\zeta \in ]0,1/2],$
\be\label{htInftyNon}
  \| \mu_{t+1} \|_{\HK} \leq R\left\{ \kappa^{2\zeta-1 }\vee \left(\sum_{k=1}^t \eta_k\right)^{{1\over 2} - \zeta} \right\}.
  \ee
\end{lemma}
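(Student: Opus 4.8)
The plan is to bound $\|\mu_{t+1}\|_{\HK}$ by writing the population iteration in closed form and exploiting the regularity assumption. First I would derive the explicit representation of $\mu_{t+1}$. Since $\mu_1 = 0$ and \eref{Alg3} reads $\mu_{t+1} = (I - \eta_t \TK)\mu_t + \eta_t \IK^* f_\rho$, an induction (together with \eref{frFH}, i.e.\ $\IK^* f_\rho = \IK^* \FH$) gives
\be\label{eq:muclosed}
\mu_{t+1} = \sum_{k=1}^t \eta_k \, \Pi_{k+1}^t(\TK)\, \IK^* \FH.
\ee
Using the regularity Assumption \ref{as:regularity}, I would write $\FH = \LK^\zeta g$ for some $g \in \LR$ with $\|g\|_\rho \leq R$, so that $\IK^* \FH = \IK^* \LK^\zeta g = \TK^\zeta \IK^* g$ (this last identity, $\IK^* \LK = \TK \IK^*$, follows from $\TK = \IK^*\IK$ and $\LK = \IK\IK^*$, and extends to fractional powers by the spectral theorem). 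The key quantity to control is therefore the operator $\sum_{k=1}^t \eta_k \Pi_{k+1}^t(\TK)\, \TK^{\zeta-1/2}$ applied after a $\TK^{1/2}$, combined with the isometry \eref{isometry} to pass between $\HK$- and $\rho$-norms when convenient.

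For part~(1), the attainable case $\zeta \geq 1/2$, I would bound $\|\mu_{t+1}\|_{\HK}$ directly by pulling the norm inside the sum:
\be\label{eq:muest1}
\|\mu_{t+1}\|_{\HK} \leq R \left\| \sum_{k=1}^t \eta_k \Pi_{k+1}^t(\TK)\, \TK^{\zeta} \right\| \leq R \sup_{\sigma \in [0,\kappa^2]} \sum_{k=1}^t \eta_k \prod_{l=k+1}^t (1 - \eta_l \sigma)\, \sigma^{\zeta},
\ee
where I used $\|\IK^* g\|_{\HK} \le \|\TK^{1/2}\|\,\|g\|_\rho$ type bookkeeping via \eref{isometry}, or more cleanly $\|\TK^{\zeta}\IK^* g\|_{\HK}$ estimates. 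The telescoping identity $\sum_{k=1}^t \eta_k \sigma \prod_{l=k+1}^t(1-\eta_l\sigma) = 1 - \prod_{l=1}^t(1-\eta_l\sigma) \leq 1$ is the main tool: it lets me write $\sum_{k=1}^t \eta_k \Pi_{k+1}^t(\sigma)\sigma^{\zeta} = \sigma^{\zeta-1}\bigl(1 - \prod(1-\eta_l\sigma)\bigr) \leq \sigma^{\zeta-1}$, and since $\zeta - 1 \geq -1/2 \geq \ldots$ one must check the behaviour as $\sigma \to 0$. For $\zeta \geq 1/2$ the exponent $\zeta - 1 \in [-1/2, \infty)$ can be negative, so I would instead bound $1 - \prod(1-\eta_l\sigma) \leq \sum \eta_l \sigma$ for small $\sigma$ and use $\sigma \le \kappa^2$, yielding the clean uniform bound $R\kappa^{2\zeta-1}$.

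For part~(2), the non-attainable case $\zeta \in ]0,1/2]$, the exponent $\zeta - 1 \in [-1,-1/2[$ is strictly negative and the factor $\sigma^{\zeta-1}$ blows up as $\sigma \to 0$, so the crude telescoping bound fails near zero and this is the main obstacle. The remedy is a two-regime split of the supremum over $\sigma$: for $\sigma$ bounded below (say $\sigma \geq 1/\sum_{k=1}^t \eta_k$) I use $\sigma^{\zeta-1} \leq (\sum \eta_k)^{1-\zeta}$, giving the $(\sum_k\eta_k)^{1/2-\zeta}$-type term after accounting for the extra $\sigma^{1/2}$ or $\TK^{1/2}$ factor; for small $\sigma$ I again use $1 - \prod(1-\eta_l\sigma) \leq \sigma \sum_l \eta_l$ so that $\sum \eta_k \Pi\, \sigma^{\zeta} \leq \sigma^{\zeta}\sum_k \eta_k \leq \kappa^{2\zeta-1}$-type contributions. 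Taking the maximum of the two regimes produces exactly the stated bound $R\{\kappa^{2\zeta-1} \vee (\sum_k \eta_k)^{1/2-\zeta}\}$. Throughout, Lemma \ref{lemma:initialerror} supplies the decay estimate $\|\Pi_{k+1}^t(\TK)\TK^{\zeta}\| \leq (\zeta/(\mathrm{e}\sum_{j=k+1}^t \eta_j))^{\zeta}$ as an alternative to the telescoping route, and Lemma \ref{lem:estimate1} controls partial sums $\sum_k \eta_k$ when $\eta_k = \eta k^{-\theta}$; I would keep the statement at the level of the abstract sum $\sum_k \eta_k$ so both the fixed and decaying step-size cases follow at once.
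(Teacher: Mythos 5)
Your overall strategy coincides with the paper's: write $\mu_{t+1}=\sum_{k=1}^t\eta_k\,\Pi_{k+1}^t(\TK)\,\IK^*\FH$, use Assumption \ref{as:regularity} and the spectral theorem to reduce the norm to a supremum over the spectrum $\sigma\in]0,\kappa^2]$, control that supremum with the telescoping identity $\sum_{k=1}^t\eta_k\sigma\,\Pi_{k+1}^t(\sigma)=1-\Pi_1^t(\sigma)\leq 1$, and split into the two regimes $\sigma\gtrless(\sum_k\eta_k)^{-1}$ for the non-attainable case. Part (2), where you explicitly keep the extra $\sigma^{1/2}$ contributed by $\IK^*$, reproduces the paper's computation and yields the stated bound.

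There is, however, a concrete problem in your part (1): the displayed estimate drops that half power and reduces the claim to bounding $\sup_{\sigma}\sigma^{\zeta}\sum_{k=1}^t\eta_k\Pi_{k+1}^t(\sigma)$ (with a factor $\|\IK^*\|\leq\kappa$ paid up front). For $\zeta\in[1/2,1)$ this supremum is \emph{not} bounded independently of $t$: telescoping gives $\sigma^{\zeta-1}\bigl(1-\Pi_1^t(\sigma)\bigr)$, and your small-$\sigma$ patch $1-\Pi_1^t(\sigma)\leq\sigma\sum_l\eta_l$ only produces $\sigma^{\zeta}\sum_l\eta_l$, which at $\sigma\simeq(\sum_l\eta_l)^{-1}$ is of order $(\sum_l\eta_l)^{1-\zeta}$ and grows with $t$ (indeed $1-\Pi_1^t(\sigma)\geq 1-\mathrm{e}^{-1}$ there), so it cannot deliver the $t$-independent bound $R\kappa^{2\zeta-1}$. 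The repair is exactly the bookkeeping you already use in part (2) and that the paper carries out: keep $\IK^*$ inside the spectral calculus, so that
\be
\|\mu_{t+1}\|_{\HK}\;\leq\; R\max_{\sigma\in]0,\kappa^2]}\;\sigma^{\zeta+{1\over 2}}\sum_{k=1}^t\eta_k\,\Pi_{k+1}^t(\sigma)
\;=\;R\max_{\sigma\in]0,\kappa^2]}\;\sigma^{\zeta-{1\over 2}}\bigl(1-\Pi_1^t(\sigma)\bigr)\;\leq\;R\,\kappa^{2\zeta-1}
\ee
for $\zeta\geq 1/2$, in one line and with no case distinction near $\sigma=0$. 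With the exponent $\zeta+1/2$ restored throughout, both parts of your argument close and agree with the paper's proof.
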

\begin{proof}
The proof can be found in Appendix \ref{sec:prov_estimates}.
The proof for a fixed step-size (i.e., $\eta_t = \eta$ for all $t$) can be also found in \citep{rosasco2015learning}. For a general step-size, the proof is similar. Note also that our proof for the non-attainable case is simpler than that in \citep{rosasco2015learning}.
\end{proof}

\subsection{Sample Variance}\label{subsec:sam}
In this subsection, we estimate  the sample variance, i.e., $\mE[\|\IK \mu_t - \IK \nu_t \|_{\rho}^2].$
Towards this end, we need some preliminary analysis. We first introduce the following key inequality, which also provides the basic idea on estimating $\mE[\|\IK \mu_t - \IK \nu_t\|_{\rho}^2].$
\begin{lemma}\label{lemma:ideaSample}
  For all $t \in [T],$ we have
  \be\label{ideaSample}
  \|\IK \nu_{t+1} - \IK \mu_{t+1}\|_{\rho} \leq  \sum_{k=1}^t \eta_k \left\| \TK^{1\over 2}\Pi_{k+1}^t (\TX) N_k\right\|_{\HK},
  \ee
  where
   \be\label{Nt}
  N_k =  (\TK \mu_k - \IK^* f_{\rho}) - (\TX \mu_k - \SX^*{\bf y}), \qquad \forall k\in[T].
  \ee
\end{lemma}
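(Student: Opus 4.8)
The plan is to derive the bound in \eref{ideaSample} by writing down a recursion for the difference $\IK\nu_{t+1}-\IK\mu_{t+1}$ and then unrolling it. The natural object to track is the $\HK$-valued difference $\nu_{t+1}-\mu_{t+1}$, since the two sequences obey the structurally identical updates \eref{Alg2} and \eref{Alg3}, differing only in which operator ($\TX$ versus $\TK$) and which data term ($\SX^*{\bf y}$ versus $\IK^* f_{\rho}$) appears. At the very end I would apply the isometry \eref{isometry}, which gives $\|\IK\omega\|_{\rho}=\|\TK^{1/2}\omega\|_{\HK}$, to convert the $\rho$-norm on the left of \eref{ideaSample} into the $\HK$-norm quantity $\|\TK^{1/2}(\nu_{t+1}-\mu_{t+1})\|_{\HK}$ that the recursion naturally produces.

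First I would subtract the two recursions. From \eref{Alg2} and \eref{Alg3},
\begin{align*}
\nu_{t+1}-\mu_{t+1}
&= (\nu_t-\mu_t) - \eta_t\bigl[(\TX\nu_t-\SX^*{\bf y})-(\TK\mu_t-\IK^* f_\rho)\bigr].
\end{align*}
The key algebraic step is to split the bracketed term so that the empirical operator $\TX$ acts on $\nu_t-\mu_t$ while the discrepancy between the two systems is evaluated at the \emph{same} point $\mu_t$. Writing $(\TX\nu_t-\SX^*{\bf y})-(\TK\mu_t-\IK^* f_\rho)=\TX(\nu_t-\mu_t)+\bigl[(\TX\mu_t-\SX^*{\bf y})-(\TK\mu_t-\IK^* f_\rho)\bigr]$ and recognizing the second group as precisely $-N_t$ from \eref{Nt}, I obtain the clean recursion
\begin{align*}
\nu_{t+1}-\mu_{t+1}=(I-\eta_t\TX)(\nu_t-\mu_t)+\eta_t N_t,
\end{align*}
with the initial condition $\nu_1-\mu_1=0$ since both sequences start at $0$.

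Next I would unroll this linear recursion. A straightforward induction, using the product notation $\Pi_{k+1}^t(\TX)=\prod_{l=k+1}^t(I-\eta_l\TX)$ introduced earlier and the convention $\Pi_{t+1}^t(\TX)=I$, yields
\begin{align*}
\nu_{t+1}-\mu_{t+1}=\sum_{k=1}^t \eta_k\,\Pi_{k+1}^t(\TX)\,N_k.
\end{align*}
Applying $\TK^{1/2}$, taking $\HK$-norms, using the triangle inequality over the sum, and finally invoking the isometry \eref{isometry} on the left-hand side gives exactly \eref{ideaSample}.

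I do not expect a serious obstacle here, as the statement is essentially a deterministic bookkeeping identity; the only points requiring care are the correct identification of the discrepancy term with $N_t$ (making sure the operators are evaluated at $\mu_t$ rather than $\nu_t$ so that $\TX$ multiplies the running difference) and the base case of the induction. The genuinely hard analysis—bounding the individual summands $\|\TK^{1/2}\Pi_{k+1}^t(\TX)N_k\|_{\HK}$, which requires controlling both the random operator $\Pi_{k+1}^t(\TX)$ and the random residuals $N_k$ via the concentration inequality in Lemma \ref{lem:Bernstein}—is deferred to the subsequent estimates and is not part of this lemma.
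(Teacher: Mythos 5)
Your proposal is correct and follows essentially the same route as the paper: subtract the two recursions, rewrite the discrepancy so that $I-\eta_t\TX$ acts on the running difference and the residual is exactly $\eta_t N_t$, unroll with $\nu_1=\mu_1=0$, and finish with the isometry \eref{isometry} and the triangle inequality. The sign bookkeeping identifying the bracketed term with $-N_t$ (hence $+\eta_t N_t$ after the outer minus) is handled correctly.
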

\begin{proof}
  Since $\nu_{t+1}$ and $\mu_{t+1}$ are given by \eref{Alg2} and \eref{Alg3}, respectively,
  \bea
  \nu_{t+1} - \mu_{t+1} &=& \nu_t - \mu_{t}  + \eta_t \left\{ (\TK \mu_{t} - \IK^* f_{\rho}) - (\TX \nu_t - \SX^*{\bf y}) \right\} \\
                    &=&(I - \eta_t \TX) (\nu_t - \mu_{t}) + \eta_t \left\{ (\TK \mu_{t} - \IK^* f_{\rho}) - (\TX \mu_{t} - \SX^*{\bf y})\right\},
  \eea
  which is exactly
   \bea
  \nu_{t+1} - \mu_{t+1}= (I - \eta_t \TX) (\nu_t - \mu_{t}) + \eta_t N_t.
  \eea
  Applying this relationship iteratively, with $\nu_1 = \mu_1 =0,$
   \be\label{eq:interm2}
  \nu_{t+1} - \mu_{t+1} = \Pi_{1}^t (\TX) (\nu_1 - \mu_1) + \sum_{k=1}^t \eta_k \Pi_{k+1}^t(\TX)  N_k
  = \sum_{k=1}^t \eta_k \Pi_{k+1}^t(\TX)  N_k.
  \ee
   By \eref{isometry}, we have
  \bea
  \|\IK \nu_{t+1} - \IK \mu_{t+1}\|_{\rho} = \left\| \sum_{k=1}^t \eta_k \TK^{1 \over 2}\Pi_{k+1}^t (\TX) N_k\right\|_{\HK},
  \eea
  which leads to the desired result \eref{ideaSample}.
\end{proof}
The above lemma shows that in order to upper bound $\mE[\|\IK \mu_{t} - \IK \nu_{t}\|_{\rho}^2],$ one may only need to bound $\left\| \TK^{1\over 2}\Pi_{k+1}^t (\TX) N_k\right\|_{\HK}.$
A detailed look at this latter term indicates that one may  analyze the terms $\TK^{1\over 2}\Pi_{k+1}^t (\TX)$ and  $N_k$ separately, since $\mE_{\bf z}[N_k] = 0$ and the properties of the deterministic sequence $\{\mu _k\}_k$ have been derived in Section \ref{sec:initial}. Moreover, to exploit the capacity condition from Assumption \ref{as:eigenvalues}, we estimate $\|(\TK+\lambda)^{-{1\over 2}} N_k\|_{\HK}$ (with $\lambda>0$ properly chosen later), rather than
$\|N_k\|_{\HK}$, as follows.
\begin{lemma}\label{lem:lambdaNk}
 Under Assumptions \ref{as:noiseExp}, \ref{as:regularity} and \ref{as:eigenvalues}, let $\{N_t\}_t$ be as in \eref{Nt}. Then for any fixed $\lambda>0,$ and $T\geq 2,$ \\
1) if $\zeta \geq 1/2,$ with probability at least $1-\delta_1,$
the following holds for all $k \in \mN:$
 \be\label{lambdaNk}
 \|(\TK+\lambda)^{-{1\over 2}} N_k\|_{\HK} \leq
 4 (R \kappa^{2\zeta} + \sqrt{M} ) \left( { \kappa   \over m \sqrt{\lambda}} + {\sqrt{ 2\sqrt{v} c_{\gamma} } \over \sqrt{m \lambda^{\gamma}}} \right) \log{4 \over \delta_1}.
 \ee
2) if $\zeta \in ]0,1/2[,$ with probability at least $1-\delta_1,$ the following holds for all $k\in [T]:$
\begin{multline}
 \|(\TK+\lambda)^{-{1\over 2}} N_k\|_{\HK} \leq
  2\left( 3\|\FH\|_{\infty}  + 2\sqrt{M}+ \kappa R\right) \left( { \kappa  \over m \sqrt{\lambda}} + {\sqrt{ 2\sqrt{v} c_{\gamma} } \over \sqrt{m \lambda^{\gamma}}} \right) \log{3T \over \delta_1} \\
 + {2 \kappa^2 R \left(\sum_{i=1}^{k} \eta_i\right)^{{1\over 2} - \zeta } \over m\sqrt{\lambda}} \log {3T \over \delta_1} +  {2{\kappa} R \over \sqrt{m\lambda}} \left( 1 \over  \sum_{i=1}^{k} \eta_i \right)^{\zeta} \log{3T \over \delta_1}.
 \label{lambdaNkNon}\end{multline}
\end{lemma}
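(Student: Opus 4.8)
The plan is to bound $\|(\TK+\lambda)^{-1/2} N_k\|_{\HK}$ by applying the Bernstein-type concentration inequality (Lemma \ref{lem:Bernstein}) to a carefully chosen Hilbert-space-valued random variable, exploiting the fact that $\mE_{\bf z}[N_k] = 0$. Recall from \eref{Nt} that $N_k = (\TK \mu_k - \IK^* f_\rho) - (\TX \mu_k - \SX^* {\bf y})$. Since $\TX = \SX^*\SX = {1\over m}\sum_{i=1}^m \la \cdot, x_i\ra_{\HK} x_i$ and $\SX^* {\bf y} = {1\over m}\sum_{i=1}^m y_i x_i$, while $\TK = \int_X \la\cdot,x\ra_{\HK} x\, d\rho_X$ and $\IK^* f_\rho = \int_X x f_\rho(x)\, d\rho_X$, I would write
\be\label{plan:sum}
(\TK+\lambda)^{-1/2} N_k = {1\over m}\sum_{i=1}^m w_i, \qquad w_i = (\TK+\lambda)^{-1/2}\bigl[(\la \mu_k, x_i\ra_{\HK} - y_i) x_i - (\TK\mu_k - \IK^* f_\rho)\bigr].
\ee
These $w_i$ are i.i.d.\ (conditionally on $k$, which enters only through the deterministic $\mu_k$), and by \eref{frFH} together with $\mE[y x] = \IK^* f_\rho$ one checks $\mE[w_i] = 0$. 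The task then reduces to verifying the moment condition \eref{bernsteinCondition}, i.e.\ finding $B$ and $\sigma^2$.

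For the second-moment bound $\sigma^2 = \mE[\|w_i\|_{\HK}^2]$, I would split the single random summand $(\la\mu_k,x\ra_{\HK} - y)x$ into the ``model'' part $\la\mu_k,x\ra_{\HK} x$ and the ``label'' part $yx$, dropping the deterministic centering term since centering only decreases the second moment. The key capacity-exploiting step is that for any fixed vector $u$, $\mE\|(\TK+\lambda)^{-1/2} x\, \la u,x\ra\|^2$ is controlled via $\tr(\TK(\TK+\lambda)^{-1}) \le c_\gamma \lambda^{-\gamma}$ from Assumption \ref{as:eigenvalues}, giving the characteristic $\lambda^{-\gamma}/m$ scaling inside \eref{lambdaNk}. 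The label moments are handled by Assumption \ref{as:noiseExp}: $\int y^{2l}\,d\rho(y|x) \le l! M^l v$ supplies both $\mE[y^2] \le Mv$ (feeding $\sigma^2$) and, via the factorial growth, the almost-sure/exponential control needed for the Bernstein tail. The factor $R\kappa^{2\zeta}$ in the attainable bound comes from $\|\mu_k\|_{\HK} \le R\kappa^{2\zeta-1}$ (Lemma \ref{lemma:htInfty}, part 1), so that $|\la\mu_k,x\ra_{\HK}| \le \kappa\|\mu_k\|_{\HK} \le R\kappa^{2\zeta}$ by Cauchy--Schwarz and \eref{boundedKernel}. For the sup-norm bound $B/2 \ge \|w_i\|_{\HK}$ I would use $\|(\TK+\lambda)^{-1/2} x\|_{\HK} \le \kappa/\sqrt\lambda$, producing the $\kappa/(m\sqrt\lambda)$ term. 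Plugging these into Lemma \ref{lem:Bernstein} with $\delta = \delta_1$ (the constant $2$ versus the stated $4$ absorbed into the prefactor) and taking $\log(4/\delta_1)$ yields \eref{lambdaNk}, which holds for all $k$ simultaneously because the bound on $\|\mu_k\|_{\HK}$ in the attainable case is uniform in $k$.

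The non-attainable case ($\zeta < 1/2$) is where the main obstacle lies, and it explains the extra structure of \eref{lambdaNkNon} with its union bound $\log(3T/\delta_1)$ and the two additional $k$-dependent terms. Here the bound on $\|\mu_k\|_{\HK}$ from Lemma \ref{lemma:htInfty}, part 2, is no longer uniform: it grows like $(\sum_{i=1}^k \eta_i)^{1/2-\zeta}$. Consequently $\la\mu_k,x\ra_{\HK}$ cannot be controlled by a $k$-independent constant, and the naive approach fails to give a usable bound as $k$ increases. The refined strategy is to decompose $N_k$ so that $\FH$ (whose sup-norm $\|\FH\|_\infty$ is finite and now appears explicitly) carries the bounded part, while the residual $\la\mu_k,x\ra_{\HK} - \FH(x)$ is treated separately. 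I expect the last two terms of \eref{lambdaNkNon}, carrying respectively $(\sum_i\eta_i)^{1/2-\zeta}$ and $(\sum_i\eta_i)^{-\zeta}$, to arise from bounding this residual: one from the growing $\HK$-norm of $\mu_k$ entering the sup-norm/$B$ side, and the other from the bias-type decay $\|\IK\mu_k - \FH\|_\rho \lesssim (\sum_i\eta_i)^{-\zeta}$ established in Proposition \ref{pro:initialErr}. Because these estimates depend on $k$, the concentration must be applied for each $k \in [T]$ and combined by a union bound over the $T$ indices, which is precisely the source of the $\log(3T/\delta_1)$ (rather than $\log(4/\delta_1)$) factor. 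Carefully matching each of the three constituent estimates to the three summands on the right-hand side of \eref{lambdaNkNon}, while keeping the $\lambda$-powers $\lambda^{-1/2}$ and $\lambda^{-\gamma/2}$ correctly aligned, is the delicate bookkeeping I anticipate to be the crux of the argument.
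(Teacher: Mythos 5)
Your plan for part 2 matches the paper's proof in structure: split off the $k$-independent piece $\frac1m\sum_i(\FH(x_i)-y_i)x_i$, treat the residual $\frac1m\sum_i(\la\mu_k,x_i\ra_{\HK}-\FH(x_i))x_i-(\TK\mu_k-\IK^*f_\rho)$ separately for each $k$ with $B$ controlled by the growing bound $\|\mu_k\|_{\HK}\lesssim(\sum_i\eta_i)^{1/2-\zeta}$ and $\sigma^2$ controlled by the bias decay $\|\IK\mu_k-\FH\|_\rho\lesssim(\sum_i\eta_i)^{-\zeta}$, and union-bound over $k\in[T]$ to get the $\log(3T/\delta_1)$ factor. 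That is exactly how the paper proceeds.

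For part 1, however, there is a genuine gap. You propose to apply the vector-valued Bernstein inequality directly to $w_i=(\TK+\lambda)^{-1/2}\bigl[(\la\mu_k,x_i\ra_{\HK}-y_i)x_i-(\TK\mu_k-\IK^*f_\rho)\bigr]$ and then assert that the resulting bound "holds for all $k$ simultaneously because the bound on $\|\mu_k\|_{\HK}$ is uniform in $k$." This does not follow: the random variable $w_i$ depends on $k$ through $\mu_k$, so each application of Lemma \ref{lem:Bernstein} produces a high-probability event $E_k$ that also depends on $k$. Uniformity of the constants $B$ and $\sigma$ gives you the same \emph{bound} for each $k$, but not the same \emph{event}, and a union bound over all $k\in\mN$ is impossible. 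Since the statement is claimed simultaneously for all $k\in\mN$ with a single failure probability $\delta_1$ (and this simultaneity is used downstream in Proposition \ref{pro:sampleErrA}, where the bound is summed over $k\in[t]$), the argument as written does not deliver the lemma. The paper's fix is to decouple the randomness from the $k$-dependence: write $N_k=(\TK-\TX)\mu_k+(\SX^*{\bf y}-\IK^*f_\rho)$, apply Bernstein once to the $k$-independent vector $(\TK+\lambda)^{-1/2}(\SX^*{\bf y}-\IK^*f_\rho)$ and once to the $k$-independent random operator $(\TK+\lambda)^{-1/2}(\TK-\TX)$ in Hilbert--Schmidt norm (so that Lemma \ref{lem:Bernstein} still applies, HS operators forming a Hilbert space), each with failure probability $\delta_1/2$ (whence $\log(4/\delta_1)$), and only then multiply the operator bound by the deterministic uniform estimate $\sup_k\|\mu_k\|_{\HK}\le R\kappa^{2\zeta-1}$. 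Your proposal never mentions controlling $\|(\TK+\lambda)^{-1/2}(\TK-\TX)\|_{HS}$, which is the step that makes the "for all $k\in\mN$" claim legitimate.
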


\begin{proof}
We will apply Bernstein inequality from Lemma \ref{lem:Bernstein} to prove the result.\\
{\bf Attainable Case: $\zeta \geq 1/2$.} See Appendix \ref{sec:prov_estimates} for the proof.\\
 {\bf Non-attainable case: $0<\zeta <1/2$.} \\
 Let $w_i = (\FH(x_i)-y_i)(\TK + \lambda)^{-1/2}x_i,$ for all $i\in[m].$  Noting that by \eref{frFH}, and taking the expectation with respect to the random variable $(x,y)$ (from the distribution $\rho$),
  $$\mE[\omega] = \mE[(\FH(x)-f_{\rho}(x))(\TK + \lambda)^{-1/2}x] = 0.$$
 Applying H\"{o}lder's inequality, for any $l\geq 2,$
 \bea
 &&\mE[\|w - \mE[w]\|_{\HK}^l] = \mE[\|w\|_{\HK}^l]
 \leq 2^{l-1} \mE[(|\FH(x)|^l + |y|^l)\|(\TK+\lambda)^{-1/2}x\|_{\HK}^l]\\
 && \leq 2^{l-1} \int_X(\|\FH\|_{\infty}^l + \int_Y|y|^l d\rho(y|x))\|(\TK+\lambda)^{-1/2}x\|_{\HK}^ld\rho_X(x).
 \eea
 Using Cauchy-Schwarz's inequality and Assumption \ref{as:noiseExp} which implies,
\be\label{eq:ymoment}
\int_{Y} y^l d\rho(y|x)  \leq  \left(\int_{Y} |y|^{2l} d\rho(y|x)\right)^{1\over 2} \leq \sqrt{l! M^{l} v} \leq l! (\sqrt{M})^{l} \sqrt{  v},
\ee
we get
 \be\label{eq:interm1}
 \mE[\|w - \mE[w]\|_{\HK}^l] \leq 2^{l-1} (\|\FH\|_{\infty}^l + l!(\sqrt{M})^l \sqrt{v}) \int_{X} \|(\TK+\lambda)^{-1/2}x\|_{\HK}^l d\rho_X(x).
 \ee
By Assumption \eref{boundedKernel},
\be\label{eq:int1}\|(\TK+ \lambda I)^{-{1\over 2}} x\|_{\HK} \leq {\|x\|_{\HK} \over \sqrt{\lambda}} \leq {\kappa \over \sqrt{\lambda}}.\ee
Besides, using the fact that $\mE[\|\xi\|_{\HK}^2] = \mE[\tr(\xi \otimes \xi)] = \tr(\mE[\xi \otimes \xi])$
and $\mE[x \otimes x] = \TK,$ we know that
\bea
\int_{X} \|(\TK + \lambda I)^{-{1\over 2}} x\|_{\HK}^2 d\rho_{X}(x) = \tr((\TK+ \lambda I)^{-{1\over 2}} \TK (\TK+ \lambda I)^{-{1\over 2}}) = \tr((\TK+ \lambda I)^{-1} \TK),
\eea
and as a result of the above and Assumption \ref{as:eigenvalues},
\bea
\int_{X} \|(\TK + \lambda I)^{-{1\over 2}} x\|_{\HK}^2 d\rho_{X}(x) \leq c_{\gamma} \lambda^{-\gamma}.
\eea
It thus follows that
\be\label{eq:xmoment}
\int_{X} \|(\TK+\lambda)^{-1/2}x\|_{\HK}^l d\rho_X(x) \leq \left( {\kappa \over \sqrt{\lambda}}\right)^{l-2} \int_{X} \|(\TK+\lambda)^{-1/2}x\|_{\HK}^2 d\rho_X(x) \leq \left( {\kappa \over \sqrt{\lambda}}\right)^{l-2} c_{\gamma} \lambda^{-\gamma}.
\ee
Plugging the above inequality into \eref{eq:interm1},
 \bea
 \mE[\|w - \mE[w]\|_{\HK}^l] &\leq& 2^{l-1} (\|\FH\|_{\infty}^l + l!(\sqrt{M})^l \sqrt{v}) \left({\kappa \over \sqrt{\lambda}}\right)^{l-2}  c_{\gamma} \lambda^{-\gamma}\\
 &\leq& {1 \over 2} l! \left({2\kappa(\|\FH\|_{\infty}+\sqrt{M}) \over \sqrt{\lambda}}\right)^{l-2} 4c_{\gamma}\sqrt{v}(\|\FH\|_{\infty}+\sqrt{M})^2\lambda^{-\gamma}.
 \eea
Therefore, using Lemma \ref{lem:Bernstein}, we get that with probability at least $1-\delta,$
 \begin{align}\label{eq:int3}
\left\| (\TK + \lambda)^{-{1\over 2}}{1\over m} \sum_{i=1}^{m}\left(\FH(x_i) - y_i\right)x_i \right\|_{\HK}
=& \left\| {1 \over m} \sum_{i=1}^m (\mE[w_i] - w_i) \right\|_{\HK}\nonumber \\
\leq& 4 (\sqrt{M} + \|\FH\|_{\infty}) \left( { \kappa   \over m \sqrt{\lambda}} + {\sqrt{ \sqrt{v} c_{\gamma} } \over \sqrt{m \lambda^{\gamma}}} \right) \log{2 \over \delta}.
\end{align}
We next let $\xi_i = (\TK+\lambda)^{-1/2}(\la \mu_k,x_i\ra - \FH(x_i))x_i,$ for all $i\in [m]$. We assume that $k \geq 2$. (The proof for the case $k=1$ is simpler as $\mu_1=0$.)
It is easy to see that the expectation of each $\xi_i$ with respect to the random variable $(x_i,y_i)$ is 
\bea
\mE[\xi]=(\TK+\lambda)^{-1/2}(\TK\mu_k - \IK^*\FH) = (\TK+\lambda)^{-1/2}(\TK\mu_k - \IK^*f_{\rho}),
\eea
and
\bea
\|\xi\|_{\HK} \leq  (\|\IK \mu_k\|_{\infty} + \|\FH\|_{\infty}) \|(\TK+\lambda)^{-1/2}x\|_{\HK}.
\eea
 By Assumption \eref{boundedKernel}, $\|\IK \mu_k\|_{\infty} \leq \kappa \|\mu_k\|_{\HK}$.
It thus follows from the above and \eref{eq:int1} that
\bea
\|\xi\|_{\HK} \leq  (\kappa\|\mu_k\|_{\HK} + \|\FH\|_{\infty}) {\kappa \over \sqrt{\lambda}}.
\eea
Besides,
\bea
\mE\|\xi\|_{\HK}^2 \leq {\kappa^2 \over \lambda} \mE(\mu_k(x) - \FH(x))^2 = {\kappa^2 \over \lambda} \|\IK\mu_{k} - \FH\|_{\rho}^2 \leq {\kappa^2R^2 \over \lambda} \left( \zeta \over 2 \sum_{i=1}^{k-1} \eta_i \right)^{2\zeta} \leq {\kappa^2R^2 \over \lambda} \left( 1 \over \sum_{i=1}^{k} \eta_i \right)^{2\zeta},
\eea
where for the last inequality, we used \eref{initialErrA}. Applying Lemma \ref{lem:Bernstein} and \eref{htInftyNon}, we get that with probability at least $1-\delta,$
\bea
&&\left\|(\TK+\lambda)^{-1/2}[{1\over m}\sum_{i=1}^m(\mu_k(x_i) - \FH(x_i))x_i - (\TK\mu_k - \IK^*f_{\rho})]\right\|_{\HK} \\
&\leq& 2{\kappa} \left( {\kappa\|\mu_{k}\|_{\HK} + \|\FH\|_{\infty} \over m\sqrt{\lambda}} + {R \over \sqrt{m\lambda}} \left( 1 \over  \sum_{i=1}^{k} \eta_i \right)^{\zeta} \right) \log{2 \over \delta}\\
&\leq& 2{\kappa} \left( {\kappa R + \|\FH\|_{\infty} \over m\sqrt{\lambda}}+ {\kappa R \left( \sum_{i=1}^{k} \eta_i\right)^{{1\over 2} - \zeta} \over m\sqrt{\lambda}} + {R \over \sqrt{m\lambda}} \left( 1 \over  \sum_{i=1}^{k} \eta_i \right)^{\zeta} \right) \log{2 \over \delta}.
\eea
Introducing the above estimate and \eref{eq:int3} into the following inequality
\begin{multline*}
\|(\TK+\lambda)^{-1/2} N_k\|_{\HK} \leq \left\| (\TK + \lambda)^{-{1\over 2}}{1\over m} \sum_{i=1}^{m}\left(\FH(x_i) - y_i\right)x_i \right\|_{\HK} \\
+ \left\|(\TK+\lambda)^{-1/2}[{1\over m}\sum_{i=1}^m(\mu_k(x_i) - \FH(x_i))x_i - (\TK\mu_k - \IK^*f_{\rho})]\right\|_{\HK},
\end{multline*}
and then substituting with \eref{htInftyNon}, by a simple calculation, one can prove the desired result by scaling $\delta.$
\end{proof}

The next lemma is from \cite{rudi2015less}, and is derived applying a recent Bernstein inequality from \citep{tropp2012user,minsker2011some} for a sum of random operators.
\begin{lemma}\label{lem:differOperator}
  Let $\delta_2 \in (0,1)$ and ${9\kappa^2 \over m} \log {m \over \delta_2} \leq \lambda \leq \|\TK\|.$ Then
  the following holds with probability at least $1 - \delta_2,$
  \be\label{differOperator}
  \|(\TX + \lambda I)^{-{1\over 2}} \TK^{1\over 2} \| \leq \|(\TX + \lambda I)^{-{1\over 2}} (\TK + \lambda I)^{1\over 2} \| \leq 2.
  \ee
\end{lemma}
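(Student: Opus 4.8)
The plan is to prove the two inequalities by different mechanisms: the left-hand one is a purely algebraic operator-ordering fact, while the right-hand one is a concentration statement reducing to the closeness of $\TX$ to $\TK$ after preconditioning by $(\TK + \lambda I)^{-1/2}$.

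For the first inequality I would use that for any bounded operator $A$ one has $\|A\,\TK^{1/2}\|^2 = \|A\,\TK\,A^*\|$. Taking $A = (\TX + \lambda I)^{-1/2}$, which is self-adjoint and positive, and using that $\TK \preceq \TK + \lambda I$, conjugation preserves the operator order, so $A\,\TK\,A \preceq A(\TK + \lambda I)A$. Since the operator norm is monotone on positive operators, $\|A\,\TK^{1/2}\|^2 = \|A\,\TK\,A\| \le \|A(\TK + \lambda I)A\| = \|A(\TK + \lambda I)^{1/2}\|^2$, which is exactly $\|(\TX + \lambda I)^{-1/2}\TK^{1/2}\| \le \|(\TX + \lambda I)^{-1/2}(\TK + \lambda I)^{1/2}\|$.

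For the second inequality the key identity is $\|(\TX + \lambda I)^{-1/2}(\TK + \lambda I)^{1/2}\|^2 = \|(\TK + \lambda I)^{1/2}(\TX + \lambda I)^{-1}(\TK + \lambda I)^{1/2}\| = \|(I - \Delta)^{-1}\|$, where $\Delta := (\TK + \lambda I)^{-1/2}(\TK - \TX)(\TK + \lambda I)^{-1/2}$; this follows by writing $\TX + \lambda I = (\TK + \lambda I) - (\TK - \TX)$ and conjugating by $(\TK + \lambda I)^{-1/2}$. By the Neumann series, whenever $\|\Delta\| \le 1/2$ we get $\|(I - \Delta)^{-1}\| \le (1 - \|\Delta\|)^{-1} \le 2$, so the norm itself is at most $\sqrt 2 \le 2$. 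It therefore suffices to show $\|\Delta\| \le 1/2$ with probability at least $1 - \delta_2$. To this end I would write $\Delta = \tfrac1m\sum_{i=1}^m(\mE A_i - A_i)$ with $A_i = (\TK + \lambda I)^{-1/2}(x_i\otimes x_i)(\TK + \lambda I)^{-1/2}$ (where $x_i\otimes x_i = \la\cdot,x_i\ra_{\HK}x_i$), a sum of i.i.d.\ centered self-adjoint operators, and apply the operator Bernstein inequality of \citep{tropp2012user,minsker2011some}. The a.s.\ bound is $\|A_i\| = \|(\TK + \lambda I)^{-1/2}x_i\|_{\HK}^2 \le \kappa^2/\lambda$, and since $\mE A_i = (\TK + \lambda I)^{-1/2}\TK(\TK + \lambda I)^{-1/2}\preceq I$, the centered terms are bounded by $L \lesssim \kappa^2/\lambda$. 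For the variance, rank-oneness of $A_i$ gives $\mE[A_i^2]\preceq \tfrac{\kappa^2}{\lambda}\mE A_i$, hence $\|\mE[A_i^2]\| \le \kappa^2/\lambda$. Plugging $L,\sigma^2 \lesssim \kappa^2/\lambda$ into the Bernstein tail at deviation level $t = 1/2$ shows that the hypothesis $\lambda \ge \tfrac{9\kappa^2}{m}\log\tfrac{m}{\delta_2}$ makes the failure probability at most $\delta_2$.

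The main obstacle is this last step: extracting exactly the constant $1/2$ (and thereby the clean final bound $2$) under the precise threshold $\tfrac{9\kappa^2}{m}\log\tfrac{m}{\delta_2}$. This is not a conceptual difficulty but requires the intrinsic-dimension form of the operator Bernstein inequality, in which the ambient (here infinite) dimension is replaced by a quantity controlled by $\tr((\TK + \lambda I)^{-1}\TK)$; it is this intrinsic-dimension factor that produces the logarithmic term $\log\tfrac{m}{\delta_2}$ rather than $\log\tfrac{1}{\delta_2}$, and careful bookkeeping of the numerical constants is then what pins down the threshold. Since this estimate is exactly the content of the result in \citep{rudi2015less}, I would invoke it to conclude $\|\Delta\| \le 1/2$ and thus close the argument.
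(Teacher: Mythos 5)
Your argument is correct and is essentially the proof the paper delegates to its citation: the paper gives no proof of this lemma, stating only that it is taken from \citep{rudi2015less} via the operator Bernstein inequality of \citep{tropp2012user,minsker2011some}, which is exactly the route you follow (operator monotonicity for the first inequality; the identity $\|(\TX+\lambda I)^{-1/2}(\TK+\lambda I)^{1/2}\|^2=\|(I-\Delta)^{-1}\|$ with a Neumann-series bound, plus an intrinsic-dimension Bernstein estimate on $\|\Delta\|$, for the second). Your bookkeeping is consistent with the stated constants --- indeed $\|\Delta\|\le 1/2$ yields the slightly stronger bound $\sqrt{2}$ on the operator norm --- so nothing is missing beyond the constant-tracking you correctly defer to the cited reference.
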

Now we are in a position to estimate the sample variance.
\begin{pro}\label{pro:sampleErrA}
Under Assumptions \ref{as:noiseExp}, \ref{as:regularity} and \ref{as:eigenvalues}, let $\eta_1 \kappa^2 \leq 1$ and $0<\lambda \leq \|\TK\|$. Assume that \eref{differOperator} holds. Then for all $t\in  [T]:$\\
1) if $\zeta \geq 1/2,$ and \eref{lambdaNk} hold, then for $t\in \mN,$
\be\label{gSampleErr}
\begin{split}
&\|\IK \nu_{t+1} - \IK \mu_{t+1}\|_{\rho} \\
\leq& 4 (R \kappa^{2\zeta} + \sqrt{M} ) \left( { \kappa   \over m \sqrt{\lambda}} + {\sqrt{ 2\sqrt{v} c_{\gamma} } \over \sqrt{m \lambda^{\gamma}}} \right) \left( \sum_{k=1}^{t-1} { 2\eta_k  \over \sum_{i=k+1}^t \eta_i}  + 4 \lambda \sum_{k=1}^{t-1} \eta_k + \sqrt{2} \kappa^2 \eta_t\right) \log {4 \over \delta_1}.
\end{split}\ee
2) if $\zeta < 1/2,$ and \eref{lambdaNkNon} hold for any $t\in [T]$, then for $t\in [T]:$
\begin{multline}\label{gSampleErrNon}
\|\IK \nu_{t+1} - \IK \mu_{t+1}\|_{\rho}  \leq \left( \sum_{k=1}^{t-1} { 2\eta_k  \over \sum_{i=k+1}^t \eta_i}  + 4\lambda \sum_{k=1}^{t-1} \eta_k + \sqrt{2} \kappa^2 \eta_t\right) \\
 \times \left(2\left( 3\|\FH\|_{\infty}  + 3\sqrt{M}+ \kappa R \right)  \left( { \kappa   \over m \sqrt{\lambda}} + {\sqrt{ 2\sqrt{v} c_{\gamma} } \over \sqrt{m \lambda^{\gamma}}} \right) + {2\kappa^2 R \left(\sum_{i=1}^t \eta_i\right)^{{1\over 2} - \zeta }   \over m \sqrt{\lambda}} \right)   \log {3T \over \delta_1} \\
+ {2{\kappa} R \over \sqrt{m\lambda}} \log{3T \over \delta_1} \left(\sum_{k=1}^{t-1} { 2 \eta_k  \over \left(\sum_{i=1}^k \eta_i \right)^{\zeta} \sum_{i=k+1}^t \eta_i}  + 4 \lambda \sum_{k=1}^{t-1} {\eta_k \over \left(\sum_{i=1}^k \eta_i \right)^{\zeta}} + {\sqrt{2} \kappa^2 \eta_t \over \left(\sum_{i=1}^t \eta_i \right)^{\zeta}}\right).
\end{multline}
\end{pro}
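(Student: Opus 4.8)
The plan is to start from the pointwise bound of Lemma \ref{lemma:ideaSample}, namely $\|\IK\nu_{t+1}-\IK\mu_{t+1}\|_\rho\le\sum_{k=1}^t\eta_k\|\TK^{1/2}\Pi_{k+1}^t(\TX)N_k\|_{\HK}$, and to estimate each summand by separating three effects: the mismatch between the population operator $\TK$ and the empirical operator $\TX$, the decay of the residual polynomial $\Pi_{k+1}^t(\TX)$, and the size of the noise term $N_k$ measured in the $(\TK+\lambda I)^{-1/2}$-weighted norm already controlled by Lemma \ref{lem:lambdaNk}. The device is to insert the resolvents $(\TX+\lambda I)^{\pm1/2}$ and $(\TK+\lambda I)^{\pm1/2}$ so as to reduce every empirical quantity to a population quantity, at the cost of the universal constant $2$ supplied by Lemma \ref{lem:differOperator}.

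Concretely, for $k\le t-1$ I would write
\begin{multline*}
\TK^{1/2}\Pi_{k+1}^t(\TX)N_k=\bigl[\TK^{1/2}(\TX+\lambda I)^{-1/2}\bigr]\bigl[(\TX+\lambda I)^{1/2}\Pi_{k+1}^t(\TX)(\TX+\lambda I)^{1/2}\bigr]\\
\times\bigl[(\TX+\lambda I)^{-1/2}(\TK+\lambda I)^{1/2}\bigr]\bigl[(\TK+\lambda I)^{-1/2}N_k\bigr],
\end{multline*}
and bound the four brackets separately. The first and third have norm at most $2$ by (the adjoint of) Lemma \ref{lem:differOperator}. The middle bracket equals $(\TX+\lambda I)\Pi_{k+1}^t(\TX)$ since all factors are functions of $\TX$, so its norm is at most $\|\TX\Pi_{k+1}^t(\TX)\|+\lambda\|\Pi_{k+1}^t(\TX)\|\le(e\sum_{i=k+1}^t\eta_i)^{-1}+\lambda$, using Lemma \ref{lemma:initialerror} with $\zeta=1$ (legitimate because $\eta_l\|\TX\|\le\eta_1\kappa^2\le1$) together with $\|\Pi_{k+1}^t(\TX)\|\le1$. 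The fourth bracket is exactly what Lemma \ref{lem:lambdaNk} estimates. For the boundary index $k=t$ one has $\Pi_{t+1}^t(\TX)=I$, so I would instead write $\TK^{1/2}N_t=\TK^{1/2}(\TK+\lambda I)^{1/2}(\TK+\lambda I)^{-1/2}N_t$ and use $\|\TK^{1/2}(\TK+\lambda I)^{1/2}\|=\sup_\sigma\sqrt{\sigma(\sigma+\lambda)}\le\sqrt{\|\TK\|(\|\TK\|+\lambda)}\le\sqrt2\,\kappa^2$ (recall $\lambda\le\|\TK\|\le\kappa^2$), which accounts for the term $\sqrt2\kappa^2\eta_t$.

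Collecting these bounds multiplies each summand by $\eta_k$ and the factor $4\bigl((e\sum_{i=k+1}^t\eta_i)^{-1}+\lambda\bigr)$ (and $\sqrt2\kappa^2$ for $k=t$), producing the scalar weight sum $\sum_{k=1}^{t-1}\tfrac{2\eta_k}{\sum_{i=k+1}^t\eta_i}+4\lambda\sum_{k=1}^{t-1}\eta_k+\sqrt2\kappa^2\eta_t$, where the coefficient $2$ comes from $4/e\le2$. In the attainable case $\zeta\ge1/2$ the estimate \eref{lambdaNk} for $\|(\TK+\lambda I)^{-1/2}N_k\|$ is uniform in $k$, so it factors out of the sum and yields \eref{gSampleErr} directly.

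The non-attainable case $\zeta<1/2$ is the delicate part and the main obstacle, because the bound \eref{lambdaNkNon} on $\|(\TK+\lambda I)^{-1/2}N_k\|$ is no longer uniform in $k$: it carries a slowly growing factor $(\sum_{i=1}^k\eta_i)^{1/2-\zeta}$ and a decaying factor $(\sum_{i=1}^k\eta_i)^{-\zeta}$. The plan is to split \eref{lambdaNkNon} into its $k$-independent part together with the growing part, which I would majorize uniformly by replacing $(\sum_{i=1}^k\eta_i)^{1/2-\zeta}$ with $(\sum_{i=1}^t\eta_i)^{1/2-\zeta}$ and pull out of the $k$-sum (this produces the second factor of \eref{gSampleErrNon} times the same weight sum as above), while keeping the decaying part $(\sum_{i=1}^k\eta_i)^{-\zeta}$ \emph{inside} the summation, where it attaches to each per-$k$ weight and generates the final line of \eref{gSampleErrNon}. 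The genuinely tedious point is the bookkeeping: checking that the per-$k$ weights reorganize exactly into $\sum_{k=1}^{t-1}\tfrac{2\eta_k}{(\sum_{i=1}^k\eta_i)^\zeta\sum_{i=k+1}^t\eta_i}+4\lambda\sum_{k=1}^{t-1}\tfrac{\eta_k}{(\sum_{i=1}^k\eta_i)^\zeta}+\tfrac{\sqrt2\kappa^2\eta_t}{(\sum_{i=1}^t\eta_i)^\zeta}$, and that the constants (for instance the $3\sqrt M$ absorbing the $k=t$ boundary contribution) all line up.
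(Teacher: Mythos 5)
Your proposal is correct and follows essentially the same route as the paper's proof: the same resolvent insertion $\TK^{1/2}\TXL^{-1/2}\cdot\TXL^{1/2}\Pi_{k+1}^t(\TX)\TXL^{1/2}\cdot\TXL^{-1/2}\TKL^{1/2}\cdot\TKL^{-1/2}N_k$, the same use of Lemma \ref{lem:differOperator} for the two transfer factors, Lemma \ref{lemma:initialerror} with $\zeta=1$ for the middle factor, the same $\sqrt{2}\kappa^2$ treatment of the boundary index $k=t$, and the same split of \eref{lambdaNkNon} into a uniformly majorized part and a $k$-dependent decaying part kept inside the sum. No gaps.
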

\begin{proof}
  For notational simplicity, we let $\TKL = \TK + \lambda I$ and $\TXL = \TX + \lambda I.$
   Note that by Lemma \ref{lemma:ideaSample}, we have \eref{ideaSample}.
  When $k\in [t-1]$, by rewriting $\TK^{1 \over 2}\Pi_{k+1}^t (\TX) N_k$ as
  \bea
  \TK^{1\over 2} \TXL^{-{1\over 2} } \TXL^{1\over 2} \Pi_{k+1}^t (\TX) \TXL^{1\over 2} \TXL^{- {1\over 2}}   \TKL^{1\over 2} \TKL^{-{1\over 2}} N_k,
  \eea
  we can upper bound $\|\TK^{1 \over 2}\Pi_{k+1}^t (\TX) N_k\|_{\HK}$ as
  \bea
  \|\TK^{1 \over 2} \Pi_{k+1}^t(\TX) N_k\|_{\HK} \leq \|\TK^{1\over 2} \TXL^{-{1\over 2} } \| \|\TXL^{1\over 2} \Pi_{k+1}^t (\TX) \TXL^{1\over 2}\| \| \TXL^{- {1\over 2}}  \TKL^{1\over 2}\| \| \TKL^{-{1\over 2}} N_k\|_{\HK}.
  \eea
  Applying \eref{differOperator}, the above can be relaxed as
  \bea
  \|\TK^{1 \over 2} \Pi_{k+1}^t(\TX) N_k\|_{\HK} \leq 4 \|\TXL^{1\over 2} \Pi_{k+1}^t(\TX) \TXL^{1\over 2}\| \| \TKL^{-{1\over 2}} N_k\|_{\HK},
  \eea
  which is equivalent to
  \bea
  \|\TKL^{1 \over 2}\Pi_{k+1}^t(\TX) N_k\|_{\HK} \leq 4 \|\TXL \Pi_{k+1}^t(\TX)\| \| \TKL^{-{1\over 2}} N_k\|_{\HK}.
  \eea
Thus, following from $\eta_k \kappa^2 \leq 1$ which implies $\eta_k \|\TX\| \leq 1,$
  \bea
  \|\TXL \Pi_{k+1}^t (\TX)\|   &\leq& \|\TX \Pi_{k+1}^t (\TX)\| + \|\lambda \Pi_{k+1}^t (\TX)\|
  \\ &\leq& \|\TX \Pi_{k+1}^t (\TX)\| + \lambda.
  \eea
Applying Lemma \ref{lemma:initialerror} with $\zeta=1$ to bound $\|\TX \Pi_{k+1}^t (\TX)\|$, we get
  \bea
  \|\TXL \Pi_{k+1}^t (\TX)\|  \leq   { 1 \over \mathrm{e} \sum_{j=k+1}^t \eta_j}  + \lambda.
  \eea
When $k=t$,
  \bea
  \|\TK^{1 \over 2} \Pi_{k+1}^t (\TX) N_k\|_{\HK} = \|\TK^{1 \over 2} N_t\|_{\HK} \leq \|\TK^{1 \over 2}\| \| \TKL^{1\over 2}\| \| \TKL^{-{1\over 2}} N_t\|_{\HK} \\
  \leq \|\TK\|^{1 \over 2} ( \|\TK\| + \lambda)^{1\over 2} \| \TKL^{-{1\over 2}} N_t\|_{\HK}.
  \eea
  Since $\lambda \leq \|\TK\| \leq \tr(\TK) \leq \kappa^2,$ we derive
  \bea
  \|\TK^{1 \over 2} \Pi_{k+1}^t (\TX) N_t\|_{\HK} \leq \sqrt{2} \kappa^2 \| \TKL^{-{1\over 2}} N_t\|_{\HK}.
  \eea
From the above analysis, we see that $\sum_{k=1}^t \eta_k \left\| \TK^{1\over 2}\Pi_{k+1}^t (\TX) N_k\right\|_{\HK}$ can be upper bounded by
  \bea
   \leq  \left( \sum_{k=1}^{t-1} { \eta_k/2 \| \TKL^{-{1\over 2}} N_k\|_{\HK}  \over \sum_{i=k+1}^t \eta_i}  + \lambda \sum_{k=1}^{t-1} \eta_k\| \TKL^{-{1\over 2}} N_k\|_{\HK} + \sqrt{2} \kappa^2 \eta_t\| \TKL^{-{1\over 2}} N_t\|_{\HK} \right).
  \eea
Plugging \eref{lambdaNk} (or \eref{lambdaNkNon}) into the above, and then combining with \eref{ideaSample}, we get the desired bound \eref{gSampleErr} (or \eref{gSampleErrNon}).
The proof is complete.
\end{proof}
Setting $\eta_t = \eta_1 t^{-\theta}$ in the above proposition, with the basic estimates from Section \ref{sec:estimates}, we get the following explicit bounds for the sample variance.
\begin{pro}
  \label{pro:sampleErrB}
 Under Assumptions \ref{as:noiseExp}, \ref{as:regularity} and \ref{as:eigenvalues}, let $\eta_t = \eta_1 t^{-\theta}$   with $\eta_1 \in ]0,\kappa^{-2}]$ and $\theta \in [0,1[.$
   Assume that \eref{differOperator} holds.
Then the following holds for all $t\in  [T]$ and any $0<\lambda \leq \|\TK\|$:\\
1) If $\zeta \geq 1/2,$ and \eref{lambdaNk} holds for all $t \in [T],$
\be\label{gSampleErrB}
\begin{split}
&\|\IK \nu_{t+1} - \IK \mu_{t+1}\|_{\rho} \\
\leq&  4 (R \kappa^{2\zeta} + \sqrt{M} ) \left( {8\lambda\eta_1 t^{1-\theta}\over 1-\theta} + 4\log t + 4+ \sqrt{2}\eta_1 \kappa^2 \right) \left( { \kappa   \over m \sqrt{\lambda}} + {\sqrt{ 2\sqrt{v} c_{\gamma} } \over \sqrt{m \lambda^{\gamma}}} \right) \log {4 \over \delta_1}.
\end{split}\ee
2) If $\zeta < 1/2,$ and \eref{lambdaNkNon} holds for all $t \in [T],$
\begin{multline}\label{gSampleErrBNon}
\|\IK \nu_{t+1} - \IK \mu_{t+1}\|_{\rho}  \leq \left( {8\lambda\eta_1 t^{1-\theta}\over 1-\theta} + 4\log t + 4+ \sqrt{2}\eta_1 \kappa^2 \right) \log {3T \over \delta_1}\\
 \times \left(2\left( 3\|\FH\|_{\infty}  + 3\sqrt{M}+ \kappa R\right)  \left( { \kappa \over m \sqrt{\lambda}} + {\sqrt{ 2\sqrt{v} c_{\gamma} } \over \sqrt{m \lambda^{\gamma}}} \right) + \left( {\kappa \over 1-\theta} \sqrt{ {\eta_1 t^{1-\theta} \over m}}
+ 1 \right){4{\kappa} R \over \sqrt{m\lambda}} {1\over (\eta_1t^{1-\theta})^{\zeta}} \right).
\end{multline}
\end{pro}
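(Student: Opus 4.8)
The plan is to obtain the explicit bounds \eref{gSampleErrB} and \eref{gSampleErrBNon} directly from Proposition \ref{pro:sampleErrA} by substituting the polynomially decaying step-size $\eta_t = \eta_1 t^{-\theta}$ into the abstract bounds \eref{gSampleErr} and \eref{gSampleErrNon}, and then controlling the various step-size sums that appear using only the elementary estimates of Section \ref{sec:estimates}. First I would record the two uniform comparisons $\frac{\eta_1 k^{1-\theta}}{2} \leq \sum_{i=1}^k \eta_i \leq \frac{\eta_1 k^{1-\theta}}{1-\theta}$, which follow from Lemma \ref{lem:estimate1}, since every quantity in the target bound is assembled from partial sums $\sum_{i=1}^k\eta_i$ and tail sums $\sum_{i=k+1}^t\eta_i$.

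For the attainable case the three sums are handled separately. The last term is immediate: $\sqrt2\kappa^2\eta_t = \sqrt2\kappa^2\eta_1 t^{-\theta} \leq \sqrt2\kappa^2\eta_1$. The $\lambda$-weighted sum obeys $4\lambda\sum_{k=1}^{t-1}\eta_k \leq \frac{4\lambda\eta_1 t^{1-\theta}}{1-\theta}$ directly from Lemma \ref{lem:estimate1}, the remaining factor of two in the prefactor of \eref{gSampleErrB} absorbing the endpoint/crossover slack. The delicate term is the tail-ratio sum $\sum_{k=1}^{t-1}\frac{\eta_k}{\sum_{i=k+1}^t\eta_i}$, which I would bound by $\lesssim \log t + 1$ through a dyadic split at $k = t/2$: for $k > t/2$, monotonicity gives $\sum_{i=k+1}^t\eta_i \geq (t-k)\eta_t$ while $\eta_k/\eta_t = (t/k)^\theta \leq 2$, reducing the contribution to a harmonic sum $\sum_j \frac{1}{j} \lesssim \log t$; for $k \leq t/2$, I would lower-bound the tail by $\sum_{i=k+1}^{2k}\eta_i \geq k\eta_{2k} \gtrsim \eta_1 k^{1-\theta}$, which again collapses the contribution to a harmonic sum. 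Combining the three pieces produces the prefactor $\frac{8\lambda\eta_1 t^{1-\theta}}{1-\theta} + 4\log t + 4 + \sqrt2\eta_1\kappa^2$.

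For the non-attainable case I would keep this same first factor and, in addition, control the extra sums carrying the weight $(\sum_{i=1}^k\eta_i)^{-\zeta}$ in the last line of \eref{gSampleErrNon}. The goal is to show each such weighted sum is at most a $\log t$-type factor times $(\eta_1 t^{1-\theta})^{-\zeta}$, so that the whole last line collapses into the single term $(\frac{\kappa}{1-\theta}\sqrt{\frac{\eta_1 t^{1-\theta}}{m}} + 1)\frac{4\kappa R}{\sqrt{m\lambda}}(\eta_1 t^{1-\theta})^{-\zeta}$ of \eref{gSampleErrBNon}; the $\sqrt{\eta_1 t^{1-\theta}/m}$ piece there arises from rewriting the middle term $\frac{2\kappa^2 R(\sum_{i=1}^t\eta_i)^{1/2-\zeta}}{m\sqrt\lambda}$ as $(\sum_{i=1}^t\eta_i)^{1/2}(\sum_{i=1}^t\eta_i)^{-\zeta}$ and applying both directions of Lemma \ref{lem:estimate1}. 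I expect the main obstacle to be the weighted tail-ratio sum $\sum_{k=1}^{t-1}\frac{\eta_k}{(\sum_{i=1}^k\eta_i)^\zeta\sum_{i=k+1}^t\eta_i}$: unlike the unweighted case the weight grows as $k$ decreases, so the dyadic split must be performed with care, and one must verify that the exponent $\theta + (1-\theta)\zeta$ governing the small-$k$ block stays strictly below $1$ (which holds because $\zeta < 1/2$ forces $\theta + (1-\theta)\zeta < \frac{1+\theta}{2} < 1$) in order to invoke Lemma \ref{lem:estimate1} and recover exactly the power $t^{-(1-\theta)\zeta}$, i.e.\ the factor $(\eta_1 t^{1-\theta})^{-\zeta}$. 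Collecting all contributions then yields \eref{gSampleErrBNon}.
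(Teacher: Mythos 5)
Your proposal is correct and follows essentially the same route as the paper: start from Proposition \ref{pro:sampleErrA}, substitute $\eta_t=\eta_1 t^{-\theta}$, and reduce every sum to the elementary estimates of Section \ref{sec:estimates}, with the weighted sums in the non-attainable case collapsing to $(\eta_1 t^{1-\theta})^{-\zeta}$ times logarithmic factors. The only cosmetic difference is that you re-derive the harmonic-type bound on $\sum_{k}\eta_k/\sum_{i>k}\eta_i$ by a dyadic split, whereas the paper lower-bounds the tail by $(t-k)t^{-\theta}$ and invokes Lemma \ref{lem:estimate2} directly; this costs you slightly larger absolute constants but changes nothing of substance.
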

\begin{proof}
  By Proposition \ref{pro:sampleErrA}, we have \eref{gSampleErr} or \eref{gSampleErrNon}.
  Note that
  \bea
  \sum_{k=1}^{t-1} { \eta_k  \over \sum_{i=k+1}^t \eta_i} = \sum_{k=1}^{t-1} { k^{-\theta} \over \sum_{i=k+1}^t i^{-\theta}} \leq \sum_{k=1}^{t-1} { k^{-\theta} \over (t-k) t^{-\theta}}.
  \eea
  Applying Lemma \ref{lem:estimate2}, we get
  \bea
  \sum_{k=1}^{t-1} { \eta_k  \over \sum_{i=k+1}^t \eta_i} \leq 2+ 2\log t,
  \eea
  and by Lemma \ref{lem:estimate1},
  \bea
  \sum_{k=1}^{t-1}\eta_k = \eta_1 \sum_{k=1}^{t-1} k^{-\theta} \leq {2 \eta_1 t^{1-\theta} \over 1-\theta}.
  \eea
  Introducing the last two estimates into \eref{gSampleErr} and \eref{gSampleErrNon}, one can get \eref{gSampleErrB} and that
  \begin{multline*}
\|\IK \nu_{t+1} - \IK \mu_{t+1}\|_{\rho}  \leq \left( {8\lambda\eta_1 t^{1-\theta}\over 1-\theta} + 4 \log t + 4 + \sqrt{2}\eta_1 \kappa^2 \right) \\
 \times \left(2\left( 3\|\FH\|_{\infty}  + 3\sqrt{M}+ \kappa R\right)  \left( { \kappa \over m \sqrt{\lambda}} + {\sqrt{ 2\sqrt{v} c_{\gamma} } \over \sqrt{m \lambda^{\gamma}}} \right) +  {4\kappa^2 R\left(\eta_1 t^{1-\theta}\right)^{{1\over 2} - \zeta} \over (1-\theta) m \sqrt{\lambda}} \right) \log {3T \over \delta_1} \\
+ {2{\kappa} R \over \sqrt{m\lambda}} \log{3T \over \delta_1} \left(\sum_{k=1}^{t-1} { 2\eta_k  \over \left(\sum_{i=1}^k \eta_i \right)^{\zeta} \sum_{i=k+1}^t \eta_i}  + 4\lambda \sum_{k=1}^{t-1} {\eta_k \over \left(\sum_{i=1}^k \eta_i \right)^{\zeta}} + {\sqrt{2} \kappa^2 \eta_t \over \left(\sum_{i=1}^t \eta_i \right)^{\zeta}}\right).
\end{multline*}
To prove \eref{gSampleErrBNon}, it remains to estimate the last term of the above. Again, using Lemmas \ref{lem:estimate1}, \ref{lem:estimate1a} and \ref{lem:estimate2}, we get
\bea
&&\sum_{k=1}^{t-1} { \eta_k  \over \left(\sum_{i=1}^k \eta_i \right)^{\zeta} \sum_{i=k+1}^t \eta_i}  =  {1 \over \eta_1^{\zeta}}\sum_{k=1}^{t-1} {  k^{-\theta}  \over \left(\sum_{i=1}^k i^{-\theta} \right)^{\zeta} \sum_{i=k+1}^t i^{-\theta}} \\
&\leq& {1 \over \eta_1^{\zeta}} \sum_{k=1}^{t-1} {k^{-\theta}  \over (k^{1-\theta}/2)^\zeta (t-k)  t^{-\theta}} = {2^{\zeta} \over \eta_1^{\zeta}}t^{\theta} \sum_{k=1}^{t-1} {k^{-(\theta + \zeta(1-\theta))}  \over t-k }\\
&\leq& {2^{\zeta} \over \eta_1^{\zeta}} t^{\theta} 2 t^{-(\theta+\zeta(1-\theta))} (1 + \log t) \leq {4 (1 + \log t) \over (\eta_1t^{1-\theta})^{\zeta}},
\eea
\bea
\sum_{k=1}^{t-1} {\eta_k \over \left(\sum_{i=1}^k \eta_i \right)^{\zeta}} = \eta_1^{1-\zeta} \sum_{k=1}^{t-1} {k^{-\theta} \over \left(\sum_{i=1}^k i^{-\theta} \right)^{\zeta}} \leq 2^{\zeta} \eta_1^{1-\zeta} \sum_{k=1}^{t-1} k^{-(\theta+\zeta(1-\theta))} \leq {2(\eta_1t^{1-\theta})^{1-\zeta} \over (1-\theta)}, \quad \mbox{and}
\eea
\bea
{ \eta_t \over \left(\sum_{i=1}^t \eta_i \right)^{\zeta}} =  { \eta_1 t^{-\theta} \over \left(\sum_{i=1}^t \eta_1 i^{-\theta} \right)^{\zeta}} \leq 2^{\zeta} {\eta_1 t^{-\theta} \over (\eta_1 t^{1-\theta})^{\zeta}} \leq {\sqrt{2}\eta_1 \over (\eta_1 t^{1-\theta})^{\zeta}}.
\eea
Therefore,
  \begin{multline*}
\|\IK \nu_{t+1} - \IK \mu_{t+1}\|_{\rho}  \leq \left( {8\lambda\eta_1 t^{1-\theta}\over 1-\theta} + 4 \log t + 4+ \sqrt{2}\eta_1 \kappa^2 \right) \\
 \times \left(2\left( 3\|\FH\|_{\infty}  + 3\sqrt{M}+ \kappa R\right)  \left( { \kappa \over m \sqrt{\lambda}} + {\sqrt{ 2\sqrt{v} c_{\gamma} } \over \sqrt{m \lambda^{\gamma}}} \right) +  {4\kappa^2R\left(\eta_1 t^{1-\theta}\right)^{{1\over 2} - \zeta} \over (1-\theta) m \sqrt{\lambda}} \right) \log {3T \over \delta_1} \\
+ {2{\kappa} R \over \sqrt{m\lambda}} \log{3T \over \delta_1} \left(8 + 8\log t + {8\lambda\eta_1t^{1-\theta} \over 1-\theta} + 2\kappa^2 \eta_1\right) {1\over (\eta_1t^{1-\theta})^{\zeta}}.
\end{multline*}
Rearranging terms, we can prove the second part.
\end{proof}
In conclusion, we get the following result for the sample variance.
\begin{thm}\label{thm:sampleErr}
  Under Assumptions \ref{as:noiseExp}, \ref{as:regularity} and \ref{as:eigenvalues}, let $\delta_1, \delta_2 \in]0,1[$ and ${9\kappa^2 \over m} \log {m \over \delta_2} \leq \lambda \leq \|\TK\|.$ Let $\eta_t = \eta_1 t^{-\theta}$ for all $t \in [T],$ with $\eta_1 \in ]0,\kappa^{-2}]$ and $\theta \in [0,1[.$
  Then with probability at least $1 - \delta_1 - \delta_2,$ the following holds for all $t\in  [T]:$\\
 1) if $\zeta \geq 1/2$, we have \eref{gSampleErrB}. \\
 2) if $\zeta <1/2,$ we have \eref{gSampleErrBNon}.
\end{thm}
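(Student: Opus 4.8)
The plan is to assemble Theorem~\ref{thm:sampleErr} from the purely deterministic bound of Proposition~\ref{pro:sampleErrB} together with the two high-probability events supplied by Lemma~\ref{lem:lambdaNk} and Lemma~\ref{lem:differOperator}, combined by a single union bound. The key observation is that Proposition~\ref{pro:sampleErrB} makes no probabilistic statement: under the stated step-size conditions ($\eta_1\in]0,\kappa^{-2}]$, $\theta\in[0,1[$) and the range $0<\lambda\leq\|\TK\|$, it already yields \eref{gSampleErrB} (for $\zeta\geq 1/2$) or \eref{gSampleErrBNon} (for $\zeta<1/2$), \emph{provided} that the operator inequality \eref{differOperator} and the noise bound \eref{lambdaNk} (resp. \eref{lambdaNkNon}) both hold for all $t\in[T]$. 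Thus the only thing left to do is to control the joint probability that these two inputs are available.

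First I would invoke Lemma~\ref{lem:differOperator}: since the theorem assumes ${9\kappa^2\over m}\log{m\over\delta_2}\leq\lambda\leq\|\TK\|$, the event that \eref{differOperator} holds has probability at least $1-\delta_2$; call it $A$. Next I would invoke Lemma~\ref{lem:lambdaNk} in the case matching $\zeta$: for $\zeta\geq 1/2$ the bound \eref{lambdaNk} holds simultaneously for all $k\in\mN$ with probability at least $1-\delta_1$, while for $\zeta<1/2$ the bound \eref{lambdaNkNon} holds for all $k\in[T]$ with the same probability (here one uses $T\geq 3\geq 2$, as assumed throughout the paper); call this event $B$. Both $A$ and $B$ are measurable with respect to the sample $\mathbf z$, and the constraint on $\lambda$ imposed in the theorem is precisely the binding two-sided constraint of Lemma~\ref{lem:differOperator}, which also satisfies the weaker requirements $\lambda>0$ of Lemma~\ref{lem:lambdaNk} and $0<\lambda\leq\|\TK\|$ of Proposition~\ref{pro:sampleErrB}; so the three sets of hypotheses on $\lambda$ are mutually consistent.

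A union bound then gives $\mathbb{P}(A\cap B)\geq 1-\mathbb{P}(A^c)-\mathbb{P}(B^c)\geq 1-\delta_1-\delta_2$, requiring no independence between $A$ and $B$. On the event $A\cap B$ all hypotheses of Proposition~\ref{pro:sampleErrB} are met, so the appropriate bound \eref{gSampleErrB} or \eref{gSampleErrBNon} holds for every $t\in[T]$ simultaneously, which is exactly the assertion of the theorem in each of the two cases.

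I do not expect a genuine obstacle here: the substantive work has already been carried out in Propositions~\ref{pro:sampleErrA} and~\ref{pro:sampleErrB} (the operator splitting $\TK^{1/2}\Pi_{k+1}^t(\TX)N_k=\TK^{1/2}\TXL^{-1/2}\,\TXL^{1/2}\Pi_{k+1}^t(\TX)\TXL^{1/2}\,\TXL^{-1/2}\TKL^{1/2}\,\TKL^{-1/2}N_k$ together with the summation estimates of Section~\ref{sec:estimates}) and in Lemma~\ref{lem:lambdaNk} (the Bernstein argument built on Lemma~\ref{lem:Bernstein}). The only points demanding care are the bookkeeping of the two failure probabilities so that they combine to $\delta_1+\delta_2$, and the verification that the admissible ranges of $\lambda$ and the step-size restrictions are simultaneously honoured by all three cited results; once these are checked, the conclusion is immediate.
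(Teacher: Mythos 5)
Your proposal is correct and follows essentially the same route as the paper: the paper states Theorem \ref{thm:sampleErr} as an immediate consequence of Proposition \ref{pro:sampleErrB} once the events of Lemma \ref{lem:differOperator} and Lemma \ref{lem:lambdaNk} are secured by a union bound, which is exactly your argument. The bookkeeping of the $\lambda$-ranges and the failure probabilities $\delta_1+\delta_2$ is handled as you describe.
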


\section{Estimating Computational Variance}\label{sec:comp}
In this section, we estimate the computational variance, $\mE[\|\IK \omega_t - \IK \nu_t\|_{\rho}^2]$.
For this, a series of lemmas is introduced.

\subsection{Cumulative Error}
We have the following lemma, which shows that the computational variance can be controlled by a sum of weighted empirical risks.
\begin{lemma}\label{lem:cul_err} We have
\be\label{eq:cul_err}
\mE_{\bf J}\|\IK \omega_{t+1} - \IK \nu_{t+1}\|_{\rho}^2
\leq {\kappa^2 \over b} \sum_{k=1}^t \eta_{k}^2 \left\|\TK^{1\over 2}\Pi^t_{k+1}(\TX)\right\|^2  \mE_{\J}[\mcE_{\bf z}(\omega_k)].
\ee
\end{lemma}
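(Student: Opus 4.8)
The plan is to write the residual $\omega_{t+1}-\nu_{t+1}$ through an exact recursion, unroll it into a weighted sum of conditionally mean-zero ``minibatch noise'' terms, and then combine the isometry \eref{isometry} with the (conditional) orthogonality of these terms. First I would view the SGM update \eref{Alg} as a perturbation of the batch recursion \eref{Alg2}. Setting
\be
\chi_k = {1\over b}\sum_{i=b(k-1)+1}^{bk}(\la \omega_k, x_{j_i}\ra_{\HK}-y_{j_i})\,x_{j_i} - (\TX\omega_k - \SX^*{\bf y}),
\ee
the update reads $\omega_{k+1}=\omega_k-\eta_k(\TX\omega_k-\SX^*{\bf y})-\eta_k\chi_k$, so subtracting \eref{Alg2} gives $\omega_{k+1}-\nu_{k+1}=(I-\eta_k\TX)(\omega_k-\nu_k)-\eta_k\chi_k$. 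Iterating from $\omega_1=\nu_1=0$ produces
\be
\omega_{t+1}-\nu_{t+1}=-\sum_{k=1}^t \eta_k\, \Pi_{k+1}^t(\TX)\,\chi_k.
\ee
The structural fact I would emphasize here is that $\omega_k$ depends only on $\J_1,\dots,\J_{k-1}$, whereas the indices in the $k$-th block are i.i.d.\ uniform on $[m]$ and, by the footnote to Algorithm \ref{alg:1}, conditionally independent of the past given $\bf z$; therefore $\mE_{\J_k}[\chi_k]=0$.

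Next I would apply the isometry $\|\IK\omega\|_\rho=\|\TK^{1/2}\omega\|_{\HK}$ and expand the square, working conditionally on $\bf z$ so that $\Pi_{k+1}^t(\TX)$ is deterministic:
\be
\mE_{\J}\|\IK\omega_{t+1}-\IK\nu_{t+1}\|_\rho^2 = \sum_{k,l=1}^t \eta_k\eta_l\, \mE_{\J}\la \TK^{1/2}\Pi_{k+1}^t(\TX)\chi_k,\ \TK^{1/2}\Pi_{l+1}^t(\TX)\chi_l\ra_{\HK}.
\ee
For $k<l$, conditioning on $\J_1,\dots,\J_{l-1}$ makes the first argument measurable while $\mE_{\J_l}[\chi_l]=0$ annihilates the second, so the off-diagonal terms vanish (the case $k>l$ is symmetric). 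This measurability/filtration step is the one I expect to be the main obstacle: it is not a calculation but the point where the conditional independence of the sampled indices must be used precisely, and it is what collapses the double sum to the diagonal
\be
\mE_{\J}\|\IK\omega_{t+1}-\IK\nu_{t+1}\|_\rho^2 = \sum_{k=1}^t \eta_k^2\, \mE_{\J}\|\TK^{1/2}\Pi_{k+1}^t(\TX)\chi_k\|_{\HK}^2 \leq \sum_{k=1}^t \eta_k^2\, \|\TK^{1/2}\Pi_{k+1}^t(\TX)\|^2\, \mE_{\J}\|\chi_k\|_{\HK}^2.
\ee

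Finally I would bound the per-block variance by $\mE_{\J}\|\chi_k\|_{\HK}^2\leq {\kappa^2\over b}\,\mE_{\J}[\mcE_{\bf z}(\omega_k)]$. Conditioning on $\J_1,\dots,\J_{k-1}$, the $b$ summands $u_i=(\la\omega_k,x_{j_i}\ra_{\HK}-y_{j_i})x_{j_i}$ are i.i.d.\ with mean $\TX\omega_k-\SX^*{\bf y}$, so $\chi_k$ is a centered average of $b$ i.i.d.\ terms and $\mE_{\J_k}\|\chi_k\|_{\HK}^2={1\over b}\,\mE_j\|u-\mE_j u\|_{\HK}^2\leq {1\over b}\,\mE_j\|u\|_{\HK}^2$; this is exactly where the variance-reduction factor $1/b$ appears. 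Using $\|x_j\|_{\HK}^2\leq\kappa^2$ from \eref{boundedKernel} and $\mE_j[(\la\omega_k,x_j\ra_{\HK}-y_j)^2]={1\over m}\sum_{i=1}^m(\la\omega_k,x_i\ra_{\HK}-y_i)^2=\mcE_{\bf z}(\omega_k)$ gives $\mE_{\J_k}\|\chi_k\|_{\HK}^2\leq {\kappa^2\over b}\mcE_{\bf z}(\omega_k)$; taking the outer expectation over $\J$ and substituting into the diagonal bound yields \eref{eq:cul_err}.
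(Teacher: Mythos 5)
Your proof is correct and follows essentially the same route as the paper: unroll the recursion for $\omega_{t+1}-\nu_{t+1}$ into a weighted sum of conditionally centered minibatch-noise terms, use the isometry \eref{isometry} and the martingale/filtration structure to kill the cross terms, and bound each diagonal term by the operator norm times the per-block variance. The only cosmetic difference is that you aggregate each block into a single averaged noise vector $\chi_k$ and invoke the i.i.d.\ variance formula to get the $1/b$ factor, whereas the paper keeps the double sum over blocks and within-block indices and obtains $1/b$ from the ${1/b^2}$ prefactor against $b$ diagonal terms; these are the same computation.
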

\begin{proof}
  Since $\omega_{t+1}$ and $\nu_{t+1}$ are given by \eref{Alg} and \eref{Alg2}, respectively,
\bea
 \omega_{t+1} - \nu_{t+1} &=& (\omega_t - \nu_t)  + \eta_t \left\{ (\TX \nu_t - \SX^*{\mathbf y} ) - {1\over b} \sum_{i=b(t-1)+1}^{bt} (\la \omega_t, x_{j_i} \ra_{\HK} - y_{j_i}) x_{j_i}\right\} \\
 &=& (I - \eta_t \TX) (\omega_t - \nu_t) +  {\eta_t \over b} \sum_{i=b(t-1)+1}^{bt} \left\{ (\TX \omega_t - \SX^*{\mathbf y} ) - (\la \omega_t, x_{j_i} \ra_{\HK} - y_{j_i}) x_{j_i}\right\}.
\eea
Applying this relationship iteratively,
\bea
\omega_{t+1} - \nu_{t+1} = \Pi^t_1(\TX) (\omega_1 - \nu_1) + {1 \over b} \sum_{k=1}^t \sum_{i=b(k-1)+1}^{bk} \eta_k \Pi^t_{k+1}(\TX) M_{k,i},
\eea
where we denote
\be\label{Mk}
M_{k,i} =  (\TX \omega_k - \SX^*{\mathbf y} ) - ( \la \omega_{k}, x_{j_i} \ra_{\HK} - y_{j_i}) x_{j_i}.
\ee
Since $\omega_1 = \nu_1=0,$ then
\bea
\omega_{t+1} - \nu_{t+1} = {1 \over b} \sum_{k=1}^t \sum_{i=b(k-1)+1}^{bk} \eta_k \Pi^t_{k+1}(\TX) M_{k,i}.
\eea
Therefore,
\begin{eqnarray}
\mE_{\J}\|\IK \omega_{t+1} - \IK\nu_{t+1}\|_{\rho}^2 &=& {1 \over b^2} \mE_{\J} \left\| \sum_{k=1}^t \sum_{i=b(k-1)+1}^{bk} \eta_k\IK \Pi^t_{k+1}(\TX) M_{k,i} \right\|_{\rho}^2 \nonumber \\
&=&  {1 \over b^2}\sum_{k=1}^t \sum_{i=b(k-1)+1}^{bk} \eta_{k}^2 \mE_{\J} \left\|\IK\Pi^t_{k+1}(\TX) M_{k,i} \right\|_{\rho}^2, \label{eq1}
\end{eqnarray}
where for the last equality, we use the fact that if $k\neq k',$ or $k=k'$ but $i\neq i'$\footnote{This is possible only when $b \geq 2$.}, then
\bea
\mE_{\J} \la \IK \Pi^t_{k+1}(\TX) M_{k,i}, \IK\Pi^t_{k'+1}(\TX) M_{k',i'} \ra_{\rho} = 0.
\eea
Indeed, if $k\neq k',$ without loss of generality, we consider the case $k< k'.$ Recalling that $M_{k,i}$ is given by \eref{Mk} and that given any $\bf z$, $\omega_{k}$ is depending only on $\J_1,\cdots,\J_{k-1},$
we thus have
\bea
&&\mE_{\J} \la \IK \Pi^t_{k+1}(\TX) M_{k,i} , \IK \Pi^t_{k'+1}(\TX) M_{k',i'} \ra_{\rho} \\
 &&= \mE_{\J_1,\cdots,\J_{k'-1}} \la \IK\Pi^t_{k+1}(\TX) M_{k,i} , \IK\Pi^t_{k'+1}(\TX) \mE_{\J_{k'}}[M_{k',i'}] \ra_{\rho} = 0.
\eea
If $k=k'$ but $i\neq i',$
 without loss of generality, we assume $i<i'.$
By noting that $\omega_{k}$ is depending only on $\J_1,\cdots,\J_{k-1}$ and $M_{k,i}$ is depending only on $\omega_{k}$ and $z_{j_i}$ (given any sample ${\bf z}$),
\bea
&&\mE_{\J} \la \IK\Pi^t_{k+1}(\TX) M_{k,i} , \IK\Pi^t_{k+1}(\TX) M_{k,i'} \ra_{\rho} \\
 &&= \mE_{\J_1,\cdots,\J_{k-1}} \la \IK\Pi^t_{k+1}(\TX) \mE_{j_i}[ M_{k,i}] , \IK\Pi^t_{k'+1}(\TX) \mE_{j_{i'}}[M_{k,i'}] \ra_{\rho} = 0.
\eea
Using the isometry property \eref{isometry} to \eref{eq1},
\bea
 \mE_{\J}\left\|\IK\Pi^t_{k+1}(\TX) M_{k,i} \right\|_{\rho}^2 =  \mE_{\J} \left\|\TK^{1\over 2}\Pi^t_{k+1}(\TX) M_{k,i} \right\|_{\HK}^2 \leq   \left\|\TK^{1 \over 2}\Pi^t_{k+1}(\TX)\right\|^2 \mE_{\J} \left\|M_{k,i} \right\|_{\HK}^2,
\eea
and by applying the inequality $\mE[\|\xi - \mE[\xi]\|_{\HK}^2] \leq \mE[\|\xi\|_{\HK}^2]$,
\bea
\mE_{\J} \left\|M_{k,i} \right\|_{\HK}^2 \leq \mE_{\J} \left \| (\la \omega_k, x_{j_i} \ra_{\HK} - y_{j_i}) x_{j_i}\right\|_{\HK}^2
\leq  \kappa^2 \mE_{\J}[(\la \omega_k, x_{j_i} \ra_{\HK} - y_{j_i})^2 ] = \kappa^2 \mE_{\J}[\mcE_{\bf z}(\omega_k)],
\eea
where for the last inequality we use \eref{boundedKernel}. Therefore, we can get the desired result.
\end{proof}

To estimate the computational variance from \eref{eq:cul_err}, we need to further develop upper bounds for the empirical risks and the weighted factors, which will be given in the following two subsections.

\subsection{Bounding the Empirical Risk}
This subsection is devoted to upper bounding $\mE_{\bf J}[\mcE_{\bf z}(\omega_l)]$. The process relies on some tools from convex analysis and a decomposition related to the weighted averages and the last iterates from \citep{shamir2013stochastic,lin2015iterative}. We begin by introducing the following lemma, a fact based on the square loss' special properties.
\begin{lemma}\label{lemma:mfejer} Given any sample $\bf z,$ and $l\in \mN$, let $\omega\in \HK$ be independent from $\J_l$, then
\be\label{mfejer}
\eta_l \left( \mcE_{\bf z}(\omega_l) - \mcE_{\bf z}(\omega) \right)
\leq  \|\omega_{l}- \omega\|_{\HK}^2 - \mE_{\J_l}\|\omega_{l+1}- \omega\|_{\HK}^2 + \eta_l^2 \kappa^2 \mcE_{\bf z}(\omega_l).
\ee
\end{lemma}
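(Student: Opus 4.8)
The plan is to establish the one-step inequality in Lemma~\ref{lemma:mfejer} by exploiting the fact that for the square loss the empirical risk $\mcE_{\bf z}$ is a quadratic functional, so its gradient is affine and a clean ``co-coercivity''-type bound is available. First I would identify the stochastic gradient used in the update~\eref{Alg}. Writing $g_l = {1\over b}\sum_{i=b(l-1)+1}^{bl}(\la \omega_l,x_{j_i}\ra_{\HK}-y_{j_i})x_{j_i}$, the iteration reads $\omega_{l+1}=\omega_l-\eta_l g_l$, and by the same computation as in the proof of~\eref{unbias_index} we have $\mE_{\J_l}[g_l]=\TX\omega_l-\SX^*{\bf y}=\nabla\mcE_{\bf z}(\omega_l)$, i.e. $g_l$ is an unbiased estimate of the full empirical gradient given $\omega_l$.

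Next I would expand the squared distance to the fixed (and $\J_l$-independent) comparator $\omega$:
\be\label{eq:mfejer_expand}
\mE_{\J_l}\|\omega_{l+1}-\omega\|_{\HK}^2 = \|\omega_l-\omega\|_{\HK}^2 - 2\eta_l \la \mE_{\J_l}[g_l],\,\omega_l-\omega\ra_{\HK} + \eta_l^2 \mE_{\J_l}\|g_l\|_{\HK}^2.
\ee
For the inner-product term I would use $\mE_{\J_l}[g_l]=\nabla\mcE_{\bf z}(\omega_l)$ together with convexity of $\mcE_{\bf z}$ in the sharp, exact form available for a quadratic: since $\mcE_{\bf z}(\omega)=\mcE_{\bf z}(\omega_l)+\la\nabla\mcE_{\bf z}(\omega_l),\omega-\omega_l\ra_{\HK}+\|\SX(\omega-\omega_l)\|_{\mR^m}^2$, dropping the nonnegative quadratic remainder gives $\la\nabla\mcE_{\bf z}(\omega_l),\omega_l-\omega\ra_{\HK}\geq \mcE_{\bf z}(\omega_l)-\mcE_{\bf z}(\omega)$, which is exactly the term $\eta_l(\mcE_{\bf z}(\omega_l)-\mcE_{\bf z}(\omega))$ I want on the left-hand side after multiplying by $2\eta_l$ (a factor of $2$ to reconcile). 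For the last term I would bound the second moment of the minibatch gradient: using $\|x_{j_i}\|_{\HK}^2\le\kappa^2$ from~\eref{boundedKernel} and the definition of $\mcE_{\bf z}$, one gets $\mE_{\J_l}\|g_l\|_{\HK}^2\le \kappa^2\,\mE_{\J_l}[\mcE_{\bf z}(\omega_l)]$ (the averaging over $b$ points can only help, via Jensen, but even the crude per-point bound suffices for the stated inequality).

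Rearranging~\eref{eq:mfejer_expand} and inserting these two bounds yields
\be\label{eq:mfejer_final}
2\eta_l\left(\mcE_{\bf z}(\omega_l)-\mcE_{\bf z}(\omega)\right)\le \|\omega_l-\omega\|_{\HK}^2 - \mE_{\J_l}\|\omega_{l+1}-\omega\|_{\HK}^2 + \eta_l^2\kappa^2\,\mcE_{\bf z}(\omega_l),
\ee
which is the claim up to the constant in front of the loss-difference; I would reconcile this by checking whether the intended statement carries a factor $2$ or whether the sharper convexity bound for quadratics (which gives a Bregman remainder equal to $\|\SX(\omega-\omega_l)\|_{\mR^m}^2$) can be used to absorb it. The main obstacle I anticipate is precisely this bookkeeping of constants together with the interplay between the conditional expectation $\mE_{\J_l}$ and the independence of $\omega$ from $\J_l$: one must be careful that $\omega_l$ is $\J_l$-measurable (indeed $\J_{<l}$-measurable) so that $\mcE_{\bf z}(\omega_l)$ and $\|\omega_l-\omega\|_{\HK}^2$ pass outside $\mE_{\J_l}$ unchanged, while only $\omega_{l+1}$ and $g_l$ genuinely depend on $\J_l$. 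Everything else is routine quadratic algebra, so I expect the real content to be the clean co-coercivity/convexity estimate that is special to the square loss and makes the computational-variance term in~\eref{eq:cul_err} controllable.
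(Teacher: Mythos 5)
Your approach is essentially the paper's: expand $\|\omega_{l+1}-\omega\|_{\HK}^2$, use first-order convexity of the quadratic empirical risk on the cross term, and bound the second moment of the minibatch gradient by $\kappa^2\mcE_{\bf z}(\omega_l)$ via Cauchy--Schwarz and \eref{boundedKernel}. (The paper does the convexity step pointwise, applying $a(b-a)\le (b^2-a^2)/2$ to each sampled point before taking $\mE_{\J_l}$, but that is the same inequality in disaggregated form.) Your handling of measurability --- $\omega_l$ and $\omega$ being independent of $\J_l$ so that only $g_l$ and $\omega_{l+1}$ are averaged --- is also exactly what the paper does.

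The one genuine error is the factor of $2$ you flagged but did not resolve, and it traces to a concrete misidentification: with $\mcE_{\bf z}(\omega)=\frac1m\sum_i(\la\omega,x_i\ra_{\HK}-y_i)^2$ one has $\nabla\mcE_{\bf z}(\omega_l)=2(\TX\omega_l-\SX^*{\bf y})$, so $\mE_{\J_l}[g_l]=\TX\omega_l-\SX^*{\bf y}$ is only \emph{half} the gradient. Hence $-2\eta_l\la\mE_{\J_l}[g_l],\omega_l-\omega\ra_{\HK}=-\eta_l\la\nabla\mcE_{\bf z}(\omega_l),\omega_l-\omega\ra_{\HK}\le-\eta_l(\mcE_{\bf z}(\omega_l)-\mcE_{\bf z}(\omega))$, which gives precisely the coefficient $\eta_l$ in \eref{mfejer}. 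Your tentative stronger version \eref{eq:mfejer_final} with coefficient $2\eta_l$ is in fact false: take $b=m=1$, $x=1$, $y=0$, $\kappa=1$, $\omega_l=2$, $\omega=1$; then the left side is $6\eta_l$ while the right side equals $4\eta_l$. Once the gradient normalization is corrected, your argument closes and coincides with the paper's proof.
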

\begin{proof}
Since $\omega_{t+1}$ is given be \eref{Alg}, subtracting both sides of \eref{Alg} by $\omega$, taking the square $\HK$-norm, and expanding the inner product,
\bea
\|\omega_{l+1} - \omega\|_{\HK}^2 = \|\omega_{l} - \omega\|_{\HK}^2  + {\eta_l^2 \over b^2}  \left\| \sum_{i=b(l-1)+1}^{bl} (\la \omega_l, x_{j_i} \ra_{\HK} - y_{j_i}) x_{j_i} \right\|_{\HK}^2  \\
+ {2\eta_l \over b} \sum_{i=b(l-1)+1}^{bl} (\la \omega_l,  x_{j_i}\ra_{\HK} - y_{j_i}) \la \omega - \omega_l, x_{j_i} \ra_{\HK}.
\eea
By Assumption \eref{boundedKernel}, $\|x_{j_i}\|_{\HK} \leq \kappa$, and thus
\bea
\left\| \sum_{i=b(l-1)+1}^{bl} (\la \omega_l, x_{j_i} \ra_{\HK} - y_{j_i}) x_{j_i} \right\|_{\HK}^2
&\leq& \left(\sum_{i=b(l-1)+1}^{bl} |\la \omega_l,  x_{j_i} \ra_{\HK} - y_{j_i}| \kappa\right)^2 \\
&\leq& \kappa^2 b \sum_{i=b(l-1)+1}^{bl} (\la \omega_l, x_{j_i} \ra_{\HK} - y_{j_i})^2,
\eea
where for the last inequality, we used Cauchy-Schwarz inequality.
Thus,
\bea
\|\omega_{l+1}- \omega\|_{\HK}^2
 \leq \|\omega_{l} - \omega \|_{\HK}^2  +  { \eta_l^2 \kappa^2 \over b} \sum_{i=b(l-1)+1}^{bl} (\la \omega_l, x_{j_i}\ra_{\HK} - y_{j_i})^2  \\
 + {2\eta_l \over b} \sum_{i=b(l-1)+1}^{bl} (\la \omega_l, x_{j_i} \ra_{\HK} - y_{j_i}) (\la \omega, x_{j_i} \ra_{\HK}- \la \omega_l, x_{j_i} \ra_{\HK}).
\eea
Using the basic inequality $a(b-a) \leq (b^2 - a^2)/2,\forall a,b \in \mR,$
\bea
\|\omega_{l+1}- \omega\|_{\HK}^2
\leq  \|\omega_{l}- \omega\|_{\HK}^2  + { \eta_l^2\kappa^2 \over b} \sum_{i=b(l-1)+1}^{bl} (\la \omega_l, x_{j_i} \ra_{\HK} - y_{j_i})^2 \\
  + {\eta_l \over b} \sum_{i=b(l-1)+1}^{bl} \left( (\la \omega, x_{j_i} \ra_{\HK} - y_{j_i})^2 - (\la \omega_l, x_{j_i} \ra_{\HK} - y_{j_i})^2 \right).
\eea
Noting that $\omega_{l}$ and $\omega$ are independent from $\J_l$, and taking the expectation on both sides with respect to $\J_l,$
\bea
\mE_{\J_l}\|\omega_{l+1}- \omega\|_{\HK}^2
\leq  \|\omega_{l}- \omega\|_{\HK}^2  + \eta_l^2 \kappa^2 \mcE_{\bf z}(\omega_l)  + \eta_l \left( \mcE_{\bf z}(\omega) - \mcE_{\bf z}(\omega_l) \right),
\eea
which leads to the desired result by rearranging terms. The proof is complete.
\end{proof}
Using the above lemma and a decomposition related to the weighted averages and the last iterates from \citep{shamir2013stochastic,lin2015iterative}, we can prove the following relationship.
\begin{lemma}\label{lemma:empiricalRelat}
  Let $\eta_1 \kappa^2 \leq 1/2$ for all $t \in \mN.$ Then
  \be\label{empiricalRelat}
   \eta_t \mE_{\J} [\mcE_{\bf z}(\omega_t)]  \leq 4\mcE_{\bf z}(0) {1 \over t} \sum_{l=1}^t \eta_l + 2\kappa^2  \sum_{k=1}^{t-1}{1 \over k(k+1)}  \sum_{i=t-k}^{t-1} \eta_i^2\mE_{\J} [\mcE_{\bf z}(\omega_i)].
\ee
\end{lemma}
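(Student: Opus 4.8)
The plan is to combine the per-step Fej\'er inequality of Lemma \ref{lemma:mfejer} with the suffix-averaging telescoping device of \citep{shamir2013stochastic,lin2015iterative}, but applied to the \emph{step-size weighted} risks $\eta_i\mcE_{\bf z}(\omega_i)$ rather than to $\mcE_{\bf z}(\omega_i)$ alone. First I would take the full expectation $\mE_{\J}$ in \eref{mfejer} (legitimate because, for $l$ in a suffix, the comparator will be chosen among earlier iterates, which are independent of $\J_l$). Then I introduce, for $k=1,\dots,t$, the weighted suffix averages $S_k = \frac1k\sum_{i=t-k+1}^{t}\eta_i\,\mE_{\J}[\mcE_{\bf z}(\omega_i)]$, so that $S_1 = \eta_t\,\mE_{\J}[\mcE_{\bf z}(\omega_t)]$ is exactly the quantity to be bounded, and telescope via $S_1 = S_t + \sum_{k=1}^{t-1}(S_k - S_{k+1})$. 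A direct computation gives the identity $S_k - S_{k+1} = \frac{1}{k(k+1)}(\sum_{i=t-k+1}^{t}\eta_i\mE_{\J}[\mcE_{\bf z}(\omega_i)] - k\eta_{t-k}\mE_{\J}[\mcE_{\bf z}(\omega_{t-k})])$, which already exhibits the weight $\frac{1}{k(k+1)}$ appearing in the claim.

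The core step is to bound each suffix difference. For fixed $k$ I would sum \eref{mfejer} over $l = t-k+1,\dots,t$ with comparator $\omega = \omega_{t-k}$ (independent of every $\J_l$ in this range). The distance terms telescope to $\mE_{\J}\|\omega_{t-k+1}-\omega_{t-k}\|_{\HK}^2$ (the final $-\mE_{\J}\|\omega_{t+1}-\omega_{t-k}\|_{\HK}^2$ being discarded), and this one-step increment is controlled, exactly as in the proof of Lemma \ref{lemma:mfejer}, by $\eta_{t-k}^2\kappa^2\mE_{\J}[\mcE_{\bf z}(\omega_{t-k})]$. Merging this with the variance terms $\kappa^2\sum_{l=t-k+1}^{t}\eta_l^2\mE_{\J}[\mcE_{\bf z}(\omega_l)]$ from \eref{mfejer} yields $\sum_{i=t-k+1}^{t}\eta_i\mE_{\J}[\mcE_{\bf z}(\omega_i)] - A_k\,\mE_{\J}[\mcE_{\bf z}(\omega_{t-k})] \le \kappa^2\sum_{l=t-k}^{t}\eta_l^2\mE_{\J}[\mcE_{\bf z}(\omega_l)]$, where $A_k = \sum_{i=t-k+1}^t\eta_i$. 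Since $\{\eta_i\}$ is non-increasing, $A_k \le k\eta_{t-k}$, so the comparator coefficient $k\eta_{t-k}$ in the telescoping identity dominates $A_k$; substituting and dividing by $k(k+1)$ gives $S_k - S_{k+1} \le \frac{\kappa^2}{k(k+1)}\sum_{l=t-k}^{t}\eta_l^2\mE_{\J}[\mcE_{\bf z}(\omega_l)]$.

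For the base term $S_t = \frac1t\sum_{l=1}^{t}\eta_l\mE_{\J}[\mcE_{\bf z}(\omega_l)]$ I would instead sum \eref{mfejer} over all $l\in[t]$ with the deterministic comparator $\omega=0=\omega_1$; the distances telescope to $\|\omega_1\|_{\HK}^2=0$ minus a non-negative term, leaving $\sum_{l=1}^t\eta_l\mE_{\J}[\mcE_{\bf z}(\omega_l)] \le (\sum_{l=1}^t\eta_l)\mcE_{\bf z}(0) + \kappa^2\sum_{l=1}^t\eta_l^2\mE_{\J}[\mcE_{\bf z}(\omega_l)]$. Here the restriction $\eta_1\kappa^2\le 1/2$ enters: since $\kappa^2\eta_l\le 1/2$, the residual variance sum is at most $\frac12\sum_{l=1}^t\eta_l\mE_{\J}[\mcE_{\bf z}(\omega_l)] = \frac{t}{2}S_t$, and absorbing it on the left gives the clean bound $S_t \le \frac{2}{t}(\sum_{l=1}^t\eta_l)\,\mcE_{\bf z}(0)$, with no leftover variance term.

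Assembling the telescoping then gives $\eta_t\mE_{\J}[\mcE_{\bf z}(\omega_t)] \le \frac2t(\sum_{l=1}^t\eta_l)\mcE_{\bf z}(0) + \kappa^2\sum_{k=1}^{t-1}\frac{1}{k(k+1)}\sum_{l=t-k}^{t}\eta_l^2\mE_{\J}[\mcE_{\bf z}(\omega_l)]$. The only terms still indexed by $l=t$ are the last-iterate variances, whose total coefficient is $\kappa^2\eta_t^2(\sum_{k=1}^{t-1}\frac1{k(k+1)}) \le \kappa^2\eta_t^2$; using $\kappa^2\eta_t\le 1/2$ once more, this is at most $\frac{\eta_t}{2}\mE_{\J}[\mcE_{\bf z}(\omega_t)]$, which I move to the left-hand side, thereby shrinking the inner sums to $\sum_{l=t-k}^{t-1}$. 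Multiplying the resulting inequality by $2$ produces exactly \eref{empiricalRelat}. I expect the main obstacle to be the second step: correctly pairing the step-weighted telescoping identity with the summed Fej\'er inequality, and in particular invoking the monotonicity bound $A_k\le k\eta_{t-k}$ to replace the comparator weight $A_k$ by $k\eta_{t-k}$ --- without this the weights produced by the telescoping would not collapse to the clean $\frac{1}{k(k+1)}$ factor of the statement.
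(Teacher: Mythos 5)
Your proposal is correct and follows essentially the same route as the paper: the same $\frac{1}{k(k+1)}$-weighted telescoping identity on the suffix averages of $\eta_i\mE_{\J}[\mcE_{\bf z}(\omega_i)]$, the same monotonicity relaxation replacing $k\eta_{t-k}$ by $\sum_{i=t-k+1}^t\eta_i$, the same applications of Lemma \ref{lemma:mfejer} with comparators $0$ and $\omega_{t-k}$, and the same absorption of the last-iterate variance terms via $\kappa^2\eta_t\le 1/2$. The only cosmetic difference is that you isolate the one-step increment $\mE_{\J}\|\omega_{t-k+1}-\omega_{t-k}\|_{\HK}^2\le\eta_{t-k}^2\kappa^2\mE_{\J}[\mcE_{\bf z}(\omega_{t-k})]$ separately, whereas the paper obtains it by simply including the index $i=t-k$ in the summed Fej\'er inequality.
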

\begin{proof}
 For $k=1, \cdots, t-1$,
  \bea
  && {1 \over k} \sum_{i=t-k+1}^{t}  \eta_i \mE_{\J}[\mcE_{\bf z}(\omega_i)] - {1 \over k+1} \sum_{i=t-k}^t \eta_i \mE_{\J}[\mcE_{\bf z}(\omega_i)] \\
   &=& {1 \over k(k+1)} \left\{ (k+1)\sum_{i=t-k+1}^{t} \eta_i \mE_{\J}[\mcE_{\bf z}(\omega_i)] - k \sum_{i=t-k}^t \eta_i \mE_{\J}[\mcE_{\bf z}(\omega_i)] \right\}\\
 & =& {1 \over k(k+1)} \sum_{i=t-k+1}^{t} (\eta_i \mE_{\J}[\mcE_{\bf z}(\omega_i)] -  \eta_{t-k} \mE_{\J}[\mcE_{\bf z}(\omega_{t-k})]) .
  \eea
  Summing over $k=1, \cdots, t-1$, and rearranging terms, we get \citep{lin2015iterative}
\bea
   \eta_t \mE_{\J}[\mcE_{\bf z}(\omega_t)]  = {1 \over t} \sum_{i=1}^t \eta_i \mE_{\J} [\mcE_{\bf z}(\omega_i)] + \sum_{k=1}^{t-1} {1 \over k(k+1)} \sum_{i=t-k+1}^{t} (\eta_i \mE_{\J} [\mcE_{\bf z}(\omega_i)] -  \eta_{t-k} \mE_{\J}[\mcE_{\bf z}(\omega_{t-k})]) .
\eea
Since $\{\eta_t\}_t$ is decreasing and $\mE_{\J}[\mcE_{\bf z}(\omega_{t-k})]$ is non-negative, the above can be relaxed as
\be\label{decomposition}
   \eta_t \mE_{\J} [\mcE_{\bf z}(\omega_t)]  \leq {1 \over t} \sum_{i=1}^t \eta_i \mE_{\J}[\mcE_{\bf z}(\omega_i)] + \sum_{k=1}^{t-1} {1 \over k(k+1)} \sum_{i=t-k+1}^{t} \eta_i  \mE_{\J}[\mcE_{\bf z}(\omega_i) -  \mcE_{\bf z}(\omega_{t-k})] .
\ee
In the rest of the proof, we will upper bound the last two terms of the above.

To bound the first term of the right side of \eref{decomposition}, we apply Lemma \ref{lemma:mfejer} with $\omega=0$ to get
\bea
\eta_l \mE_{\J} \left( \mcE_{\bf z}(\omega_l) - \mcE_{\bf z}(0) \right)
\leq  \mE_{\J} [\|\omega_{l}\|_{\HK}^2 - \|\omega_{l+1}\|_{\HK}^2] + \eta_l^2 \kappa^2 \mE_{\J}[\mcE_{\bf z}(\omega_l)].
\eea
Rearranging terms,
\bea
\eta_l (1 - \eta_l \kappa^2) \mE_{\J}[ \mcE_{\bf z}(\omega_l)]
\leq  \mE_{\J} [\|\omega_{l}\|_{\HK}^2 - \|\omega_{l+1}\|_{\HK}^2] + \eta_l \mcE_{\bf z}(0).
\eea
 It thus follows from the above and  $\eta_l \kappa^2 \leq 1/2$ that
\bea
\eta_l \mE_{\J}[\mcE_{\bf z}(\omega_l)]/2  \leq \mE_{\J}[\|\omega_{l}\|_{\HK}^2  - \|\omega_{l+1}\|_{\HK}^2]    + \eta_l \mcE_{\bf z}(0).
\eea
Summing up over $l=1,\cdots,t,$
\bea
\sum_{l=1}^t \eta_l \mE_{\J}[\mcE_{\bf z}(\omega_l)]/2  \leq  \mE_{\J}[\|w_1\|_{\HK}^2  - \|\omega_{t+1}\|_{\HK}^2]    + \mcE_{\bf z}(0) \sum_{l=1}^t \eta_l .
\eea
Introducing with $\omega_1=0, \|\omega_{t+1}\|_{\HK}^2\geq 0$, and then multiplying both sides by $2/t,$ we get
\be\label{averageBound}
{1 \over t} \sum_{l=1}^t \eta_l \mE_{\J}[\mcE_{\bf z}(\omega_l)]  \leq  2 \mcE_{\bf z}(0) {1 \over t} \sum_{l=1}^t \eta_l .
\ee

It remains to bound the last term of \eref{decomposition}. Let $k \in [t-1]$ and $i\in \{t-k,\cdots, t\}.$ Note that given the sample $\bf z,$ $\omega_i$ is depending only on $\J_1,\cdots, \J_{i-1}$ when $i>1$ and $\omega_1=0.$
Thus, we can apply Lemma \ref{lemma:mfejer} with $\omega=\omega_{t-k}$ to derive
\bea
\eta_i \left( \mcE_{\bf z}(\omega_i) - \mcE_{\bf z}(\omega_{t-k}) \right)
\leq   \|\omega_{i}-\omega_{t-k}\|_{\HK}^2 - \mE_{\J_i}\|\omega_{i+1}-\omega_{t-k}\|_{\HK}^2 + \eta_i^2 \kappa^2 \mcE_{\bf z}(\omega_i).
\eea
Therefore,
\bea
\eta_i \mE_{\J}\left[ \mcE_{\bf z}(\omega_i) - \mcE_{\bf z}(\omega_{t-k}) \right]
\leq   \mE_{\J} [\|\omega_{i}-\omega_{t-k}\|_{\HK}^2 -\|\omega_{i+1}-\omega_{t-k}\|_{\HK}^2] + \eta_i^2 \kappa^2 \mE_{\J} [\mcE_{\bf z}(\omega_i)].
\eea
Summing up over $i=t-k,\cdots, t,$
\bea
\sum_{i=t-k}^t \eta_i \mE_{\J} \left [ \mcE_{\bf z}(\omega_i) - \mcE_{\bf z}(\omega_{t-k}) \right]
\leq   \kappa^2 \sum_{i=t-k}^t \eta_i^2 \mE_{\J} [\mcE_{\bf z}(\omega_i)].
\eea
Note that the left hand side is exactly $\sum_{i=t-k+1}^t \eta_i \mE_{\J}  \left[ \mcE_{\bf z}(\omega_i) - \mcE_{\bf z}(\omega_{t-k}) \right]$.
We thus know that the last term of \eref{decomposition} can be upper bounded by
\bea
&&\kappa^2  \sum_{k=1}^{t-1}{1 \over k(k+1)}  \sum_{i=t-k}^t \eta_i^2\mE_{\J} [\mcE_{\bf z}(\omega_i)] \\
&=& \kappa^2  \sum_{k=1}^{t-1}{1 \over k(k+1)}  \sum_{i=t-k}^{t-1} \eta_i^2\mE_{\J} [\mcE_{\bf z}(\omega_i)] + \kappa^2 \eta_t^2\mE_{\J} [\mcE_{\bf z}(\omega_t)] \sum_{k=1}^{t-1}{1 \over k(k+1)}.
\eea
Using the fact that
\bea
\sum_{k=1}^{t-1}{1 \over k(k+1)} = \sum_{k=1}^{t-1} \left({1 \over k} - {1 \over k+1}\right) = 1 - {1\over t} \leq 1,
\eea
and $\kappa^2 \eta_t \leq 1/2,$
we get that the last term of \eref{decomposition} can be bounded as
\bea
&&\sum_{k=1}^{t-1} {1 \over k(k+1)} \sum_{i=t-k+1}^{t} \eta_i ( \mE_{\J}[\mcE_{\bf z}(\omega_i)] -  \mE_{\J}[\mcE_{\bf z}(\omega_{t-k})]) \\
&\leq &\kappa^2  \sum_{k=1}^{t-1}{1 \over k(k+1)}  \sum_{i=t-k}^{t-1} \eta_i^2\mE_{\J} [\mcE_{\bf z}(\omega_i)] +  \eta_t\mE_{\J} [\mcE_{\bf z}(\omega_t)]/2.
\eea
Plugging the above and \eref{averageBound} into the decomposition \eref{decomposition}, and rearranging terms
\bea
   \eta_t \mE_{\J}[\mcE_{\bf z}(\omega_t)]/2  \leq 2\mcE_{\bf z}(0) {1 \over t} \sum_{l=1}^t \eta_l + \kappa^2  \sum_{k=1}^{t-1}{1 \over k(k+1)}  \sum_{i=t-k}^{t-1} \eta_i^2\mE_{\J} [\mcE_{\bf z}(\omega_i)],
\eea
which leads to the desired result by multiplying both sides by $2$. The proof is complete.
\end{proof}

We also need the following lemma, whose proof can be done using an induction argument.
\begin{lemma}
  \label{lemma:induction}
  Let $\{u_t\}_{t=1}^T$, $\{A_t\}_{t=1}^T$ and  $\{B_t\}_{t=1}^T$ be three sequences of non-negative numbers such that
  $u_1 \leq A_1$ and
  \be\label{inductionAssu}
  u_t \leq A_t  + B_t \sup_{i \in [t-1]} u_i,\qquad \forall t\in\{2,3,\cdots, T\}.
  \ee
  Let $\sup_{t \in [T]} B_t \leq B < 1.$
  Then for all $t \in [T],$
  \be\label{inductionConse}
  \sup_{k \in [t]} u_k \leq   {1 \over 1 - B} \sup_{k \in [t]} A_k.
  \ee
\end{lemma}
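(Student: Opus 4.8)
The plan is to prove the bound by induction on $t$, but with the inductive hypothesis phrased in terms of the \emph{running suprema} rather than the individual terms $u_t$. Concretely, I would set $\bar{A}_t = \sup_{k\in[t]} A_k$ and $\bar{u}_t = \sup_{k\in[t]} u_k$, and show by induction that $\bar{u}_t \leq (1-B)^{-1}\bar{A}_t$ for every $t\in[T]$. This reformulation is the key observation: the recursion \eref{inductionAssu} bounds $u_t$ in terms of $\sup_{i\in[t-1]}u_i = \bar{u}_{t-1}$, so it is the suprema, not the raw terms, that satisfy a clean monotone recursion amenable to induction.

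For the base case $t=1$, the hypothesis $u_1\leq A_1$ gives $\bar{u}_1 = u_1 \leq A_1 = \bar{A}_1$, and since $0\leq B<1$ forces $(1-B)^{-1}\geq 1$, we get $\bar{u}_1 \leq (1-B)^{-1}\bar{A}_1$. For the inductive step, assume $\bar{u}_{t-1}\leq (1-B)^{-1}\bar{A}_{t-1}$. Since $\bar{u}_t = \max(\bar{u}_{t-1}, u_t)$, I would bound the two arguments of the maximum separately. For the first, monotonicity of $\bar{A}$ gives $\bar{u}_{t-1}\leq (1-B)^{-1}\bar{A}_{t-1}\leq (1-B)^{-1}\bar{A}_t$. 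For the second, I would use \eref{inductionAssu} together with $B_t\leq B$ to write $u_t \leq A_t + B\,\bar{u}_{t-1}$, then invoke the inductive hypothesis:
\be\label{eq:indstep}
u_t \leq A_t + B\cdot \frac{1}{1-B}\bar{A}_{t-1} \leq \bar{A}_t + \frac{B}{1-B}\bar{A}_t = \frac{1}{1-B}\bar{A}_t.
\ee
Taking the maximum of the two estimates yields $\bar{u}_t\leq (1-B)^{-1}\bar{A}_t$, closing the induction.

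I do not expect any genuine obstacle here; the argument is a short, routine induction once the statement is read at the level of the suprema. The only points requiring a little care are purely bookkeeping: using $0\leq B<1$ to ensure $(1-B)^{-1}\geq 1$ in the base case (so that the bound is not vacuous when $\bar{A}_t = \bar{u}_t$ already), exploiting the monotonicity $\bar{A}_{t-1}\leq \bar{A}_t$ to unify the two cases, and relying on the non-negativity of all three sequences so that replacing $B_t$ by the uniform bound $B$ and dropping later indices only weakens the inequalities. The algebraic identity $1 + \tfrac{B}{1-B} = \tfrac{1}{1-B}$ used in \eref{eq:indstep} is where the factor $(1-B)^{-1}$ in \eref{inductionConse} originates.
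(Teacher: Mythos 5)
Your proof is correct and is essentially the same argument as the paper's: an induction whose hypothesis is $\sup_{i\in[t-1]}u_i \leq (1-B)^{-1}\sup_{i\in[t-1]}A_i$, closed by the identity $1+\tfrac{B}{1-B}=\tfrac{1}{1-B}$ together with the monotonicity of the running supremum of $A$. The only difference is cosmetic notation ($\bar{u}_t$, $\bar{A}_t$) for what the paper writes out explicitly.
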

\begin{proof}
  When $t=1,$ \eref{inductionConse} holds trivially since $u_1 \leq A_1$ and $B< 1$. Now assume for some $t \in \mN$ with $ 2\leq  t \leq T,$
  \bea
  \sup_{i \in [t-1]} u_i \leq {1 \over 1 - B} \sup_{i \in [t-1]} A_i.
  \eea
  Then, by \eref{inductionAssu}, the above hypothesis,  and $B_t \leq B$, we have
  \bea
  u_t \leq A_t  + B_t \sup_{i \in [t-1]} u_i \leq A_t + {B_t \over 1 - B} \sup_{i \in [t-1]} A_i \leq \sup_{i \in [t]} A_i \left(1 + {B_t \over 1 - B} \right) \leq \sup_{i \in [t]} A_i {1 \over 1 - B}.
  \eea
  Consequently,
  \bea
  \sup_{k \in [t]} u_k \leq   {1 \over 1 - B} \sup_{k \in [t]} A_k,
  \eea
  thereby showing that indeed \eref{inductionConse} holds for $t$.
  By mathematical induction, \eref{inductionConse} holds for every $t\in [T].$
  The proof is complete.
\end{proof}
Now we can bound $\mE_{\J}[\mcE_{\bf z}(\omega_k)]$ as follows.
\begin{lemma}\label{lemma:empriskB}
 Let $\eta_1 \kappa^2 \leq 1/2$ and for all $t \in [T]$ with $t \geq 2,$
  \be\label{empriskBCon}
  {1 \over \eta_t}  \sum_{k=1}^{t-1}{1 \over k(k+1)}  \sum_{i=t-k}^{t-1} {\eta_i^2} \leq {1 \over 4\kappa^2}.
  \ee
Then for all $t \in[T],$
\be\label{empiricalBConse}
   \sup_{k\in [t]} \mE_{\J}[\mcE_{\bf z}(\omega_k)]  \leq 8 \mcE_{\bf z}(0) \sup_{k\in [t]} \left\{ {1 \over \eta_k k} \sum_{l=1}^k \eta_l\right\}.
\ee
\end{lemma}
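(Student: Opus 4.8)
The plan is to recast the recursive inequality from Lemma \ref{lemma:empiricalRelat} into the exact shape required by the abstract induction Lemma \ref{lemma:induction}, with the choice $u_t = \mE_{\J}[\mcE_{\bf z}(\omega_t)]$. First I would take \eref{empiricalRelat} and divide both sides by $\eta_t$ to isolate $\mE_{\J}[\mcE_{\bf z}(\omega_t)]$ on the left. In the resulting double sum, every index $i$ in the inner summation $\sum_{i=t-k}^{t-1}$ satisfies $i \leq t-1$, so I would bound each factor $\mE_{\J}[\mcE_{\bf z}(\omega_i)]$ by $\sup_{i\in[t-1]}\mE_{\J}[\mcE_{\bf z}(\omega_i)]$ and pull this supremum outside the sums. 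This produces a bound of the form $u_t \leq A_t + B_t \sup_{i\in[t-1]}u_i$ with
$$
A_t = 4\mcE_{\bf z}(0)\,\frac{1}{\eta_t t}\sum_{l=1}^t \eta_l, \qquad B_t = \frac{2\kappa^2}{\eta_t}\sum_{k=1}^{t-1}\frac{1}{k(k+1)}\sum_{i=t-k}^{t-1}\eta_i^2 .
$$

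Next I would verify the two hypotheses of Lemma \ref{lemma:induction}. The assumption \eref{empriskBCon} says precisely that $\frac{1}{\eta_t}\sum_{k=1}^{t-1}\frac{1}{k(k+1)}\sum_{i=t-k}^{t-1}\eta_i^2 \leq \frac{1}{4\kappa^2}$, which forces $B_t \leq 2\kappa^2 \cdot \frac{1}{4\kappa^2} = \frac{1}{2}$ for all $t$, so I may take the uniform bound $B = 1/2 < 1$. For the base case, since $\omega_1 = 0$ is deterministic we have $u_1 = \mE_{\J}[\mcE_{\bf z}(\omega_1)] = \mcE_{\bf z}(0)$, whereas $A_1 = 4\mcE_{\bf z}(0)\cdot\frac{1}{\eta_1}\eta_1 = 4\mcE_{\bf z}(0)$, giving $u_1 \leq A_1$ as required.

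Finally, invoking Lemma \ref{lemma:induction} gives $\sup_{k\in[t]}u_k \leq \frac{1}{1-B}\sup_{k\in[t]}A_k = 2\sup_{k\in[t]}A_k$, and substituting the explicit form of $A_k$ yields exactly $\sup_{k\in[t]}\mE_{\J}[\mcE_{\bf z}(\omega_k)] \leq 8\mcE_{\bf z}(0)\sup_{k\in[t]}\left\{\frac{1}{\eta_k k}\sum_{l=1}^k \eta_l\right\}$, which is \eref{empiricalBConse}. The proof is essentially routine once the induction lemma is in hand; the only point demanding care is the step where the inner double sum is dominated by the supremum so that the resulting coefficient is exactly $B_t$, matching the assumption and keeping $B < 1$. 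That bookkeeping, together with checking the base case $u_1 \leq A_1$, is the crux of the argument.
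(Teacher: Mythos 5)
Your proposal is correct and follows essentially the same route as the paper: divide \eref{empiricalRelat} by $\eta_t$, dominate the inner double sum by $\sup_{i\in[t-1]}\mE_{\J}[\mcE_{\bf z}(\omega_i)]$, identify $A_t$ and $B_t$, note that \eref{empriskBCon} gives $B_t\leq 1/2$, and apply Lemma \ref{lemma:induction} with $B=1/2$. Your explicit verification of the base case $u_1=\mcE_{\bf z}(0)\leq 4\mcE_{\bf z}(0)=A_1$ is a detail the paper leaves implicit, but otherwise the arguments coincide.
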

\begin{proof}
By Lemma \ref{lemma:empiricalRelat}, we have \eref{empiricalRelat}. Dividing both sides by $\eta_t$, we can relax the inequality as
\bea
    \mE_{\J}[\mcE_{\bf z}(\omega_t)]  \leq 4\mcE_{\bf z}(0) {1 \over \eta_t t} \sum_{l=1}^t \eta_l + 2\kappa^2 {1 \over \eta_t}  \sum_{k=1}^{t-1}{1 \over k(k+1)}  \sum_{i=t-k}^{t-1} \eta_i^2 \sup_{i \in [t-1]}\mE_{\J} [\mcE_{\bf z}(\omega_i)].
\eea
In Lemma \ref{lemma:induction},
we let  $u_t = \mE_{\J}[\mcE_{\bf z}(\omega_t)]$, $A_t = 4\mcE_{\bf z}(0){1 \over \eta_t t} \sum_{l=1}^t \eta_l$ and $$B_t = 2\kappa^2 {1 \over \eta_t}  \sum_{k=1}^{t-1}{1 \over k(k+1)}  \sum_{i=t-k}^{t-1} \eta_i^2.$$
Condition \eref{empriskBCon} guarantees that $\sup_{t \in [T]} B_t \leq 1/2.$ Thus, \eref{inductionConse} holds, and the desired result follows by plugging with $B=1/2.$
The proof is complete.
\end{proof}

Finally, we need the following lemma to bound $\mcE_{\bf z}(0)$, whose proof follows from applying the Bernstein inequality from Lemma \ref{lem:Bernstein}.
\begin{lemma}\label{lem:sumY}
  Under Assumption \ref{as:noiseExp}, with probability at least $1-\delta_3$ ($\delta_3 \in ]0,1[$), there holds
  \bea
  \mcE_{\bf z}(0) \leq   Mv + 2Mv\left( {1 \over m} + {\sqrt{2} \over \sqrt{m} } \right) \log {2 \over \delta_3}.
  \eea
  In particular, if $m \geq 32 \log^2 {2 \over \delta_3},$ then
  \be\label{sumY}
  \mcE_{\bf z}(0) \leq 2M v.
  \ee
\end{lemma}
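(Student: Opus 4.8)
The plan is to recognize that $\mcE_{\bf z}(0)$ is an empirical average of the scalars $y_i^2$ and to control its deviation from the mean via the Bernstein inequality of Lemma~\ref{lem:Bernstein}. Indeed, from the definition \eref{eq:emprisk} of the empirical risk, setting the argument to $0$ gives $\mcE_{\bf z}(0) = {1\over m}\sum_{i=1}^m y_i^2$. Thus I would apply Lemma~\ref{lem:Bernstein} with the i.i.d. real-valued random variables $w_i = y_i^2$ (living in a one-dimensional Hilbert space), whose common mean I must estimate and whose moments I must check against the Bernstein condition \eref{bernsteinCondition}.

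First I would bound the mean: taking $l=1$ in Assumption~\ref{as:noiseExp}, $\int_Y y^2\,d\rho(y\mid x)\le Mv$ for $\rho_X$-almost every $x$, so integrating over $x$ gives $\mE[w_1]=\mE[y^2]\le Mv$; this produces the leading term $Mv$ in \eref{sumY}. Next I would verify the moment condition. Since the bounded-variable form \eref{bernsteinConditionB} is unavailable here (we do not assume $|y|\le M$), I would instead check \eref{bernsteinCondition} directly. Using $w_1\ge 0$ and $0\le\mE[w_1]\le Mv$, one has the elementary bound $|w_1-\mE[w_1]|^l\le \max(w_1,\mE[w_1])^l\le w_1^l+(\mE[w_1])^l$, while Assumption~\ref{as:noiseExp} yields $\mE[w_1^l]=\mE[y^{2l}]\le l!\,M^l v\le l!\,(Mv)^l$ (using $v\ge 1$). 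Combining these identifies constants $B$ and $\sigma^2$ of order $Mv$ and $(Mv)^2$ respectively, so that $\mE[|w_1-\mE[w_1]|^l]\le {1\over2}\,l!\,B^{l-2}\sigma^2$ for all $l\ge 2$.

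With the moment condition in hand, Lemma~\ref{lem:Bernstein} gives, with probability at least $1-\delta_3$, that ${1\over m}\sum_{i=1}^m w_i\le \mE[w_1]+2\big({B\over m}+{\sigma\over\sqrt m}\big)\log{2\over\delta_3}$; plugging in $\mE[w_1]\le Mv$ and the values of $B,\sigma$ yields the first claimed bound (up to the bookkeeping of the numerical constants $1$ and $\sqrt2$ appearing in the statement). For the ``in particular'' part, I would substitute $m\ge 32\log^2{2\over\delta_3}$: writing $L=\log{2\over\delta_3}$, the hypothesis gives $\sqrt m\ge 4\sqrt2\,L$, whence ${\sqrt2\over\sqrt m}L\le {1\over4}$ and ${1\over m}L\le{1\over 32L}$, so the whole deviation term is at most $Mv$ and therefore $\mcE_{\bf z}(0)\le 2Mv$, which is \eref{sumY}.

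Finally, I expect the only genuinely delicate point to be the verification of the Bernstein moment condition \eref{bernsteinCondition} for the unbounded variable $w_1=y^2$: one cannot route through the simpler bounded criterion \eref{bernsteinConditionB}, so some care in the centering step (exploiting the nonnegativity of $y^2$) and in the use of $v\ge 1$ is needed to match the exact constants in the statement. Everything else is routine estimation.
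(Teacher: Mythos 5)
Your proposal is correct and follows essentially the same route as the paper: identify $\mcE_{\bf z}(0)=\frac1m\sum_i y_i^2$, verify the Bernstein moment condition \eref{bernsteinCondition} for the unbounded variables $w_i=y_i^2$ via $|y^2-\mE y^2|^l\le \max(y^{2l},(\mE y^2)^l)\le y^{2l}+(\mE y^2)^l$ together with Assumption \ref{as:noiseExp} and $v\ge 1$ (yielding $B=Mv$, $\sigma=2Mv$), then apply Lemma \ref{lem:Bernstein} and substitute $m\ge 32\log^2\frac{2}{\delta_3}$. The small constant you flag ($\sqrt2$ versus $2$ in front of $1/\sqrt m$) is a discrepancy already present between the paper's statement and its own derivation, and it does not affect the conclusion \eref{sumY}.
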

\begin{proof}
  Following from \eref{noiseExp},
  \bea
  \int_{Z} y^{2l} d\rho \leq {1 \over 2} l! M^{l-2} \cdot (2M^2 v), \qquad \forall l\in \mN,
  \eea
  and
  \bea
  \int_{Z} y^{2} d\rho \leq   M  v.
  \eea
  Therefore,
  \bea
  \int_{Z} |y^2 - \mE y^2|^l d\rho &\leq& \int_{Z} \max(|y|^{2l}, (\mE y^2)^l) d\rho \\
  &\leq& \int_{Z} (|y|^{2l}+ (\mE y^2)^l) d\rho \\
  &\leq& {1 \over 2} l! M^{l-2} \cdot (2M^2 v) +   (M v)^l \\
  &\leq& {1 \over 2} l! (M v)^{l-2} (2M v)^2,
  \eea
  where for the last inequality we used $v \geq 1.$
  Applying Lemma \ref{lem:Bernstein}, with $\omega_i = y_i^2$ for all $i\in[m]$, $B= M v$ and $\sigma = 2Mv,$ we know that with probability at least $1-\delta_3,$ there holds
  \bea
  {1 \over n}\sum_{i=1}^n y_i^2 - \int_{Z}y^2 d\rho \leq 2Mv\left( {1 \over m} + {2 \over \sqrt{m} } \right) \log {2 \over \delta_3}.
  \eea
The proof is complete.
\end{proof}

\subsection{Bounding $\left\|\TK^{1 \over 2}\Pi^t_{k+1}(\TX)\right\|$}
We bound the weighted factor $\left\|\TK^{1 \over 2}\Pi^t_{k+1}(\TX)\right\|$ as follows.
\begin{lemma}\label{lem:sqrtLKProd}
 Assume \eref{differOperator} holds for some $\lambda>0$
 and $\eta_1 \kappa^2 \leq 1$. Then
  \bea
  \|\TK^{1\over 2}\Pi^t_{k+1}(\TX)\|^2 \leq {1 \over \sum_{i=k+1}^t \eta_i} + 4\lambda.
  \eea
\end{lemma}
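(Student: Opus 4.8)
The plan is to trade the factor $\TK^{1/2}$ for $\TX$ using the operator comparison \eref{differOperator}, and then to bound what remains, a pure function of $\TX$, by spectral calculus through Lemma \ref{lemma:initialerror}. Concretely, I would first insert $(\TX+\lambda I)^{-1/2}(\TX+\lambda I)^{1/2}$ between $\TK^{1/2}$ and $\Pi^t_{k+1}(\TX)$ and apply submultiplicativity of the operator norm:
\[
\|\TK^{1/2}\Pi^t_{k+1}(\TX)\| \leq \|\TK^{1/2}(\TX+\lambda I)^{-1/2}\|\,\|(\TX+\lambda I)^{1/2}\Pi^t_{k+1}(\TX)\|.
\]
Because $\TK^{1/2}$ and $(\TX+\lambda I)^{-1/2}$ are self-adjoint, the first factor equals $\|(\TX+\lambda I)^{-1/2}\TK^{1/2}\|$ (norm of the adjoint), which is at most $2$ by the assumed \eref{differOperator}. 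After squaring, it therefore suffices to prove $\|(\TX+\lambda I)^{1/2}\Pi^t_{k+1}(\TX)\|^2 \leq \frac{1}{2e\sum_{i=k+1}^t \eta_i}+\lambda$, since multiplying by $4$ and using $2/e<1$ will collapse the leading term to $\frac{1}{\sum_{i=k+1}^t\eta_i}$.

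For that remaining estimate I would exploit that $(\TX+\lambda I)^{1/2}\Pi^t_{k+1}(\TX)$ is a self-adjoint function of $\TX$ (all factors commute), so its squared norm equals $\|(\TX+\lambda I)\Pi^t_{k+1}(\TX)^2\|$. Splitting $\TX+\lambda I$ gives
\[
\|(\TX+\lambda I)\Pi^t_{k+1}(\TX)^2\| \leq \|\TX\Pi^t_{k+1}(\TX)^2\| + \lambda\|\Pi^t_{k+1}(\TX)^2\|.
\]
The second term is bounded by $\lambda$, since each factor $I-\eta_l\TX$ has norm at most $1$ (because $\eta_l\|\TX\|\leq \eta_l\kappa^2\leq 1$), whence $\|\Pi^t_{k+1}(\TX)\|\leq 1$. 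For the first term, $\|\TX\Pi^t_{k+1}(\TX)^2\| = \|\TX^{1/2}\Pi^t_{k+1}(\TX)\|^2$, and Lemma \ref{lemma:initialerror} applied with $L=\TX$ and $\zeta=1/2$ (valid as $\eta_1\|\TX\|\leq\eta_1\kappa^2\leq 1$) yields $\|\TX^{1/2}\Pi^t_{k+1}(\TX)\| \leq \bigl(\tfrac{1/2}{e\sum_{i=k+1}^t\eta_i}\bigr)^{1/2}$, i.e. $\|\TX^{1/2}\Pi^t_{k+1}(\TX)\|^2 \leq \frac{1}{2e\sum_{i=k+1}^t\eta_i}$. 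Combining the two pieces and reinstating the factor $4$ gives $\|\TK^{1/2}\Pi^t_{k+1}(\TX)\|^2 \leq \frac{2}{e\sum_{i=k+1}^t\eta_i}+4\lambda$.

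This lemma is essentially routine, so there is no deep obstacle; the two points that need care are where the real content sits. First is the factorization step, which is exactly where \eref{differOperator} is used to exchange $\TK$ for $\TX$ at the cost of the universal constant $2$; getting the adjoint bookkeeping right (so that the hypothesis on $\|(\TX+\lambda I)^{-1/2}\TK^{1/2}\|$ transfers to $\|\TK^{1/2}(\TX+\lambda I)^{-1/2}\|$) is the one place to be careful. Second is tracking the numerical constant: the cross term produces $\frac{2}{e\sum_{i=k+1}^t\eta_i}$, and it is precisely $2/e<1$ that lets it be absorbed into $\frac{1}{\sum_{i=k+1}^t\eta_i}$ with no spurious constant in front of the leading term. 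I would also flag the boundary case $k=t$, where $\Pi^t_{t+1}(\TX)=I$ and $\sum_{i=t+1}^t\eta_i=0$ by convention, so the right-hand side is $+\infty$ and the inequality holds trivially; for $k\leq t-1$ the sum is positive and the argument above applies verbatim.
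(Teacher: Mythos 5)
Your proposal is correct and follows essentially the same route as the paper's proof: the same insertion of $(\TX+\lambda I)^{-1/2}(\TX+\lambda I)^{1/2}$, the same use of \eref{differOperator} to pay a factor of $2$, the same split of $(\TX+\lambda I)\Pi^t_{k+1}(\TX)^2$ into $\|\TX^{1/2}\Pi^t_{k+1}(\TX)\|^2+\lambda$, and the same application of Lemma \ref{lemma:initialerror} with $\zeta=1/2$ (the paper absorbs the constant via $2\mathrm{e}\geq 4$, which is equivalent to your $2/\mathrm{e}<1$). Your explicit treatment of the boundary case $k=t$ is a minor addition the paper omits.
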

\begin{proof}
Note that we have
  \bea
  \|\TK^{1\over 2}\Pi^t_{k+1}(\TX)\| \leq \|\TK^{1\over 2}(\TX + \lambda I)^{-{1\over 2}}\| \|(\TX + \lambda I)^{1\over 2}\Pi^t_{k+1}(\TX)\|.
  \eea
 Using \eref{differOperator},  we can relax the above as
  \bea
  \|\TK^{1\over 2}\Pi^t_{k+1}(\TX)\| \leq 2 \|(\TX + \lambda I)^{1\over 2}\Pi^t_{k+1}(\TX)\|,
  \eea
  which leads to
  \bea
  \|\TK^{1\over 2}\Pi^t_{k+1}(\TX)\|^2 \leq 4 \|(\TX + \lambda I)^{1\over 2}\Pi^t_{k+1}(\TX)\|^2.
  \eea
  Since
  \bea
  \|(\TX + \lambda I)^{1\over 2}\Pi^t_{k+1}(\TX)\|^2 &=& \|(\TX + \lambda I)\Pi^t_{k+1}(\TX)\Pi^t_{k+1}(\TX)\|  \\
  \\ &\leq&  \|\TX \Pi^t_{k+1}(\TX)\Pi^t_{k+1}(\TX)\| + \lambda \\
  &=& \|\TX^{1\over 2} \Pi^t_{k+1}(\TX)\|^2 + \lambda,
  \eea
  and with $\eta_t\kappa^2 \leq 1$, $\|\TX\| \leq \tr(\TX) \leq \kappa^2,$
  by Lemma \ref{lemma:initialerror},
  \bea
  \|\TX^{1\over 2} \Pi^t_{k+1}(\TX)\|^2 \leq {1 \over 2\mathrm{e} \sum_{i=k+1}^t \eta_i} \leq {1 \over 4 \sum_{i=k+1}^t \eta_i},
  \eea
  we thus derive the desired result. The proof is complete.
\end{proof}

\subsection{Deriving Error Bounds}
With Lemmas \ref{lem:cul_err}--\ref{lem:sqrtLKProd}, we are ready to estimate the computational variance , $\mE_{\J}\|\IK \omega_{t+1} - \IK \nu_{t+1}\|_{\rho}^2,$ as follows.
\begin{pro}\label{pro:compErrA}
	Under Assumption \ref{as:noiseExp},
assume \eref{differOperator} holds for some $\lambda>0$, $\eta_1 \kappa^2 \leq 1/2,$  \eref{empriskBCon} and \eref{sumY}.
Then, we have for all  $t\in [T],$
\be\label{compuErrA}
\mE_{\bf J}\|\IK \omega_{t+1} - \IK \nu_{t+1}\|_{\rho}^2
\leq {16 M v \kappa^2 \over b}  \sup_{k\in [t]} \left\{ {1 \over \eta_k k} \sum_{l=1}^k \eta_l\right\} \left(\sum_{k=1}^{t-1}  {\eta_{k}^2 \over \sum_{i=k+1}^t \eta_i} + 4\lambda \sum_{k=1}^{t-1} \eta_{k}^2  + \eta_t^2 \kappa^2 \right).
\ee
\end{pro}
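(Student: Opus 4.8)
The plan is to assemble the bound directly by chaining the four preceding lemmas, since the substantive analytic work is already contained in them. First I would apply Lemma~\ref{lem:cul_err} to obtain
$$\mE_{\bf J}\|\IK \omega_{t+1} - \IK \nu_{t+1}\|_{\rho}^2 \leq {\kappa^2 \over b} \sum_{k=1}^t \eta_{k}^2 \left\|\TK^{1\over 2}\Pi^t_{k+1}(\TX)\right\|^2 \mE_{\J}[\mcE_{\bf z}(\omega_k)],$$
and then replace each empirical-risk factor $\mE_{\J}[\mcE_{\bf z}(\omega_k)]$ by $\sup_{k\in[t]}\mE_{\J}[\mcE_{\bf z}(\omega_k)]$, which factors out of the sum.

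Next I would bound that supremum. Because $\eta_1\kappa^2\le 1/2$ and \eref{empriskBCon} are assumed, Lemma~\ref{lemma:empriskB} gives $\sup_{k\in[t]}\mE_{\J}[\mcE_{\bf z}(\omega_k)] \le 8\mcE_{\bf z}(0)\,\sup_{k\in[t]}\{ {1\over \eta_k k}\sum_{l=1}^k\eta_l\}$, while \eref{sumY} (from Lemma~\ref{lem:sumY}) gives $\mcE_{\bf z}(0)\le 2Mv$. Multiplying these together with the $\kappa^2/b$ prefactor produces exactly the leading constant $16Mv\kappa^2/b$ and the supremum factor appearing in \eref{compuErrA}, so it remains only to estimate the weighted sum $\sum_{k=1}^t\eta_k^2\|\TK^{1\over 2}\Pi^t_{k+1}(\TX)\|^2$.

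For this sum I would isolate the terminal index $k=t$: by the convention $\Pi^t_{t+1}(\TX)=I$, the corresponding term equals $\eta_t^2\|\TK^{1\over 2}\|^2 = \eta_t^2\|\TK\| \le \eta_t^2\kappa^2$. For every $k\in[t-1]$ I would invoke Lemma~\ref{lem:sqrtLKProd} (valid since \eref{differOperator} holds and $\eta_1\kappa^2\le 1$) to get $\|\TK^{1\over 2}\Pi^t_{k+1}(\TX)\|^2\le {1\over \sum_{i=k+1}^t\eta_i}+4\lambda$, so the whole sum is at most $\sum_{k=1}^{t-1}{\eta_k^2\over \sum_{i=k+1}^t\eta_i}+4\lambda\sum_{k=1}^{t-1}\eta_k^2+\eta_t^2\kappa^2$. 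Substituting this into the product obtained in the previous paragraph yields \eref{compuErrA}.

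The only point needing genuine care is the separate treatment of the $k=t$ term: the bound in Lemma~\ref{lem:sqrtLKProd} carries $\sum_{i=k+1}^t\eta_i$ in a denominator, which is the empty sum (hence $0$) at $k=t$, so that index cannot be fed into the geometric estimate and must instead be controlled through $\|\TK\|\le\kappa^2$. Apart from this bookkeeping the argument is a purely mechanical combination of the cited lemmas, whose hypotheses are all furnished verbatim by the assumptions of the proposition.
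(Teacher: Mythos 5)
Your proposal is correct and follows essentially the same route as the paper's own proof: Lemma~\ref{lem:cul_err} for the cumulative-error bound, Lemma~\ref{lemma:empriskB} together with \eref{sumY} to pull out the factor $16Mv\kappa^2/b$ times the supremum, and Lemma~\ref{lem:sqrtLKProd} for the indices $k\le t-1$ with the terminal term $k=t$ handled separately via $\|\TK^{1/2}\|^2=\|\TK\|\le\kappa^2$. The bookkeeping point you flag about the empty sum at $k=t$ is exactly how the paper treats it.
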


\begin{proof}
According to Lemmas \ref{lem:cul_err} and \ref{lemma:empriskB}, we have \eref{eq:cul_err} and \eref{empiricalBConse}. It thus follows that
\bea
\mE_{\bf J}\|\IK \omega_{t+1} - \IK \nu_{t+1}\|_{\rho}^2
\leq {8 \mcE_{\bf z}(0) \kappa^2 \over b}  \sup_{k\in [t]} \left\{ {1 \over \eta_k k} \sum_{l=1}^k \eta_l\right\} \sum_{k=1}^t \eta_{k}^2 \left\|\TK^{1\over 2}\Pi^t_{k+1}(\TX)\right\|^2.
\eea
Now the proof can be finished by applying Lemma \ref{lem:sqrtLKProd} which tells us that
\bea
\sum_{k=1}^t \eta_{k}^2 \left\|\TK^{1 \over 2}\Pi^t_{k+1}(\TX)\right\|^2 &=& \sum_{k=1}^{t-1} \eta_{k}^2 \left\|\TK^{1\over 2}\Pi^t_{k+1}(\TX)\right\|^2 + \eta_t^2 \left\|\TK^{1 \over 2}\right\|^2 \\
&\leq& \sum_{k=1}^{t-1}  {\eta_{k}^2 \over \sum_{i=k+1}^t \eta_i} + 4\lambda \sum_{k=1}^{t-1} \eta_{k}^2  + \eta_t^2 \kappa^2,
\eea
 and \eref{sumY} to the above inequality. The proof is complete.
\end{proof}
Setting $\eta_t = \eta_1 t^{-\theta}$ for some appropriate $\eta_1$ and $\theta$ in the above proposition, we get the following explicitly upper bounds for $\mE_{\J}\|\IK \omega_{t+1} - \IK \nu_{t+1}\|_{\rho}^2.$
\begin{pro}\label{pro:compErrB}Under Assumption \ref{as:noiseExp},
 assume \eref{differOperator} holds for some $\lambda>0$ and \eref{sumY}.
Let $\eta_t = \eta_1 t^{-\theta} $ for all $t \in [T],$ with $\theta \in [0,1[$ and
  \be\label{etaRestri}
  0<\eta_1 \leq {t^{\min(\theta, 1-\theta)} \over 8 \kappa^2 (\log t + 1)} , \qquad \forall t\in [T].
  \ee
 Then, for all  $t\in [T]$,
\be\label{compuErrB}
\mE_{\J}\|\IK \omega_{t+1} - \IK \nu_{t+1}\|_{\rho}^2
\leq {16 M v \kappa^2 \over b(1-\theta)} \left( 5\eta_1 t^{-\min(\theta,1-\theta)} + 8\lambda \eta_1^2 t^{(1-2\theta)_+}\right) (1 \vee \log t ).
\ee
\end{pro}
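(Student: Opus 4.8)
The plan is to substitute the step-size choice $\eta_t = \eta_1 t^{-\theta}$ into Proposition \ref{pro:compErrA} and reduce everything to the elementary summation estimates of Section \ref{sec:estimates}. First I would verify that the hypotheses of Proposition \ref{pro:compErrA} are met. Taking $t=1$ in \eref{etaRestri} gives $\eta_1 \leq 1/(8\kappa^2)$, hence $\eta_1\kappa^2 \leq 1/8 \leq 1/2$, as required. The real work is to check condition \eref{empriskBCon}. For that, I would exchange the order of summation in the double sum: for fixed $i$ the index $k$ runs over $t-i \leq k \leq t-1$, and the telescoping identity $\sum_{k=t-i}^{t-1}\frac{1}{k(k+1)} = \frac{1}{t-i}-\frac1t = \frac{i}{t(t-i)}$ yields
$$\sum_{k=1}^{t-1}\frac{1}{k(k+1)}\sum_{i=t-k}^{t-1}\eta_i^2 = \frac{\eta_1^2}{t}\sum_{i=1}^{t-1}\frac{i^{1-2\theta}}{t-i}.$$
Applying Lemma \ref{lem:estimate2} with $q=2\theta-1$ bounds the remaining sum by $2t^{1-2\theta}(1+\log t)$, so after dividing by $\eta_t = \eta_1 t^{-\theta}$ the left-hand side of \eref{empriskBCon} is at most $2\eta_1 t^{-\theta}(1+\log t)$; the restriction \eref{etaRestri}, using $t^{\min(\theta,1-\theta)}\leq t^{\theta}$, forces this below $1/(4\kappa^2)$. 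This verification, together with the assumed bound \eref{sumY}, is exactly what licenses the application of Proposition \ref{pro:compErrA}.

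Next I would estimate the two factors in \eref{compuErrA} separately. Using Lemma \ref{lem:estimate1}, one has $\frac{1}{\eta_k k}\sum_{l=1}^k \eta_l = \frac{\sum_{l=1}^k l^{-\theta}}{k^{1-\theta}} \leq \frac{1}{1-\theta}$ uniformly in $k$, so the supremum factor is bounded by $1/(1-\theta)$. For the parenthetical sum I would treat its three terms in turn. Bounding $\sum_{i=k+1}^t \eta_i \geq (t-k)t^{-\theta}\eta_1$ and invoking Lemma \ref{lem:estimate2} with $q=2\theta$ gives
$$\sum_{k=1}^{t-1}\frac{\eta_k^2}{\sum_{i=k+1}^t\eta_i} \leq 2\eta_1 t^{\,\theta-\min(2\theta,1)}(1+\log t) = 2\eta_1 t^{-\min(\theta,1-\theta)}(1+\log t),$$
since $\theta-\min(2\theta,1) = -\min(\theta,1-\theta)$. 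Lemma \ref{lem:estimate1a} gives $\sum_{k=1}^{t-1}\eta_k^2 \leq \eta_1^2 t^{(1-2\theta)_+}(1+\log t)$, and because $\eta_1\kappa^2 \leq 1/8$ and $t^{-2\theta}\leq t^{-\min(\theta,1-\theta)}$, the last term satisfies $\eta_t^2\kappa^2 \leq \tfrac18\eta_1 t^{-\min(\theta,1-\theta)}$.

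Finally I would collect the pieces. Replacing each logarithmic factor via $1+\log t \leq 2(1\vee\log t)$, the parenthetical sum is at most $\bigl(5\eta_1 t^{-\min(\theta,1-\theta)} + 8\lambda\eta_1^2 t^{(1-2\theta)_+}\bigr)(1\vee\log t)$, the coefficient $5$ absorbing $4+\tfrac18$ from the first and third terms; multiplying by $\frac{16Mv\kappa^2}{b}\cdot\frac{1}{1-\theta}$ then produces exactly \eref{compuErrB}. The main obstacle is the verification of \eref{empriskBCon}: it is the only step requiring the summation exchange and the telescoping identity, and it is precisely here that the somewhat unusual exponent $\min(\theta,1-\theta)$ in \eref{etaRestri} is dictated, since it must dominate both the $t^{-\theta}$ arising in \eref{empriskBCon} and the $t^{-\min(\theta,1-\theta)}$ appearing in the final bound. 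The edge cases $t\in\{1,2\}$, where Lemma \ref{lem:estimate2} does not directly apply, are handled by direct inspection using $\eta_1\kappa^2\leq 1/8$.
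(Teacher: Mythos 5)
Your proposal is correct and follows essentially the same route as the paper's proof: verifying \eref{empriskBCon} by exchanging the order of summation and invoking Lemma \ref{lem:estimate2}, then applying Proposition \ref{pro:compErrA} and bounding its three terms with Lemmas \ref{lem:estimate1}--\ref{lem:estimate2}. The only (immaterial) deviations are that you retain the exact telescoped factor $i/(t(t-i))$ where the paper drops the $-1/t$ term, and that you explicitly flag the $t\in\{1,2\}$ edge cases of Lemma \ref{lem:estimate2}.
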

\begin{proof}
  We will use Proposition \ref{pro:compErrA} to prove the result.
  Thus, we need to verify the condition \eref{empriskBCon}.
Note that
\bea
\sum_{k=1}^{t-1} {1 \over k(k+1)} \sum_{i=t-k}^{t-1} \eta_i^2 = \sum_{i=1}^{t-1} \eta_i^2 \sum_{k=t-i}^{t-1} {1 \over k(k+1)} = \sum_{i=1}^{t-1} \eta_i^2 \left( {1\over t-i} - {1 \over t} \right) \leq \sum_{i=1}^{t-1} {\eta_i^2 \over t-i}.
\eea
Substituting with $\eta_i = \eta i^{-\theta},$ and by Lemma \ref{lem:estimate2},
\bea
\sum_{k=1}^{t-1} {1 \over k(k+1)} \sum_{i=t-k}^{t-1} \eta_i^2 \leq \eta_1^2 \sum_{i=1}^{t-1} {i^{-2\theta} \over t-i} \leq 2\eta_1^2 t^{-\min(2\theta,1)} (\log t +1).
\eea
Dividing both sides by $\eta_t $ ($= \eta_1 t^{-\theta}$), and then using \eref{etaRestri},
\bea
{1\over \eta_t }\sum_{k=1}^{t-1} {1 \over k(k+1)} \sum_{i=t-k}^{t-1} \eta_i^2 \leq 2\eta_1 t^{-\min(\theta,1-\theta)} (\log t +1) \leq {1 \over 4 \kappa^2}.
\eea
This verifies \eref{empriskBCon}. Note also that by taking $t= 1$ in \eref{etaRestri}, for all $t\in [T]$ ,
\bea
\eta_t \kappa^2 \leq \eta_1 \kappa^2 \leq {1 \over 8 \kappa^2} \leq {1 \over 2}.
\eea
 We thus can apply Proposition \ref{pro:compErrA} to derive \eref{compuErrA}. What remains is to control the right hand side of \eref{compuErrA}.
Since
\bea
\sum_{k=1}^{t-1}{\eta_{k}^2 \over \sum_{i=k+1}^t \eta_i} = \eta_1 \sum_{k=1}^{t-1}{k^{-2\theta} \over \sum_{i=k+1}^t i^{-\theta}} \leq \eta_1 \sum_{k=1}^{t-1}{k^{-2\theta} \over (t-k)t^{-\theta}},
\eea
combining with Lemma \ref{lem:estimate2},
\bea
\sum_{k=1}^{t-1}{\eta_{k}^2 \over \sum_{i=k+1}^t \eta_i} \leq 2  \eta_1 t^{-\min(\theta,1-\theta)} (\log t +1).
\eea
Also, by Lemma \ref{lem:estimate1},
\bea
{1 \over \eta_k k} \sum_{l=1}^k \eta_l = {1 \over k^{1-\theta}} \sum_{l=1}^k l^{-\theta} \leq {1 \over 1-\theta},
\eea
and by Lemma \ref{lem:estimate1a},
\bea
\sum_{k=1}^{t-1} \eta_{k}^2 = \eta_1^2 \sum_{k=1}^{t-1} k^{-2\theta} \leq \eta_1^2 t^{\max(1-2\theta,0)} (\log t +1).
\eea
Introducing the last three estimates into \eref{compuErrA} and using that $\eta_t^2 \kappa^2 \leq \eta_1 t^{-\theta}$ by \eref{etaRestri}, we get the desired result. The proof is complete.
\end{proof}

Collect some of the above analysis, we get the following result for the computational variance.
\begin{thm}
   Under Assumptions \ref{as:noiseExp}, let $\delta_2 \in]0,1[$, ${9\kappa^2 \over m} \log {m \over \delta_2} \leq \lambda \leq \|\TK\|,$ $\delta_3\in ]0,1[$, $m \geq 32 \log^2 {2 \over \delta_3},$ and $\eta_t = \eta_1 t^{-\theta} $ for all $t \in [T],$ with $\theta \in [0,1[$ and $\eta_1$ such that \eref{etaRestri}.
 Then, with probability at least $1-\delta_2 -\delta_3$, \eref{compuErrB} holds for all  $t\in [T]$.
\end{thm}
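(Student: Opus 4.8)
The plan is to obtain this statement as the probabilistic counterpart of the deterministic Proposition \ref{pro:compErrB}. That proposition already establishes the bound \eref{compuErrB} for all $t \in [T]$ provided two \emph{deterministic} events hold: the operator comparison \eref{differOperator} for the chosen $\lambda$, and the empirical-risk-at-zero bound \eref{sumY}. The remaining hypotheses of the proposition---namely that $\eta_t = \eta_1 t^{-\theta}$ with $\theta \in [0,1[$ and that \eref{etaRestri} holds---are assumed verbatim in the theorem. Hence the only work is to show that \eref{differOperator} and \eref{sumY} hold simultaneously with probability at least $1 - \delta_2 - \delta_3$.

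First I would invoke Lemma \ref{lem:differOperator}. Under the hypothesis ${9\kappa^2 \over m} \log {m \over \delta_2} \leq \lambda \leq \|\TK\|$, this lemma guarantees that \eref{differOperator} holds on an event $E_1$ with $\mathbb{P}(E_1) \geq 1 - \delta_2$. Next I would invoke Lemma \ref{lem:sumY}: under $m \geq 32 \log^2 {2 \over \delta_3}$ it gives that \eref{sumY} holds on an event $E_2$ with $\mathbb{P}(E_2) \geq 1 - \delta_3$. A union bound then yields
\[
\mathbb{P}(E_1 \cap E_2) \geq 1 - \mathbb{P}(E_1^c) - \mathbb{P}(E_2^c) \geq 1 - \delta_2 - \delta_3 .
\]
On the event $E_1 \cap E_2$, both deterministic hypotheses of Proposition \ref{pro:compErrB} are in force (with the same fixed $\lambda$), so the proposition applies and \eref{compuErrB} holds for every $t \in [T]$, which is the claim.

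There is no real analytic obstacle here: the entire content has been isolated in Proposition \ref{pro:compErrB} and in the two concentration lemmas, so this final step is purely a matter of composing them through a union bound. The only point deserving a moment of care is consistency of $\lambda$: the same $\lambda$ must satisfy both the lower bound ${9\kappa^2 \over m} \log {m \over \delta_2} \leq \lambda$ (needed to trigger Lemma \ref{lem:differOperator}, and in particular to ensure $\lambda > 0$ so that Proposition \ref{pro:compErrB} is applicable) and the upper bound $\lambda \leq \|\TK\|$ (which also appears explicitly in the right-hand side of \eref{compuErrB}); both are guaranteed directly by the theorem's hypotheses, so the two lemmas and the proposition compose without any additional estimation.
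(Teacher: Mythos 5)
Your proposal is correct and is exactly the argument the paper intends: the theorem is stated as a direct collection of Lemma \ref{lem:differOperator}, Lemma \ref{lem:sumY}, and Proposition \ref{pro:compErrB}, combined by a union bound over the two failure events. Your added remark about using the same $\lambda$ in both the concentration lemma and the proposition is the right point of care and matches the paper's setup.
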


\section{Deriving Total Error Bounds} \label{sec:deriveing}
The purpose of this section is to derive total error bounds.
\subsection{Attainable Case}
We have the following general theorem for $\zeta \geq 1/2$, with which we  prove our main results stated in Section \ref{sec:main}.
\begin{thm}\label{thm:generalRate}
  Under Assumptions  \ref{as:noiseExp}, \ref{as:regularity} and \ref{as:eigenvalues}, let $\zeta \geq 1/2$, $T \in \mN$ with $T\geq 3,$ $\delta \in]0,1[$, $\eta_t = \eta \kappa^{-2} t^{-\theta} $ for all $t \in [T],$ with $\theta \in [0,1[$ and $\eta$ such that
  \be\label{etaRestriA}
  0<\eta \leq {t^{\min(\theta, 1-\theta)} \over 8(\log t + 1)} , \qquad \forall t\in [T].
  \ee
  If for some $\epsilon \in]0,1],$
    \be\label{sampleN}
 m \geq \left( {18 \kappa^2 \over \epsilon \|\TK\|}  \log \left( {27 \kappa^2 \over \epsilon \|\TK\| \delta}  \right)  \right)^{1/\epsilon},
\ee
     then the following holds with probability at least $1-\delta$: for all $t \in [T],$
        \be\label{totalErrGen}
        \begin{split}
        \mE_{\J}[\mcE(\omega_{t+1})] - \inf_{\HK} \mcE \leq q_1 (\eta t^{1-\theta})^{-2\zeta} + q_2 m^{\gamma(1-\epsilon) -1} (1 \vee  \eta^2 m^{2\epsilon-2} t^{2-2\theta} ) (\log T)^2  \log^2 {12 \over \delta} \\ + q_3 \eta b^{-1} ( t^{-\min(\theta,1-\theta)} \vee m^{\epsilon-1}\eta t^{(1-2\theta)_+} ) \log T.
        \end{split}
        \ee
        Here, $q_1 = 2R^2 \zeta^{2\zeta},$ $q_2= {10^4 (R\kappa^{2\zeta} + \sqrt{M})^2 (\kappa/{\sqrt{\|\TK\|} + \sqrt{2 \sqrt{v} c_{\gamma} /\|\TK\|^{\gamma}} })^2 \over (1-\theta)^2 } ,$ and $q_3 = {208 M v \over 1-\theta}.$
\end{thm}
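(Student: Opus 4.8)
The plan is to feed the three estimates established in Sections~\ref{sec:biasSam} and~\ref{sec:comp} into the error decomposition \eref{errorDecompos}, after committing to a single value of the free regularization parameter $\lambda$ and apportioning the confidence budget. Throughout I would write $\eta_1 = \eta\kappa^{-2}$, so that the step-size hypotheses of Proposition~\ref{pro:initialErr}, Theorem~\ref{thm:sampleErr} and Proposition~\ref{pro:compErrB} reduce exactly to the condition \eref{etaRestriA} (indeed $\eta_1\kappa^2 = \eta \le \tfrac18$ at $t=1$), and I would choose $\lambda = m^{\epsilon-1}$. The role of \eref{sampleN} is precisely to make this $\lambda$ admissible, i.e.\ to guarantee $\tfrac{9\kappa^2}{m}\log\tfrac{m}{\delta_2} \le \lambda \le \|\TK\|$ with $\delta_2 = \delta/3$, so that the operator concentration bound \eref{differOperator} of Lemma~\ref{lem:differOperator} holds on an event of probability at least $1-\delta_2$. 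Splitting $\delta_1 = \delta_2 = \delta_3 = \delta/3$ and taking a union bound, the three events underlying \eref{lambdaNk}, \eref{differOperator} and \eref{sumY} hold simultaneously with probability at least $1-\delta$; note $\log\tfrac{4}{\delta_1} = \log\tfrac{12}{\delta}$, which is the source of the $\log^2\tfrac{12}{\delta}$ factor.

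On this event I would bound the three terms of \eref{errorDecompos} separately. For the bias, Proposition~\ref{pro:initialErr} applied with coefficient $\eta_1 = \eta\kappa^{-2}$ gives $\|\IK\mu_{t+1} - \FH\|_\rho \lesssim R\zeta^\zeta(\eta t^{1-\theta})^{-\zeta}$, whose square (doubled) produces the first term $q_1(\eta t^{1-\theta})^{-2\zeta}$, up to $\kappa$-powers absorbed into the constant. For the sample variance I would invoke the case $\zeta\ge 1/2$ of Theorem~\ref{thm:sampleErr}, namely \eref{gSampleErrB}. With $\lambda = m^{\epsilon-1}$ the decay factor $\tfrac{\kappa}{m\sqrt\lambda} + \tfrac{\sqrt{2\sqrt v c_\gamma}}{\sqrt{m\lambda^\gamma}}$ is dominated by its second summand, of order $m^{(\gamma(1-\epsilon)-1)/2}$, while the polynomial prefactor $\tfrac{8\lambda\eta_1 t^{1-\theta}}{1-\theta} + 4\log t + 4 + \sqrt2\,\eta_1\kappa^2$ is of order $(\log t)\,(1\vee \eta m^{\epsilon-1}t^{1-\theta})$. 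Squaring these and using $(1\vee x)^2 = 1\vee x^2$ yields exactly the second term $q_2 m^{\gamma(1-\epsilon)-1}(1\vee \eta^2 m^{2\epsilon-2}t^{2-2\theta})(\log T)^2\log^2\tfrac{12}{\delta}$.

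For the computational variance I would use Proposition~\ref{pro:compErrB}, whose standing hypotheses \eref{differOperator}, \eref{sumY} and \eref{etaRestri} are all in force on the event above; substituting $\lambda = m^{\epsilon-1}$ and $\eta_1 = \eta\kappa^{-2}$ into \eref{compuErrB} turns its right-hand side into $q_3\,\eta b^{-1}(t^{-\min(\theta,1-\theta)} \vee m^{\epsilon-1}\eta\, t^{(1-2\theta)_+})\log T$. Adding the three contributions and collecting constants gives \eref{totalErrGen} with $q_1,q_2,q_3$ as stated.

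The bulk of the genuine difficulty lies upstream, in the computational-variance estimate of Section~\ref{sec:comp} (the cumulative-error recursion of Lemmas~\ref{lemma:empiricalRelat}--\ref{lemma:empriskB} together with the weighted-factor bound of Lemma~\ref{lem:sqrtLKProd}); once those are in hand the present step is essentially bookkeeping. The one point that needs care is matching the exponents: one must check that with $\lambda = m^{\epsilon-1}$ the second summand of the sample-variance decay factor indeed dominates the first (equivalently $\gamma + \epsilon(1-\gamma) \ge 0$, always true) so that the clean power $m^{\gamma(1-\epsilon)-1}$ emerges, and that \eref{sampleN} is strong enough to place $\lambda$ in the admissible window required by Lemma~\ref{lem:differOperator}.
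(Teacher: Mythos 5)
Your proposal follows the paper's proof essentially verbatim: set $\lambda\simeq m^{\epsilon-1}$, split $\delta$ three ways to put \eref{differOperator}, \eref{sumY} and \eref{lambdaNk} on a common event via the union bound, and plug Proposition~\ref{pro:initialErr}, \eref{gSampleErrB} and \eref{compuErrB} into \eref{errorDecompos}. The only (cosmetic) deviation is that the paper takes $\lambda=\|\TK\|\,m^{\epsilon-1}$ rather than $m^{\epsilon-1}$, which is what makes $\lambda\le\|\TK\|$ automatic, matches \eref{sampleN} to the lower admissibility constraint of Lemma~\ref{lem:differOperator}, and produces the $\|\TK\|$-normalized factor in $q_2$; with your normalization the same argument goes through but the stated constants and the exact form of \eref{sampleN} would not come out as written.
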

\begin{proof}
Let $\lambda = \|\TK\| m^{\epsilon - 1}.$  Clearly, $\lambda \leq \|\TK\|.$
 For any $A \geq 0$ and $B\geq 1$, by applying \eref{exppoly} with $\zeta=1, x =(Bm)^{\epsilon}$ and $c = {\epsilon \over 2A B^{\epsilon}},$
 \be\label{exppolyB}
 A \log (Bm) = {A \over \epsilon} \log( (Bm)^{\epsilon}) \leq {A \over \epsilon} \log \left({2 A B^{\epsilon} \over \mathrm{e}\epsilon}\right) + {1 \over 2} m^{\epsilon} \leq {A \over \epsilon} \log \left({A B \over \epsilon} \right) + {1 \over 2} m^{\epsilon}.
 \ee
Using the above inequality with
  $A = {9\kappa^2 \over \|\TK\|}$ and $B={1 \over \delta_2},$
one can prove that the condition \eref{sampleN}
 ensures that ${9\kappa^2 \over m} \log {m \over \delta_2} \leq \lambda$ is satisfied with $\delta_2 = {\delta \over 3},$
  Therefore, by Lemma \ref{lem:differOperator}, \eref{differOperator} holds with probability at least $1-\delta_2.$
 Similarly the condition \eref{sampleN}
 implies that $m \geq 32 \log^2 {2 \over \delta_3}$ is satisfied with $\delta_3 = {\delta \over 3},$ and  thus by Lemma \ref{lem:sumY}, \eref{sumY} holds with probability at least $1-\delta_3.$
 Combining with Lemma \ref{lem:lambdaNk}, by taking the union bound, we know that with probability at least $1- \delta_1 - \delta_2 - \delta_3$, \eref{differOperator}, \eref{sumY} and \eref{lambdaNk} hold for all $k \in [T].$ Now, we can apply Propositions \ref{pro:sampleErrB} and \ref{pro:compErrB} to get \eref{gSampleErrB} and \eref{compuErrB}. Noting that by \eref{etaRestriA}, $\sqrt{2}\eta \leq 1,$ and by a simple calculation, we derive from \eref{gSampleErrB} that
  \bea
  \begin{split}
  &\|\IK \nu_{t+1} - \IK \mu_{t+1}\|_{\rho}^2 \\
  \leq& {4624(R\kappa^{2\zeta} + \sqrt{M})^2 (\kappa/{\sqrt{\|\TK\|} + \sqrt{2\sqrt{v} c_{\gamma} /\|\TK\|^{\gamma}} })^2 \over (1-\theta)^2 } m^{\gamma(1-\epsilon) -1} (1 \vee \lambda^2 \eta^2\kappa^{-4} t^{2-2\theta} \vee \log^2 t) \log^2 {4 \over \delta_1} \\
  \leq& {4624 (R\kappa^{2\zeta} + \sqrt{M})^2 (\kappa/{\sqrt{\|\TK\|} + \sqrt{2 \sqrt{v} c_{\gamma} /\|\TK\|^{\gamma}} })^2 \over (1-\theta)^2 } m^{\gamma(1-\epsilon) -1}  (1 \vee  \eta^2 m^{2\epsilon -2} t^{2-2\theta} ) (\log T)^2  \log^2 {4 \over \delta_1},
  \end{split}\eea
  where for the last inequality, we used $\|\TK\| \leq \kappa^2.$ Similarly, by a simple calculation, we get from \eref{compuErrB} that
  \bea
\mE_{\bf J}\|\IK \omega_{t+1} - \IK \nu_{t+1}\|_{\rho}^2
&\leq& {208 M v \over b(1-\theta)} (\eta t^{-\min(\theta,1-\theta)} \vee \lambda \eta^2 \kappa^{-2} t^{(1-2\theta)_+} ) (1 \vee \log t )\\
&\leq&  {208 M v \over b(1-\theta)} (\eta t^{-\min(\theta,1-\theta)} \vee m^{\epsilon -1}\eta^2 t^{(1-2\theta)_+} ) \log T.
\eea
Letting $\delta_1 = {\delta\over 3}$, and
introducing the above estimates and \eref{initialErrB} into \eref{errorDecompos}, we get \eref{totalErrGen}. The proof is complete.
\end{proof}

\begin{proof}[of Theorem \ref{thm:main}]
By choosing $\epsilon = 1 - {1 \over 2\zeta + \gamma}$ and $\theta=0$ in  Theorem \ref{thm:generalRate}, then the condition \eref{sampleN} reduces to
$m \geq m_{\delta}$, where
\be\label{numberDelta}
 m_{\delta} = \left( {18 \kappa^2 p \over \|\TK\|}  \log \left( {27 \kappa^2p \over \|\TK\| \delta}  \right)  \right)^{p}, \quad p = {2\zeta + \gamma \over 2\zeta + \gamma -1}.
\ee
The desired result thus follows by applying Theorem \ref{thm:generalRate}.
\end{proof}

\subsection{Non Attainable Case}
For the non-attainable case, we have the following general results on generalization errors for SGM.
\begin{thm}\label{thm:generalRateNon}
  Under Assumptions  \ref{as:noiseExp}, \ref{as:regularity} and \ref{as:eigenvalues}, let $\zeta \leq 1/2$, $T \in \mN$ with $T\geq 3,$ $\delta \in]0,1[$, $\eta_t = \eta \kappa^{-2} t^{-\theta} $ for all $t \in [T],$ with $\theta \in [0,1[$ and $\eta$ such that \eref{etaRestriA} and
 for some $\epsilon \in]0,1],$ \eref{sampleN} holds.
     Then the following holds with probability at least $1-\delta$: for all $t \in [T],$
        \be\label{totalErrGenNon}
        \begin{split}
        \mE_{\J}[\mcE(\omega_{t+1})] - \inf_{\HK} \mcE
        \lesssim \left( (\eta t^{1-\theta})^{-2\zeta}
        +  m^{\gamma(1-\epsilon) -1}  \right) (1 \vee  \eta m^{\epsilon-1} t^{1-\theta} )^3 \log^4 T  \log^2 {1 \over \delta} \\
        + \eta b^{-1} ( t^{-\min(\theta,1-\theta)} \vee m^{\epsilon-1}\eta t^{(1-2\theta)_+} ) \log T .
        \end{split}
        \ee
        {Here, the  constant in the upper bounds is positive and  depends only on $\kappa^2, \|\TK\|, M, v, \zeta, R,c_{\gamma}$, $\gamma$ and $\|\FH\|_{\infty}$  } .
\end{thm}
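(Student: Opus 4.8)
The plan is to follow verbatim the architecture of the proof of Theorem \ref{thm:generalRate}, merely substituting the non-attainable counterparts of the three ingredients (bias, sample variance, computational variance) and tracking how the extra terms propagate. First I would set $\lambda = \|\TK\| m^{\epsilon-1}$, so that $\lambda \le \|\TK\|$, and use the elementary inequality \eref{exppolyB} (with $A = 9\kappa^2/\|\TK\|$ and $B = 1/\delta_2$) to check that the sample-size hypothesis \eref{sampleN} forces $\frac{9\kappa^2}{m}\log\frac{m}{\delta_2}\le\lambda$ with $\delta_2 = \delta/3$ and simultaneously $m \ge 32\log^2\frac{2}{\delta_3}$ with $\delta_3=\delta/3$. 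Lemmas \ref{lem:differOperator} and \ref{lem:sumY} then guarantee \eref{differOperator} and \eref{sumY}, each with probability at least $1-\delta/3$. Combining these with the non-attainable bound in Lemma \ref{lem:lambdaNk} (taking $\delta_1=\delta/3$) and a union bound, with probability at least $1-\delta$ the events \eref{differOperator}, \eref{sumY}, and \eref{lambdaNkNon} hold simultaneously for every $k\in[T]$.

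On that event I would invoke the second part of Proposition \ref{pro:sampleErrB} for the sample variance \eref{gSampleErrBNon}, Proposition \ref{pro:compErrB} for the computational variance \eref{compuErrB}, and \eref{initialErrB} for the bias, and then substitute $\eta_1 = \eta\kappa^{-2}$ and $\lambda = \|\TK\|m^{\epsilon-1}$ throughout. Writing $L := 1\vee m^{\epsilon-1}\eta t^{1-\theta}$, the bias squares to $\lesssim (\eta t^{1-\theta})^{-2\zeta}$ and the computational variance reduces to $\eta b^{-1}(t^{-\min(\theta,1-\theta)}\vee m^{\epsilon-1}\eta t^{(1-2\theta)_+})\log T$ by exactly the computation used in the attainable case, since neither \eref{compuErrB} nor \eref{initialErrB} distinguishes the two regimes. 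Feeding the three bounds into the error decomposition \eref{errorDecompos} will yield \eref{totalErrGenNon}, provided the sample variance is controlled.

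The main obstacle, and the only genuinely new step, is the sample variance \eref{gSampleErrBNon}, whose second factor carries an extra summand absent from the attainable bound \eref{gSampleErrB}: the term proportional to $\big(\sqrt{\eta_1 t^{1-\theta}/m}+1\big)(m\lambda)^{-1/2}(\eta_1 t^{1-\theta})^{-\zeta}$, which couples the sample fluctuation to the bias regularity $(\eta t^{1-\theta})^{-\zeta}$ precisely because in the non-attainable regime $\|\mu_k\|_{\HK}$ grows (second part of Lemma \ref{lemma:htInfty}) rather than staying bounded. I would write \eref{gSampleErrBNon} as $F_1\cdot\log\frac{3T}{\delta_1}\cdot(F_{2a}+F_{2b})$, bound the prefactor by $F_1\lesssim L\log T$ (using $\|\TK\|\le\kappa^2$, so $\lambda\eta_1 t^{1-\theta}\lesssim m^{\epsilon-1}\eta t^{1-\theta}$, together with $\eta\le 1$), and observe $F_{2a}^2\lesssim m^{\gamma(1-\epsilon)-1}$ as before. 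The new part satisfies $F_{2b}^2\lesssim \big(\tfrac{\eta t^{1-\theta}}{m}+1\big)m^{-\epsilon}(\eta t^{1-\theta})^{-2\zeta}$, and the crucial elementary estimate is $\big(\tfrac{\eta t^{1-\theta}}{m}+1\big)m^{-\epsilon}\le 2L$, which holds because $\eta t^{1-\theta}m^{-1-\epsilon}\le m^{\epsilon-1}\eta t^{1-\theta}\le L$ for $\epsilon>0$ and $m^{-\epsilon}\le 1\le L$. This upgrades the quadratic factor $L^2$ (from $F_1^2$) to the cubic $L^3$; meanwhile the factor $\log\frac{3T}{\delta_1}$, forced by the union bound over all $k\in[T]$ in \eref{lambdaNkNon} in contrast to the $T$-free $\log\frac{4}{\delta_1}$ of \eref{lambdaNk}, supplies an additional $\log^2 T$ that raises $\log^2 T$ to $\log^4 T$. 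Collecting, the sample variance squares to $\lesssim \big[(\eta t^{1-\theta})^{-2\zeta}+m^{\gamma(1-\epsilon)-1}\big]L^3\log^4 T\log^2\frac1\delta$, and summing the three contributions through \eref{errorDecompos} gives \eref{totalErrGenNon}. I expect the bookkeeping of constants, all absorbed into the single implicit constant claimed in the statement, to be the only tedious part once this elementary inequality is in hand.
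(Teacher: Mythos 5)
Your proposal is correct and follows essentially the same route as the paper: the same choice $\lambda=\|\TK\|m^{\epsilon-1}$, the same union-bound setup via Lemmas \ref{lem:differOperator}, \ref{lem:sumY} and \ref{lem:lambdaNk}, and the same treatment of the extra term in \eref{gSampleErrBNon}, whose product with the prefactor is exactly what upgrades $L^2$ to $L^3$ and $\log^2 T$ to $\log^4 T$ in the paper's own calculation. No gaps.
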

\begin{proof}
The proof is similar to that for Theorem \ref{thm:generalRate}. We include the sketch only and omit the constants appeared.
Similar to the proof of Theorem \ref{thm:generalRate}, with $\lambda= \|\TK\|m^{\epsilon-1},$ one can prove that with probability at least $1- \delta_1 - \delta_2 - \delta_3$, \eref{differOperator}, \eref{sumY} and \eref{lambdaNkNon} hold for all $k \in [T].$ Now, we can apply Propositions \ref{pro:sampleErrB} and \ref{pro:compErrB} to get \eref{gSampleErrBNon} and \eref{compuErrB}. Noting that by \eref{etaRestri}, $\sqrt{2}\eta \leq 1,$ and by a simple calculation, we derive from \eref{gSampleErrBNon} that
  \begin{align*}
  &\|\IK \nu_{t+1} - \IK \mu_{t+1}\|_{\rho}^2\\
  \lesssim& m^{\gamma(1-\epsilon) -1} (1 \vee  \eta^2 m^{2\epsilon-2} t^{2-2\theta} ) \log^4 T  \log^2 {1 \over \delta}
  + (\eta t^{1-\theta} )^{-2\zeta} (1\vee \eta t^{1-\theta} m^{-1})(1 \vee  \eta^2 m^{2\epsilon-2} t^{2-2\theta} ) \log^4 T  \log^2 {1 \over \delta}\\
  \lesssim& m^{\gamma(1-\epsilon) -1} (1 \vee  \eta^2 m^{2\epsilon-2} t^{2-2\theta} ) \log^4 T  \log^2 {1 \over \delta}
  + (\eta t^{1-\theta} )^{-2\zeta} (1 \vee  \eta m^{\epsilon-1} t^{1-\theta} )^3 \log^4 T  \log^2 {1 \over \delta}.
  \end{align*}
The rest of the proof parallelizes to that for Theorem \ref{thm:generalRate}.
\end{proof}

Now, we are in a position to prove Theorem \ref{thm:generalRateNonFix}.
\begin{proof}[of Theorem \ref{thm:generalRateNonFix}]
 The second part of the theorem follows directly from applying Theorem \ref{thm:generalRateNon} with $\theta=0.$  The first part can be proved by applying Theorem \ref{thm:generalRateNon} with $\theta=0$ and $\epsilon= 1- {1 \over 2\zeta+\gamma}$, combining with the same argument from the proof of Theorem \ref{thm:main} to verify the condition \eref{sampleN}. We omit the details.
\end{proof}


\subsection{Batch GM}
Following the proof of Theorems \ref{thm:main} and \ref{thm:generalRateNonFix}, we know that the following results hold for batch GM, from which one can prove Theorem \ref{thm:bgmopt}.

\begin{thm}\label{thm:bgmoptGen}
  Under Assumptions  \ref{as:noiseExp}, \ref{as:regularity} and \ref{as:eigenvalues}, set $\eta_t = \eta \kappa^{-2} $ with $\eta\leq 1$, for all $t \in [m].$ With probability at least $1-\delta$ ($0< \delta < 1$), the following holds for the learning sequence generated by \eref{Alg2B}: \\
     1) if $\zeta>1/2$ and $m \geq m_{\delta}$ with $m_{\delta}$ given by \eref{numberDelta}, then
      \be\label{eq:bgmBound}
        \begin{split}
        \mE_{\J} [\mcE(\omega_{t+1})] - \inf_{ \HK}\mcE \lesssim  (\eta t)^{-2\zeta} +  m^{-{2\zeta \over 2\zeta+\gamma}} (1 +  m^{-{1 \over 2\zeta + \gamma}} \eta t )^2 \log^2 T \log^2 {1 \over \delta};
        \end{split}
        \ee
2) if $\zeta\leq 1/2,$ $2\zeta+\gamma >1$ and $m \geq m_{\delta}$ with $m_{\delta}$ given by \eref{numberDelta}, then
      \bea
        \begin{split}
        \mE_{\J}[\mcE(\omega_{t+1})] - \inf_{ \HK} \mcE
        \lesssim \left( (\eta t)^{-2\zeta}
        +  m^{-{2\zeta \over 2\zeta+\gamma}}  \right) (1 \vee   m^{-{1\over 2\zeta+\gamma}} \eta t )^3 \log^4 T  \log^2 {1 \over \delta};
        \end{split}
        \eea
3) if $2\zeta+\gamma\leq 1$ and for some $\epsilon \in]0,1],$
    \eref{sampleN} hold, then
\bea
\begin{split}
        \mE_{\J}[\mcE(\omega_{t+1})] - \inf_{\HK} \mcE
        \lesssim \left( (\eta t)^{-2\zeta}
        +  m^{\gamma(1-\epsilon) -1}  \right) (1 \vee  \eta m^{\epsilon-1} t )^3 \log^4 T  \log^2 {1 \over \delta}.
        \end{split}
\eea
{Here, all the  constants in the upper bounds are positive and  depend only on $\kappa^2, \|\TK\|, M, v, \zeta, R,c_{\gamma}$ and $\gamma$ (and also on $\|\FH\|_{\infty}$ when $\zeta<1/2$) } .
\end{thm}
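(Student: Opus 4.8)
The plan is to specialize the SGM analysis to batch GM, exploiting the one crucial simplification that batch GM carries \emph{no} computational variance. Observe first that the batch GM recursion \eref{Alg2B} is literally the sample iteration \eref{Alg2}: with $\TX = {1\over m}\sum_{i=1}^m \la\cdot,x_i\ra_{\HK}x_i$ and $\SX^*{\bf y} = {1\over m}\sum_{i=1}^m y_i x_i$, the update $\nu_{t+1} = \nu_t - \eta_t(\TX\nu_t - \SX^*{\bf y})$ coincides with \eref{Alg2B}. Consequently the learning sequence equals the sample iteration, so in the error decomposition \eref{errorDecompos} the computational-variance term vanishes identically, and the decomposition collapses to
\[
\mcE(\nu_{t+1}) - \inf_{\HK}\mcE = \|\IK\nu_{t+1} - \FH\|_{\rho}^2 \leq 2\|\IK\mu_{t+1} - \FH\|_{\rho}^2 + 2\|\IK\nu_{t+1} - \IK\mu_{t+1}\|_{\rho}^2.
\]
Thus only the bias and the sample variance must be controlled, and in particular Lemma \ref{lem:sumY} together with the whole apparatus of Section \ref{sec:comp} is not needed here.

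First I would fix the regularization level $\lambda = \|\TK\| m^{\epsilon-1}$ and take $\theta = 0$ (the step-size being $\eta_t = \eta\kappa^{-2}$). Exactly as in the proof of Theorem \ref{thm:generalRate}, inequality \eref{exppolyB} shows that the sample-size hypothesis \eref{sampleN} guarantees ${9\kappa^2 \over m}\log{m \over \delta_2}\leq\lambda$ with $\delta_2 = \delta/2$, so \eref{differOperator} holds with probability at least $1-\delta/2$ by Lemma \ref{lem:differOperator}. Combining this with Lemma \ref{lem:lambdaNk} applied with $\delta_1 = \delta/2$ and taking a union bound, I obtain that, with probability at least $1-\delta$, both \eref{differOperator} and \eref{lambdaNk} (resp. \eref{lambdaNkNon}) hold for all $k\in[T]$. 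Note that only two failure events enter here, rather than the three of the SGM proofs, precisely because \eref{sumY} is no longer required.

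Next I would insert the two already-established estimates. For the bias, Proposition \ref{pro:initialErr} with $\theta=0$ gives $\|\IK\mu_{t+1}-\FH\|_{\rho}\lesssim(\eta t)^{-\zeta}$, contributing the $(\eta t)^{-2\zeta}$ term. For the sample variance I would invoke Theorem \ref{thm:sampleErr}: in the attainable regime $\zeta>1/2$ I would plug \eref{gSampleErrB} with $\lambda=\|\TK\|m^{\epsilon-1}$ and, verbatim as in Theorem \ref{thm:generalRate}, obtain a contribution of order $m^{\gamma(1-\epsilon)-1}(1\vee\eta^2 m^{2\epsilon-2}t^2)\log^2 T\log^2(1/\delta)$; in the non-attainable regime $\zeta\leq 1/2$ I would instead plug \eref{gSampleErrBNon} and reproduce the reorganization carried out in the proof of Theorem \ref{thm:generalRateNon}, which yields the extra $\log^4 T$ factor and the cubic factor $(1\vee\eta m^{\epsilon-1}t)^3$ absorbing the $(\eta t)^{-2\zeta}$ cross term. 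Feeding both bounds through the collapsed decomposition then reproduces precisely \eref{totalErrGen} and \eref{totalErrGenNon} with the computational-variance summand $q_3\eta b^{-1}(\cdots)$ simply deleted. Finally, choosing $\epsilon = 1 - {1\over 2\zeta+\gamma}$ whenever $2\zeta+\gamma>1$ turns $m^{\gamma(1-\epsilon)-1}$ into $m^{-2\zeta/(2\zeta+\gamma)}$ and, by the computation in the proof of Theorem \ref{thm:main}, reduces \eref{sampleN} to $m\geq m_\delta$ with $m_\delta$ as in \eref{numberDelta}; in the remaining case $2\zeta+\gamma\leq 1$ I would keep $\epsilon$ free and retain the hypothesis in its raw form \eref{sampleN}. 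With $T=m$ this yields the three stated bounds.

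The argument involves no genuinely new difficulty, since the two hard estimates—the bias bound of Proposition \ref{pro:initialErr} and, above all, the refined sample-variance bound of Theorem \ref{thm:sampleErr} (which exploits the capacity condition through $\lambda$)—are already in hand, and it is exactly this refinement that secures optimal rates in the non-attainable case without any extra unlabeled data. The only delicate bookkeeping lies in the non-attainable regime $\zeta\leq 1/2$: the estimate \eref{gSampleErrBNon} contains several competing terms, and one must check carefully that, after substituting $\lambda=\|\TK\|m^{\epsilon-1}$ and $\theta=0$, they reorganize into the compact form $\bigl((\eta t)^{-2\zeta}+m^{\gamma(1-\epsilon)-1}\bigr)(1\vee\eta m^{\epsilon-1}t)^3$, mirroring the corresponding step in the proof of Theorem \ref{thm:generalRateNon}. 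Apart from this, the essential point is simply the recognition that the computational variance is absent for batch GM.
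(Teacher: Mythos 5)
Your proposal is correct and follows exactly the route the paper intends: it identifies the batch GM sequence with the sample iteration $\nu_t$, drops the computational-variance term from the decomposition \eref{errorDecompos}, and then reruns the proofs of Theorems \ref{thm:generalRate} and \ref{thm:generalRateNon} (bias bound from Proposition \ref{pro:initialErr} plus sample-variance bounds \eref{gSampleErrB}/\eref{gSampleErrBNon} with $\lambda=\|\TK\|m^{\epsilon-1}$, $\theta=0$, and the appropriate choice of $\epsilon$). The paper gives essentially no further detail for this theorem beyond "following the proof of Theorems \ref{thm:main} and \ref{thm:generalRateNonFix}," so your write-up is a faithful, somewhat more explicit rendering of the same argument.
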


\section{Convergence in $\HK$-norm}\label{sec:hnorm}
In this section, we will give convergence results in $\HK$-norm for Algorithm \ref{alg:1} in the attainable case.
For the sake of simplicity, we will only consider a fixed step-size sequence, i.e, $\eta_t = \eta$ for all $t$.

Using a similar  procedure  as that for \eref{errorDecompos}, we can prove the following error decomposition,
\be\label{eq:errDecHn}
\mE_{\bf J}[\|\omega_{t} - \omega^{\dag}\|_{\HK}^2] \lesssim \|\mu_{t} - \omega^{\dag}\|_{\HK}^2 + \|\mu_{t} - \nu_t\|_{\HK}^2 + \mE_{\bf J}[\|\omega_{t} - \nu_t\|_{\HK}^2].
\ee
To estimate the bias term, $\|\mu_{t} - \omega^{\dag}\|_{\HK}^2$, we introduce the following lemma from \citep{yao2007early,rosasco2015learning}. Its proof is similar as that for \eref{initialErrB}
and will be given in Appendix \ref{sec:prov_estimates} for the sake of completeness.

\begin{lemma}\label{lemma:biasHnorm}
 Under Assumption \ref{as:regularity},
 let $\zeta \geq 1/2$ and $\eta_t = \eta $ for all $t\in \mN$, with $\eta \in ]0,\kappa^{-2}]$, then
\be\label{initialErrBHnorm}
  \|\mu_{t+1} - \omega^{\dag}\|_{\HK} \leq R \left( \zeta-1/2 \over \eta t \right)^{\zeta-1/2}.
  \ee
\end{lemma}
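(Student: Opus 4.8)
The plan is to transcribe the bias argument behind \eqref{initialErrB} from $\LR$ into $\HK$, replacing $\LK$ by $\TK$ and $\FH$ by $\omega^{\dag}$. First I would identify $\omega^{\dag}$ as the fixed point of the population recursion \eqref{Alg3}. Since $\zeta \geq 1/2$ is the attainable case we have $\IK \omega^{\dag} = \FH$, and \eqref{frFH} gives $\IK^* f_{\rho} = \IK^* \FH$, so $\TK \omega^{\dag} = \IK^*\IK\omega^{\dag} = \IK^*\FH = \IK^* f_{\rho}$. Substituting this into \eqref{Alg3} yields the clean one-step relation $\mu_{t+1} - \omega^{\dag} = (I - \eta \TK)(\mu_{t} - \omega^{\dag})$; iterating from $\mu_1 = 0$ gives $\mu_{t+1} - \omega^{\dag} = -(I - \eta\TK)^{t}\omega^{\dag}$, whence $\|\mu_{t+1} - \omega^{\dag}\|_{\HK} = \|(I-\eta\TK)^{t}\omega^{\dag}\|_{\HK}$.

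Next I would convert Assumption \ref{as:regularity} into a source condition on $\omega^{\dag}$ in $\HK$. Let $\{(\sigma_i, v_i, u_i)\}$ be the singular system of $\IK$, so that $\IK v_i = \sqrt{\sigma_i}\,u_i$, $\TK v_i = \sigma_i v_i$ and $\LK u_i = \sigma_i u_i$. Writing $\FH = \LK^{\zeta}h$ with $\|h\|_{\rho} \leq R$ and expanding $h = \sum_i h_i u_i$ on the closure of $\HR$, the minimal-norm solution (determined by $\IK\omega^{\dag} = \FH$ together with $\omega^{\dag} \perp \ker \IK$) is obtained by matching coefficients, $a_i \sqrt{\sigma_i} = \sigma_i^{\zeta} h_i$, which gives $\omega^{\dag} = \sum_i \sigma_i^{\zeta - 1/2}h_i v_i = \TK^{\zeta-1/2}\tilde h$, where $\tilde h = \sum_i h_i v_i$ satisfies $\|\tilde h\|_{\HK} = \|h\|_{\rho} \leq R$. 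Hence $\|\mu_{t+1} - \omega^{\dag}\|_{\HK} \leq R\,\|(I-\eta\TK)^{t}\TK^{\zeta-1/2}\|$.

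Finally I would apply Lemma \ref{lemma:initialerror} with $L = \TK$, exponent $\zeta' = \zeta - 1/2 \geq 0$, lower index $k = 0$ (so $\Pi_1^t(\TK) = (I-\eta\TK)^t$), and the constant step-size (so $\sum_{j=1}^{t}\eta_j = \eta t$); its hypothesis $\eta\|L\| \leq \eta\kappa^2 \leq 1$ holds since $\|\TK\| \leq \tr(\TK) \leq \kappa^2$. This gives $\|(I-\eta\TK)^{t}\TK^{\zeta-1/2}\| \leq (\tfrac{\zeta-1/2}{\mathrm{e}\,\eta t})^{\zeta-1/2} \leq (\tfrac{\zeta-1/2}{\eta t})^{\zeta-1/2}$, and multiplying by $R$ yields \eqref{initialErrBHnorm} (with the convention $0^0 = 1$ recovering the trivial bound $R$ at $\zeta = 1/2$). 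The only genuinely delicate point is the representation step: the source condition is stated on $\LK^{-\zeta}\FH$ in $\LR$, whereas the bound needs it expressed through $\TK^{\zeta-1/2}$ in $\HK$, so the half-power shift produced by $\IK^*$ must be tracked carefully through the singular system; everything else is a direct copy of the computation proving \eqref{initialErrB} in Proposition \ref{pro:initialErr}.
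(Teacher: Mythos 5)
Your proof is correct and follows essentially the same route as the paper's: both derive the contraction $\mu_{t+1}-\omega^{\dag}=(I-\eta\TK)(\mu_t-\omega^{\dag})$, translate Assumption \ref{as:regularity} into the $\HK$-source condition $\omega^{\dag}\in \TK^{\zeta-1/2}(\{\|\cdot\|_{\HK}\le R\})$ (the paper via the intertwining identity $\IK^*\LK^{\zeta}=\TK^{\zeta}\IK^*$ and the pseudo-inverse $\TK^{\dag}$, you via the singular system of $\IK$ — an equivalent bookkeeping), and conclude with Lemma \ref{lemma:initialerror} applied to $\|\Pi_1^t(\TK)\TK^{\zeta-1/2}\|$.
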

To estimate the sample variance term, $\|\mu_{t} - \nu_t\|_{\HK}^2$, we use \eref{eq:interm2} and get that
\bea
&&\|\nu_{t+1} - \mu_{t+1}\|_{\HK} = \left\|\sum_{k=1}^t \eta_k \Pi_{k+1}^t(\TX)  N_k\right\|_{\HK} \\
&\leq& \sum_{k=1}^t \eta_k \left\|\TKL^{-{1\over 2}}\right\|\left\|\TKL^{-{1\over 2}}\Pi_{k+1}^t(\TX)  N_k\right\|_{\HK}\leq  {1\over \sqrt{\lambda}} \sum_{k=1}^t \eta_k \left\|\TKL^{1\over 2}\Pi_{k+1}^t(\TX)  N_k\right\|_{\HK}.
\eea
From the proof of Theorem \ref{thm:sampleErr}, we know that $\sum_{k=1}^t \eta_k \left\|\TKL^{1\over 2}\Pi_{k+1}^t(\TX)  N_k\right\|_{\HK}$ is upper bounded by the right-hand side of \eref{gSampleErrB}. With $\eta_t = \eta$ and $\lambda = \|\TK\| m^{-{1\over 2\zeta+\gamma}}$, we thus have
\be\label{eq:samErrHnorm}
\|\nu_{t+1} - \mu_{t+1}\|_{\HK} \lesssim m^{-{\zeta-1/2 \over 2\zeta+\gamma}}  (1 +  m^{-{1\over 2\zeta+\gamma}}\eta t )\log t\log {1 \over \delta}.
\ee
Finally, for the computational variance term, $\mE_{\bf J}[\|\omega_{t} - \nu_t\|_{\HK}^2]$, we use a same procedure as that for \eref{eq:cul_err} to get
\be\label{eq:compErrHnorm}
\mE_{\bf J}\| \omega_{t+1} -  \nu_{t+1}\|_{\HK}^2
\leq {\kappa^2 \over b} \sum_{k=1}^t \eta^2 \left\|\Pi^t_{k+1}(\TX)\right\|^2  \mE_{\J}[\mcE_{\bf z}(\omega_k)]
\lesssim {\eta^2  t \over b} ,
\ee
where we used \eref{empiricalBConse} and \eref{sumY} in the last inequality. Introducing \eref{initialErrBHnorm}, \eref{eq:samErrHnorm} and \eref{eq:compErrHnorm} into the error decomposition \eref{eq:errDecHn}, we can prove Theorem \ref{thm:hnorm}.

\section{Numerical Simulations}\label{sec:numerical}
\begin{figure}[h]
    \centering
    \begin{subfigure}[]
    {0.3\textwidth} 	
    \includegraphics[width=\textwidth]{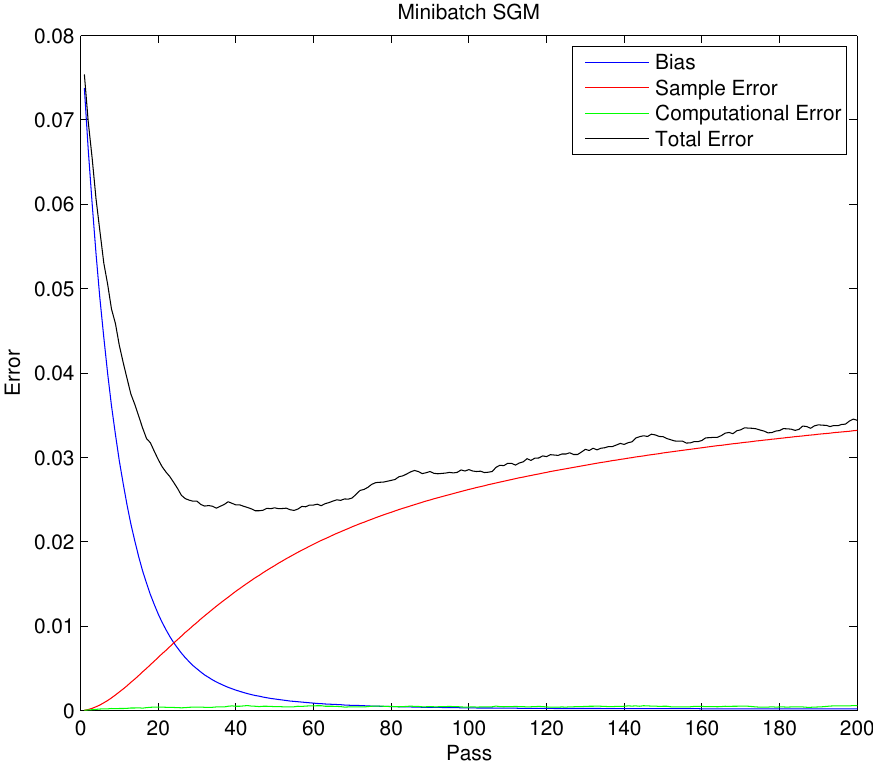}
    \caption{Minibatch SGM}
    \label{fig:minibatch_SGM}
    \end{subfigure}
     ~
    \begin{subfigure}[]
    {0.3\textwidth}
    \includegraphics[width=\textwidth]{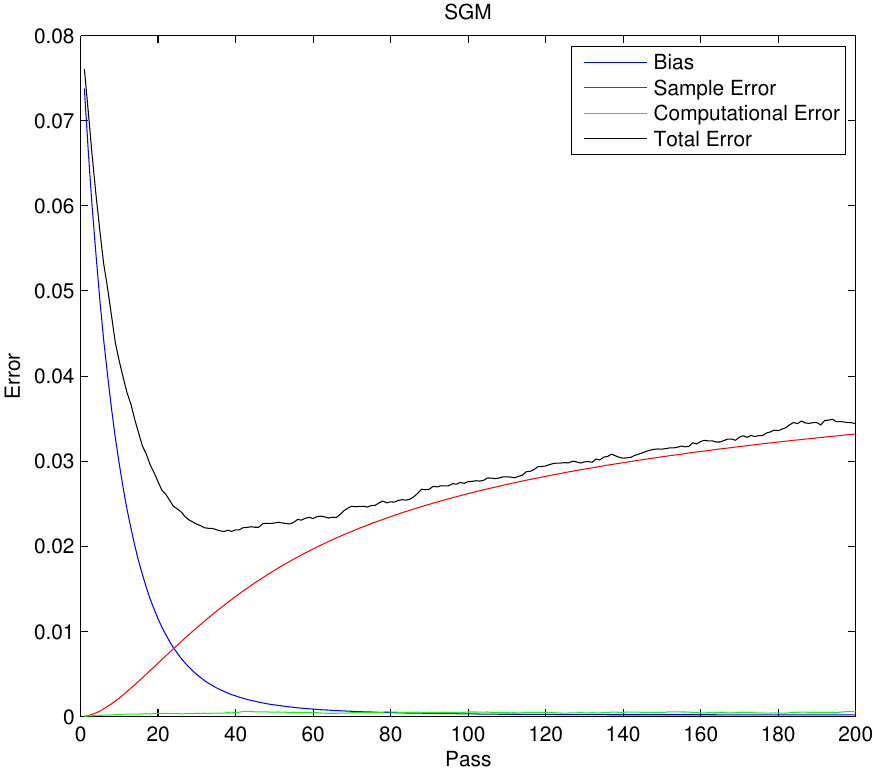} \caption{SGM}
    \label{fig:SGM}
    \end{subfigure}
    ~
    \begin{subfigure}[]
    {0.3\textwidth}
    \includegraphics[width=\textwidth]{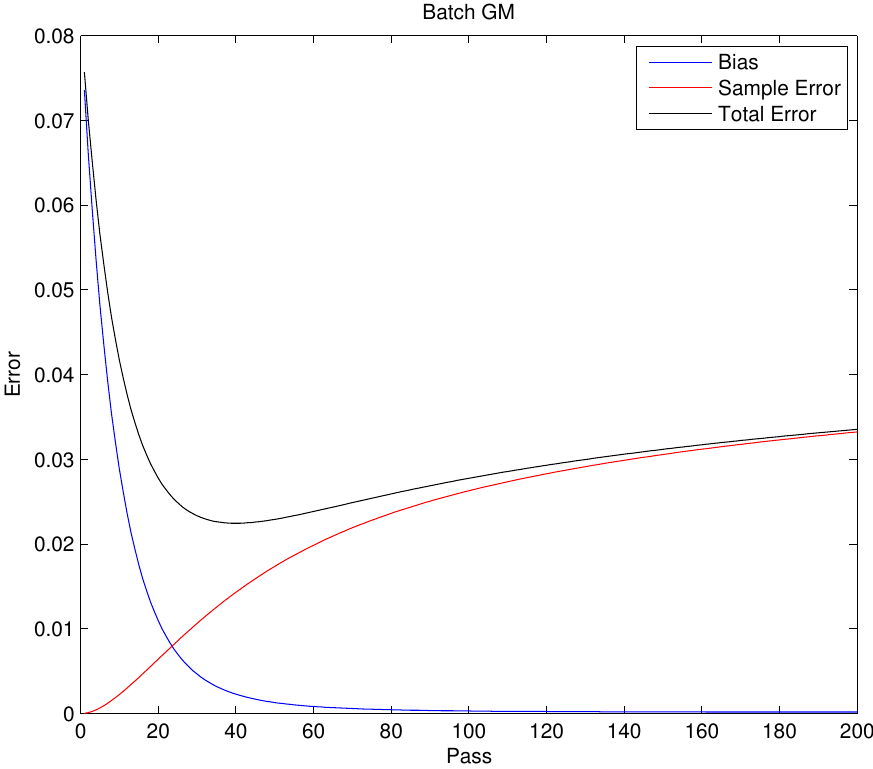}
    \caption{Batch GM}
    \label{fig:batchGM}
    \end{subfigure}
    \caption{Error decompositions for gradient-based learning algorithms on {\em synthesis data}, where {\em m = 100}.}
    \label{fig:comparisions}
\end{figure}

\begin{figure}[]
    \centering
    \begin{subfigure}[]
    {0.3\textwidth} 	
    \includegraphics[width=\textwidth]{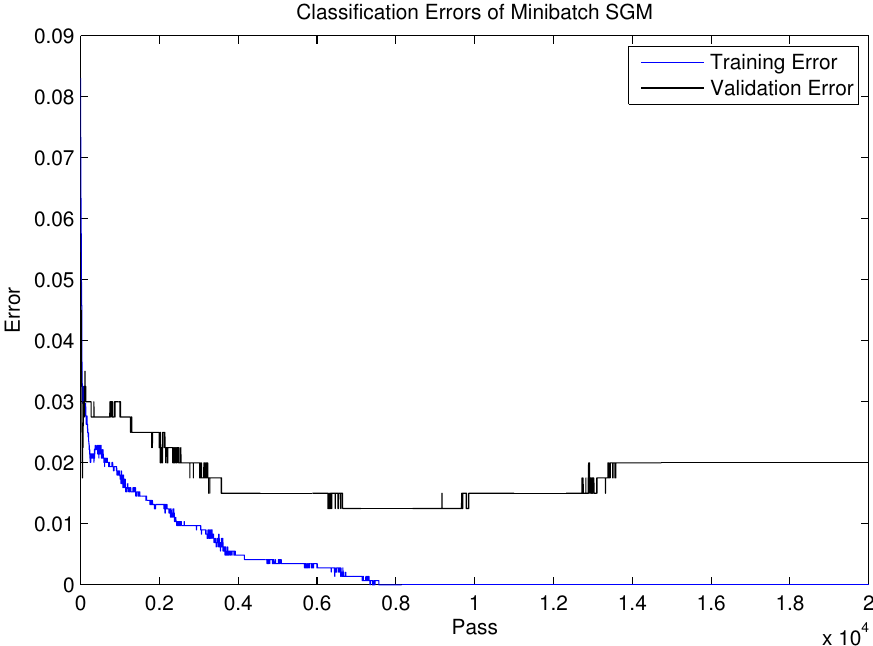}
    \caption{Minibatch SGM}
    \label{fig:minibatch_SGM_bc}
    \end{subfigure}
     ~
    \begin{subfigure}[]
    {0.3\textwidth}
    \includegraphics[width=\textwidth]{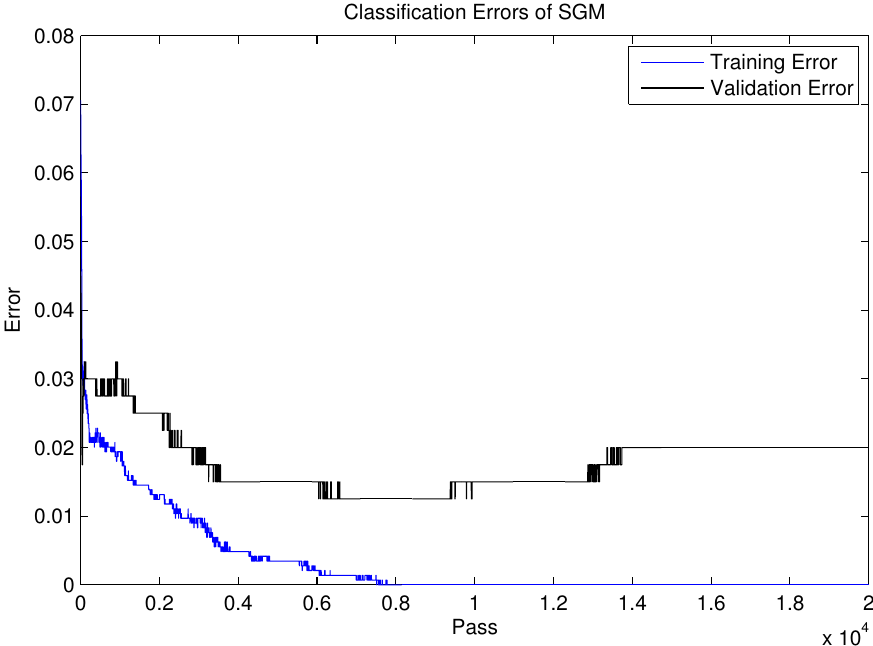}
    \caption{SGM}
    \label{fig:SGM_bc}
    \end{subfigure}
    ~
    \begin{subfigure}[]
    {0.3\textwidth}
    \includegraphics[width=\textwidth]{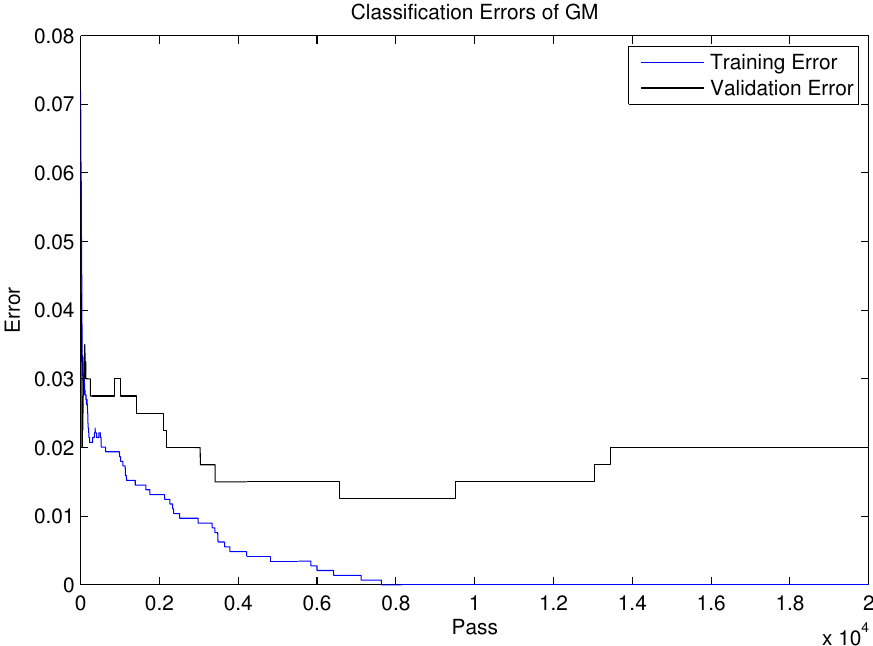}
    \caption{Batch GM}
    \label{fig:batchGM_bc}
    \end{subfigure}
    \caption{Misclassification Errors for gradient-based learning algorithms on {\em BreastCancer} dataset.}
    \label{fig:comparisions_bc}
\end{figure}

 In order to illustrate our theoretical results and the error decomposition, we first performed some simulations on a simple problem. We constructed $m=100$ i.i.d. training examples of the form $y= f_{\rho}(x_i) + \omega_i$.
Here, the regression function is $f_{\rho}(x) = |x - 1/2| - 1/2,$
the input point $x_i$ is uniformly distributed in $[0,1],$ and $\omega_i$ is a Gaussian noise with zero mean and standard deviation $1,$ for each $i\in [m].$
We perform three experiments with the same $\HK$, a RKHS associated
with a Gaussian kernel $K(x,x') = \exp(-(x-x')^2/(2\sigma^2))$ where $\sigma=0.2$. In the first experiment, we run mini-batch SGM, where the mini-batch size $b=\sqrt{m},$ and the step-size $\eta_t = 1/(8\sqrt{m})$.
In the second experiment, we run simple SGM where the step-size is fixed as $\eta_t = 1/(8m)$, while in the third experiment, we run batch GM using the fixed step-size $\eta_t = 1/8.$
For mini-batch SGM and SGM, the total error $\|\IK \omega_{t} - f_{\rho}\|_{\LRH}^2$, the bias $\|\IK \hat{\mu}_{t} - f_{\rho}\|_{\LRH}^2$, the sample variance $\|\IK \nu_{t} - \IK \hat{\mu}_t\|_{\LRH}^2$
and the computational variance $\|\IK \omega_t - \IK \nu_t\|_{\LRH}^2$, averaged over 50 trials, are depicted in Figures \ref{fig:minibatch_SGM} and \ref{fig:SGM}, respectively.
For batch GM, the total error $\|\IK \nu_{t} - f_{\rho}\|_{\LRH}^2$, the bias $\|\IK \hat{\mu}_{t} - f_{\rho}\|_{\LRH}^2$ and the sample variance $\|\IK \nu_{t} - \hat{\mu}_t\|_{\LRH}^2,$
 averaged over 50 trials are depicted in Figure \ref{fig:batchGM}. Here,
 we replace the unknown marginal distribution $\rho_{X}$ by an empirical measure $\hat{\rho} = {1 \over 2000}\sum_{i=1}^{2000} \delta_{\hat{x}_i},$
 where each $\hat{x}_i$ is uniformly distributed in $[0,1].$
From Figure \ref{fig:minibatch_SGM} or \ref{fig:SGM}, we see that as the number of passes increases\footnote{Note that the terminology `running the algorithm with $p$ passes' means  `running the algorithm
with $\lceil mp/b \rceil$ iterations', where $b$ is the mini-batch size.},
 the bias decreases, while the sample error increases.
Furthermore, we see that in comparisons with the bias and the sample error, the computational error is negligible.
In all these experiments,  the minimal total error is achieved when the bias and the sample error are balanced. These empirical results show the effects of the three terms from the error decomposition, and complement the derived bound \eref{mainTotalErr}, as well as the regularization effect of the number of passes over the data.
Finally, we tested the simple SGM, mini-batch SGM, and batch GM, using similar step-sizes as those in the first simulation,
on the {\it BreastCancer} data-set\footnote{\url{https://archive.ics.uci.edu/ml/datasets/}}. The classification errors on the training set and the testing set of these three algorithms are
depicted in Figure \ref{fig:comparisions_bc}. We see that all of these algorithms perform similarly, which complement the bounds in  Corollaries \ref{cor:MPSGMB}, \ref{cor:MbSGMB} and \ref{cor:BSGM}.


\acks{
This material is based upon work supported by the Center for Brains, Minds and Machines (CBMM), funded by NSF STC award CCF-1231216. L. R. acknowledges the financial support of the Italian Ministry of Education, University and Research FIRB project RBFR12M3AC.}

\appendices

\section{Learning with Kernel Methods}\label{app:learning}
Let the input space $\Xi$ be a closed subset of Euclidean space $\mR^d$, the output space $Y \subseteq \mR$. Let $\mu$ be an unknown but fixed Borel probability measure on $\Xi \times Y$. Assume that $\mathbf \{(\xi_i, y_i)\}_{i=1}^m$ are i.i.d. from the distribution  $\mu$. A reproducing kernel $K$ is a symmetric function $K: \Xi
\times \Xi \to \mR$ such that $(K(u_i, u_j))_{i, j=1}^\ell$ is
positive semidefinite for any finite set of points
$\{u_i\}_{i=1}^\ell$ in $\Xi$. The kernel $K$ defines a reproducing
kernel Hilbert space (RKHS) $(\mathcal{H}_K, \|\cdot\|_K)$ as the
completion of the linear span of the set $\{K_{\xi}(\cdot):=K(\xi,\cdot):
\xi\in \Xi\}$ with respect to the inner product $\la K_{\xi},
K_u\ra_{K}:=K(\xi,u).$ For any $f \in \mathcal{H}_K$, the reproducing property holds: $f(\xi) = \la K_{\xi}, f\ra_K.$

\begin{Exa}
  [Sobolev Spaces]
  Let $X=[0,1]$ and the kernel
  $$
   K(x,x') =
   \begin{cases}
   (1-y)x, & x\leq y; \\
   (1-x)y, & x \geq y.
  \end{cases}
  $$
  Then the kernel induces a Sobolev Space $\HK = \{f : X \to \mR | f \mbox{ is absolutely continuous }, f(0) = f(1) =
0, f \in L^2(X)\}. $
\end{Exa}
In learning with kernel methods, one considers the following minimization problem
$$ \inf_{f\in \mathcal{H}_K} \int_{\Xi \times Y} (f(\xi) - y)^2 d\mu(\xi,y).$$
Since $f(\xi) = \la K_{\xi},f\ra $ by the reproducing property, the above can be rewritten as
$$ \inf_{f\in \mathcal{H}_K} \int_{\Xi \times Y} (\la f, K_{\xi} \ra - y)^2 d\mu(\xi,y).$$
Letting $X = \{K_{\xi}: \xi \in \Xi\}$ and defining another probability measure
$\rho(K_{\xi},y) = \mu(\xi,y)$, the above reduces to the learning setting in Section \ref{sec:learning}.

\section{Further Corollaries for SGM in the non-attainable case}\label{app:further}
In this section, we state the convergence results for the SGM with different parameter choices similar as those in Corollaries \ref{cor:MPSGMA}--\ref{cor:BSGM}, in the non-attainable case.
These results are direct consequences of Theorem \ref{thm:generalRateNonFix}.
\begin{corollary}
 Under Assumptions \ref{as:noiseExp}, \ref{as:regularity} and \ref{as:eigenvalues}, let $\zeta \leq 1/2$ , $\delta\in ]0,1[$, $b =1$, and $\eta_t \simeq m^{-{2\zeta \over (2\zeta+\gamma)\vee 1}}$ for all $t \in [m^2]$. With probability at least $1-\delta$, the following holds:\\
1) if $2\zeta+\gamma>1$, $m \geq m_{\delta}$ and $T^* = \lceil m^{{2\zeta+ 1\over 2\zeta+\gamma}} \rceil$, then we have \eref{minimaxBoundNonA};\\
2) if $2\zeta+\gamma \leq 1$, and for some $\epsilon \in ]0,1[$, $m \geq m_{\delta, \epsilon}$, and
 $T^* = \lceil m^{1 + 2\zeta - \epsilon}\rceil,$
 then we have \eref{minimaxBoundNonB}.
\end{corollary}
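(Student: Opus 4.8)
The plan is to obtain both parts directly from Theorem~\ref{thm:generalRateNonFix}, observing that the prescribed step-size $\eta_t \simeq m^{-2\zeta/((2\zeta+\gamma)\vee 1)}$ is constant in $t$ (it depends only on $m$) and therefore falls within the fixed step-size setting $\eta_t = \eta\kappa^{-2}$ of that theorem. First I would check its hypotheses. Since the algorithm runs over $t\in[m^2]$, the horizon is $T=m^2$, and the admissibility condition $\eta\lesssim 1/(8(\log T+1))\simeq 1/\log m$ holds for all large $m$ because $\eta$ decays polynomially in $m$ whereas the bound decays only logarithmically. I would also record that the chosen stopping index $T^*$ satisfies $T^*\leq m^2$: in case~1 this follows from $\frac{2\zeta+1}{2\zeta+\gamma}\leq 2$, which is equivalent to $2\zeta+2\gamma\geq 1$ and hence implied by $2\zeta+\gamma>1$; in case~2 it follows from $1+2\zeta-\epsilon\leq 2$, which holds since $2\zeta\leq 1$. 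The smallness requirement $m\geq m_\delta$ (resp. $m\geq m_{\delta,\epsilon}$) is inherited verbatim from the theorem, so no new condition on $m$ is introduced.

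The heart of the argument is a short computation of $\eta T^*$ that collapses the amplification factors $(1\vee\cdots)$ appearing in the theorem to $1$. In case~1 ($2\zeta+\gamma>1$), with $\eta\simeq m^{-2\zeta/(2\zeta+\gamma)}$ and $T^*\simeq m^{(2\zeta+1)/(2\zeta+\gamma)}$, one gets $\eta T^*\simeq m^{1/(2\zeta+\gamma)}$, so $m^{-1/(2\zeta+\gamma)}\eta T^*\simeq 1$ and $(1\vee m^{-1/(2\zeta+\gamma)}\eta T^*)^3=1$. Plugging into part~1 of the theorem, the bias term $(\eta T^*)^{-2\zeta}$ and the sample-variance term $m^{-2\zeta/(2\zeta+\gamma)}$ both equal $m^{-2\zeta/(2\zeta+\gamma)}$, producing the target rate $m^{-2\zeta/(2\zeta+\gamma)}\log^4 m\log^2(1/\delta)$ of \eref{minimaxBoundNonA}; with $b=1$ the computational-variance term is $\eta\log T^*\simeq m^{-2\zeta/(2\zeta+\gamma)}\log m$, which is absorbed by the leading $\log^4 m$ factor.

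In case~2 ($2\zeta+\gamma\leq 1$), with $\eta\simeq m^{-2\zeta}$ and $T^*\simeq m^{1+2\zeta-\epsilon}$, one computes $\eta T^*\simeq m^{1-\epsilon}$, so $m^{\epsilon-1}\eta T^*\simeq 1$ and again $(1\vee m^{\epsilon-1}\eta T^*)^3=1$. Part~2 of the theorem then yields the bias term $(\eta T^*)^{-2\zeta}\simeq m^{-2\zeta(1-\epsilon)}$, matching \eref{minimaxBoundNonB}. It remains to verify that the other two terms are of lower order. For the sample-variance term I would compare exponents: $(\gamma(1-\epsilon)-1)-(-2\zeta(1-\epsilon))=(\gamma+2\zeta)(1-\epsilon)-1\leq 0$, where the inequality uses precisely $\gamma+2\zeta\leq 1$ together with $0<1-\epsilon<1$, whence $m^{\gamma(1-\epsilon)-1}\leq m^{-2\zeta(1-\epsilon)}$; the computational-variance term (again with $b=1$) is $\eta\log T^*\simeq m^{-2\zeta}\log m\leq m^{-2\zeta(1-\epsilon)}\log m$. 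Both are thus dominated by the leading term. Apart from this single exponent comparison---the only step that genuinely invokes $2\zeta+\gamma\leq 1$ rather than being pure arithmetic---the entire proof is routine substitution into Theorem~\ref{thm:generalRateNonFix}, so I do not anticipate any real obstacle; the main care is in the bookkeeping of the exponents of $m$.
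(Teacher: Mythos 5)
Your proposal is correct and follows exactly the route the paper intends: the paper itself offers no written proof beyond the remark that these corollaries are direct consequences of Theorem \ref{thm:generalRateNonFix}, and your substitution of the prescribed $\eta$ and $T^*$ into that theorem, the computation $\eta T^*\simeq m^{1/(2\zeta+\gamma)}$ (resp. $m^{1-\epsilon}$) that collapses the amplification factors, and the exponent comparison showing the sample- and computational-variance terms are dominated, is precisely the intended verification. No gaps.
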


\begin{corollary}
 Under Assumptions \ref{as:noiseExp}, \ref{as:regularity} and \ref{as:eigenvalues}, let $\zeta \leq 1/2$ , $\delta\in ]0,1[$, $b \simeq m^{2\zeta \over (2\zeta+\gamma) \vee 1}$, and $\eta_t \simeq {1\over \log m}$ for all $t \in [m]$. With probability at least $1-\delta$, the following holds:\\
1) if $2\zeta+\gamma>1$, $m \geq m_{\delta}$ and $T^* = \lceil m^{{1\over 2\zeta+\gamma}} \rceil$, then we have \eref{minimaxBoundNonA};\\
2) if $2\zeta+\gamma \leq 1$, and for some $\epsilon \in ]0,1[$, $m \geq m_{\delta, \epsilon}$, and
 $T^* = \lceil m^{1 - \epsilon}\rceil,$
 then we have \eref{minimaxBoundNonB}.
\end{corollary}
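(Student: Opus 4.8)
The plan is to derive both parts by directly specializing the master bound of Theorem \ref{thm:generalRateNonFix} (the fixed step-size, non-attainable master theorem) to the stated choices of step-size, mini-batch size, and stopping time, exactly as Corollary \ref{cor:MPSGMBNon} is obtained, and as the non-attainable analogue of Corollary \ref{cor:MbSGMA}. Throughout I write $\eta_t = \eta\kappa^{-2}$ with $\eta \simeq 1/\log m$, so that $\eta_t \simeq 1/\log m$ as required, and I evaluate the bound at $t = T^*$.

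First I would fix the free parameter $\epsilon$. In part 1 ($2\zeta+\gamma>1$) I take $\epsilon = 1 - \frac{1}{2\zeta+\gamma}$, so that $m^{\gamma(1-\epsilon)-1} = m^{-2\zeta/(2\zeta+\gamma)}$ and $m^{\epsilon-1} = m^{-1/(2\zeta+\gamma)}$; this reduces the sample requirement \eref{sampleN} to $m\ge m_\delta$ by the same computation used in the proof of Theorem \ref{thm:main}. In part 2 ($2\zeta+\gamma\le 1$) I keep the given $\epsilon\in\,]0,1[$ and the requirement $m\ge m_{\delta,\epsilon}$. In both cases I must check the step-size admissibility constraint $\eta\le 1/(8(\log T^*+1))$: since $T^*\simeq m^{1/(2\zeta+\gamma)}$ in part 1 and $T^*\simeq m^{1-\epsilon}$ in part 2, we have $\log T^*\simeq \log m$, so a choice $\eta\simeq 1/\log m$ with the appropriate constant indeed satisfies the bound; note also $T^*<m$, so the step-size is well-defined on $[T^*]$.

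The core of the argument is then to substitute $t=T^*$ into \eref{totalErrGenNonFixA} (part 1) and its analogue (part 2) and balance the three resulting terms. The key observation is that the interplay factor collapses at the stopping time: $m^{-1/(2\zeta+\gamma)}\eta T^* \simeq 1/\log m \le 1$ in part 1 and $m^{\epsilon-1}\eta T^*\simeq 1/\log m \le 1$ in part 2, so $(1\vee\cdots)^3 = 1$. With this, the bias $q_1(\eta T^*)^{-2\zeta}$ and the sample-variance term are both of order $m^{-2\zeta/(2\zeta+\gamma)}$ (resp. $m^{-2\zeta(1-\epsilon)}$) up to logarithmic factors; in part 2 one uses $(2\zeta+\gamma)(1-\epsilon)\le 1$ to see $m^{\gamma(1-\epsilon)-1}\le m^{-2\zeta(1-\epsilon)}$, so the sample variance is dominated by the bias. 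It remains to control the computational variance $q_3\eta b^{-1}(1\vee\cdots)\log T^*$, and this is precisely where the prescribed mini-batch size enters: with the collapse above and $\eta\log T^*\simeq 1$, this term is of order $b^{-1}$, and the choice $b\simeq m^{2\zeta/((2\zeta+\gamma)\vee 1)}$ gives $b^{-1}\lesssim m^{-2\zeta/(2\zeta+\gamma)}$ in part 1 and $b^{-1}\simeq m^{-2\zeta}\le m^{-2\zeta(1-\epsilon)}$ in part 2, so the computational variance is at most of the same order as the leading terms. Collecting the three contributions yields \eref{minimaxBoundNonA} and \eref{minimaxBoundNonB}.

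The main (and essentially the only) obstacle is the logarithmic bookkeeping that ties the nearly-constant step-size $\eta\simeq 1/\log m$ to the stopping time. Because both the admissibility bound on $\eta$ and the quantities $(\eta T^*)^{-2\zeta}$ and $\eta b^{-1}\log T^*$ involve $\log m$, one has to track these logarithms carefully to confirm that the interplay factors are bounded by $1$ at $T^*$ and that the computational variance is absorbed by the mini-batch choice rather than dominating. Here, unlike Corollary \ref{cor:MPSGMBNon} where $\eta\simeq 1/m$ makes $\eta T^*\simeq m^{1/(2\zeta+\gamma)}$ exactly, the choice $\eta\simeq 1/\log m$ produces an extra $(\log m)^{2\zeta}$ in the bias; since $2\zeta\le 1$ this is at most $\log m$ and is absorbed into the stated $\log^4 m$ factor. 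Once the logarithms are accounted for, the polynomial-in-$m$ rate is forced and the claimed bounds follow.
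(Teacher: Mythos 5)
Your proposal is correct and follows exactly the route the paper intends: Appendix \ref{app:further} presents this corollary as a direct consequence of Theorem \ref{thm:generalRateNonFix}, obtained by precisely the specialization you carry out (taking $\epsilon = 1 - \tfrac{1}{2\zeta+\gamma}$ in part 1 and the given $\epsilon$ in part 2, verifying $\eta \leq \tfrac{1}{8(\log T^*+1)}$, noting that $m^{-1/(2\zeta+\gamma)}\eta T^*$ resp.\ $m^{\epsilon-1}\eta T^*$ is $O(1/\log m)$ so the interplay factors collapse, and letting the mini-batch choice absorb the $q_3\eta b^{-1}$ term). The only imprecision is the claim that the extra $(\log m)^{2\zeta}$ in the bias is ``absorbed into the stated $\log^4 m$ factor''---strictly it gives $\log^{4+2\zeta} m \leq \log^5 m$---but this looseness is inherited from the paper's own unproved statement and does not affect the polynomial rate.
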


\begin{corollary}
 Under Assumptions \ref{as:noiseExp}, \ref{as:regularity} and \ref{as:eigenvalues}, let $\zeta \leq 1/2$ , $\delta\in ]0,1[$,
  $b = m$ and $\eta_t \simeq {1\over \log m}$ for all $t \in [m]$. With probability at least $1-\delta$, the following holds:\\
1) if $2\zeta+\gamma>1$, $m \geq m_{\delta}$ and $T^* = \lceil m^{{1\over 2\zeta+\gamma} } \rceil$, then we have \eref{minimaxBoundNonA};\\
2) if $2\zeta+\gamma \leq 1$, and for some $\epsilon \in ]0,1[$, $m \geq m_{\delta, \epsilon}$, and
 $T^* = \lceil m^{1 - \epsilon}\rceil,$
 then we have \eref{minimaxBoundNonB}.
\end{corollary}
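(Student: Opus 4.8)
The plan is to derive this corollary as a direct specialization of Theorem \ref{thm:generalRateNonFix} with the full-batch choice $b=m$, exactly mirroring how Corollary \ref{cor:BSGM} is obtained from Theorem \ref{thm:main} in the attainable case. First I would run the algorithm with horizon $T=m$, so that the step-size constraint $0<\eta\leq 1/(8(\log T+1))$ in Theorem \ref{thm:generalRateNonFix} reads $\eta\lesssim 1/\log m$; this is consistent with the prescribed $\eta_t\simeq 1/\log m$, since $\eta_t=\eta\kappa^{-2}$ and the constants $\kappa^2$ and $8$ are absorbed into $\simeq$. Because $2\zeta+\gamma>1$ gives $T^*=\lceil m^{1/(2\zeta+\gamma)}\rceil\leq m$ and $\epsilon\in\,]0,1[$ gives $T^*=\lceil m^{1-\epsilon}\rceil\leq m$, in both cases $T^*\in[T]$, so the bound of Theorem \ref{thm:generalRateNonFix}, which holds for all $t\in[T]$, is available at $t=T^*$.

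For part~1 I would substitute $t=T^*\simeq m^{1/(2\zeta+\gamma)}$ into \eref{totalErrGenNonFixA}. The key observation is that $m^{-1/(2\zeta+\gamma)}\eta t\simeq\eta\simeq 1/\log m\leq 1$, so the amplifying factor $(1\vee m^{-1/(2\zeta+\gamma)}\eta t)^3$ collapses to $1$. The bias term then reads $q_1(\eta T^*)^{-2\zeta}\simeq(\log m)^{2\zeta}m^{-2\zeta/(2\zeta+\gamma)}$ and the sample-variance term is $q_2 m^{-2\zeta/(2\zeta+\gamma)}$; both are of the optimal order $m^{-2\zeta/(2\zeta+\gamma)}$ up to logarithmic factors, so $T^*$ is precisely the point where bias and sample variance balance. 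Finally the computational-variance term $q_3\eta b^{-1}(1\vee m^{-1/(2\zeta+\gamma)}\eta t)\log T\simeq 1/m$ is negligible, because $b=m$ sits in the denominator and $2\zeta/(2\zeta+\gamma)<1$. Collecting the three contributions yields \eref{minimaxBoundNonA}.

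Part~2 is handled identically, substituting $t=T^*\simeq m^{1-\epsilon}$ into the second bound of Theorem \ref{thm:generalRateNonFix}. Again $\eta m^{\epsilon-1}t\simeq\eta\leq 1$ forces $(1\vee\eta m^{\epsilon-1}t)^3=1$, the bias term becomes $\simeq(\log m)^{2\zeta}m^{-2\zeta(1-\epsilon)}$, and the computational term is again $O(1/m)$. The only inequality needing a moment of care is that the sample-variance term $q_2 m^{\gamma(1-\epsilon)-1}$ is dominated by the target rate $m^{-2\zeta(1-\epsilon)}$; this follows from $(2\zeta+\gamma)(1-\epsilon)\leq 1$, which holds since $2\zeta+\gamma\leq 1$ and $1-\epsilon\leq 1$, and which rearranges to $\gamma(1-\epsilon)-1\leq -2\zeta(1-\epsilon)$. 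Combining the terms gives \eref{minimaxBoundNonB}.

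Since the statement is an explicit parameter substitution, there is no single deep obstacle; the genuine checks are (i) that the choice of $T^*$ keeps the $(1\vee\cdot)$ factors equal to $1$ — which is exactly what pins $T^*$ at the $m$-power balance point rather than further out, and relies on $\eta\simeq 1/\log m\leq 1$ — and (ii) that the computational variance is suppressed by the full batch size $b=m$, so that it stays at the lower order $1/m$ and does not spoil the rate. I would also flag the harmless technical point that, because $\eta\simeq 1/\log m$, the bias acquires an extra factor $(\log m)^{2\zeta}\leq\log m$, which is absorbed into the stated polylogarithmic factor.
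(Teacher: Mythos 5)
Your proposal is correct and follows exactly the route the paper intends: Appendix~\ref{app:further} states these corollaries as direct consequences of Theorem~\ref{thm:generalRateNonFix}, and your substitution $T=m$, $t=T^*$, $b=m$, $\eta\simeq 1/\log m$ (so that the $(1\vee\cdot)$ factors collapse to $1$ and the computational variance is suppressed to $O(1/m)$) is precisely that specialization, including the check $(2\zeta+\gamma)(1-\epsilon)\leq 1$ in part~2. The only caveat is the extra $(\log m)^{2\zeta}$ from the bias that you flag: strictly it pushes the bound to $\log^{4+2\zeta}m$ rather than the stated $\log^4 m$, but this looseness in the polylogarithmic bookkeeping is shared by the paper's own statements and does not affect the rate.
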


\section{Proofs for Lemmas}\label{sec:prov_estimates}

\begin{proof}[of Lemma \ref{lem:estimate1}]
Note that
  $$ \sum_{k=1}^{t} k^{-\theta}  \leq 1 + \sum_{k=2}^t \int_{k-1}^k u^{-\theta} d u = 1  + \int_{1}^t u^{-\theta} d u = {t^{1-\theta} - \theta\over 1-\theta} ,
 $$
 which leads to the first part of the desired result.
 Similarly,
  \bea
   \sum_{k=1}^t k^{-\theta} \geq \sum_{k=1}^t \int_{k}^{k+1}u^{-\theta} d u = \int_{1}^{t+1} u^{-\theta} d u = {(t+1)^{1-\theta} - 1\over 1-\theta},
  \eea
  and by mean value theorem, $(t+1)^{1-\theta} - 1 \geq (1-\theta)t (t+1)^{-\theta} \geq (1-\theta)t^{1-\theta}/2. $
  This proves the second part of the desired result. The proof is complete.
\end{proof}

\begin{proof}[of Lemma \ref{lem:estimate1a}]
  Note that
  \bea
  \sum_{k=1}^{t} k^{-\theta} = \sum_{k=1}^{t} k^{-1} k^{1-\theta} \leq t^{\max(1-\theta,0)} \sum_{k=1}^{t} k^{-1},
  \eea
  and
  \bea
  \sum_{k=1}^t k^{-1} \leq 1 + \sum_{k=2}^t \int_{k-1}^k u^{-1} du = 1 + \log t.
  \eea
\end{proof}

\begin{proof}[of Lemma \ref{lem:estimate2}]
  Note that
  \bea
  \sum_{k=1}^{t-1} {1 \over t-k} k^{-q} = \sum_{k=1}^{t-1} {k^{1-q} \over (t-k)k} \leq t^{\max(1-q,0)} \sum_{k=1}^{t-1} {1 \over (t-k)k},
  \eea
 and that by Lemma \ref{lem:estimate1a},
   \bea
  \sum_{k=1}^{t-1} {1 \over (t-k)k} = {1 \over t}\sum_{k=1}^{t-1} \left({1 \over t-k} + {1 \over k}\right) = {2 \over t}\sum_{k=1}^{t-1} {1 \over k} \leq {2\over t} (1+\log t).
  \eea
\end{proof}


\begin{proof}[of Proposition \ref{pro:initialErr}]
Since $\mu_{t+1}$ is given by \eref{Alg3}, introducing with \eref{frFH},
\be\label{Alg3D}
\mu_{t+1} = \mu_{t} - \eta_t (\TK \mu_{t} - \IK^* \FH).
\ee
Thus,
\be\label{Alg3C}
\IK \mu_{t+1} =  \IK \mu_{t} - \eta_t \IK(\TK \mu_{t} - \IK^* \FH) = \IK \mu_{t} - \eta_t \LK (\IK \mu_{t} - \FH) .
\ee
Subtracting both sides by  $\FH$,
\bea
\IK \mu_{t+1} - \FH = (I - \eta_t \LK) (\IK \mu_{t} - \FH) .
\eea
 Using this equality iteratively, with $\mu_1 = 0,$
  \bea
  \IK \mu_{t+1} - \FH =  - \Pi_{1}^t(\LK)  \FH.
  \eea
Taking the $\LR$-norm,  by Assumption \ref{as:regularity},
  \bea
  \|\IK \mu_{t+1} - \FH\|_{\rho} = \|\Pi_{1}^t(\LK)  \FH \|_{\rho} \leq \|\Pi_{1}^t(\LK) \LK^{\zeta} \| R.
  \eea
  By applying Lemma \ref{lemma:initialerror}, we get \eref{initialErrA}. Combining \eref{initialErrA} with Lemma \ref{lem:estimate1}, we get
  \eref{initialErrB}. The proof is complete.
\end{proof}

\begin{proof}[of Lemma \ref{lemma:htInfty}]
From \eref{Alg3D}, we have
\bea
\mu_{t+1} = (I - \eta_t \TK) \mu_{t} + \eta_t \IK^* \FH.
\eea
Applying this relationship iteratively, and using $\mu_1 = 0,$ we get
\bea
\mu_{t+1} = \sum_{k=1}^t \eta_k \Pi_{k+1}^t( \TK ) \IK^* \FH = \sum_{k=1}^t \eta_k  \IK^* \Pi_{k+1}^t( \LK ) \FH.
\eea
Therefore, using Assumption \ref{as:regularity} and spectral theory,
\bea
\|\mu_{t+1}\|_{\HK} \leq \left\|\sum_{k=1}^t \eta_k  \IK^* \Pi_{k+1}^t( \LK )\LK^{\zeta} \right\| R \leq R \max_{\sigma \in ]0,\kappa^2]} \sigma^{1/2+ \zeta} \sum_{k=1}^t \eta_k   \Pi_{k+1}^t( \sigma ) .
\eea
{\bf Case} $\zeta \geq 1/2. $ For any $\sigma \in ]0,\kappa^2],$
\bea
\sigma^{1/2+ \zeta} \sum_{k=1}^t \eta_k   \Pi_{k+1}^t( \sigma ) \leq \kappa^{2\zeta-1} \sigma \sum_{k=1}^t \eta_k   \Pi_{k+1}^t( \sigma ) \leq \kappa^{2\zeta-1},
\eea
where for the last inequality, we used
\be\label{eq:les1}
\sum_{k=1}^t \eta_k\sigma   \Pi_{k+1}^t( \sigma )  = \sum_{k=1}^t (1 - (1- \eta_k\sigma))   \Pi_{k+1}^t( \sigma ) = \sum_{k=1}^t \Pi_{k+1}^t( \sigma )  - \sum_{k=1}^t \Pi_{k}^t( \sigma ) = 1 - \Pi_{1}^t( \sigma ).
\ee
Thus,
\bea
\|\mu_{t+1}\|_{\HK} \leq  R \kappa^{2\zeta -1} .
\eea
{\bf Case} $\zeta < 1/2.$ If $\sum_{k=1}^t \eta_k \leq \kappa^{-2},$ then for any $\sigma \leq \kappa^2,$
\bea
\sigma^{1/2+ \zeta} \sum_{k=1}^t \eta_k   \Pi_{k+1}^t( \sigma ) \leq \sigma^{1/2+ \zeta} \sum_{k=1}^t \eta_k   \leq  \kappa^{2\zeta-1}.
\eea
If $\sum_{k=1}^t \eta_k > \kappa^{-2},$ then for any $\sigma \leq (\sum_{k=1}^t \eta_k)^{-1},$
\bea
\sigma^{1/2+ \zeta} \sum_{k=1}^t \eta_k   \Pi_{k+1}^t( \sigma ) \leq \sigma^{1/2+ \zeta} \sum_{k=1}^t \eta_k
\leq  \left(\sum_{k=1}^t \eta_k\right)^{1/2 - \zeta},
\eea
while for $ \kappa^2 \geq \sigma \geq (\sum_{k=1}^t \eta_k)^{-1},$ by \eref{eq:les1},
\bea
\sigma^{1/2+ \zeta} \sum_{k=1}^t \eta_k   \Pi_{k+1}^t( \sigma ) = \sigma^{\zeta-1/2}  \sum_{k=1}^t \eta_k \sigma \Pi_{k+1}^t( \sigma )
\leq  \sigma^{\zeta-1/2} \leq
\left(\sum_{k=1}^t \eta_k\right)^{1/2 - \zeta}.
\eea
From the above analysis, we get that
\bea
\max_{\sigma \in ]0,\kappa^2]} \sigma^{1/2+ \zeta} \sum_{k=1}^t \eta_k   \Pi_{k+1}^t( \sigma ) \leq \kappa^{2\zeta-1} \vee \left(\sum_{k=1}^t \eta_k\right)^{1/2 - \zeta},
\eea
and thus
\bea
\|\mu_{t+1}\|_{\HK} \leq  R \left\{\kappa^{2\zeta-1} \vee \left(\sum_{k=1}^t \eta_k\right)^{1/2 - \zeta}\right\}.
\eea
The proof is complete.
\end{proof}

\begin{proof}[of Lemma \ref{lem:lambdaNk} (1)]
{\it Bounding $\left\| (\TK + \lambda)^{-{1\over 2}}\left(\IK^* f_{\rho} - \SX^*{\bf y}\right) \right\|_{\HK}$:}\\
 For all $i \in [m],$ let $w_i =  y_i (\TK+ \lambda I)^{-{1\over 2}} x_i.$
Obviously, from the definitions of $f_{\rho}$ (see \eref{regressionfunc}) and $\IK$,
\bea
\mE[w_1] = \mE_{x_1} [ f_{\rho}(x_1) (\TK+ \lambda I)^{-{1\over 2}} x_1] = (\TK+ \lambda I)^{-{1\over 2}}  \IK^* f_{\rho}.
\eea
Thus,
\bea
(\TK + \lambda)^{-{1\over 2}}\left(\IK^* f_{\rho} - \SX^*{\bf y}\right) =  {1 \over m} \sum_{i=1}^m (\mE[w_i] - w_i).
\eea
We next estimate the constants $B$ and $\sigma^2(w_1)$ in \eref{bernsteinCondition}.
Note that for any $l \geq 2,$
\bea
\mE[\|w_1 - \mE[w_1]\|_{\HK}^l] \leq  \mE[(\|w_1\|_{\HK} + \mE[\|w_1\|_{\HK}])^l].
\eea
By using H\"{o}lder's inequality twice,
\bea
\mE[\|w_1 - \mE[w_1]\|_{\HK}^l] \leq 2^{l-1} \mE[\|w_1\|_{\HK}^l + (\mE[\|w_1\|_{\HK}])^l] \leq 2^{l-1} \mE[\|w_1\|_{\HK}^l + \mE[\|w_1\|_{\HK}^l]].
\eea
The right-hand side is exactly $ 2^{l}\mE[\|w_1\|_{\HK}^l]$. Therefore, by recalling the definition of $w_1$ and expanding the integration,
\be\label{estimateMomentInterm}
\mE[\|w_1 - \mE[w_1]\|_{\HK}^l]
 \leq
2^{l} \int_{X} \|(\TK + \lambda I)^{-{1\over 2}} x\|_{\HK}^l \int_{Y} y^l d\rho(y|x)   d\rho_{X}(x).
\ee
Introducing \eref{eq:ymoment} and \eref{eq:xmoment} into the above inequality, we have
\bea
\mE[\|w_1 - \mE[w_1]\|_{\HK}^l]
 \leq  l! (2\sqrt{M})^{l} \sqrt{v} \left({\kappa \over \sqrt{\lambda}}\right)^{l-2}  c_{\gamma} \lambda^{-\gamma}
= {1\over 2}  l! \left({2  \kappa \sqrt{M} \over \sqrt{\lambda}}\right)^{l-2}  8  M\sqrt{v} c_{\gamma} \lambda^{-\gamma}.
\eea
Applying Bernstein inequality with $B= {2  \kappa \sqrt{M} \over \sqrt{\lambda} }$ and $\sigma = \sqrt{ 8 M\sqrt{v} c_{\gamma} \lambda^{-\gamma}},$ we get that with probability at least $1 - {\delta_1 \over 2}$,  there holds
\be\label{concentration_frho}
\left\| (\TK + \lambda)^{-{1\over 2}}\left(\IK^* f_{\rho} - \SX^*{\bf y}\right) \right\|_{\HK} = \left\| {1 \over m} \sum_{i=1}^m (\mE[w_i] - w_i) \right\|_{\HK} \leq 4 \sqrt{M} \left( { \kappa   \over m \sqrt{\lambda}} + {\sqrt{ 2\sqrt{v} c_{\gamma} } \over \sqrt{m \lambda^{\gamma}}} \right) \log{4 \over \delta_1}.
\ee
{\it Bounding $\|(\TK + \lambda)^{-{1\over 2}} (\TK-\TX) \|$:}\\
Let $\xi_i = (\TK + \lambda)^{-{1\over 2}} x_i \otimes x_i, $ for all $i \in [m]$. It is easy to see that $\mE[\xi_i] = (\TK + \lambda)^{-{1\over 2}} \TK,$ and that
$(\TK + \lambda)^{-{1\over 2}} (\TK-\TX)  = {1 \over m}\sum_{i=1}^m (\mE[\xi_i] - \xi_i).$
Denote the Hilbert-Schmidt norm of a bounded operator from $\HK$ to $\HK$ by $\|\cdot\|_{HS}.$
Note that
\bea
\|\xi_1\|_{HS}^2 = \|x_1\|_{\HK}^2 \mbox{Trace}((\TK + \lambda)^{-1/2}x_1 \otimes x_1 (\TK + \lambda)^{-1/2}) =  \|x_1\|_{\HK}^2 \mbox{Trace}((\TK + \lambda)^{-1}x_1 \otimes x_1 ).
\eea
By Assumption \eref{boundedKernel},
\bea
\|\xi_1\|_{HS} \leq \sqrt{ \kappa^2 \mbox{Trace}((\TK + \lambda)^{-1}x_1 \otimes x_1 )} \leq \sqrt{ \kappa^2 \mbox{Trace}(x_1 \otimes x_1 ) /\lambda} \leq {\kappa^2 / \sqrt{\lambda}},
\eea
and furthermore, by Assumption \ref{as:eigenvalues},
\bea
\mE[\|\xi_1\|_{HS}^2 ] \leq  \kappa^2 \mE \mbox{Trace}((\TK + \lambda)^{-1}x_1 \otimes x_1 ) = \kappa^2 \mbox{Trace}((\TK + \lambda)^{-1} \TK) \leq \kappa^2 c_{\gamma} \lambda^{-\gamma}.
\eea
According to Lemma \ref{lem:Bernstein}, we get that with probability at least $1 - {\delta_1 \over 2},$ there holds
\be\label{concentration_operator}
\|(\TK + \lambda)^{-{1\over 2}} (\TK-\TX) \|_{HS} \leq 2\kappa\left( {2\kappa \over m \sqrt{\lambda}} + {\sqrt{c_{\gamma}} \over \sqrt{m \lambda^{\gamma}} } \right) \log {4 \over \delta_1}.
\ee
Finally, using the triangle inequality, we have,
\bea
\|(\TK + \lambda)^{-{1 \over 2}} N_k \|_{\HK} \leq \|(\TK + \lambda)^{-{1\over 2}} (\TK-\TX) \| \|\mu_k\|_{\HK} + \left\| (\TK + \lambda)^{-{1\over 2}}\left(\IK^* f_{\rho} - \SX^*{\bf y}\right) \right\|_{\HK}.
\eea
Applying \eref{htInfty} to the above, introducing with \eref{concentration_frho} and \eref{concentration_operator},
and then noting that $\kappa \geq 1$ and $v \geq 1,$
 one can prove the first part of the lemma.\\
\end{proof}

\begin{proof}[of Lemma \ref{lemma:biasHnorm}]
   Obviously, $\FH = \IK \omega^{\dag}$ and thus
$\TK \omega^{\dag} = \IK^{*} \FH$. Combining with Assumption \ref{as:regularity},
$\TK \omega^{\dag} = \IK^{*} \LK^{\zeta} \LK^{-\zeta} \FH = \TK^{\zeta} \IK^{*} \LK^{-\zeta} \FH $,
and $\omega^{\dag} = \TK^{\dag} \TK^{\zeta} \IK^{*} \LK^{-\zeta} \FH.$ Subtracting $\omega^{\dag}$ from  both sides of \eref{Alg3D}, and using $\IK^{*} \FH = \TK \omega^{\dag}$,
we know that
\bea
\mu_{t+1} - \omega^{\dag} = (I - \eta_t \TK)(\mu_t - \omega^{\dag}).
\eea
Applying this relationship iteratively, with $\mu_1=0$,
\bea
\mu_{t+1} - \omega^{\dag} = -\Pi_1^t(\TK)\omega^{\dag}= -\Pi_1^t(\TK) \TK^{\dag} \TK^{\zeta} \IK^{*} \LK^{-\zeta} \FH.
\eea
Therefore,
\bea
\|\mu_{t+1} - \omega^{\dag}\|_{\HK} \leq \|\Pi_1^t(\TK) \TK^{\dag} \TK^{\zeta} \IK^{*}\| R \leq \|\Pi_1^t(\TK) \TK^{\zeta-1/2}\| R.
\eea
Applying Lemma \ref{lemma:initialerror}, one can get the desired result.
\end{proof}

{\section{List of Some Notations} \label{sec:notations}}

\begin{center}
	\begin{tabular}{ c | l  }
		\hline			
		Notation &  Meaning  \\ \hline
		$H$ & the hypothesis space\\
		
		 $X, Y, Z$& the input space, the output space and the sample space ($Z= X \times Y$)  \\
		 
		 $\rho$ & the fixed  probability measure on $Z$ \\
		 
		  $\rho_X$ &the induced marginal measure of $\rho$ on $X$ \\
		 
		 $\rho(\cdot | x)$ &  the conditional probability measure on $Y$ w.r.t. $x\in X$ and $\rho$ \\

		  $\bf z$ & the sample $\{z_i=(x_i, y_i)\}_{i=1}^m$ of size $m\in\mN$, where each $z_i$ is i.i.d. according to $\rho$.\\

		 $m$ & the sample size of the sample $\bf z$\\

		 $\mcE$ & the expected risk defined by \eref{expectedRisk}\\
		 
		 $\mcE_{\bf z}$ & the empirical risk w.r.t the sample $\bf z$ defined by \eref{eq:emprisk} \\
		 
		 $\kappa^2$ & the constant from the bounded assumption \eref{boundedKernel} on the hypothesis space $H$ \\
		 
		 $\{\omega_t\}_t$ & the sequence generated by the SGM \\
		 
		 $\theta$ & the decaying rate on step-sizes \\
		 
		 $b$ &  the minibatch size of the SGM \\
		 
		 $T$ & the maximal number of iterations for the SGM \\
		 
		 $j_i $ ($j_t$ etc.)& the random index from the uniform distribution on $[m]$ for the SGM \\
		 
		 $\J_t$ & the set of random indices at $t$-th iteration of the SGM \\
		 
		  $\J$ &
		 the set of all random indices for the SGM after $T$ iterations \\

$\mE_{\J}$ & the expectation with respect to the random variables $\J$ (conditional on $\bf z$)\\
		 
		 $\{\eta_t\}_t$ & the sequence of step-sizes \\
		 
		 $M, v$ & the positive constants from the moment (bounded) assumption on the output \\
		 
		 $\LR$ & the Hilbert space of square integral functions from $\HK$ to $\mR$ with respect to $\rho_X$ \\
		 
		 $f_{\rho}$ & the regression function defined  \eref{regressionfunc} \\
		 
		 $\HR$ & $\{f: X \to  \mR| \exists \omega \in \HK \mbox{ with } f(x) = \la \omega, x \ra_{\HK}, \rho_X \mbox{-almost surely}\}$ \\
		 
		 $\zeta,R$& the parameters related to the `regularity' of $\FH$ (see Assumption \ref{as:regularity}) \\
		 
		 $\omega^{\dagger}$ & the solution of Problem \eref{expectedRisk} with the minimal norm in the attainable case\\
		 
		 $\gamma, c_{\gamma}$ & the parameters related to the effective dimension (see Assumption \ref{as:eigenvalues}) \\
		 
		 $\{\sigma_i\}_i$ & the sequence of eigenvalues of $\LK$ \\
		
		$\{\nu_t\}_t$ & the sequence generated by  the batch GM \eref{Alg2B}\\
		
$\{\mu_{k}\}_k$ & the sequence defined by the  population iteration \eref{Alg3B} \\

$\IK$ & the linear map from $\HK \to \LR$ defined by $\IK \omega = \la \omega, \cdot \ra$ \\

$\IK^*$ & the adjoint operator of $\IK$, $\IK^* f = \int_{X} f(x) x d\rho_{X}(x) $ \\

$\LK$ & the operator from $\LR $ to $\LR$, $\LK(f) = \IK \IK^*f =\int_{X}  \la x, \cdot\ra_{\HK} f(x) \rho_{X}(x)$\\

$\TK$ & the covariance operator from $\HK$ to $\HK$, $\TK = \IK^* \IK = \int_{X} \la \cdot, x \ra x d\rho_{X}(x)$ \\

$\SX$ & the sampling operator from $\HK$ to $\mR^{m}$, $(\SX \omega)_i = \la \omega, x_i \ra_{\HK}, i \in [m]$ \\

$\SX^*$ & the adjoint operator of  $\SX$, $\SX^* \mathbf{y} = {1 \over m} \sum_{i=1}^m y_i x_i$\\

$\TX$ & the empirical covariance operator, $\TX = \SX^* \SX = {1 \over m} \sum_{i=1}^m \la \cdot, x_i\ra x_i$\\

$\Pi_{t+1}^T(L)$ & $= \Pi_{k=t+1}(I - \eta_k L)$ when $t\in [T-1]$ and $\Pi_{T+1}^{T} = I$ \\

$\sum_{i=t+1}^t\eta_i $ & $ = 0$  \\

$\lambda$ & a `regularization' parameter, $\lambda>0$\\

$\TKL$, &  $\TKL = \TK + \lambda$\\

$\TXL$, &  $\TXL = \TX  +\lambda$\\

$\{N_k\}_k$ & the sequence defined by \eref{Nt}. \\

$M_{k,i}$ & defined by \eref{Mk} 		 
		\\ \hline  
	\end{tabular}
\end{center}

\bibliography{sigd}
\bibliographystyle{abbrv}

\end{document}